\newtheorem{theorem}{Theorem}
\newtheorem{definition}[theorem]{Definition}
\newtheorem{proposition}[theorem]{Proposition}
\newtheorem{corollary}[theorem]{Corollary}
\newtheorem{lemma}[theorem]{Lemma}
\title{Double Auctions with Two-sided Bandit Feedback}
\author{%
   Soumya Basu\\
   Google, Mountain View\\
    \texttt{basusoumya@google.com} \\
    \And
    Abishek Sankararaman\\
    AWS, Santa Clara \\
    \texttt{abishek@utexas.edu} \\
}
\begin{document}

\maketitle

\begin{abstract}
  Double Auction enables decentralized transfer of goods between multiple buyers and sellers, thus underpinning functioning of many online marketplaces. Buyers and sellers compete in these markets through bidding, but do not often know their own valuation a-priori. As the allocation and pricing happens through bids, the profitability of participants, hence sustainability of such markets, depends crucially on learning respective valuations through repeated interactions. We initiate the study of Double Auction markets under bandit feedback on both buyers' and sellers' side. We show with confidence bound based bidding, and `Average Pricing' there is an efficient price discovery among the participants.  In particular, the regret on combined valuation of the buyers and the sellers -- a.k.a. the social regret -- is $O(\log(T)/\Delta)$ in $T$ rounds, where $\Delta$ is the minimum price gap. Moreover, the buyers and sellers exchanging goods attain $O(\sqrt{T})$ regret, individually. The buyers and sellers who do not benefit from exchange in turn only experience $O(\log{T}/ \Delta)$  regret individually in $T$ rounds.  We augment our upper bound by showing that $\omega(\sqrt{T})$ individual regret, and $\omega(\log{T})$ social regret is unattainable in certain Double Auction markets. Our paper is the first to provide decentralized learning algorithms in a two-sided market where \emph{both sides have uncertain preference} that need to be learned.
 
\end{abstract}

\section{Introduction}
Online marketplaces, such as eBay, Craigslist, Task Rabbit, Doordash, Uber, enables allocation of resources  between supply and demand side agents at a scale through market mechanisms, and dynamic pricing. In many of these markets, the valuation of the resources are often personalized across agents (both supply and demand side), and remain apriori unknown. The agents learn their own respective valuations through repeated interactions while competing in the marketplace. In turn, the learning influences the outcomes of the market mechanisms. In a recent line of research, this interplay between learning and competition in markets has been studied in multiple systems, such as  bipartite matching markets~\cite{liu2020competing,liu2021bandit,sankararaman2021dominate,basu2021beyond}, centralized basic auctions~\cite{kandasamy2020mechanism,han2020optimal}. These works follow the `protocol model'~\cite{angluin2007computational}, where multiple agents follow a similar protocol/algorithm, while each agent executes her protocol using only her own observations/world-view up to the point of execution.
   
In this paper, we initiate the study of the decentralized Double Auction market where multiple sellers and buyers, each with their own valuation, trades an indistinguishable good. In each round, the sellers and the buyers present bids for the goods.\footnote{In some literature, the bids of the sellers is called `asks', but we use bids for both sellers and buyers.} The auctioneer is then tasked with creating an allocation, and pricing for the goods. All sellers with bids smaller than the price set by the auctioneer sell at that price, whereas all the buyers with higher bids buy at that price.  Each buyer and seller, is oblivious to all the prices including her own. Only a buyer, or a seller participating in the market observes her own valuation of the good (with added noise). Notably, our work tackles two-sided uncertainty, whereas the previous works mainly focused on one-sided uncertainty in the market.

Double auction is used in e-commerce~\cite{wurman1998flexible} -- including business-to-business and peer-to-peer markets, bandwidth allocation~\cite{iosifidis2009double,iosifidis2014double}, power allocation~\cite{majumder2014efficient}.  We focus on the `Average Mechanism' for double auction. Average mechanism  guarantees that the auctioneer and the auction participants incur no losses in the trade. It also ensures that each commodity is given to the participant that values it the highest, thus maximizing social welfare. Additionally, average mechanism can be implemented through simple transactions. These properties make average mechanism a suitable choice in large social markets, such as  energy markets~\cite{malik2022double}, environmental markets~\cite{muller2002can}, cloud-computing markets~\cite{reza2020cloud,malik2022double}, bidding on wireless spectrum~\cite{feng2012tahes}. Our objective is to design a bandit average mechanism for double auction markets when the participants are a-prioiri unaware of their exact valuation. 

Under average mechanism (detailed in Section~\ref{subsec:average_price}) first an allocation is found, by maximizing $K$ such that the $K$ highest bidding buyers all bid higher than the $K$ lowest bidding sellers. The price is set as the average of the $K$-th lowest bid among the buyers, and the $K$ highest bid among the sellers for the chosen $K$. We have two-sided uncertainty in the market, as both buyers and sellers do not know their own valuation. The uncertainty in bids manifests in two ways. Firstly, each buyer needs to compete with others by bidding high enough to get allotted so that she can discover her own price. Similarly, the sellers compete by bidding lower for price discovery. The competition-driven increase in buyers' bids, and decrease in the sellers' bids may  decrease the utility that a buyer or seller generates. Secondly, as the valuation needs to be estimated, the price set in each round as a function of these estimated valuations (communicated to the auctioneer in the form of bids) remains noisy. This noise in price also decreases the utility. However, when price discovery is slow the noise in price increases. Therefore, the main challenge in decentralized double auction with learning is to strike a balance between the competition-driven increase/decrease of bids, and controlling the noise in price.  
\subsection{Main Contributions}
Our main contributions in this paper are as follows.

1. Our paper is the first to provide decentralized learning algorithms in a two-sided market where \emph{both sides have uncertain preference} that need to be learned. Unlike in the setting with one sided uncertainity only, we identify that with {\em two-sided uncertainty} in double auction markets {\em optimism} in the face of uncertainty in learning (OFUL) from both sides causes {\em information flow bottleneck} and thus not a good strategy. We introduce the notion of \emph{domination of information flow} -- that increases the chance of trade and price discovery. The sellers bid the lower confidence bound (LCB), and buyers bid the upper confidence bound (UCB) of their respective valuation. By using UCB bids the buyers, and using the LCB bids the sellers decrease their reward and facilitate price discovery. Formally, with the above bids under average mechanism
\begin{itemize}[leftmargin=1em,noitemsep,topsep=0pt]
    \item  We show that the social welfare regret, i.e. the regret in the combined valuation of all the sellers and the buyers in the market is $O(\log(T)/\Delta)$ regret in $T$ rounds with a minimum reward gap $\Delta$. We also show a $\Omega(\log(T)/\Delta)$ lower bound on social-welfare regret and thus our upper bound is order-optimal.
    \item For individual regret, we show that each of the the sellers and the buyers that do not participate under the true valuations incur $O(\log(T)/\Delta)$ regret, while the optimal participating buyers and sellers incur a $O(\sqrt{T\log(T)})$ regret. Our upper bound holds for heterogeneous confidence widths, making it robust against the choices of the individual agents.
\end{itemize}

2. We complement the upper bounds by showing price discovery itself is $\Omega(\sqrt{T})$ hard in the minimax sense. Specifically, we consider a relaxed system where {\em (i)} the price of the good is known to all, and {\em (ii)} an infinite pool of resource exists, and hence any buyer willing to pay the price gets to buy, and any seller willing to sell at the price does so. We show under this setup, for any buyer or seller, there exists a system where that agent must incur a regret of $\Omega(\sqrt{T})$. Similarly, we establish a $\Omega(\log(T))$ lower bound for the social-welfare regret by showing that the centralized system can be reduced to a combinatorial semi-bandit and using the results of \cite{combes2015combinatorial}. 
\section{System Model}
\label{sec:system_model}

The market consists of $N \geq 1$ buyers and $M \geq 1$ sellers, trading a {\em single type} of item which are indistinguishable across sellers. This set of $M+N$ market participants, repeatedly participate in the market for $T$ rounds. Each buyer $i \in [N]$ has valuation $B_i \geq 0$, for the item and each seller $j \in [M]$ has valuation $S_i \geq 0$. No participant knows of their valuation apriori and learn it while repeatedly participating in the market over $T$ rounds.

\subsection{Interaction Protocol}
\label{subsec:interaction_protocol}
The buyers and sellers interact through an auctioneer who implements a bilateral trade mechanism at each round. 

At each round $t \geq 1$, every buyer $i \in [N]$ submits bids  $b_i(t)$ and seller $j \in [M]$ submits  asking price \footnote{Throughout, we refer to sellers `bids' as their asking price} $s_j(t)$ simultaneously. Based on the bids and asking prices in round $t$, the auctioneer outputs {\em (i)} subsets  $\mathcal{P}_b(t) \subseteq [N]$ and $\mathcal{P}_s(t) \subseteq [M]$ of participating buyers and sellers with equal cardinality $K(t) \leq \min(M,N)$, and {\em (ii)} the trading price $p(t)$ for the participating buyers and sellers in this round. Subsequently, every buyer $i\in[N]$  is {\em(i)} either part of the trade at time $t$, in which case she gets utility  $r_i^{(B)}(t) :=B_i + \nu_{b,i}(t) - p(t)$, or {\em(ii)} is not part of the trade at time $t$ and receives $0$ utility along with a signal that she did not participate. Similarly, each seller $j \in [M]$ is  either part of the trade and receives utility $r_j^{(S)}(t) := p(t)-S_j - \nu_{s,j}(t)$, or  is informed she is not part of the trade and receives $0$ utility. 

The random variables $\nu_{b,i}(t)$ and $\nu_{s,j}(t)$ for all $i \in [N]$, $j \in [M]$ and  $t \in [T]$ are i.i.d., $0$ mean, $1$ sub-Gaussian random variables.\footnote{We study the system with $1$-sub-Gaussian to avoid clutter. Extension to general $\sigma$-sub-Gaussian is trivial when $\sigma$ or an upper bound to it is known.} 

\subsection{Average price mechanism}
\label{subsec:average_price}

Throughout the paper, we assume the auctioneer implements the average price mechanism in every round $t$. Under this mechanism, at each round $t$, the auctioneer orders the bids by the `natural order', i.e., sorts the buyers bids in descending order and the seller's bids in ascending order. Denote by the sorted bids from the buyer and seller as $b_{i_1}(t) \geq \cdots b_{i_N}(t)$ and the sorted sellers bids by $s_{j_1}(t) \leq \cdots s_{j_M}(t)$. Denote by the index $K(t)$ to be the largest index such that $b_{i_{K(t)}}(t) \geq s_{j_{K(t)}}(t)$. In words, $K(t)$ is the `break-even index' such that all buyers $i_1, \cdots, i_{K(t)}$ have placed bids offering to buy at a price strictly larger than the asking price submitted by sellers $j_1, \cdots, j_{K(t)}$. The auctioneer then selects the participating buyers $\mathcal{P}_b(t) = \{i_1, \cdots, i_{K(t)}\}$, and participating sellers $\mathcal{P}_s(t) = \{j_1, \cdots, j_{K(t)}\}$. The price is set to $p(t) := \frac{b_{i_{K(t)}}+s_{j_{K(t)}}}{2}$, and thus the name of the mechanism is deemed as the average mechanism.
\begin{wrapfigure}{r}{0.35\textwidth}
  \vspace{-2em}
  \begin{center}
    \includegraphics[width=0.33\textwidth]{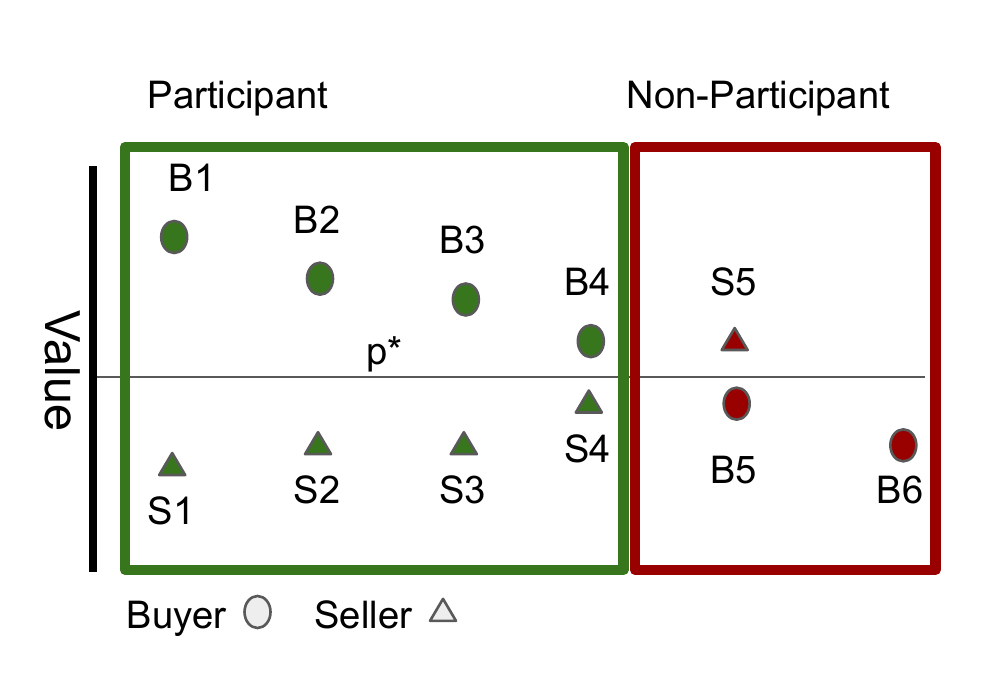}
  \end{center}
  \label{fig:double_auction}
  \caption{Average Mechanism with 6 Buyers and 5 Sellers}
  \vspace{-3.3em}
\end{wrapfigure}

\subsection{Regret definition} 
\label{subsec:regret_defn}
For the given bilateral trade mechanism, and true valuations $(B_i)_{i\in[N]}$ and $(S_j)_{j \in [M]}$, denote by $K^* \leq \min(M,N)$ be the number of matches and by $p^{*}$ to be the price under the average mechanism when all the buyers and sellers bid their true valuations.  Let $\mathcal{P}_b^{*}$ to be set of the {\em optimal participating buyers}, and  $\mathcal{P}_s^{*}$ to be set of the {\em optimal participating sellers}.
For any buyer $i \in [N]$, we denote by $(B_i - p^{*})$ to be the true utility of the buyer. Similarly, for any seller $j \in [N]$, we denote by $(p^{*} - S_j)$ to be the true utility of seller $j$. From the description of the average mechanism, in the system with true valuations, all participating agents have non-negative true utilities.

Recall from the protocol description in Section \ref{subsec:interaction_protocol} that at any time $t$, if buyer $i \in [N]$ participates, then she receives a mean utility of $(B_i - p(t))$. For a participating seller $j \in [M]$ her mean utility of $(p(t) - S_j)$ in round $t$. If in any round $t$, if a buyer $i \in [N]$ or a seller $j\in [M]$ does not participate, then she receives a deterministic utility $0$. The expected individual regret of a buyer $i$, namely $R_{b,i}(T)$, and a seller $j$, namely $R_{s,j}(T)$, are defined as 

\begin{align*}
\vspace{-1em}
    &R_{b,i}(T) 
    = T(B_i - p^*)\mathbbm{1}(i\leq K^*) - \mathbb{E}\Big[\sum_{t: i\in \mathcal{P}_b(t)} (B_i - p(t)) \Big], \\ 
    &R_{s,j}(T) 
    = T(p^* - S_j)\mathbbm{1}(j\leq K^*) - \mathbb{E}\Big[\sum_{t: j\in \mathcal{P}_s(t)} ( p(t) - S_j)\Big].
\vspace{-2em}
\end{align*}
Auctioneer has no regret as average mechanism is {\em budget balanced}, i.e. auctioneer does not gain or lose any utility during the process. 

We also define the social welfare regret similar to gain from trade regret in~\cite{cesa2021regret}. The social welfare is defined as the total valuation of the goods after the transfer of goods from seller to buyer in each round. The expected (w.r.t reward noise) total valuation after transfer is thus defined as $\left(\sum_{i\in \mathcal{P}_b(t)} B_i + \sum_{j\in [M]\setminus\mathcal{P}_s(t)} S_j\right)$, while the the expected total valuation under oracle average mechanism is $\left(\sum_{i\in \mathcal{P}_b^*} B_i + \sum_{j\in [M]\setminus\mathcal{P}_s^*} S_j\right)$. Therefore, the expected social welfare regret is defined as 
\begin{gather}
    R_{SW}(T) = T\big(\sum_{i\in \mathcal{P}_b^*} B_i + \sum_{j\in [M]\setminus\mathcal{P}_s^*} S_j\big)  - \mathbb{E}\Big[\sum_{t=1}^{T}\big(\sum_{i\in \mathcal{P}_b(t)} B_i + \sum_{j\in [M]\setminus\mathcal{P}_s(t)} S_j\big)\Big].
    \label{eqn:sw_regret_defn}
\end{gather}

\section{Decentralized Bidding for Domination of Information Flow}
\label{sec:bidding_policy}

We consider the decentralized system where each market participant bids based on their own observation, without any additional communication. The core idea is balancing {\em domination of information flow} and {\em over/under bidding}, i.e. ensuring  the number of allocation is not less than $K^*$ in each round with high probability, and the bids converge to each agent's true valuation. 

Each seller $j\in [M]$, with $n_{s,j}(t)$ participation upto round $t$, at time $t+1$ bids the lower confidence bound (scaled by $\alpha_{s,j}$), LCB($\alpha_{s,j}$) in short, of its own valuation of the item. Each buyer $i\in [N]$, with $n_{s,j}(t)$ participation upto round $t$, at time $t+1$, bids the upper confidence bound (scaled by $\alpha_{b,i}$), UCB($\alpha_{b,i}$) in short,  of its own valuation of the item. The bids are specified in  Equation~\ref{eq:bids}.
\begin{align}\label{eq:bids}
s_{j}(t+1) = \hat{s}_{j}(t) - \sqrt{\frac{\alpha_{s,j} \log(t)}{n_{s,j}(t)}},\quad b_{j}(t+1) = \hat{b}_{i}(t) + \sqrt{\frac{\alpha_{b,i} \log(t)}{n_{b,i}(t)}}
\end{align}
Here, $\hat{s}_{j}(t) = \tfrac{1}{n_{s,j}(t)}\sum_{t' \leq t:j \in \mathcal{P}_s(t')} Y_{s,j}(t')$ , and  $\hat{b}_{i}(t) = \tfrac{1}{n_{b,i}(t)}\sum_{t'\leq t:i \in \mathcal{P}_b(t')} Y_{b,i}(t')$ are the observed empirical valuation of the item upto time $t$ by seller $j$, and  buyer $i$, respectively.

Our buyers and sellers follow the {\em protocol model} (which is ubiquitous in bandit learning for markets)  and agree on UCB and LCB based bids, respectively. 
They are {\em heterogeneous} as they may use different $\alpha_{b,i}$ and $\alpha_{s,j}$ scaling parameters. The only restriction (as seen in Section~\ref{sec:analysis}) is $\min\{\alpha_{b,i}, \alpha_{s,j} \} \geq 4$ which they agree on as part of the protocol.  

\textbf{Key Insights: } We now contrast our algorithm design from standard multi-armed-bandit (MAB) problems. In a typical MAB problems, including other multi-agent settings, algorithm is designed based on optimism in the face of uncertainty in learning (OFUL) principle~\cite{abbasi2011improved}. The UCB-type indices under OFUL arises as optimistic estimate of the rewards of arms/actions. However, such optimism used from both sides, i.e. both buyer and seller using UCB indices, may lead to a standstill. The bids of the buyers with constant probability can remain below the sellers' bids. Instead, we emphasize information flow through trade. In our algorithm buyers' UCB bids and sellers' LCB bids ensure that the system increases the chance for buyer and seller to participate in each round as compared to using their true valuations. There is {\em domination of information flow}, and consequently they discover their own valuation in the market.

However, too aggressive over or under bidding can disrupt the price setting process. In particular, if even one non-participating buyer is bidding high enough (due to UCB bids) to exceed a non-participating seller's price, firstly she participates and accrues regret. More importantly, the participating sets deviates, resulting in a deviation of the price of the good from $p^*$. Thus resulting in regret for all participating agents as well. Similar problems arise if one or more non-participating seller predicts lower.  On the other hand, too low aggression is also harmful as the price discovery may not happen resulting in deviation of participating set, deviation of price, and high regret.  In the next section, we show the aggression remains within desired range, i.e. the regret of the agents remain low, even with heterogeneous UCBs and LCBs. 
\section{Regret Upper Bound}\label{sec:analysis}

In this section, we derive the regret upper bound for all the buyers and sellers in the system. Without loss of generality, let us assume that the buyers are sorted in decreasing order of valuation $B_1\geq B_2\geq ...\geq B_N$. The sellers are sorted in increasing order of valuation $S_1\leq S_2\leq ...\leq S_N$. The the buyers $i=1,\dots, K^*$ are the optimal participating buyers, and, similarly, the sellers $j=1,\dots, K^*$ are the optimal participating sellers. Let us define $\alpha_{\min}:=\min\{\alpha_{b,i}, \alpha_{s,j}\}$, and $\alpha_{\max}:=\max\{\alpha_{b,i}, \alpha_{s,j}\}$. We define the minimum distance of an agent's true valuation from the true price $p^*$ as 
$\Delta = \min_{i, \in[N], j \in [M]}\{|p^* - S_j|, |B_i - p^*|\}.$

 Our first main result, Theorem~\ref{thm:upper_social}, proves a $O(\log(T)/\Delta)$ upper bound for the social welfare regret. 
\begin{theorem} \label{thm:upper_social}
The expected social welfare regret of the Average mechanism with buyers bidding UCB($\boldsymbol{\alpha}_{b}$), and sellers bidding LCB($\boldsymbol{\alpha}_{s}$) of their estimated valuation, for $\alpha_{\min} \geq 4$,  is bounded as: 
\begin{multline*}
    R_{SW}(T) \leq \sum_{i \leq K^*}\sum_{i' > K^*} \frac{(\sqrt{\alpha_{\max}} + 2)^2}{(B_{i} - B_{i'})} \log(T)  + \sum_{j\leq K^*}\sum_{j' > K^*}\frac{(\sqrt{\alpha_{\max}} + 2)^2}{(S_{j'} - S_{j})} \log(T) 
    \quad\quad \\+ \sum_{j'> K^*}\sum_{i' > K^*}\frac{(\sqrt{\alpha_{\max}} + 2)^2}{(S_{j'} - B_{i'})} \log(T) 
    + MN b_{max} \pi^2/6.
\end{multline*}
\end{theorem}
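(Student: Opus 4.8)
The plan is to decompose the social welfare regret round-by-round under a high-probability ``clean event,'' and then charge each round's loss to a pair of mismatched agents whose confidence intervals have not yet shrunk below the relevant valuation gap, exactly as in combinatorial semi-bandit analyses.

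First I would set up the clean event $\mathcal{G}$ on which, for every round $t$, every buyer $i$ and every seller $j$, the empirical means obey $|\hat{b}_i(t) - B_i| \le \sqrt{4\log(t)/n_{b,i}(t)}$ and $|\hat{s}_j(t) - S_j| \le \sqrt{4\log(t)/n_{s,j}(t)}$. Sub-Gaussian concentration together with a union bound over the $M+N$ agents, the rounds, and the sample counts (this is where $\alpha_{\min}\ge 4$ is used, so the clean-event width $\sqrt{4\log t/n}$ is dominated by each agent's own bias term $\sqrt{\alpha\log t/n}$) bounds the round-$t$ failure probability by $O(MN/t^2)$; since a single round loses at most $O(MN\,b_{max})$ welfare, the contribution of $\mathcal{G}^{c}$ summed over $t$ is the additive $MN b_{max}\pi^2/6$ term. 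On $\mathcal{G}$ two structural facts hold: (i) every buyer bids weakly above and every seller weakly below their true valuation, so the number of trades dominates, $K(t)\ge K^{*}$ (\emph{domination of information flow}), using $B_{K^*}\ge S_{K^*}$; and (ii) $b_i(t) \le B_i + (\sqrt{\alpha_{b,i}}+2)\sqrt{\log(t)/n_{b,i}(t)}$ and symmetrically $s_j(t)\ge S_j - (\sqrt{\alpha_{s,j}}+2)\sqrt{\log(t)/n_{s,j}(t)}$.

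Next I would write the per-round welfare loss exactly. With $A = \mathcal{P}_b(t)\setminus[K^*]$, $A' = [K^*]\setminus\mathcal{P}_b(t)$, $C = \mathcal{P}_s(t)\setminus[K^*]$, $C' = [K^*]\setminus\mathcal{P}_s(t)$, the definition of social welfare yields per-round regret $= \sum_{i\in A'}B_i - \sum_{i\in A}B_i + \sum_{j\in C}S_j - \sum_{j\in C'}S_j$, and since $|A| = |A'| + (K(t)-K^*)$ and $|C| = |C'| + (K(t)-K^*)$ the ``$+$'' multiset $\{B_i : i\in A'\}\cup\{S_j : j\in C\}$ and the ``$-$'' multiset $\{B_i : i\in A\}\cup\{S_j : j\in C'\}$ have equal size. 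I then fix a canonical pairing: match all of $A'$ into $A$ (type-1 pairs, contributing $B_i - B_{i'}$ with $i\le K^{*}<i'$), all of $C'$ into $C$ (type-2 pairs, contributing $S_{j'} - S_j$ with $j\le K^{*}<j'$), and the leftover $K(t)-K^{*}$ elements of $A$ against those of $C$ (type-3 pairs, contributing $S_{j'} - B_{i'}$ with $i',j'>K^{*}$). Each summand is nonnegative (by the ordering of valuations and the break-even property $B_{K^*+1}<S_{K^*+1}$) and is precisely the gap appearing in the theorem's denominators.

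Finally I would bound, for each fixed pair, the number of rounds in which it appears in the canonical pairing. For a type-1 pair $(i,i')$: at such a round $i'$ is matched and $i$ is not, hence $b_{i'}(t)\ge b_i(t)$; combining with (ii) and $b_i(t)\ge B_i$ forces $B_i - B_{i'} \le (\sqrt{\alpha_{\max}}+2)\sqrt{\log(t)/n_{b,i'}(t)}$, i.e. $n_{b,i'}(t)\le (\sqrt{\alpha_{\max}}+2)^2\log(T)/(B_i-B_{i'})^2$; since $n_{b,i'}$ increments at every such round, this pair contributes at most $(\sqrt{\alpha_{\max}}+2)^2\log(T)/(B_i-B_{i'})$ over $T$ rounds, giving the first term. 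Type-2 pairs are symmetric via seller counts. For a type-3 pair $(i',j')$, matched-ness gives $b_{i'}(t)\ge s_{j'}(t)$, which with the two-sided deviation bounds forces one of $n_{b,i'}(t)$, $n_{s,j'}(t)$ to be $O((\sqrt{\alpha_{\max}}+2)^2\log(T)/(S_{j'}-B_{i'})^2)$, yielding the third term after summing. I expect the type-3 accounting to be the main obstacle: unlike the pure semi-bandit type-1/type-2 terms, each such round is controlled by whichever of the two suboptimal agents is currently under-sampled, so one must attribute every round to a single counter without double counting, and argue this attribution carefully so as to recover the stated constant $(\sqrt{\alpha_{\max}}+2)^2$ rather than losing a further factor. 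Summing the three pair-type contributions and the $\mathcal{G}^{c}$ term gives the bound.
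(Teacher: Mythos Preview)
Your proposal is correct and mirrors the paper's own proof almost exactly: the paper also conditions on the clean event (Lemma~\ref{lemm:bounds}), uses it to get $K(t)\ge K^*$ (Proposition~\ref{prop:atleastK_star}), performs the same three-way pairing of the per-round welfare loss (your type-1/2/3 pairs are precisely the paper's $i'(i,t)$, $j'(j,t)$, $i(j',t)$ in Lemma~\ref{lemm:sw_regret}), relaxes each pair's count to the indicator $\mathbbm{1}(i'\in\mathcal{P}_b(t),\,i\notin\mathcal{P}_b(t))$ etc., and bounds those via Lemmas~\ref{lemm:topKstar} and~\ref{lemm:belowKstar}. Your flagged obstacle for type-3 is exactly the co-participation count in Lemma~\ref{lemm:belowKstar}; the paper handles it by observing that both $n_{b,i'}$ and $n_{s,j'}$ increment at every co-participation round, so after the threshold is crossed for either, the bid/ask order reverses---your ``attribute to the under-sampled counter'' plan is a slightly different bookkeeping for the same fact.
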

\vspace{-1em}

The following theorem provides the individual regret bounds for all sellers and buyers in $T$ rounds.
\begin{theorem} \label{thm:upper_main_paper}
The expected regret of the Average mechanism with buyers bidding UCB($\boldsymbol{\alpha}_{b}$), and sellers bidding LCB($\boldsymbol{\alpha}_{s}$) of their estimated valuation, for $\alpha_{\min} \geq 4$, is bounded as: 
\begin{itemize}[leftmargin=*]
\setlength{\itemsep}{0pt}
    \item for a participating buyer $i \in [K^*]$ as 
    $R_{b,i}(T) \leq (2 + \sqrt{\alpha_{\max}})\sqrt{T\log(T)} + C_{b', i} \log(T)$,
    \item for a participating seller $j \in [K^*]$ as
    $R_{s,j}(T) \leq (2 + \sqrt{\alpha_{\max}})\sqrt{T\log(T)} + C_{s', j} \log(T)$,
    \item for a non-participating buyer $i \geq (K^*+1)$ as 
    $R_{b,i}(T) \leq \frac{\sqrt{(M-K^*+1)} (2 + \sqrt{\alpha_{\max}})^2}{(B_{K^*} - B_i)} \log(T)$,
    \item for a non-participating seller $j \geq (K^*+1)$ as 
    $R_{s,j}(T) \leq \frac{\sqrt{(N-K^*+1)} (2 + \sqrt{\alpha_{\max}})^2}{(S_j - S_{K^*})} \log(T)$.
\end{itemize}
\vspace{-1em}
Here $C_{b',i}$ and $C_{s',j}$  are $O\left( \frac{(M-K^*+1)(N-K^*+1)}{\Delta} \right)$ constants (see Theorem \ref{thm:upper_main} in Appendix~\ref{sec:app_upper}).
\label{thm:main_paper_upper}
\end{theorem}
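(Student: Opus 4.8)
The plan is a gap-based, UCB-style analysis built around one high-probability \emph{clean event} and the \emph{domination of information flow} property of the UCB/LCB bids in \eqref{eq:bids}. First I would introduce, for every buyer $i$, seller $j$ and round $t$, the concentration events $\{|\hat b_i(t)-B_i|\le\sqrt{\alpha_{b,i}\log(t)/n_{b,i}(t)}\}$ and $\{|\hat s_j(t)-S_j|\le\sqrt{\alpha_{s,j}\log(t)/n_{s,j}(t)}\}$, and let $\mathcal{E}$ be their intersection over all agents and all $t\in[T]$. A sub-Gaussian tail bound, union-bounded over the at most $t$ possible sample counts and over $t$, together with $\alpha_{\min}\ge4$, makes the per-round failure probability summable, so $\sum_t\mathbb{P}(\mathcal{E}\text{ fails at }t)=O(1)$; this contributes the $O(1)$, $\pi^2/6$-type additive constants, and on $\mathcal{E}^c$ each agent's per-round regret is at most $b_{max}$, contributing only $O(b_{max})$. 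From here on I condition on $\mathcal{E}$.

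On $\mathcal{E}$ the bids satisfy $b_i(t)\ge B_i$ for every buyer and $s_j(t)\le S_j$ for every seller. Since $K^*$ is the break-even index of the true valuations, $B_{K^*}\ge p^*\ge S_{K^*}$; hence the $K^*$ highest buyer bids are all $\ge B_{K^*}$ and the $K^*$ lowest seller bids are all $\le S_{K^*}\le B_{K^*}$, so the realized break-even index obeys $K(t)\ge K^*$ in every round. This is domination of information flow: each optimal participating buyer $i\le K^*$ and seller $j\le K^*$ trades in all rounds of $\mathcal{E}$, so $n_{b,i}(t),n_{s,j}(t)\ge t-O(1)$ and their confidence half-widths are $O(\sqrt{\alpha_{\max}\log(t)/t})$.

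For a participating buyer $i\le K^*$, since she trades in every clean round,
\[
R_{b,i}(T)=(B_i-p^*)\,\mathbb{E}\big|\{t:i\notin\mathcal{P}_b(t)\}\big|+\mathbb{E}\Big[\sum_{t:i\in\mathcal{P}_b(t)}(p(t)-p^*)\Big]\le O(b_{max})+\mathbb{E}\Big[\sum_{t=1}^{T}|p(t)-p^*|\Big],
\]
and symmetrically for a seller $j\le K^*$. I then split $[T]$ into ``aligned'' rounds --- those with $K(t)=K^*$ and with the two $K^*$-th order statistics held by genuinely optimal agents --- and the remaining ``confused'' rounds. On aligned rounds $b_{i_{K^*}}(t)\in[B_{K^*},B_{K^*}+2w_{b,K^*}(t)]$ and $s_{j_{K^*}}(t)\in[S_{K^*}-2w_{s,K^*}(t),S_{K^*}]$ with half-widths $w=\sqrt{\alpha\log(t)/n(t)}$ and $n(t)\ge t-O(1)$, so $|p(t)-p^*|$ is at most one shrinking half-width and $\sum_t O(\sqrt{\alpha_{\max}\log(t)/t})=O(\sqrt{\alpha_{\max}T\log(T)})$; a careful one-sided accounting of the over-/under-bidding recovers the stated leading constant $2+\sqrt{\alpha_{\max}}$. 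The confused rounds are exactly the ``wrong-transfer'' events counted in the proof of Theorem~\ref{thm:upper_social}: a non-optimal buyer $i'>K^*$ or seller $j'>K^*$ can occupy a selected slot or push $K(t)$ above $K^*$ only while the sum of the relevant confidence half-widths exceeds the corresponding value gap, which by the standard UCB argument happens at most $O\big((\sqrt{\alpha_{\max}}+2)^2\log(T)/\mathrm{gap}^2\big)$ times; summing the resulting per-round price deviation over these rounds and over the at most $(M-K^*+1)(N-K^*+1)$ relevant pairs --- and using $\mathrm{gap}\ge\Delta$ --- yields the $C_{b',i}\log(T)$, $C_{s',j}\log(T)$ terms with $C_{b',i},C_{s',j}=O\big((M-K^*+1)(N-K^*+1)/\Delta\big)$.

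For a non-participating buyer $i>K^*$, regret accrues only in rounds she trades, and in any such round $b_i(t)\ge p(t)$ gives per-round regret $p(t)-B_i\le b_i(t)-B_i\le 2w_{b,i}(t)$ on $\mathcal{E}$. She can trade only while her UCB index still beats the threshold set by the optimal pair --- essentially while $B_i+2w_{b,i}(t)\ge B_{K^*}$, where the need to beat one of the at most $M-K^*+1$ candidate marginal sellers produces the $\sqrt{M-K^*+1}$ factor --- so $n_{b,i}(T)=O(\alpha_{b,i}\log(T)/(B_{K^*}-B_i)^2)$, and $\sum_{n\le n_{b,i}(T)}2\sqrt{\alpha_{b,i}\log(T)/n}\le 4\sqrt{\alpha_{b,i}\log(T)\,n_{b,i}(T)}$ gives the claimed $O\big(\sqrt{M-K^*+1}\,(2+\sqrt{\alpha_{\max}})^2\log(T)/(B_{K^*}-B_i)\big)$; the seller bound is symmetric with $S_j-S_{K^*}$ and $\sqrt{N-K^*+1}$. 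The main obstacle is controlling $p(t)$: it is the midpoint of two $K(t)$-th order statistics of bid profiles that couple all $M+N$ agents, and during learning the identities of those order statistics need not match the order statistics of the true valuations, so one must simultaneously keep $K(t)\ge K^*$, show $K(t)>K^*$ (or a non-optimal agent in a selected slot) only $O(\log(T)/\Delta^2)$ times, and bound the resulting price deviation and per-round regret finely enough that the count-times-regret sum collapses to $O(\log(T)/\Delta)$ rather than $O(\log(T)/\Delta^2)$ --- all with heterogeneous widths and with the leading $\sqrt{T\log(T)}$ constant pinned to $2+\sqrt{\alpha_{\max}}$. The rest is bookkeeping.
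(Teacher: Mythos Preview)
Your overall architecture---clean event, $K(t)\ge K^*$, split into aligned/confused rounds, price-difference bound---matches the paper, but there are two genuine gaps.

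First, the claim ``each optimal participating buyer $i\le K^*$ and seller $j\le K^*$ trades in all rounds of $\mathcal{E}$'' is false. On $\mathcal{E}$ you only know $b_i(t)\ge B_i$ and $s_j(t)\le S_j$; you do \emph{not} know that an optimal buyer's bid exceeds a non-optimal buyer's bid. A non-participant $i'>K^*$ with small $n_{b,i'}(t)$ can have an arbitrarily large UCB and displace an optimal buyer $i\le K^*$ from the top-$K(t)$ slots, even though $K(t)\ge K^*$. The paper handles this with a dedicated lemma (Lemma~\ref{lemm:topKstar_v2}) showing that each optimal participant misses at most $\sum_{i'>K^*}O\big(\log(T)/(B_i-B_{i'})^2\big)$ rounds, not $O(1)$. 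This feeds into the $C_{b',i}\log(T)$ term via $(B_i-p^*)\cdot|\{t:i\notin\mathcal{P}_b(t)\}|$, which is $O(\log T)$, not $O(b_{\max})$ as you write; and it means your later use of $n_{b,K^*}(t)\ge t-O(1)$ needs to be $n_{b,K^*}(t)\ge t-O(\log t)$ (harmless for the $\sqrt{T\log T}$ term, but the argument has to be made).

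Second, your bound $n_{b,i}(T)=O(\alpha_{b,i}\log(T)/(B_{K^*}-B_i)^2)$ for a non-participant buyer $i>K^*$ is too optimistic, and the $\sqrt{M-K^*+1}$ factor does not arise the way you describe. The condition $b_i(t)\ge B_{K^*}$ is what allows $i$ to \emph{precede} an optimal buyer, but $i$ can also participate without preceding anyone whenever $K(t)>K^*$: it suffices that some non-optimal seller $j'>K^*$ still has $s_{j'}(t)\le b_i(t)$. The paper's key two-sided step (Lemma~\ref{lemm:belowKstar}) is that each such seller $j'$ can \emph{co-participate} with buyer $i$ only until $n_{s,j'}(t)$ exceeds $O(\log(T)/(S_{j'}-B_i)^2)$; summing over the $M-K^*$ non-optimal sellers yields $n_{b,i}(T)\le (M-K^*+1)\cdot O(\log(T)/(B_{K^*}-B_i)^2)$, and the square root enters only when you convert the participation count to regret via $\sum_{n\le n_{b,i}(T)}\sqrt{\log(T)/n}\le O(\sqrt{n_{b,i}(T)\log T})$. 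Without this co-participation argument---which is precisely where the two-sided uncertainty is broken---the $(M-K^*+1)$ factor has no source, and your one-line justification (``need to beat one of the at most $M-K^*+1$ candidate marginal sellers'') does not supply it.
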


We can summarize our main results (with lower bounds taken from Section~\ref{sec:lower_bound}) as
\begin{table}[!h]
\begin{center}
\small
\begin{tabular}{|c|c|c|c|c|c|}
\hline
  \textbf{Regret} & \textbf{Social} & \textbf{Participant} & \textbf{Non-participant buyer} &  \textbf{Non-participant  seller}\\ 
  \hline
 \textbf{Upper}  & $O(\tfrac{MN}{\Delta}\log(T))$ & $O(\sqrt{T}\log(T))$ & $O(\tfrac{\sqrt{M}}{\Delta}\log(T))$  & $O(\tfrac{\sqrt{N}}{\Delta}\log(T))$\\ 
 \hline
 \textbf{Lower}  & $\Omega(\tfrac{M+N}{\Delta}\log(T))$ & $\Omega(\sqrt{T})$ & $\Omega(\log(T))$ & $\Omega(\log(T))$ \\ 
 \hline
\end{tabular}
\end{center}
\caption{\label{tab:bounds}\small Regret bounds derived in this paper. The dependence on $\Delta, M,N$ is not tight in the lower bound for non-participants. }
\vspace{-2em}
\end{table}

Several comments on our main results are in order.

\textbf{Social and Individual Regret:} The social regret as well as the individual regret of the optimal non-participating buyers and sellers under the average mechanism with confidence bound bids grow as $O(\log(T))$. However, the individual regret of the optimal participating agents grow as $O(\sqrt{T\log(T)})$. The social regret, is determined by how many times a participant buyers fails to participate, and non-participants end up participating. Also, the non-participants incur individual regret through participation. The effect of price setting is not present in both the cases, and counting the number of bad events lead to $O(\log(T))$ regret. For the participant agents the error in price-setting dominates their respective individual regret of  $O(\sqrt{T\log(T)})$.

\textbf{Individually rational for $\boldsymbol{\Delta = \omega(\sqrt{\log(T)/T})}$:}
We observe that the non-participating buyers and sellers, are only having individual regret $O(\log(T)/\Delta)$. And the participating buyers and sellers are  having individual regret $O(\sqrt{T\log(T})$. This is reassuring as this does not discourage buyers and sellers from participation for a large range of system a-priori. Indeed, for the last participating buyer and seller the utility is $(B_K^* - S_K^*)/2$, which is close to $\Delta$. Hence, as long as $\Delta T = \omega(\max\{\log(T)/\Delta, \sqrt{T\log(T)}\})$ or  $\Delta  = \omega(\sqrt{\log(T)/T})$ a non-participating buyer or seller prefers entering the market then discovering her price and getting out, as compared to not participating in the beginning. Also, participating buyer or seller is guaranteed return through participation.

\textbf{Incentives and Deviations:} We now discuss the incentives of individual users closely following the notion of symmetric equilibrium in double auction~\cite{wilson1985incentive}. In this setting, a {\em myopic agent} who knows her true valuation, and greedily maximizes her reward in each round assuming all the non-strategic agents use confidence-based bidding. For average mechanism only the price-setting agents (i.e. the $K^*$-th buyer and the $K^*$-th seller) have incentive to deviate from their true valuation to increase their single-round reward. The \emph{non-price-setting agents are truthful}. When the $K^*$-th buyer deviates then each participating buyer has an average per round surplus of  $(B_{K^*} - \max(S_{K^*}, B_{K^*+1}))/2$, and each participating seller has the same deficit. On the contrary, when 
the $K^*$-th seller deviates each participating seller has average per round surplus of $(S_{K^*} - \min(B_{K^*}, S_{K^*+1}))/2$ surplus, and each participating buyer has the same average deficit in each round.  See Appendix~\ref{appendix:incentives} for more discussions.
\textbf{Scaling of  Regret with $N$, $M$, $K^*$, $\Delta$:} 

\emph{Social regret:}
The social regret scales as $O((MN - (K^*)^2) \log(T)/\Delta)$. When participant buyers are replaced by non-participant buyers the valuation decreases which leads to $O(K^*(N-K^*) \log(T)/\Delta)$ regret. Similarly, participant seller getting replaced by non-participant seller introduces $O(K^*(M-K^*) \log(T)/\Delta)$ social regret. Finally, as  goods move from non-participant buyers to non-participant sellers we obtain $O((M-K^*)(N-K^*) \log(T)/\Delta)$ regret.

\emph{Non-participant regret:} 
The individual regret scales as $O(\sqrt{M} \log(T)/\Delta)$ for a non-participant buyer, whereas it scales as $O(\sqrt{N} \log(T)/\Delta)$ for a non-participant seller. For any non-participant buyer once it has $O(\log(T)/\Delta^2)$ samples it no longer falls in top $K^*$ buyer. However, it can keep participating until each non-participant seller collects enough samples, i.e. $O(\log(T)/\Delta^2)$ samples. Finally, regret is shown to be $O(\sqrt{\# \text{participation} \log(T)})$ which leads to the $O(\sqrt{M} \log(T)/\Delta)$. 

\emph{Participants:} The leading $O(\sqrt{T\log(T)})$ term in the regret for each optimal participating buyer and seller do not scale with the size of the system. This leading term depends mainly on the random fluctuation of the bid of the lowest bidding participating buyer and the highest bidding participating seller. The $O\left( \frac{MN}{\Delta} \log(T)\right)$ regret for the participating buyer and seller comes because each time a non-participating buyer or seller ends up participating the price deviates.   

\emph{Special Case $K^* \approx N \approx M$:} We see that when the system the number of participants is very high, i.e. $(N - K^*), (M-K^*) = O(1)$, then the $O(\log(T)/\Delta)$ component, hence the regret, per buyer/seller does not scale with the system size.  Furthermore, the social regret also scales as $O((MN-(K^*)^2)\log(T)/\Delta) \approx O(\log(T)/\Delta)$. This indicates that there is rapid learning, as all the participants are discovering her own price almost every round.  

\subsection{Proof Outline of Regret Upper Bounds}
We now present an outline to the proof of Theorem~\ref{thm:main_paper_upper}. The full proof is given in Appendix~\ref{sec:app_upper}. A salient challenge in our proof comes from {\em two-sided uncertainty}. In Double auction, the outcomes of the buyers' and sellers' sides are inherently coupled. Hence error in buyers' side propagates to the sellers' side, and vice versa, under two-sided uncertainty. For example, if the buyers are bidding lower than their true valuation then the optimal non-participating sellers, even with perfect knowledge of their own valuation, ends up participating. We show that optimal participants (both buyers and sellers) get decoupled, whereas in optimal non-participants regret leads to new information in both buyer and seller side. This breaks the two-sided uncertainty obstacle.

\textbf{Monotonicity and Information Flow:} Our proof leverages {\em monotonicity} of the average mechanism, which means if the bid of each buyer is  equal or higher, and simultaneously bid of each seller is equal or lower, then the number of participants increases (Proposition \ref{prop:atleastK_star}). Additionally, UCB ensures each buyer w.h.p. bids higher than its true value, and LCB ensures each seller w.h.p. bids lower than its true value. 
Therefore, we have \emph{domination of the information flow}: $K(t)\geq K^*$ w.h.p. for all $t \geq 1$.

\textbf{Social Regret Decomposition:} For any sample path with $K(t) \geq K^*$  the  social regret, namely $r_{SW}(T)$, under average mechanism can be bounded as 
\vspace{-0.5em}
\begin{multline*}
    r_{SW}(T) \leq \sum_{i \leq K^* < i'}(B_i \mathtt{-} B_{i'})\sum_{t=1}^{T}\mathbbm{1}(i\mathtt{\notin}\mathcal{P}_b(t), i'\mathtt{\in} \mathcal{P}_b(t)) \\
     + \sum_{j\leq K^* < j'}(S_{j'} \mathtt{-}  S_j)\sum_{t=1}^{T}\mathbbm{1}(j\mathtt{\notin}\mathcal{P}_s(t), j'\mathtt{\in} \mathcal{P}_s(t)) +  \sum_{i', j'> K^*}(S_{j'} \mathtt{-}  B_{i'})\sum_{t=1}^{T}\mathbbm{1}(j'\mathtt{\in}\mathcal{P}_s(t), i'\mathtt{\in} \mathcal{P}_b(t)).
\end{multline*}
The first term corresponds to a non-participant buyer replacing a participant buyer. The second term is the same for sellers, whereas the final term corresponds to two non-participant buyer and seller getting matched.

\textbf{Individual Regret Decomposition:} We now turn to the individual regrets. Regret of a non-participant buyer $i$, can be bounded by 
\begin{multline*}
\sum_{t: i \in \mathcal{P}_b(t)} (p(t) - B_i)
\leq \sum_{t: i \in \mathcal{P}_b(t)} (b_i(t) - B_i) \lessapprox \sum_{t: i \in \mathcal{P}_b(t)} \sqrt{\tfrac{\alpha_{b,i}\log(t)}{n_{b,i}(t)}} \lessapprox \sqrt{\alpha_{b,i}n_{b,i}(T)\log(T)}.
\end{multline*}
We have $p(t)$ lesser than $b_i(t)$ because $i$-th buyer participates in round $t$. By UCB property w.h.p. $b_i(t)$ is at most $\sqrt{\tfrac{\alpha_{b,i}\log(t)}{n_{b,i}(t)}}$ away from $B_i$. A similar argument shows for a non-participating seller $j$ the regret is roughly $\sqrt{\alpha_{s,j}n_{s,j}(T)\log(T)}$.

For a participating buyer $i$, we have the regret bounded as 
\begin{multline*}
  \sum_{t: i \notin \mathcal{P}_b(t)} (B_i - p^*) + \sum_{t: i \in \mathcal{P}_b(t)} (p(t) - p^*) 
  = (T - n_{b,i}(T))(B_i - p^*) + \sum_{t: i \in \mathcal{P}_b(t)} (p(t) - p^*).
\end{multline*}
Similarly, a  participating seller $j$ has regret bound $(T - n_{s,j}(T))(p^* - S_j) + \sum_{t: j \in \mathcal{P}_s(t)} (p^* - p(t)).$

\textbf{Decoupling Participants:} In Lemma~\ref{lemm:topKstar_v2} in Appendix~\ref{sec:app_upper}, we show that learning is decoupled for the optimal participating buyers, and optimal participating sellers. It lower bounds the number of participation for optimal participating buyers and sellers.
\begin{lemma}[Informal statement of Lemma \ref{lemm:topKstar_v2}]
For $\alpha_{\min} > 4$, w.h.p., for every $i, j\in [K^*]$, and  $i', j' > K^*$  
\begin{align*}
 &(T- n_{b,i}(T)) \lessapprox \sum\limits_{i''\geq K^*+1} \tfrac{\alpha_{b,i''}\log(T)}{(B_{i} - B_{i''})^2}, (T- n_{s,j}(T))\lessapprox \sum\limits_{j''\geq K^*+1} \tfrac{\alpha_{s,j''}\log(T)}{(S_{j''}-S_{j})^2},\\
 &\sum_{t=1}^{T}\mathbbm{1}(i\mathtt{\notin}\mathcal{P}_b(t), i'\mathtt{\in} \mathcal{P}_b(t)) \lessapprox \tfrac{\alpha_{b,i'}\log(T)}{(B_{i} - B_{i'})^2},
 ~\text{and} 
 \sum_{t=1}^{T}\mathbbm{1}(j\mathtt{\notin}\mathcal{P}_s(t), j'\mathtt{\in} \mathcal{P}_s(t)) \lessapprox \tfrac{\alpha_{b,j'}\log(T)}{(S_{j'} - S_{j})^2}.
\end{align*}
\end{lemma}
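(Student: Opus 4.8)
\textbf{Proof proposal for Lemma~\ref{lemm:topKstar_v2} (informal version).}

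The plan is to reduce each of the four bounds to a counting argument on ``bad rounds'' driven entirely by the confidence widths of the \emph{non-participating} agents, using domination of information flow ($K(t) \geq K^*$ w.h.p.\ from Proposition~\ref{prop:atleastK_star}) together with the clean events that every agent's UCB/LCB bid brackets its true valuation. First I would condition on the good event $\mathcal{G}$ that for all $t$ and all agents the empirical estimate is within its confidence radius of the truth, i.e.\ $|\hat b_i(t) - B_i| \leq \sqrt{\alpha_{b,i}\log(t)/n_{b,i}(t)}$ and similarly for sellers; a union bound over $t\le T$ and $M+N$ agents gives $\mathbb{P}(\mathcal{G}^c) \le (M+N)\pi^2/(3)$-type tail (since $\alpha_{\min}\ge 4$, the per-round failure is $\le 2t^{-\alpha_{\min}/2}\le 2t^{-2}$), which contributes only to the lower-order additive terms and not to the $\log T$ leading terms. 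On $\mathcal{G}$, a buyer bids $b_i(t)\in[B_i, B_i + 2\sqrt{\alpha_{b,i}\log(t)/n_{b,i}(t)}]$ and a seller bids $s_j(t)\in[S_j - 2\sqrt{\alpha_{s,j}\log(t)/n_{s,j}(t)}, S_j]$.

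The central observation to establish first is the crossing characterization: on $\mathcal{G}$, if a non-participating buyer $i' > K^*$ is selected at round $t$ (so $i' \in \mathcal{P}_b(t)$ while $K(t)\ge K^*$ forces some participating buyer $i\le K^*$ to be \emph{excluded}, since only $K(t)$ buyers are selected and $i'$ takes one of those slots), then the natural-order sorting of bids forces $b_{i'}(t) \geq b_i(t)$. Combined with the bracketing, $B_i \le b_i(t) \le b_{i'}(t) \le B_{i'} + 2\sqrt{\alpha_{b,i'}\log(t)/n_{b,i'}(t)}$, hence $B_i - B_{i'} \le 2\sqrt{\alpha_{b,i'}\log(t)/n_{b,i'}(t)}$, which forces $n_{b,i'}(t) \le 4\alpha_{b,i'}\log(T)/(B_i - B_{i'})^2$. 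This immediately bounds $\sum_t \mathbbm{1}(i\notin \mathcal{P}_b(t), i'\in\mathcal{P}_b(t))$ by the largest round index at which $i'$ could still be participating while $i$ is out, which is at most $1 + 4\alpha_{b,i'}\log(T)/(B_i-B_{i'})^2$ — giving the third displayed bound; the fourth (sellers) is the mirror image, using ascending sort and $S_{j'} - S_j \le 2\sqrt{\alpha_{s,j'}\log(t)/n_{s,j'}(t)}$. For the first two bounds, note that whenever a participating buyer $i\le K^*$ is \emph{not} selected at round $t$, on $\mathcal{G}$ (with $K(t)\ge K^*$) the slot it would have occupied is taken by some non-participating buyer $i'' > K^*$, so $T - n_{b,i}(T) = \sum_t \mathbbm{1}(i\notin\mathcal{P}_b(t)) \le \sum_{i'' > K^*} \sum_t \mathbbm{1}(i\notin\mathcal{P}_b(t), i''\in\mathcal{P}_b(t)) \le \sum_{i'' > K^*} 4\alpha_{b,i''}\log(T)/(B_i - B_{i''})^2$, and likewise for sellers.

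I anticipate two places needing care. The first is the pigeonhole step ``$i\le K^*$ excluded $\Rightarrow$ some $i'' > K^*$ included'': this needs $K(t) \ge K^*$ exactly (not just $K(t)\ge$ something), which is where Proposition~\ref{prop:atleastK_star} and the UCB/LCB bracketing on $\mathcal{G}$ are essential — I should verify that on $\mathcal{G}$ the $K^*$ optimal buyers and $K^*$ optimal sellers genuinely all bid on the ``right side'' of $p^*$ so that the break-even index is at least $K^*$, and state this as the content imported from Proposition~\ref{prop:atleastK_star}. The second subtlety is that the bound on $n_{b,i'}(t)$ must hold uniformly in $t$ to turn an instantaneous sample-count bound into a bound on the \emph{last} bad round: I would phrase it as ``let $\tau$ be the last round with $i'\in\mathcal{P}_b(\tau), i\notin\mathcal{P}_b(\tau)$; then $n_{b,i'}(\tau-1) \ge (\text{number of prior such rounds})$'' is false in general, so instead I bound the count of bad rounds directly by observing each such round increments $n_{b,i'}$, and once $n_{b,i'} > 4\alpha_{b,i'}\log(T)/(B_i - B_{i'})^2$ no further bad round of this type can occur — a standard UCB-style ``elimination after enough pulls'' argument. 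The main obstacle is thus not any single inequality but correctly marshalling the coupling: making the pigeonhole between the buyer side and seller side precise so that the two-sided uncertainty collapses to four independent one-sided counting bounds, which is exactly the ``decoupling participants'' claim the lemma asserts.
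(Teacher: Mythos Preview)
Your proposal is correct and matches the paper's approach essentially step for step: condition on the concentration event (the paper's $\mathcal{E}^{(\beta)}$), invoke $K(t)\ge K^*$ from Proposition~\ref{prop:atleastK_star}, observe that $i\notin\mathcal{P}_b(t),\,i'\in\mathcal{P}_b(t)$ forces $b_{i'}(t)\ge b_i(t)\ge B_i$ and hence a sample-count upper bound on $n_{b,i'}(t)$ (the paper's Lemma~\ref{lemm:topKstar}), count such rounds by the ``each bad round increments $n_{b,i'}$'' argument, and then pigeonhole to get the $(T-n_{b,i}(T))$ bound. The only cosmetic difference is that the paper carries a separate concentration parameter $\beta\le\alpha_{\min}$ yielding constants $(\sqrt{\alpha_{b,i'}}+\sqrt{\beta})^2$ rather than your $4\alpha_{b,i'}$, and your parenthetical ``$K(t)\ge K^*$ forces some $i\le K^*$ to be excluded'' is worded backwards (you are already conditioning on $i\notin\mathcal{P}_b(t)$, not deriving it), but the actual logic you execute is right.
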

\vspace{-1em}
It argues after a optimal non-participant $i''$ gets $O(\tfrac{\log(T)}{(B_{i} - B_{i''})^2})$ samples it can not participate while $i$ does not participate as the bid of $i$ is higher than the bid of $i''$ with high probability. Similarly seller side result follow.

\textbf{Non-Participant regret leads to two-sided Learning:} Unlike optimal participating agents, the effect of uncertainties on the optimal non-participating buyers and sellers cannot be decoupled directly.  With at least one non-participating buyer with large estimation error in her valuation present, the non-participating sellers can keep participating even with perfect knowledge. However, this does not ensure directly that the estimation error of this non-participating buyer decreases. Next, in Lemma~\ref{lemm:belowKstar} in Appendix~\ref{sec:app_upper}, we upper bound the number of times a non-participant can participate. Informally we have
\begin{lemma}[Informal]
For $\alpha_{\min} > 4$, with high probability for any $i, j \geq (K^*+1)$,
\begin{align*}
    n_{b,i}(T) &\lessapprox \tfrac{\alpha_{b,i}\log(T)}{(B_{K^*} - B_i)^2} + \sum_{j'\geq (K^*+1)} \tfrac{\alpha_{s,j'} \log(T)}{(S_{j'} - B_{i})^2}, 
    n_{s,j}(T) &\lessapprox \tfrac{\alpha_{s,j}\log(T)}{(S_{j} - S_{K^*})^2} + \sum_{i'\geq (K^*+1)} \tfrac{\alpha_{b,i'}\log(T)}{(S_{j} - B_{i'})^2},
\end{align*}
and   $\sum_{t=1}^{T}\mathbbm{1}(j'\in \mathcal{P}_s(t), i'\in \mathcal{P}_b(t)) \lessapprox \tfrac{\alpha_{\max}\log(T)}{(S_{j'} - B_{i'})^2}$.
\end{lemma}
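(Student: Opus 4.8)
## Proof Proposal for the Informal Lemma on Non-Participant Participation Counts

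The plan is to bound, with high probability, the number of rounds $n_{b,i}(T)$ that a fixed non-participant buyer $i \geq K^*+1$ can be part of the trade, and symmetrically for a non-participant seller $j \geq K^*+1$, as well as the number of rounds in which a non-participant buyer $i'$ and non-participant seller $j'$ are simultaneously matched. I would work on a ``good'' event $\mathcal{G}$ where all the confidence intervals hold: for every buyer $i$ and round $t$, $B_i \leq b_i(t) \leq B_i + 2\sqrt{\alpha_{b,i}\log(t)/n_{b,i}(t)}$, and symmetrically $S_j - 2\sqrt{\alpha_{s,j}\log(t)/n_{s,j}(t)} \leq s_j(t) \leq S_j$; since $\alpha_{\min} \geq 4$, a standard sub-Gaussian/union-bound argument (as used already for Proposition~\ref{prop:atleastK_star} and Lemma~\ref{lemm:topKstar_v2}) makes $\mathbb{P}(\mathcal{G}^c)$ summable, contributing only the $\pi^2/6$-type additive constants. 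Throughout I also condition on the domination-of-information-flow event $K(t) \geq K^*$, which holds w.h.p. on $\mathcal{G}$.

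The core combinatorial step is the following dichotomy for a round $t$ in which non-participant buyer $i$ trades ($i \in \mathcal{P}_b(t)$, so $K(t) \geq K^*+1$ and $b_i(t)$ is at least the $K(t)$-th largest buyer bid, hence at least as large as the price $p(t)$). Since $i$ trades while $K(t) \geq K^*$, either (a) buyer $i$'s inflated bid overtakes one of the true top-$K^*$ buyers — i.e. $b_i(t) \geq b_{i^{\star}}(t)$ for some $i^{\star} \leq K^*$, which on $\mathcal{G}$ forces $B_i + 2\sqrt{\alpha_{b,i}\log(t)/n_{b,i}(t)} \geq B_{i^{\star}} \geq B_{K^*}$, giving $n_{b,i}(t) \leq 4\alpha_{b,i}\log(T)/(B_{K^*}-B_i)^2$; or (b) buyer $i$ displaces no top-$K^*$ buyer, so the $K(t)$-th buyer slot filled by $i$ must be ``paid for'' by some non-participant seller $j' \geq K^*+1$ who is also trading and whose deflated bid dipped below $b_i(t)$, i.e. $s_{j'}(t) \leq b_i(t)$; on $\mathcal{G}$ this yields $S_{j'} - 2\sqrt{\alpha_{s,j'}\log(t)/n_{s,j'}(t)} \leq B_i + 2\sqrt{\alpha_{b,i}\log(t)/n_{b,i}(t)}$. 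I would then charge each such round to the pair $(i,j')$ and argue that once \emph{both} $n_{b,i}$ and $n_{s,j'}$ exceed a threshold of order $\alpha_{\max}\log(T)/(S_{j'}-B_i)^2$, inequality in case (b) becomes impossible; a careful "whichever counter is smaller'' accounting (increment the lagging counter) shows the number of type-(b) rounds attributable to pair $(i,j')$ is $\lessapprox \alpha_{\max}\log(T)/(S_{j'}-B_i)^2$. Summing case (a) with case (b) over all $j' \geq K^*+1$ gives the claimed bound on $n_{b,i}(T)$; the seller bound is verbatim-symmetric (swap buyers/sellers, UCB/LCB, and the gaps $B_{K^*}-B_i \leftrightarrow S_j - S_{K^*}$, $S_{j'}-B_i \leftrightarrow S_j - B_{i'}$). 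For the last claim, $\sum_t \mathbbm{1}(j' \in \mathcal{P}_s(t), i' \in \mathcal{P}_b(t))$, a round with both $i'$ and $j'$ trading forces $s_{j'}(t) \leq b_{i'}(t)$ (both straddle the price), and the same threshold argument on the pair $(i',j')$ — incrementing the smaller of $n_{b,i'}, n_{s,j'}$ — gives $\lessapprox \alpha_{\max}\log(T)/(S_{j'}-B_{i'})^2$.

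I expect the main obstacle to be making the "pair accounting'' in case (b) fully rigorous, because the event $\{b_i(t)\geq s_{j'}(t)\}$ couples the two agents' sample counts, which themselves evolve over time in a data-dependent way; the clean statement "the lagging counter can be incremented only $O(\log T/\text{gap}^2)$ times'' needs care since in a single round many non-participant sellers $j'$ could satisfy $s_{j'}(t)\leq b_i(t)$, and one must avoid double-charging while ensuring every bad round is charged somewhere. The fix is to charge round $t$ to a \emph{canonical} witnessing pair — e.g. for buyer $i$'s bad round of type (b), pick the non-participant seller in $\mathcal{P}_s(t)$ with the largest $S_{j'}$ among those with $s_{j'}(t)\leq b_i(t)$, or simply the one occupying the $K(t)$-th seller slot — and then observe that on $\mathcal{G}$ the witnessing inequality, together with $n_{b,i}(t) \geq n_{s,j'}(t)$ or vice versa, bounds the smaller counter; standard "at most $c\log T/\text{gap}^2$ increments of the minimum'' reasoning (as in Lemma~\ref{lemm:topKstar_v2}) then closes it. A secondary subtlety is that the gaps $S_{j'}-B_i$ can be small or the relevant agents numerous, but since $i,j' > K^*$ we always have $S_{j'} > p^* > B_i$ under the true valuations, so $S_{j'}-B_i \geq 2\Delta > 0$ is bounded away from zero, which is what ultimately converts these per-pair bounds into the $O(\sqrt{M}\log(T)/\Delta)$ and $O(\sqrt{N}\log(T)/\Delta)$ individual-regret bounds in Theorem~\ref{thm:main_paper_upper}.
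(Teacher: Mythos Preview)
Your proposal is correct and follows essentially the same route as the paper's proof of the formal version (Lemma~\ref{lemm:belowKstar}). The paper organizes the argument with a slightly different split: instead of your case (a)/(b) dichotomy on whether $b_i(t)$ overtakes some top-$K^*$ buyer's bid, it splits into a first phase with $n_{b,i}(t) < Th_i(T)$ (at most $Th_i(T)$ participations) and a second phase with $n_{b,i}(t) \geq Th_i(T)$; in phase two one already has $b_i(t)\leq B_{K^*}$ on the good event, so the co-participation inequality $s_{j'}(t)\leq b_i(t)$ forces only the \emph{seller's} counter $n_{s,j'}(t)$ to be small, and your two-sided ``whichever counter is smaller'' bookkeeping becomes unnecessary. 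Your concern about picking a canonical witnessing seller is handled in the paper simply by the union bound $|\cup_{j'}\mathcal{S}_{i,j'}|\leq\sum_{j'}|\mathcal{S}_{i,j'}|$ over the co-participation sets $\mathcal{S}_{i,j'}$, so no canonical choice is needed.
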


A non-participant buyer $i$ after obtaining $O(\tfrac{\log(T)}{(B_{K^*} - B_i)^2})$ samples does not belong to top $K^*$ w.h.p. Hence, she participates with non-negligible probability only if a seller $j' \geq (K^*+1)$ participates. However, this implies that with each new match of buyer $i$, additionally at least one non-participating seller is matched, decreasing both their uncertainties. For this buyer $i$, we argue such spurious participation happens a total of $\sum_{j'\geq (K^*+1)} O(\tfrac{\log(T)}{(S_{j'} - B_{i})^2})$ times. After that all the non-participating sellers $j\geq (K^*+1)$ will have enough samples so that their LCB bids will separate from the $i$-th buyer's UCB bid. Reversing sellers' and buyers' roles does the rest. 

\textbf{Bounding Price Difference:} The final part of the proof establishes bound on the cumulative difference of price from the true price $p^*$ (see, Lemma~\ref{lemm:price_bound} in Appendix~\ref{sec:app_upper}). 
\begin{lemma}(Informal statement of Lemma \ref{lemm:price_bound})
For $\alpha_{\min} > 4$, w.h.p.
$
\sum_{t=1}^{T} |p(t) - p^*| \lessapprox C\log(T) + \sqrt{\alpha_{max}T\log(T)},
$
where  $C = O\big(\tfrac{(M-K^*)(N-K^*)}{\Delta}\big)$.
\end{lemma}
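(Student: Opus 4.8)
The plan is to condition on the high-probability ``good'' event $\mathcal E$ on which every empirical mean is within $(2+\sqrt{\alpha_{\max}})\sqrt{\log t/n}$ of the corresponding true valuation for all relevant $t$ and sample counts $n$ -- this is the same event underlying the domination property $K(t)\ge K^*$ (Proposition~\ref{prop:atleastK_star}) and the $\pi^2/6$-type tail bookkeeping, and its complement costs only a constant. On $\mathcal E$, partition the rounds into \emph{good} rounds, where $\mathcal P_b(t)=\{1,\dots,K^*\}=\mathcal P_s(t)$ (hence $K(t)=K^*$ and the price-setters are buyer $K^*$ and seller $K^*$), and \emph{bad} rounds, and bound the two contributions to $\sum_t|p(t)-p^*|$ separately.

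On a good round $p(t)=\tfrac12\big(b_{K^*}(t)+s_{K^*}(t)\big)$, so
$$|p(t)-p^*|\ \le\ \tfrac12|b_{K^*}(t)-B_{K^*}|+\tfrac12|s_{K^*}(t)-S_{K^*}|\ \le\ \tfrac{2+\sqrt{\alpha_{\max}}}{2}\Big(\sqrt{\tfrac{\log t}{n_{b,K^*}(t)}}+\sqrt{\tfrac{\log t}{n_{s,K^*}(t)}}\Big),$$
because on $\mathcal E$ the UCB (resp.\ LCB) bid of each price-setting agent over- (resp.\ under-)shoots its true valuation by at most $(2+\sqrt{\alpha_{\max}})\sqrt{\log t/n}$. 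Since $n_{b,K^*}$ and $n_{s,K^*}$ each increase by one on every good round, summing and using $\sum_{m=1}^{n}m^{-1/2}\le 2\sqrt n$ together with $n_{b,K^*}(T),n_{s,K^*}(T)\le T$ gives a good-round contribution of at most $2(2+\sqrt{\alpha_{\max}})\sqrt{T\log T}=O\!\big(\sqrt{\alpha_{\max}T\log T}\big)$, which is the $\sqrt{\alpha_{\max}T\log T}$ term of the lemma.

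For the bad rounds I mirror the three-term decomposition already used for the social regret: on a bad round at least one optimal participant is displaced by a sub-optimal one, or a sub-optimal buyer is matched to a sub-optimal seller, and I bound $|p(t)-p^*|$ by the sum over all such swaps present of (the corresponding valuation gap) $+$ (a confidence width). The gap bounds come from sandwiching the price-setting bids $\beta(t)=b_{i_{K(t)}}(t)$ and $\sigma(t)=s_{j_{K(t)}}(t)$ using the ordering inequalities forced by participation, $b_{i'}(t)\ge\beta(t)\ge\sigma(t)\ge s_{j'}(t)$, together with the concentration on $\mathcal E$: e.g.\ if $i',j'>K^*$ are both matched then $p(t)\in[S_{j'}-w,\,B_{i'}+w]$ while $p^*\in(B_{i'},S_{j'})$, so $|p(t)-p^*|\le (S_{j'}-B_{i'})+w$; the single-sided displacements of an optimal buyer by $i'>K^*$ (resp.\ seller by $j'>K^*$) give $|p(t)-p^*|\lesssim (B_{K^*}-B_{i'})+w$ (resp.\ $(S_{j'}-S_{K^*})+w$) by the same bid-pinning argument, where $w$ denotes a generic $O(\sqrt{\log t/n})$ width. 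Invoking Lemma~\ref{lemm:topKstar_v2} and Lemma~\ref{lemm:belowKstar}, which bound the number of rounds a given swap is active by $O(\alpha_{\max}\log T/\text{gap}^2)$, each gap term contributes $\text{gap}\cdot O(\log T/\text{gap}^2)=O(\log T/\text{gap})$; summing over the $O\big((M-K^*)(N-K^*)\big)$ relevant pairs and using $B_{K^*}-B_{i'},\,S_{j'}-S_{K^*},\,S_{j'}-B_{i'}\ge\Delta$ yields the $C\log T$ term with $C=O\big((M-K^*)(N-K^*)/\Delta\big)$. The leftover width remainders are summed only over the $O(\log T/\Delta^2)$ bad rounds (again via $\sum m^{-1/2}\le 2\sqrt n$), giving $O(\log T/\Delta)$, which is absorbed into $C\log T$.

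The main obstacle is the bad-round per-round bound when $K(t)>K^*$ or several swaps co-occur: one must argue that the price-setting bids stay pinned between the valuations of the sub-optimal agents actually present, so that the deviation is always controlled by an \emph{active} gap (and hence pays off against the $\text{gap}^{-2}$ counts) rather than by the trivial $O(b_{\max})$ bound, which would cost an extra factor $1/\Delta$ and miss the stated $C$. A secondary point requiring care is the legitimacy of upper-bounding $|p(t)-p^*|$ on a bad round by the \emph{sum} of the per-swap bounds (valid since every summand is non-negative and at least one active swap's bound already dominates $|p(t)-p^*|$), and checking that the width remainders are genuinely lower order.
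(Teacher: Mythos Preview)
Your good/bad-round partition differs from the paper's route, which never bounds $|p(t)-p^*|$ round-by-round. Instead the paper treats the two one-sided sums $\sum_t(p(t)-p^*)$ and $\sum_t(p^*-p(t))$ separately, writing $2(p(t)-p^*)=(\min_{i\in\mathcal P_b(t)}b_i(t)-B_{K^*})+(\max_{j\in\mathcal P_s(t)}s_j(t)-S_{K^*})$ and then casing on whether buyer $K^*$ participates (for the first bracket) and on which seller attains the max (for the second). The $\sqrt{T\log T}$ term then comes from summing the width of buyer $K^*$ (resp.\ seller $K^*$) over \emph{all} rounds in which it participates, not just ``good'' ones; the $\log T$ terms come from the rounds where a non-participant $j'>K^*$ sets $\max s$, charged at rate $(S_{j'}-S_{K^*})$ against the count $O((N-K^*)\log T/(S_{j'}-S_{K^*})^2)$ from Corollary~\ref{corr:belowKstar}.

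Your bad-round analysis has a genuine gap. The assertion ``$p^*\in(B_{i'},S_{j'})$'' for $i',j'>K^*$ is false in general: nothing forces $B_{i'}<p^*$ or $S_{j'}>p^*$. Take $B_1=10,\ B_2=7,\ S_1=0,\ S_2=8$; then $K^*=1$, $p^*=5$, but $B_2=7>p^*$. If $(i',j')=(2,2)$ co-participate with width $w=1/2$ (the minimum possible, since $S_2-B_2=1$), the price is pinned at $p(t)=7.5$ and $|p(t)-p^*|=2.5$, which exceeds $(S_2-B_2)+w=1.5$. So your per-round bound $|p(t)-p^*|\le(S_{j'}-B_{i'})+w$ fails, and with it the telescoping $\text{gap}\cdot O(\log T/\text{gap}^2)$ that you rely on for the co-participation term. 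The same issue affects the single-sided displacement bounds once you look at the side of $p(t)-p^*$ that is governed by the \emph{other} side's width: on a pure buyer-displacement round the relevant width for $p^*-p(t)$ is $w_{K^*,s}$, not $w_{i'}$, and this does not neatly fit your ``width remainder over $O(\log T/\Delta^2)$ bad rounds'' bookkeeping.

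The fix is essentially to adopt the paper's decomposition: bound the buyer bracket by $b_{K^*}(t)-B_{K^*}$ whenever $K^*\in\mathcal P_b(t)$ (summing to $\sqrt{T\log T}$ over all such rounds, good or bad), and bound the seller bracket by $(S_{j'}-S_{K^*})$ on rounds where some $j'>K^*$ participates, charged against the participation count for $j'$ rather than against the $(i',j')$ co-participation count. This sidesteps the need for $p^*\in(B_{i'},S_{j'})$ entirely. A minor additional point: on your good rounds the price-setters are \emph{not} necessarily buyer $K^*$ and seller $K^*$ by index (the min-bid buyer among $[K^*]$ need not be $K^*$); your inequality still holds because $\min_{i\le K^*}b_i(t)\in[B_{K^*},b_{K^*}(t)]$ and $\max_{j\le K^*}s_j(t)\in[s_{K^*}(t),S_{K^*}]$ on $\mathcal E$, but the justification as written is incorrect.
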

Let us focus on the first upper bound, i.e. of the cumulative value of $(p(t) - p^*)$. The proof breaks down the difference into two terms,
$
2 (p(t) - p^*)  = (\min_{i \in \mathcal{P}_b(t)}b_{i}(t) - B_{K^*}) +  (S_{K^*} - \max_{j \in \mathcal{P}_s(t)}s_{j}(t)).
$
For rounds when the buyer $K^*$ is present (which happens all but $O\big(\tfrac{(N-K^*)\log(T)}{(B_i - p^*)^2}\big)$ rounds) we can replace $\min_{i \in \mathcal{P}_b(t)}b_{i}(t)$ with $b_{K^*}(t)$. Finally noticing that $\sum_{t}(b_{K^*}(t) - B_{K^*}) \lessapprox \sqrt{\alpha_{max}}\sqrt{T\log(T)}$ takes care of the first term. For the second term, we need to study the process $\max_{j \in \mathcal{P}_s(t)}s_{j}(t)$. First we bound the  number of times sellers $1$ to $(K^*-1)$ crosses the seller $K^*$. Next we eliminate all the rounds where at least one seller $j\geq (K^*+1)$ are participating. Such an elimination comes at a cost of $O\big(\tfrac{(M-K^*)}{\Delta}\big)$ For any seller $j\geq (K^*+1)$ this happens  $O\Big(\tfrac{(N-K^*)\log(T)}{\Delta_j^2}\Big)$ times for some appropriate $\Delta_j$, and gives $\Delta_j$ regret in each round. This final step gives us the dominating $O\big(\tfrac{(M-K^*)(N-K^*)}{\Delta}\big)$ term.
The bound for the cumulative value of $(p^* - p(t))$ follows analogously.

\section{Lower Bounds}
\label{sec:lower_bound}

\subsection{$\Omega(\sqrt{T})$ minimax lower bound on individual regret}
\label{subsec:simple_system_lb}
We show a minimax regret lower bound of $\Omega(\sqrt{T})$ in Lemma \ref{lem:lb_exact} by considering a simpler system that decouples learning and competition.  In this system, the seller is assumed to {\em(i)} know her exact valuation, and {\em(ii)} always ask her true valuation as the selling price, i.e., is truthful in her asking price in all the rounds. Furthermore, the pricing at every round is fixed to the average $p_t = \frac{B_t + S_t}{2}$ in the event that $B_t \geq S$. We show in Corollary \ref{cor:red_two_arm_bandit} through a coupling argument that any algorithm for the classical two armed bandit problem can be converted to an algorithm for this special case. Then we use well known lower bounds for the bandit problem to give the $\Omega(\sqrt{T})$ lower bound in Lemma \ref{lem:lb_exact}. All technical details are in Appendix~\ref{sec:appendix_lower_bounds}.

\subsection{$\Omega(\log(T))$  instance dependent lower bound on Social Welfare Regret}
\label{sec:sw_regret_lb}
The key observation is that social-welfare regret in Equation (\ref{eqn:sw_regret_defn}) is \emph{independent} of the pricing mechanism and only depends on the participating buyers $\mathcal{P}_b(t)$ and sellers $\mathcal{P}_s(t)$ at each time $t$. We will establish a lower bound on a centralized decision maker (DM), who at each time, observes all the rewards obtained by all agents thus far, and decides $\mathcal{P}_b(t)$ and $\mathcal{P}_s(t)$ for each $t$. In Appendix~\ref{sec:reduction_to_semi_bandit}, we show that the actions of the DM can be coupled to that of a combinatorial semi-bandit model \cite{combes2015combinatorial}, where the base set of arms are the set of all buyers and sellers, the valid subset of arms are those subsets having an equal number of buyers and sellers and the mean reward of any valid subset $\mathcal{A} \subseteq 2^{\mathcal{D}}$ is the difference between the sum of all valuations of buyers in $\mathcal{A}$ and of sellers in $\mathcal{A}$. In Appendix~\ref{sec:appendix_lower_bounds} we exploit this connection to semi-bandits to give a $\Omega(\log(T))$ regret lower bound for the centralized DM. Thus our upper bound of $O(\log(T))$ social welfare regret under the decentralized setting is order optimal since even a centralized system must incur $\Omega(\log(T))$ regret. 

\section{Simulation Study}\label{sec:expt}
We perform synthetic studies to augment our theoretical guarantees. For a fixed system of $N$ buyers, $M$ sellers,  $K^*$ participants, and $\Delta$ gap, the rewards are Bernoulli, with means themselves chosen uniformly at random. We vary the confidence width of the buyers, $\alpha_b$, and seller, $\alpha_s$, in $[\alpha_1, \alpha_2]$. Next we simulate the performance of the UCB($\alpha_b$) and LCB($\alpha_s$) over $100$ independent sample paths with $T= 50k$. We report the mean, 25\% and  75\% value of the trajectories. We plot the cumulative regret of the buyers, $R_{b,i}(t)$,  and the sellers, $R_{s,j}(t)$, the number of matches in the system $K(t)$, and the price difference $(p(t) - p^*)$.  In Figure~\ref{fig:Fig885app} in appendix, we have a $8\times 8$ system with $K^* = 5$. We see that $K(t)$ converges to $5$, where as $(p(t) - p^*)$ converges to $0$. The social regret grows as  $log(T)$. The particpant and non-participant individual regret of this instance is presented. We  further study behavior of heterogeneous $\alpha$, varying gaps, and different system sizes in Appendix~\ref{appendix:experiment}.

\section{Related Work}\label{sec:related}

\textbf{Classical mechanism design in double auctions:} There is a large body of work on mechanism design for double auctions, following Myerson et al. \cite{myerson1983efficient}. The average-mechanism, which is the subject of focus in this paper achieves all the above desiderata except for being incentive compatible. The VCG mechanism was developed in a series of works \cite{vickrey1961counterspeculation}, \cite{clarke1971multipart} and \cite{groves1973incentives} achieves all desiderata except being budget balanced. This mechanism requires the auctioneer to subsidize the trade. More sophisticated trade mechanisms known as the McAfee mechanism \cite{mcafee1992dominant}, trade reduction, and the probabilistic mechanism \cite{probabilistic_mechanism} all trade-off some of the desiderata for others. However, the key assumption in all of these lines of work was that all participants know their own valuations, and do not need to learn through repeated interactions.

\textbf{Bandit learning in matching markets:} In recent times, online learning for the two-sided matching markets have been extensively studied in \cite{liu2020competing}, \cite{liu2021bandit}, \cite{sankararaman2021dominate}, \cite{basu2021beyond}, \cite{dai2021learning}. This line of work studies two sided markets when one of the side does not know of their preferences apriori and learn it through interactions. However, unlike the price-discovery aspect of the present paper, the space of preferences that each participant has to learn is discrete and finite, while the valuations that agents need to learn form a continuum. This model was improved upon by \cite{cen2022regret} that added notions of price and transfer. The paper of \cite{jagadeesan2021learning} studied a contextual version of the two-sided markets where the agents preferences depend on the reveled context through an unknown function that is learnt through interactions. 

\textbf{Learning in auctions: } Online learning in simple auctions has a rich history - \cite{balcan2007single}, \cite{cole2014sample}, \cite{mohri2014learning}, \cite{blum2004online}, \cite{croissant2020real}, \cite{weed2016online}, \cite{han2020optimal}, \cite{daskalakis2022learning} to name a few, each of which study a separate angle towards learning from repeated samples in auctions. However, unlike the our setting, one side of the market knows of their true valuations apriori.
The work of \cite{kandasamy2020mechanism} is the closest to ours, where the participants apriori do not know their true valuations. However, they consider the VCG mechanism in the centralized setting, i.e., all participants can observe the utilities of all participants in every round. 

 \textbf{Learning in bi-lateral trade: } One of the first studies on learning in bilateral trade is \cite{cesa2021regret}. However, in this paper the focus is on setting a price for an incoming buyer and seller pair to facilitate trade while maximizing gain from trade, not on multi-agent auction. In particular, in each round a buyer and a seller draw her own valuation i.i.d. from their respective distributions. Then an arbiter sets the price, with trade happening if the price is between the seller's and buyer's price. They show with full-information follow-the-leader type algorithm achieves $O(\sqrt{T})$ regret, where as  with realistic (bandit-like) feedback by learning the two distributions approximately  $O(T^{2/3})$ regret can be achieved. 

 \textbf{Protocol Model: } The `protocol model' alludes to multiple agents following a similar protocol/algorithm with each agent executing protocol with private information (e.g. Platform-as-a-Service (PAAS)~\cite{guo2019learning}). In the works on learning and markets mentioned here, although the decentralized systems can be modeled as games, the protocol model is studied as a tractable way to capture the essence of the problem \cite{ashlagi2017communication, arnosti2014managing}. The technical basis for this assumption is that, in the limit when the number of participants are large, and the impact of any single participant is small, a protocol based algorithm is the limit in a precise sense of any equilibrium solution to the multi-agent game \cite{fischer2017connection}. 
\section{Conclusion and Future Work}
We study the Double auction with Average mechanism where the buyers and sellers need to know their own valuation from her own feedback. Using confidence based bounds -- UCB for the buyers and LCB for the sellers, we show that it is possible to obtain $O(\sqrt{T\log(T)})$ individual regret for the true participant buyers and sellers in $T$ rounds. Whereas, the true non-participant buyers and sellers obtain a $O(\log(T)/\Delta)$ individual regret where $\Delta$ is the smallest gap. The social regret of the proposed algorithm also admits a $O(\log(T)/\Delta)$ bound. We show that there are simpler systems where each buyer and seller must obtain a $\Omega(\sqrt{T})$ individual regret in the minimax sense. Moreover, in our setting we show even a centralized controller obtains  $\Omega(\log(T))$ social regret. Obtaining a minimax matching a $O(\sqrt{T})$ regret remains open. An important future avenue is, developing a framework and bidding strategy with provable `good' regret for general Double auction mechanisms. 
\clearpage
\bibliographystyle{plain}
\bibliography{bandits_auction}
\onecolumn
\appendix
\section{Proofs from Section~\ref{sec:analysis}}\label{sec:app_upper}
We setup some notation for the analysis.  The total number of samples collected by buyer $i$ is given as $n_{b,i}(t)$, and similarly for seller $j$, it is  $n_{s,j}(t)$. Let $K(t)$ be the number of participants in round $t$. For any $i \in [N]$ and $j\in [M]$, let $\chi_{b,i}(t)$ be the indicator of buyer $i$ participating in round $t$, and $\chi_{s,j}(t)$ be the indicator of seller $j$ participating in round $t$. Let $\mathcal{P}_{b}(t)$ denote the set of participating buyers, and $\mathcal{P}_{s}(t)$ denote the set of participating sellers.

For any buyer $i \in [N]$ and any time $t$, we have
$$
\mathcal{E}^{(\beta)}_{i,t} :=  \left\{ | \widehat{b}_i(t) - B_i | \leq \sqrt{\frac{\beta\log(t)}{n_{b,i}(t)}} \right\},\quad \mathbb{P}[\mathcal{E}_{i,t}] \geq 1 - 1/t^{\beta/2}.
$$
Similarly, for any seller $j\in [M]$ we have 
$$
\mathcal{E}^{(\beta)}_{j,t} :=   \left\{ | \widehat{s}_j(t) - S_j | \leq \sqrt{\frac{\beta\log(t)}{n_{s,j}(t)}} \right\},\quad \mathbb{P}[\mathcal{E}_{j,t}] \geq 1 - 1/t^{\beta/2}.
$$

Without loss of generality, let true valuation of the buyers be in the descending order, and for the seller in the ascending order. Then under the Average mechanism with the true bids (a.k.a. oracle Average mechanism) the buyers $1$ to $K^*$, and the sellers $1$ to $K^*$ participate, while the others do not  participate. Let $\alpha_{\min}  =\min\{\alpha_{s,j}, \alpha_{b,i}:j \in [M], i \in [N]\}$, and $\alpha_{\max}  =\min\{\alpha_{s,j}, \alpha_{b,i}:j \in [M], i \in [N]\}$.

\begin{lemma}\label{lemm:bounds}
Under the event $\mathcal{E}^{(\beta)}_t := \cap_{i\in [N]} \cap_{j \in [M]} \mathcal{E}^{(\beta)}_{i,t} \cap \mathcal{E}^{(\beta)}_{j,t}$, in time $t$, the bid for the $i$-th buyer and the $j$-th seller admits the (random) bounds with $\alpha_{\min} \geq \beta$
\begin{align*}
b_i(t) \in \left[B_i + (\sqrt{\alpha_{b,i}} - \sqrt{\beta})\sqrt{\frac{\log(t)}{n_{b,i}(t)}}, B_i + (\sqrt{\alpha_{b,i}} + \sqrt{\beta})\sqrt{\frac{\log(t)}{n_{b,i}(t)}}\right],\\
s_{j}(t) \in \left[S_j - (\sqrt{\alpha_{s,j}} + \sqrt{\beta})\sqrt{\frac{\log(t)}{n_{s,j}(t)}}, S_j - (\sqrt{\alpha_{s,j}} - \sqrt{\beta})\sqrt{\frac{\log(t)}{n_{s,j}(t)}}\right].    
\end{align*}
In particular, for any buyer $i \in [N]$, any seller $j \in [M]$, and $\alpha_{max}\geq \beta$, we have $b_i(t) \geq B_i$ and $s_j(t) \leq S_j$. 
\end{lemma}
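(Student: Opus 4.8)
The plan is to unwind the definition of the bids in Equation~\eqref{eq:bids} and substitute the high-probability confidence guarantees that define the good event $\mathcal{E}^{(\beta)}_t$. First I would fix a round $t$ and condition on $\mathcal{E}^{(\beta)}_t$, so that for every buyer $i$ we have $|\widehat b_i(t) - B_i| \le \sqrt{\beta \log(t)/n_{b,i}(t)}$ and for every seller $j$ we have $|\widehat s_j(t) - S_j| \le \sqrt{\beta \log(t)/n_{s,j}(t)}$. Writing $w_{b,i}(t) := \sqrt{\log(t)/n_{b,i}(t)}$ for the (unscaled) confidence width, the buyer's bid is $b_i(t+1) = \widehat b_i(t) + \sqrt{\alpha_{b,i}}\, w_{b,i}(t)$, so the event bound gives $\widehat b_i(t) \in [B_i - \sqrt{\beta}\, w_{b,i}(t),\, B_i + \sqrt{\beta}\, w_{b,i}(t)]$; adding $\sqrt{\alpha_{b,i}}\, w_{b,i}(t)$ to all three sides yields exactly the claimed interval for $b_i(t)$ (after the harmless index shift $t \leftrightarrow t+1$, which I would either absorb into the statement or note is the convention in force). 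The seller computation is symmetric: $s_j(t+1) = \widehat s_j(t) - \sqrt{\alpha_{s,j}}\, w_{s,j}(t)$, and subtracting $\sqrt{\alpha_{s,j}}\, w_{s,j}(t)$ from the interval $[S_j - \sqrt{\beta}\, w_{s,j}(t),\, S_j + \sqrt{\beta}\, w_{s,j}(t)]$ gives the stated interval for $s_j(t)$.

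For the ``in particular'' clause I would observe that since $\alpha_{b,i} \ge \alpha_{\min} \ge \beta$ we have $\sqrt{\alpha_{b,i}} - \sqrt{\beta} \ge 0$, so the lower endpoint of the buyer interval is at least $B_i$, giving $b_i(t) \ge B_i$; likewise $\sqrt{\alpha_{s,j}} - \sqrt{\beta} \ge 0$ forces the upper endpoint of the seller interval to be at most $S_j$, so $s_j(t) \le S_j$. (Note the hypothesis is really $\alpha_{\min} \ge \beta$ throughout; the stray ``$\alpha_{\max} \ge \beta$'' in the statement is implied by it and need not be invoked separately. There is also a typo in the surrounding text defining $\alpha_{\max}$ with a $\min$, but that does not affect this proof.)

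There is essentially no obstacle here: the lemma is a direct algebraic consequence of the definitions plus the one-sided deviation bounds, and the sub-Gaussian tail bound $\mathbb{P}[\mathcal{E}^{(\beta)}_{i,t}] \ge 1 - t^{-\beta/2}$ (which itself follows from a union over the at most $t$ possible values of $n_{b,i}(t)$ together with the standard $1$-sub-Gaussian concentration inequality) is quoted as given. The only mild care needed is bookkeeping: making sure the width $w$ is evaluated at $n_{b,i}(t)$ consistently on both the concentration side and the bid-definition side, and handling the edge case $n_{b,i}(t) = 0$ (no samples yet) by the usual convention that the bid is $+\infty$ for such a buyer and $-\infty$ for such a seller, so the inclusion and the one-sided inequalities hold vacuously. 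I would state these conventions once at the start and then the result drops out in two lines per side.
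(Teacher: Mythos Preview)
Your proposal is correct and is exactly the intended argument: unwind the bid definitions from Equation~\eqref{eq:bids}, plug in the deviation bound $|\widehat b_i(t)-B_i|\le\sqrt{\beta\log(t)/n_{b,i}(t)}$ (and its seller analogue) that holds on $\mathcal{E}^{(\beta)}_t$, and read off the interval; the ``in particular'' clause then follows because $\alpha_{\min}\ge\beta$ makes $\sqrt{\alpha_{b,i}}-\sqrt{\beta}\ge 0$. The paper in fact states this lemma without proof, treating it as immediate from the definitions, so your write-up is if anything more detailed than what the authors provide; your side remarks about the $t\leftrightarrow t+1$ indexing convention, the $\alpha_{\max}$ typo, and the $n=0$ edge case are all accurate observations but not substantive gaps.
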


\begin{proposition}\label{prop:atleastK_star}
For any round $t$, under the event $\mathcal{E}^{(\beta)}_t$, the following events are true, for $\min\{\alpha_{s,j}, \alpha_{b,i}\} \geq \beta$, the number of participants $K(t) \geq K^*$. 
\end{proposition}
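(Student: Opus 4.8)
The plan is to prove Proposition~\ref{prop:atleastK_star} as a direct consequence of the monotonicity of the average mechanism together with the bid bounds from Lemma~\ref{lemm:bounds}. First I would record the monotonicity property precisely: if one bid profile dominates another in the sense that every buyer's bid is (weakly) higher and every seller's bid (ask) is (weakly) lower, then the break-even index $K$ of the dominating profile is at least the break-even index of the dominated profile. This follows by unwinding the definition of $K(t)$: if $b_{i_{K}} \ge s_{j_{K}}$ holds for the dominated profile with the $K$ largest buyer bids and $K$ smallest seller asks, then after raising buyer bids and lowering seller asks, the $K$-th largest buyer bid can only increase and the $K$-th smallest seller ask can only decrease, so the inequality is preserved — hence the maximal feasible index only grows. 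A clean way to argue this is to sort both profiles and compare order statistics directly, using that the $k$-th order statistic is monotone in each coordinate.

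Next I would invoke the last sentence of Lemma~\ref{lemm:bounds}: on the event $\mathcal{E}^{(\beta)}_t$, with $\alpha_{\max}\ge \beta$ (which holds since $\alpha_{\min}\ge\beta$ implies $\alpha_{\max}\ge\beta$), every buyer bids $b_i(t) \ge B_i$ and every seller bids $s_j(t) \le S_j$. Therefore the bid profile at round $t$ dominates (in the monotonicity sense above) the oracle profile in which buyers bid $B_i$ and sellers bid $S_j$. By the discussion preceding the lemma, the oracle average mechanism on the true valuations yields break-even index exactly $K^*$ and participating sets $\{1,\dots,K^*\}$ on each side. Applying monotonicity to the pair (true profile, round-$t$ profile) then gives $K(t) \ge K^*$, which is exactly the claim.

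I do not expect a genuine obstacle here; the statement is essentially a corollary of two already-stated facts. The only mild subtlety is making the monotonicity argument airtight when bids can be tied or when the dominating profile could in principle admit a strictly larger break-even index — but since the proposition only asserts $K(t)\ge K^*$, I can afford to be generous: it suffices to exhibit $K^*$ buyer bids each of which exceeds the $K^*$-th smallest of some $K^*$ seller asks, and the oracle feasibility at level $K^*$ (i.e. $B_{K^*} \ge S_{K^*}$, which holds because buyer $K^*$ and seller $K^*$ participate in the oracle mechanism) combined with $b_{K^*}(t) \ge B_{K^*} \ge S_{K^*} \ge s_{K^*}(t)$ already certifies that the $K^*$ top buyer bids dominate the $K^*$ bottom seller asks pairwise after sorting. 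Hence the maximal feasible index $K(t)$ is at least $K^*$. If a separate monotonicity proposition (Proposition~\ref{prop:atleastK_star} is self-contained, but the proof outline in Section~\ref{sec:analysis} references a monotonicity fact) is available, I would simply cite it; otherwise I would include the two-line order-statistic argument inline.
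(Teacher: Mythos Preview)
Your proposal is correct and follows essentially the same approach as the paper: invoke Lemma~\ref{lemm:bounds} to get $b_i(t)\ge B_i$ and $s_j(t)\le S_j$ on $\mathcal{E}^{(\beta)}_t$, then use monotonicity to conclude $K(t)\ge K^*$. The only cosmetic difference is that the paper certifies feasibility at level $K^*$ via a price witness $p^*\in(S_{K^*},B_{K^*})$ and the characterization $K(t)=\max_p\min(|\{i:b_i(t)\ge p\}|,|\{j:s_j(t)\le p\}|)$, whereas you compare the $K^*$-th order statistics directly; these are equivalent one-line arguments.
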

\begin{proof}
Under the conditions, from Lemma \ref{lemm:bounds} we know that $b_i(t) \geq B_i$ and $s_j(t) \leq S_j$ for all buyers $i$ and sellers $j$. The number of participant in the system given any value profile is given as $K(t) = \max_{p} \min(|\{i\in [N]: b_i(t)\geq p\}|,|\{j\in [M]: s_j(t)\leq p\}|)$. Let $p^* \in (S_{K^*}, B_{K^*})$. Then we have $|\{i\in [N]: b_i(t)\geq p^*\}| \geq |\{i\in [N]: B_i\geq p^*\}| = K^*$. Similarly,    $|\{j\in [M]: s_j(t)\leq p^*\}| \geq |\{j\in [M]: S_j\leq p^*\}| = K^*$. Therefore, we have $K(t) \geq K^*$.
\end{proof}

Let us define the gaps for the seller $j$ from the minimum seller as $\Delta_{s,j} = (S_j - S_{K^*})$, and the gap for the buyer $i$ from the maximum buyer as $\Delta_{b,i} = (B_{K^*} - B_i)$. Also, we define for the seller $j$, $\Delta_{s,b, j} = (S_j - B_{K^*})$, and for the buyer $i$, $\Delta_{b,i} = (S_{K^*} - B_i)$. We have $S_{K^*} \leq B_{K^*}$. We now introduce a definition next to ease exposition of simultaneous participation of two buyers or two sellers.

\begin{definition}
A buyer $i$ (a seller $j$) {\em precedes} a buyer $i'$ (resp., a seller $j'$) if and only if buyer $i$ (resp., seller $j$) participates, and buyer $i'$ (resp., seller $j'$) does not participate.
\end{definition}

The next lemma states that once a true non-participant buyer (seller) have enough samples, this buyer (resp., seller) never precede the true participant buyers (resp., sellers).  

\begin{lemma}\label{lemm:topKstar}
For any round $t$, under the event $\mathcal{E}^{(\beta)}_t$, the following events are true, for $\alpha_{\min} > \beta$
\begin{itemize}
    \item for any specific buyer $i'\in [K^*]$, if for a buyer $i\geq (K^*+1)$, $n_{b,i}(t) \geq \frac{(\sqrt{\alpha_{b,i}} + \sqrt{\beta})^2}{(B_{i'} - B_i)^2} \log(t)$  then  buyer $i$ does not precede buyer $i'$.
    \item for any specific seller $j'\in [K^*]$, if for any $j\geq (K^*+1)$, $n_{s,j}(t) \geq  \frac{(\sqrt{\alpha_{s,j}} + \sqrt{\beta})^2}{(S_j - S_{j'})^2} \log(t)$  then  seller $j$ does not precede seller $j'$.
\end{itemize}
\end{lemma}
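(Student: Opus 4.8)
I will prove the first (buyer) bullet in detail; the second is its mirror image, with the descending sort replaced by the ascending one, $B$ by $S$, and ``upper confidence'' by ``lower confidence'', so I will only indicate the swap at the end. The starting point is to convert the statement ``buyer $i$ precedes buyer $i'$'' into an inequality between their bids. Under the Average mechanism the set of participating buyers at round $t$ is exactly the block of the $K(t)$ largest bids in the descending sort $b_{i_1}(t)\ge\cdots\ge b_{i_N}(t)$; hence if buyer $i$ participates while $i'\in[K^*]$ does not, it must be that $b_i(t)\ge b_{i'}(t)$, since otherwise $i'$ would be ranked strictly above $i$ and would inherit $i$'s membership in the top-$K(t)$ block, contradicting the non-participation of $i'$. (I also use here that a buyer with zero samples bids $+\infty$, so a non-participating buyer has at least one sample and a finite bid.)

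Next I would substitute the confidence-interval bounds of Lemma~\ref{lemm:bounds}, which are valid on $\mathcal{E}^{(\beta)}_t$ because $\alpha_{\min}\ge\beta$. They give, for buyer $i$, the upper estimate $b_i(t)\le B_i+(\sqrt{\alpha_{b,i}}+\sqrt{\beta})\sqrt{\log(t)/n_{b,i}(t)}$, and for buyer $i'$, the lower estimate $b_{i'}(t)\ge B_{i'}+(\sqrt{\alpha_{b,i'}}-\sqrt{\beta})\sqrt{\log(t)/n_{b,i'}(t)}$, which is strictly larger than $B_{i'}$ because $\alpha_{b,i'}\ge\alpha_{\min}>\beta$ and $n_{b,i'}(t)\ge 1$. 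Now suppose $n_{b,i}(t)\ge (\sqrt{\alpha_{b,i}}+\sqrt{\beta})^2\log(t)/(B_{i'}-B_i)^2$, where $B_{i'}>B_i$ since $i'\le K^*<i$ and the buyers are indexed by decreasing valuation. Then $(\sqrt{\alpha_{b,i}}+\sqrt{\beta})\sqrt{\log(t)/n_{b,i}(t)}\le B_{i'}-B_i$, so $b_i(t)\le B_{i'}<b_{i'}(t)$, which contradicts $b_i(t)\ge b_{i'}(t)$ from the previous paragraph. Hence buyer $i$ cannot precede buyer $i'$ on $\mathcal{E}^{(\beta)}_t$, which is the assertion.

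I do not expect a genuine obstacle; the only point that needs care is the treatment of ties, i.e. ensuring the argument still forces a contradiction in the boundary case where $b_i(t)$ merely equals (rather than falls strictly below) $B_{i'}$. This is precisely where the hypothesis $\alpha_{\min}>\beta$ (strict, as opposed to the $\alpha_{\min}\ge\beta$ used in Lemma~\ref{lemm:bounds} and Proposition~\ref{prop:atleastK_star}) is used: it upgrades $b_{i'}(t)\ge B_{i'}$ to $b_{i'}(t)>B_{i'}$, so the comparison $b_i(t)<b_{i'}(t)$ is strict and no tie-breaking convention can rescue the precedence. For the seller bullet the same computation applies verbatim after reflection: participating sellers form the block of the $K(t)$ smallest asks, so a seller $j$ preceding a seller $j'\in[K^*]$ forces $s_j(t)\le s_{j'}(t)$; Lemma~\ref{lemm:bounds} gives $s_j(t)\ge S_j-(\sqrt{\alpha_{s,j}}+\sqrt{\beta})\sqrt{\log(t)/n_{s,j}(t)}$ and $s_{j'}(t)<S_{j'}$; and $n_{s,j}(t)\ge (\sqrt{\alpha_{s,j}}+\sqrt{\beta})^2\log(t)/(S_j-S_{j'})^2$ then yields $s_j(t)\ge S_{j'}>s_{j'}(t)$, a contradiction.
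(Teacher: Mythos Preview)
Your proposal is correct and follows essentially the same route as the paper's proof: bound $b_i(t)\le B_{i'}$ via the upper half of Lemma~\ref{lemm:bounds} under the sample-count hypothesis, use $\alpha_{\min}>\beta$ to get the strict lower bound $b_{i'}(t)>B_{i'}$, and conclude $b_i(t)<b_{i'}(t)$ so that the mechanism's sorting rules out $i$ preceding $i'$. Your treatment is in fact slightly more careful than the paper's in isolating why the strict inequality $\alpha_{\min}>\beta$ (rather than $\ge$) is needed here.
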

\begin{proof}
Let $K(t)$ be the number of participants in round $t$. We know that under $\mathcal{E}^{(\beta)}_t$ and $\alpha_{\min} > \beta$, $s_j(t) < S_j$ and $b_i(t) > B_i$ for all $i \in [N]$ and $j \in [M]$. We note that if for any $i\geq (K^*+1)$ and $i'\leq K^*$, if $n_{b,i}(t) \geq  \frac{(\sqrt{\alpha_{b,i}} + \sqrt{\beta})^2}{(B_{i'} - B_i)^2} \log(t)$ then $b_i(t) \leq B_{i'}$. For buyers $i'$ we have $b_{i'}(t) > B_{i'}$, hence buyer $i$ can not precede any of the buyers $i'$. This is true as under the current mechanism for any $i'$ with $b_{i'}(t) > b_i(t)$, it can not happen that buyer $i'$ participates but buyer $i$ does not. A similar argument proves the seller side statement.
\end{proof}

Furthermore, the above Lemma~\ref{lemm:topKstar} can be used in conjunction to Proposition~\ref{prop:atleastK_star}, to show the true participant buyers (or sellers) fail to match only logarithimically many times. 
\begin{lemma}\label{lemm:topKstar_v2}
Under the event $\mathcal{E}^{(\beta)}=\cup_{t=1}^{T}\mathcal{E}^{(\beta)}_t$ and $\alpha_{\min} > \beta$, we have 
\begin{itemize}
    \item $n_{b,i}(T) \geq T - \sum_{i'\geq K^*+1} \frac{(\sqrt{\alpha_{b,i'}} + \sqrt{\beta})^2}{(B_{i} - B_{i'})^2} \log(T)$ for any  $i\in [K^*]$,
    \item $n_{s,j}(T)\geq T - \sum_{j'\geq K^*+1} \frac{(\sqrt{\alpha_{s,j'}} + \sqrt{\beta})^2}{(S_{j'}-S_{j})^2} \log(T) $ for any $j\in [K^*]$. 
\end{itemize}
\end{lemma}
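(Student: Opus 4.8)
\textbf{Setup.} The plan is to work throughout under the good event $\mathcal{E}^{(\beta)} = \cap_{t=1}^{T}\mathcal{E}^{(\beta)}_t$ with $\alpha_{\min} > \beta$, so that Lemma~\ref{lemm:bounds}, Proposition~\ref{prop:atleastK_star}, and Lemma~\ref{lemm:topKstar} all apply simultaneously at every round. I will prove the buyer-side statement; the seller-side one is symmetric (swap the roles of buyers/sellers and reverse the natural orders). Fix a participating buyer $i \in [K^*]$. The quantity to control is $T - n_{b,i}(T) = \sum_{t=1}^{T}\mathbbm{1}(i \notin \mathcal{P}_b(t))$, i.e.\ the number of rounds in which buyer $i$ fails to participate.

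\textbf{Key step.} The core of the argument is a charging scheme: in any round $t$ where $i \notin \mathcal{P}_b(t)$, I claim some non-participating buyer $i' \ge K^*+1$ must be ``responsible.'' By Proposition~\ref{prop:atleastK_star}, under $\mathcal{E}^{(\beta)}_t$ we have $K(t) \ge K^*$, so at least $K^*$ buyers participate. Since buyer $i$ (one of the top $K^*$ by valuation) does not participate, by the pigeonhole principle at least one buyer $i' \ge K^*+1$ \emph{does} participate — that is, $i'$ precedes $i$ in the sense of the Definition. Now invoke the contrapositive of Lemma~\ref{lemm:topKstar}: if $i'$ precedes $i$ at round $t$, then we must have $n_{b,i'}(t) < \frac{(\sqrt{\alpha_{b,i'}} + \sqrt{\beta})^2}{(B_i - B_{i'})^2}\log(t) \le \frac{(\sqrt{\alpha_{b,i'}} + \sqrt{\beta})^2}{(B_i - B_{i'})^2}\log(T)$. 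So each ``bad'' round for buyer $i$ can be charged to a participation of some $i' \ge K^*+1$ that occurs while $i'$ still has few samples, and moreover $i'$ \emph{gains} a sample in that round (it participates, so $n_{b,i'}$ increments).

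\textbf{Completing the count.} For each fixed $i' \ge K^*+1$, the number of rounds $t$ at which $i'$ both participates and has $n_{b,i'}(t) < \frac{(\sqrt{\alpha_{b,i'}} + \sqrt{\beta})^2}{(B_i - B_{i'})^2}\log(T)$ is at most $\frac{(\sqrt{\alpha_{b,i'}} + \sqrt{\beta})^2}{(B_i - B_{i'})^2}\log(T)$, since every such round strictly increases $n_{b,i'}$ and once the threshold is crossed no further such rounds occur. Since every bad round for $i$ is charged to at least one such $(i',t)$ pair (distinct rounds give distinct pairs once we fix a canonical choice of responsible $i'$, e.g.\ the smallest-index participating buyer $\ge K^*+1$), summing over $i' \ge K^*+1$ gives
\[
T - n_{b,i}(T) = \sum_{t=1}^{T}\mathbbm{1}(i\notin\mathcal{P}_b(t)) \le \sum_{i' \ge K^*+1}\frac{(\sqrt{\alpha_{b,i'}} + \sqrt{\beta})^2}{(B_i - B_{i'})^2}\log(T),
\]
which rearranges to the claimed lower bound on $n_{b,i}(T)$. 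The seller-side bound follows by the identical argument with $(S_{j'} - S_j)$ in place of $(B_i - B_{i'})$.

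\textbf{Expected main obstacle.} The delicate point is making the charging injective (or at worst bounded-multiplicity) so the sum doesn't overcount: in a single bad round for $i$ there could be several non-participating buyers $i' \ge K^*+1$ who participate, and conversely one needs to be sure that summing the per-$i'$ bounds dominates the count of bad rounds. Fixing, for each bad round, the canonical responsible $i'$ (say the one of smallest index) makes the map rounds $\to (i', t)$ injective, and each target $(i',t)$ is counted in the bound for that particular $i'$; this resolves the issue cleanly. A minor secondary check is that $\log(t) \le \log(T)$ lets us pull the threshold out uniformly in $t$, which is immediate for $t \le T$.
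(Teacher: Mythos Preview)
Your proposal is correct and follows essentially the same approach as the paper: both use Proposition~\ref{prop:atleastK_star} to ensure $K(t)\ge K^*$, then pigeonhole to find a preceding non-participant $i'\ge K^*+1$ whenever $i\notin\mathcal{P}_b(t)$, and finally invoke Lemma~\ref{lemm:topKstar} to bound, for each such $i'$, the number of rounds it can precede $i$. Your treatment is in fact slightly more careful than the paper's, making the charging injective via a canonical choice of responsible $i'$, whereas the paper simply sums the per-$i'$ bounds directly; and your use of $\cap_{t=1}^T$ for the good event is the intended meaning (the paper's $\cup$ is a typo).
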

\begin{proof}
We know that under the condition of the lemma, $K(t) \geq K^*$ for all $1\leq t\leq T$. Therefore, we have $\sum_{i\in [N]} n_{b,i}(T) \geq K^* T$. 
Furthermore, as $K(t)\geq K^*$ in each round, a buyer $i \in [K^*]$ does not participate, only if there exists at least one participant $i' \geq (K^*+1)$ that precedes buyer $i$. This is because if in some round no participant $i' \geq (K^*+1)$ precedes buyer $i$ and buyer $i$ does not match then that implies at most there can be $(K^*-1)$ matches in that round. This leads to a contradiction of $K(t) \geq K^*$.
However, under event $\mathcal{E}^{(\beta)}$ and $\alpha_{\min} > \beta$, from Lemma~\ref{lemm:topKstar} we know that any $i' \geq (K^*+1)$ can precede buyer $i$ only $\frac{(\sqrt{\alpha_{b,i'}} + \sqrt{\beta})^2}{(B_{i} - B_{i'})^2} \log(T)$ many times. This implies that $(T - n_{b,i}(T)) \leq \sum_{i'\geq K^* + 1} \frac{(\sqrt{\alpha_{b,i'}} + \sqrt{\beta})^2}{(B_{i} - B_{i'})^2} \log(T).$

A similar treatment of the sellers give us the remaining result.
\end{proof}

We next show that the true non-participants only match logarithimically many times. This means $K(t)$ converges to $K^*$ fast. It is important to note that the previous argument does not conclude that where we mainly relied upon $K(t)\geq K^*$ for all $t$ w.h.p. To that end we introduce the following definition.

\begin{definition}
A buyer $i$, and a seller $j$ {\em co-participates} in a given round, if and only if both buyer $i$, and seller $j$ participates in that round.
\end{definition}

\begin{lemma}\label{lemm:belowKstar}
Under the event $\mathcal{E}^{(\beta)}=\cup_{t=1}^{T}\mathcal{E}^{(\beta)}_t$ and $\alpha_{\min} > \beta$, we have 
\begin{itemize}
    \item  $\sum_{t=1}^{T}\mathbbm{1}(j'\in \mathcal{P}_s(t), i'\in \mathcal{P}_b(t)) \leq \tfrac{(\sqrt{\alpha_{\max}}+\sqrt{\beta})^2}{(S_{j'} - B_{i'})^2}\log(T)$ for any  $i', j'\geq (K^*+1)$
    \item $n_{b,i}(T) \leq \left(\frac{(\sqrt{\alpha_{b,i}} + \sqrt{\beta})^2}{(B_{K^*} - B_i)^2} + \sum_{j\geq (K^*+1)} \frac{(\sqrt{\alpha_{s,j}} + \sqrt{\beta})^2}{(S_{j} - B_{i})^2}\right) \log(T)$ for any  $i\geq (K^*+1)$ and \\
    $\sum_{i\geq (K^*+1)} n_{b,i}(T) \leq \left(\sum_{i\geq (K^*+1)} \frac{(\sqrt{\alpha_{b,i}} + \sqrt{\beta})^2}{(B_{K^*} - B_i)^2} + \sum_{j\geq (K^*+1)} \frac{(\sqrt{\alpha_{s,j}} + \sqrt{\beta})^2}{(S_{j} - B_{K^*})^2}\right) \log(T)$,
    \item $n_{s,j}(T)\leq \left(\frac{(\sqrt{\alpha_{s,j}} + \sqrt{\beta})^2}{(S_{j} - S_{K^*})^2} + \sum_{i\geq (K^*+1)} \frac{(\sqrt{\alpha_{b,i}} + \sqrt{\beta})^2}{(S_{j} - B_{i})^2}\right) \log(T) $ for any $j\geq (K^*+1)$, and \\
    $
    \sum_{j\geq (K^*+1)} n_{s,j}(T)\leq \left(\sum_{j\geq (K^*+1)}\frac{(\sqrt{\alpha_{s,j}} + \sqrt{\beta})^2}{(S_{j} - S_{K^*})^2} + \sum_{i\geq (K^*+1)} \frac{(\sqrt{\alpha_{b,i}} + \sqrt{\beta})^2}{(S_{K^*} - B_{i})^2}\right) \log(T)
    $. 
\end{itemize}
\end{lemma}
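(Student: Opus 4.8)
The plan is to establish the three parts in order, each time combining a structural ``bid-ordering'' fact about the average mechanism with the confidence windows of Lemma~\ref{lemm:bounds} and a pigeonhole count, in the spirit of Lemmas~\ref{lemm:topKstar}--\ref{lemm:topKstar_v2}, but now carrying the coupling between the two sides through the argument.

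\emph{Part 1 (co-participation).} If a non-participant buyer $i' \ge K^*+1$ and a non-participant seller $j' \ge K^*+1$ are both in the trade at round $t$, then by the break-even property $b_{i'}(t) \ge b_{i_{K(t)}}(t) \ge s_{j_{K(t)}}(t) \ge s_{j'}(t)$, so $b_{i'}(t) \ge s_{j'}(t)$. On $\mathcal{E}^{(\beta)}_t$ with $\alpha_{\min} > \beta$, Lemma~\ref{lemm:bounds} puts $b_{i'}(t)$ within $(\sqrt{\alpha_{b,i'}}+\sqrt{\beta})\sqrt{\log(t)/n_{b,i'}(t)}$ above $B_{i'}$ and $s_{j'}(t)$ within $(\sqrt{\alpha_{s,j'}}+\sqrt{\beta})\sqrt{\log(t)/n_{s,j'}(t)}$ below $S_{j'}$, so that $(\sqrt{\alpha_{b,i'}}+\sqrt{\beta})\sqrt{\log(t)/n_{b,i'}(t)} + (\sqrt{\alpha_{s,j'}}+\sqrt{\beta})\sqrt{\log(t)/n_{s,j'}(t)} \ge S_{j'}-B_{i'}$. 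Each co-participation of $i'$ and $j'$ increments both $n_{b,i'}$ and $n_{s,j'}$, so after the $m$-th such round both counts are at least $m-1$; using $\alpha_{b,i'},\alpha_{s,j'}\le\alpha_{\max}$ and $t \le T$ this forces $m \le (\sqrt{\alpha_{\max}}+\sqrt{\beta})^2\log(T)/(S_{j'}-B_{i'})^2$ up to absolute constants, which is the claim.

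\emph{Parts 2--3 (per agent).} Fix $i \ge K^*+1$; the bound on $n_{s,j}(T)$ is the mirror image under buyer/seller exchange and reversal of bid inequalities. The crucial observation is that a round $t$ with $i \in \mathcal{P}_b(t)$ falls into exactly one of two complementary cases: (a) some true participant buyer $i' \le K^*$ does \emph{not} participate, so buyer $i$ precedes $i'$; or (b) all of $1,\dots,K^*$ participate, whence $K(t) = |\mathcal{P}_b(t)| \ge K^*+1$, hence $|\mathcal{P}_s(t)| \ge K^*+1$ and at least one true non-participant seller $j \ge K^*+1$ co-participates with $i$. Case (a) is handled by Lemma~\ref{lemm:topKstar}: preceding any $i'\le K^*$ needs $n_{b,i}(t) < (\sqrt{\alpha_{b,i}}+\sqrt{\beta})^2\log(t)/(B_{i'}-B_i)^2 \le (\sqrt{\alpha_{b,i}}+\sqrt{\beta})^2\log(T)/(B_{K^*}-B_i)^2$, so it occurs at most that many times. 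Case (b), summed over the offending seller $j$, is bounded by Part~1: $\sum_{j\ge K^*+1}(\sqrt{\alpha_{s,j}}+\sqrt{\beta})^2\log(T)/(S_j-B_i)^2$. Adding the two bounds gives the stated estimate for $n_{b,i}(T)$.

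\emph{Summed bounds.} Summing the case-(a) contribution over $i\ge K^*+1$ gives the first sum verbatim. For case (b), rather than charging to the pair-gap $S_j - B_i$ we charge to $S_j - B_{K^*}$: in any case-(b) round buyer $K^*$ participates, so for every non-participant seller $j\in\mathcal{P}_s(t)$ one has $s_j(t) \le \min_{i'\in\mathcal{P}_b(t)} b_{i'}(t) \le b_{K^*}(t)$; since Lemma~\ref{lemm:topKstar_v2} (applied at time $t$) gives $n_{b,K^*}(t)\ge t - O(\log T)$, the bid $b_{K^*}(t)$ lies in a window of width $O(\sqrt{\log(t)/t})$ around $B_{K^*}$, and feeding this into the confidence bound for $s_j(t)$ shows that any fixed $j\ge K^*+1$ can be in the trade alongside a non-participant buyer at most $O\big((\sqrt{\alpha_{s,j}}+\sqrt{\beta})^2\log(T)/(S_j-B_{K^*})^2\big)$ times (the finitely many ``small $t$'' rounds being absorbed into the constant). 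Summing over $j$ yields the second sum, and the bound on $\sum_{j\ge K^*+1} n_{s,j}(T)$ is obtained symmetrically.

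I expect the last step --- the summed case-(b) analysis --- to be the main obstacle, since it is where two-sided uncertainty genuinely bites: controlling how often non-participant buyers intrude requires simultaneously knowing that the price-relevant participant $K^*$ is essentially locked in (so $b_{K^*}(t)\approx B_{K^*}$) and that each intrusion drags in a non-participant seller whose own count is in turn limited. Making the constants aggregate into the clean single sums over $j$ (instead of double sums over $i$ and $j$) and cleanly disposing of the $O(\log T)$ rounds where a true participant is momentarily displaced is the delicate bookkeeping.
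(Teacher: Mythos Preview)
Your decomposition is the same as the paper's in spirit: (i) show co-participation of a non-participant pair forces $b_{i'}(t)\ge s_{j'}(t)$ and thus a confidence-width inequality, (ii) split $n_{b,i}(T)$ into an ``early'' phase where $i$ can still displace some $i'\le K^*$ and a ``late'' phase where $i$ must co-participate with some non-participant seller, and (iii) aggregate. The one substantive difference is \emph{how} you cut the two phases. The paper cuts on the sample count: phase (a) is $\{t:n_{b,i}(t)<Th_i(T)\}$ with $Th_i(T)=\frac{(\sqrt{\alpha_{b,i}}+\sqrt{\beta})^2}{(B_{K^*}-B_i)^2}\log(T)$, and phase (b) is the complementary set $\mathcal{S}_{i,j}$ of co-participation rounds \emph{after} $n_{b,i}(t)\ge Th_i(T)$. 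You cut on the event ``all of $[K^*]$ participate.''

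Both cuts give the individual bounds, but the paper's makes the summed bound---exactly the step you flag as the main obstacle---immediate. On every round of $\cup_i\mathcal{S}_{i,j}$ one already has $n_{b,i}(t)\ge Th_i(T)$, so Lemma~\ref{lemm:bounds} gives $b_i(t)\le B_i+(B_{K^*}-B_i)=B_{K^*}$ \emph{directly}; co-participation then forces $s_j(t)\le b_i(t)\le B_{K^*}$ with no reference to $b_{K^*}(t)$ or its sample count. Since seller $j$'s count increments on every round of $\cup_i\mathcal{S}_{i,j}$ and each such round satisfies $s_j(t)\le B_{K^*}$, one gets $|\cup_i\mathcal{S}_{i,j}|\le\frac{(\sqrt{\alpha_{s,j}}+\sqrt{\beta})^2}{(S_j-B_{K^*})^2}\log(T)$ with the stated constant and no ``small~$t$'' nuisance. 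Your detour through $n_{b,K^*}(t)\ge t-O(\log T)$ and $b_{K^*}(t)\approx B_{K^*}$ via Lemma~\ref{lemm:topKstar_v2} is correct but unnecessary; the perceived obstacle is an artefact of your choice of split rather than an intrinsic difficulty.
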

\begin{proof}
Let us consider a non-participant buyer $i$. From Lemma~\ref{lemm:topKstar} we know that if $n_{b,i}(t) \geq Th_{i}(t) \equiv \frac{(\sqrt{\alpha_{b,i}} + \sqrt{\beta})^2}{(B_{K^*} - B_i)^2} \log(t)$ then buyer $i$ does not precede any buyer $i'\in [K^*]$. Therefore, for this buyer to match there should exist at least $(K^*+1)$ sellers with bids no more than the bid of this buyer $i$, and at least $(K^*+1)$ seller participates. This further implies that for this buyer to have additional participation, after $n_{b,i}(t) = Th_{i}(T)$: 
\begin{enumerate}
    \item There exists at least $1$  seller $j \geq (K^*+1)$ with bid no more than the bid of this buyer $i$. 
    \item There exists at least $1$  seller $j \geq (K^*+1)$ such that buyer $i$ and seller $j$ co-participates.
\end{enumerate}

Let $\mathcal{S}_{i,j}$ be the rounds after $n_{b,i}(t) = Th_{i}(T)$, and buyer $i$ and seller $j$ co-participates. Therefore, from the 2nd point above we conclude that $n_{b,i}(T) \leq Th_{i}(T) + |\cup_{j\geq (K^*+1)} \mathcal{S}_{i,j}|$.

However, if the seller $j$ has  $n_{s,j}(t) > Th_{i,j}(t) \equiv \frac{(\sqrt{\alpha_{s,j}} + \sqrt{\beta})^2}{(S_{j} - B_{i})^2} \log(t)$ many participation then under $\mathcal{E}^{(\beta)}_t$ and $\alpha_{\min} > \beta$ we have $s_j(t) > b_i(t)$.\footnote{Note that for any $i,j\geq (K^*+1)$ we have $S_{j} > B_{i}$.}  But then from the 1st point above we know that $|\mathcal{S}_{i,j}| \leq Th_{i,j}(T)$, because $|\mathcal{S}_{i,j}| \leq n_{s,j}(T)$, and buyer $i$ and seller $j$ can co-participate only if $n_{s,j}(t) \leq Th_{i,j}(T)$. This proves the first point.  Moreover, we have $n_{b,i}(T) \leq Th_i(T) + \sum_{j\geq (K^*+1)} Th_{i,j}(T)$.  

In fact, we can improve the cumulative bound. We have 
$$
\sum_{i\geq (K^*+1)} n_{b,i}(T) \leq \sum_{i\geq (K^*+1)} Th_{i}(T) + |\cup_{i\geq (K^*+1)}\cup_{j\geq (K^*+1)} \mathcal{S}_{i,j}|.
$$
However, we know that after $n_{s,j}(t) > \max_{i} Th_{i,j}(T)$ a seller $j\geq (K^*+1)$ can not co-participate for any seller $i\geq (K^*+1) $. Thus we can bound $|\cup_{i\geq (K^*+1)} \mathcal{S}_{i,j}| \leq \max_{i} Th_{i,j}(T)$.
$$
\sum_{i\geq (K^*+1)} n_{b,i}(T) \leq \sum_{i\geq (K^*+1)} Th_{i}(T) + \sum_{j\geq (K^*+1)} \max_{i \geq (K^*+1)} Th_{i,j}(T).
$$

A similar treatment proves the lemma for a non-participant seller $j$.
\end{proof}
\begin{corollary}\label{corr:belowKstar}
Under the event $\mathcal{E}^{(\beta)}=\cup_{t=1}^{T}\mathcal{E}^{(\beta)}_t$ and $\alpha_{\min} > \beta$, we have 
\begin{itemize}
    \item $n_{b,i}(T) \leq \frac{(M-K^*+1) (\sqrt{\alpha_{\max}} + \sqrt{\beta})^2}{(B_{K^*} - B_i)^2}\log(T)$ for any  $i\geq (K^*+1)$,
    \item $n_{s,j}(T)\leq \frac{(N-K^*+1)(\sqrt{\alpha_{\max}} + \sqrt{\beta})^2}{(S_{j} - S_{K^*})^2} \log(T) $ for any $j\geq (K^*+1)$.
\end{itemize}
\end{corollary}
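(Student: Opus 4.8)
The plan is to read the Corollary off directly from Lemma~\ref{lemm:belowKstar}, working throughout on the event $\mathcal{E}^{(\beta)}$ with $\alpha_{\min}>\beta$ so that that lemma is in force. It will amount to a three-step simplification of its per-agent estimates: uniformize the confidence widths, count the summands, and collapse all the gaps appearing in the denominators to the single smallest one.

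Concretely, for a non-participant buyer $i\ge K^*+1$, Lemma~\ref{lemm:belowKstar} gives
\[
n_{b,i}(T)\ \le\ \Bigl(\tfrac{(\sqrt{\alpha_{b,i}}+\sqrt\beta)^2}{(B_{K^*}-B_i)^2}+\sum_{j\ge K^*+1}\tfrac{(\sqrt{\alpha_{s,j}}+\sqrt\beta)^2}{(S_j-B_i)^2}\Bigr)\log(T).
\]
First I would use $\alpha_{b,i},\alpha_{s,j}\le\alpha_{\max}$ together with the monotonicity of $x\mapsto(\sqrt{x}+\sqrt\beta)^2$ to replace every numerator by $(\sqrt{\alpha_{\max}}+\sqrt\beta)^2$ and factor it (and $\log T$) out. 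The bracket then holds exactly $M-K^*+1$ terms of the form $1/(\text{gap})^2$: the term $(B_{K^*}-B_i)^{-2}$ and the $M-K^*$ terms $(S_j-B_i)^{-2}$ for $j=K^*+1,\dots,M$. Bounding each of these gaps below by $B_{K^*}-B_i$ — trivially for the first, and via $S_j-B_i\ge B_{K^*}-B_i$ for $j\ge K^*+1$ — makes every term at most $(B_{K^*}-B_i)^{-2}$, so their sum is at most $(M-K^*+1)(B_{K^*}-B_i)^{-2}$, which is the first claimed inequality.

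The seller bound follows by the mirror-image argument: start instead from the $n_{s,j}(T)$ estimate of Lemma~\ref{lemm:belowKstar}, whose bracket has $1+(N-K^*)=N-K^*+1$ terms (namely $(S_j-S_{K^*})^{-2}$ together with $(S_j-B_i)^{-2}$ for $i=K^*+1,\dots,N$), replace all widths by $\alpha_{\max}$ as before, and lower-bound each gap by $S_j-S_{K^*}$. The hard part — really the only non-bookkeeping part — is this gap comparison: one must verify that among the $M-K^*+1$ (resp.\ $N-K^*+1$) gaps that appear in Lemma~\ref{lemm:belowKstar}, the one isolated in the Corollary statement is indeed the smallest, so that a single worst term dominates all the others. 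That is the step where the placement of the non-participants' valuations relative to the break-even index (hence relative to $p^*$) must be invoked; everything else is elementary.
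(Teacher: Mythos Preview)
The argument has a genuine gap at the one step you yourself flag as the ``hard part''. You assert $S_j - B_i \geq B_{K^*} - B_i$ for every non-participant seller $j \geq K^*+1$, which is equivalent to $S_j \geq B_{K^*}$. But the definition of $K^*$ only guarantees $S_{K^*+1} > B_{K^*+1}$, not $S_{K^*+1} \geq B_{K^*}$; the value $S_{K^*+1}$ can sit anywhere in $(B_{K^*+1},\infty)$, and in particular below $B_{K^*}$. A concrete counterexample: take $N=M=2$ with $B_1=10$, $B_2=5$, $S_1=1$, $S_2=6$; then $K^*=1$, yet $S_2 - B_2 = 1 < 5 = B_{K^*} - B_2$, so the term $(S_2-B_2)^{-2}=1$ in Lemma~\ref{lemm:belowKstar} is \emph{much larger} than $(B_{K^*}-B_2)^{-2}=1/25$, not smaller. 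The mirror-image step fails the same way: your seller-side reduction needs $B_i \leq S_{K^*}$ for $i\geq K^*+1$, which also does not hold in general (e.g.\ $B_1=10,B_2=5,S_1=4,S_2=6$ has $K^*=1$ but $B_2=5>4=S_{K^*}$).

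Your appeal to ``placement relative to $p^*$'' does not rescue the step: since $p^*=(B_{K^*}+S_{K^*})/2<B_{K^*}$, knowing $S_j>p^*$ still does not force $S_j\geq B_{K^*}$. So the straight bookkeeping reduction from the \emph{statement} of Lemma~\ref{lemm:belowKstar} cannot produce the denominator $(B_{K^*}-B_i)^2$ that the Corollary isolates; at best it yields a bound with the cross-gap $\min_{j\geq K^*+1}(S_j-B_i)$ in the denominator. The paper states the Corollary without proof, so there is no argument to compare against, but the specific inequality you propose is false.
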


We have shown, up to this point, that the optimal participating buyers and sellers participate in all but $O(log(T))$ rounds. Moreover, true non-participating buyers and sellers participate in $O(log(T))$ rounds. This suffices to show the regret for non-participating buyers and sellers is $O(log(T))$ (we will state this precisely later). However, to compute the regret for participating buyers and sellers we next need to understand how the price is set in each round. We next argue that if in any round $t$, sellers $j\geq (K^*+1)$ do not participate, and buyer $K^*$ participates then the regret of a optimal participating buyer is small. Similarly, if in any round $t$, buyers $i\geq (K^*+1)$ do not participate, and seller $K^*$ participates then the regret of a optimal participating seller is small.

\subsection{Social Welfare regret of buyers and sellers.} We now compute the social welfare regret.
\begin{lemma}\label{lemm:sw_regret}
Under the event $\mathcal{E}^{(\beta)}=\cup_{t=1}^{T}\mathcal{E}^{(\beta)}_t$ and $\alpha_{\min} > \beta$, we have
the social regret 
\begin{align*}
    r_{SW}(T) &\leq \sum_{i \leq K^*}\sum_{i' > K^*} \frac{(\sqrt{\alpha_{\max}} + \sqrt{\beta})^2}{(B_{i} - B_{i'})} \log(T) + \sum_{j\leq K^*}\sum_{j' > K^*}\frac{(\sqrt{\alpha_{\max}} + \sqrt{\beta})^2}{(S_{j'} - S_{j})} \log(T)\nonumber\\
    &\quad\quad\quad + \sum_{j'> K^*}\sum_{i' > K^*}\frac{(\sqrt{\alpha_{\max}} + \sqrt{\beta})^2}{(S_{j'} - B_{i'})}.
\end{align*}
\end{lemma}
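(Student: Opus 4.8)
The plan is to establish the bound in two stages: first a purely combinatorial per-round decomposition of the social-welfare regret valid on any sample path with $K(t)\ge K^*$, and then a term-by-term application of the counting bounds of Lemmas~\ref{lemm:topKstar} and~\ref{lemm:belowKstar}. On the event $\mathcal{E}^{(\beta)}$ with $\alpha_{\min}\ge\beta$, Proposition~\ref{prop:atleastK_star} gives $K(t)\ge K^*$ for every $t$, so the decomposition applies at every round, and $r_{SW}(T)=\sum_{t=1}^{T}\rho(t)$ where $\rho(t):=\bigl(\sum_{i\le K^*}B_i+\sum_{j>K^*}S_j\bigr)-\bigl(\sum_{i\in\mathcal{P}_b(t)}B_i+\sum_{j\notin\mathcal{P}_s(t)}S_j\bigr)$ is the instantaneous regret.

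\textbf{Per-round decomposition.} Write $A'=\mathcal{P}_b(t)\setminus[K^*]$, $B=[K^*]\setminus\mathcal{P}_b(t)$, $C=\mathcal{P}_s(t)\setminus[K^*]$, $D=[K^*]\setminus\mathcal{P}_s(t)$. A direct computation, using $|\mathcal{P}_b(t)|=|\mathcal{P}_s(t)|=K(t)$, gives $\rho(t)=\bigl(\sum_{i\in B}B_i-\sum_{i'\in A'}B_{i'}\bigr)+\bigl(\sum_{j'\in C}S_{j'}-\sum_{j\in D}S_j\bigr)$ together with $|A'|-|B|=|C|-|D|=K(t)-K^*=:m\ge 0$. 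The ordering conventions and the definition of $K^*$ give, for $i\in[K^*]$ and $i',j'>K^*$ and $j\in[K^*]$, the inequalities $B_i\ge B_{K^*}\ge B_{i'}$, $S_{j'}\ge S_{K^*}\ge S_j$, and, crucially, $B_{i'}\le B_{K^*+1}<S_{K^*+1}\le S_{j'}$, so every term appearing below is nonnegative. Fix an injection of $B$ into $A'$ and of $D$ into $C$ (possible since $m\ge 0$): the matched buyer pairs contribute $\sum_{\text{pairs}}(B_i-B_{i'})\le\sum_{i\in B}\sum_{i'\in A'}(B_i-B_{i'})$, the matched seller pairs contribute $\sum_{\text{pairs}}(S_{j'}-S_j)\le\sum_{j\in D}\sum_{j'\in C}(S_{j'}-S_j)$, and the $m$ unmatched buyers of $A'$ together with the $m$ unmatched sellers of $C$ contribute, under any bijection between them, $\sum_{\text{$m$ pairs}}(S_{j'}-B_{i'})\le\sum_{i'\in A'}\sum_{j'\in C}(S_{j'}-B_{i'})$. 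Since every $i'\in A'$ is in $\mathcal{P}_b(t)$ and every $j'\in C$ is in $\mathcal{P}_s(t)$, each of these double sums can be replaced by a sum over all index pairs weighted by the appropriate indicator, yielding
\begin{multline*}
\rho(t)\le \sum_{i\le K^*<i'}(B_i-B_{i'})\,\mathbbm{1}(i\notin\mathcal{P}_b(t),\,i'\in\mathcal{P}_b(t)) + \sum_{j\le K^*<j'}(S_{j'}-S_j)\,\mathbbm{1}(j\notin\mathcal{P}_s(t),\,j'\in\mathcal{P}_s(t)) \\ + \sum_{i',j'>K^*}(S_{j'}-B_{i'})\,\mathbbm{1}(j'\in\mathcal{P}_s(t),\,i'\in\mathcal{P}_b(t)).
\end{multline*}

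\textbf{Summing and bounding the counting terms.} Summing over $t$ on $\mathcal{E}^{(\beta)}$, it remains to bound the three cumulative counts. For $i\le K^*<i'$, Lemma~\ref{lemm:topKstar} (applied with the non-participant $i'$ and the participant $i$) shows $i'$ cannot precede $i$ once $n_{b,i'}(t)\ge(\sqrt{\alpha_{b,i'}}+\sqrt\beta)^2(B_i-B_{i'})^{-2}\log t$; since each counted round increments $n_{b,i'}$, we get $\sum_t\mathbbm{1}(i\notin\mathcal{P}_b(t),i'\in\mathcal{P}_b(t))\le(\sqrt{\alpha_{b,i'}}+\sqrt\beta)^2(B_i-B_{i'})^{-2}\log T$, and multiplying by $(B_i-B_{i'})$ and using $\alpha_{b,i'}\le\alpha_{\max}$ gives the first term of the lemma; the seller-side argument is symmetric for the second term. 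For the third term, the first bullet of Lemma~\ref{lemm:belowKstar} gives $\sum_t\mathbbm{1}(j'\in\mathcal{P}_s(t),i'\in\mathcal{P}_b(t))\le(\sqrt{\alpha_{\max}}+\sqrt\beta)^2(S_{j'}-B_{i'})^{-2}\log T$, and multiplying by $(S_{j'}-B_{i'})$ finishes. Adding the three contributions proves the claim.

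\textbf{Main obstacle.} The delicate step is the per-round decomposition in the regime $K(t)>K^*$, where $\mathcal{P}_b(t)$ and $\mathcal{P}_s(t)$ both strictly contain $[K^*]$ and the naive ``swap an over-participating buyer for an under-participating buyer'' accounting leaves $m=K(t)-K^*$ over-participating buyers and equally many over-participating sellers unmatched. The resolution is that each such leftover buyer--seller pair is exactly an inefficient trade---a good handed from a seller of index $>K^*$ (valuation $\ge S_{K^*+1}$) to a buyer of index $>K^*$ (valuation $\le B_{K^*+1}$)---and the strict break-even gap $B_{K^*+1}<S_{K^*+1}$ makes each such pair's welfare loss positive and dominated by the co-participation (third) term. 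Once this bookkeeping is in place, everything else is a routine application of the already-established counting lemmas.
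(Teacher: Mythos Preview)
Your proof is correct and follows essentially the same approach as the paper: the per-round pairing of under/over-participating buyers (and sellers), with the $K(t)-K^*$ leftover buyer--seller cross-pairs, is exactly the paper's decomposition, and the subsequent relaxation to full double sums of indicators followed by Lemmas~\ref{lemm:topKstar} and~\ref{lemm:belowKstar} mirrors the paper's inequalities \eqref{eq:sw_ineq1}--\eqref{eq:sw_ineq2}. Your explicit identification of $m=K(t)-K^*$ and the injections make the matching slightly more transparent than the paper's prose, but the arguments are substantively identical.
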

\begin{proof}
Let us consider that  the event $\mathcal{E}^{(\beta)}=\cup_{t=1}^{T}\mathcal{E}^{(\beta)}_t$ holds, and  $\min\{\alpha_{s,j}, \alpha_{b,i}\} \geq \beta$. With that assumption we can bound the social welfare regret as follows.

\begin{align*}
    r_{SW}(T) &= T\big(\sum_{i\in \mathcal{P}_b^*} B_i + \sum_{j\in [M]\setminus\mathcal{P}_s^*} S_j\big) - \Big[\sum_{t=1}^{T}\big(\sum_{i\in \mathcal{P}_b(t)} B_i + \sum_{j\in [M]\setminus\mathcal{P}_s(t)} S_j\big)\Big]\\
    &=\sum_{t=1}^{T}\Big[\Big(\sum_{i\in [K^*]\setminus \mathcal{P}_b(t)} B_i - \sum_{i'\in \mathcal{P}_b(t)\setminus[K^*]} B_{i'}\Big) + \Big(\sum_{j'\in \mathcal{P}_s(t)\setminus [K^*]} S_{j'} - \sum_{j\in [K^*]\setminus \mathcal{P}_s(t)} S_j\Big)\Big]
\end{align*}
We use $\mathcal{P}_b^* = \mathcal{P}_b^* = [K^*]$ without loss of generality, as mentioned earlier. 

We first notice that above the number of positive and negative terms are equal. Now we consider pairing some of the positive and negative terms. Under the event $\mathcal{E}^{(\beta)}=\cup_{t=1}^{T}\mathcal{E}^{(\beta)}_t$ and  $\min\{\alpha_{s,j}, \alpha_{b,i}\} \geq \beta$, we know that $K(t)\geq K^*$  where $\mathcal{P}_b(t) = \mathcal{P}_s(t) = K(t)$. Hence, we have 
\begin{gather*}
|[K^*]\setminus \mathcal{P}_b(t)| \leq |\mathcal{P}_b(t) \setminus [K^*]|, \quad\quad |[K^*]\setminus \mathcal{P}_s(t)| \leq |\mathcal{P}_s(t) \setminus [K^*]|.    
\end{gather*}

Therefore, in the final term we can {\em pair up} each buyer with positive contribution with a buyer with negative contribution, and each seller with negative contribution with a seller with positive contribution. Finally, we may be left with some sellers with positive, and some buyers with negative contributions. But we have the number of sellers with positive contribution equals the number of buyers with negative contribution. To see this observe 
\begin{align*}
(|\mathcal{P}_b(t) \setminus [K^*]| - |[K^*]\setminus \mathcal{P}_b(t)|) = (|\mathcal{P}_s(t) \setminus [K^*]| - |[K^*]\setminus \mathcal{P}_s(t)|) = (K(t) - K^*)    
\end{align*}

Let us define by $i'(i,t) \in \mathcal{P}_b(t) \setminus [K^*]$ as the pair for the buyer $i\in [K^*]\setminus \mathcal{P}_b(t)$, such that 
$i'(i,t)$ are all unique.  Similarly, we denote by  $j'(j,t)\in \mathcal{P}_s(t) \setminus [K^*]$ the pair for $j\in [K^*]\setminus \mathcal{P}_s(t)$.
We denote the set of remaining buyers and sellers, respectively, as 
 \begin{align*}
 \mathcal{I}'(t) = (\mathcal{P}_b(t) \setminus [K^*]) \setminus \cup_{i\in [K^*]\setminus \mathcal{P}_b(t)} i'(i,t),\\
\mathcal{J}'(t) = (\mathcal{P}_s(t) \setminus [K^*]) \setminus \cup_{j\in [K^*]\setminus \mathcal{P}_s(t)} j'(j,t).
 \end{align*}
Finally, we denote by $i(j',t)\in \mathcal{I}'(t)$ as the pair for $j' \in \mathcal{J}'(t)$.\footnote{Any arbitrary pairing works for this purpose. For concreteness, we may assume the the buyer/seller in one set, is matched with that of the other ranked by their respective ids.} 

With these definitions we can bound the social regret as
\begin{align}
    r_{SW}(T) &=\sum_{t=1}^{T}\Big[\sum_{i\in [K^*]\setminus \mathcal{P}_b(t)} (B_i - B_{i'(i,t)}) + \sum_{j\in [K^*]\setminus\mathcal{P}_s(t)} (S_{j'(j,t)} - S_j) + \sum_{j\in \mathcal{J}'(t)} (S_{j}- B_{i(j',t)})\Big] \nonumber\\
     &= \sum_{t=1}^{T}\Big[\sum_{i \leq K^*}(B_i - B_{i'(i,t)}) \mathbbm{1}(i'(i,t)\in \mathcal{P}_b(t), i\notin \mathcal{P}_b(t)) \nonumber\\
    &\quad\quad\quad + \sum_{j\leq K^*}(S_{j'(j,t)} -  S_j)\mathbbm{1}(j'(j,t)\in \mathcal{P}_s(t), j\notin \mathcal{P}_s(t)) \nonumber\\
    &\quad\quad\quad + \sum_{j' > K^*}(S_{j'} -  B_{i(j',t)})\mathbbm{1}(j'\in \mathcal{P}_s(t), i(j',t)\in \mathcal{P}_b(t))\Big]\nonumber\\
    &\leq \sum_{t=1}^{T}\Big[\sum_{i \leq K^*}\sum_{i' > K^*}(B_i - B_{i'}) \mathbbm{1}(i'\in \mathcal{P}_b(t), i\notin \mathcal{P}_b(t)) \nonumber\\
    &\quad\quad\quad + \sum_{j\leq K^*}\sum_{j' > K^*}(S_{j'} -  S_j)\mathbbm{1}(j'\in \mathcal{P}_s(t), j\notin \mathcal{P}_s(t)) \nonumber\\
    &\quad\quad\quad + \sum_{i'> K^*}\sum_{j' > K^*}(S_{j'} -  B_{i'})\mathbbm{1}(j'\in \mathcal{P}_s(t), i'\in \mathcal{P}_b(t))\Big]\label{eq:sw_ineq1}\\
    & = \sum_{i \leq K^*}\sum_{i' > K^*}(B_i - B_{i'}) \sum_{t=1}^{T}\mathbbm{1}(i'\in \mathcal{P}_b(t), i\notin \mathcal{P}_b(t)) \nonumber\\
    &\quad\quad\quad + \sum_{j\leq K^*}\sum_{j' > K^*}(S_{j'} -  S_j)\sum_{t=1}^{T}\mathbbm{1}(j'\in \mathcal{P}_s(t), j\notin \mathcal{P}_s(t)) \nonumber\\
    &\quad\quad\quad + \sum_{i'> K^*}\sum_{j' > K^*}(S_{j'} -  B_{i'})\sum_{t=1}^{T}\mathbbm{1}(j'\in \mathcal{P}_s(t), i'\in \mathcal{P}_b(t))\nonumber\\
    & \leq \sum_{i \leq K^*}\sum_{i' > K^*} \frac{(\sqrt{\alpha_{b,i'}} + \sqrt{\beta})^2}{(B_{i} - B_{i'})} \log(T) + \sum_{j\leq K^*}\sum_{j' > K^*}\frac{(\sqrt{\alpha_{s,j'}} + \sqrt{\beta})^2}{(S_{j'} - S_{j})} \log(T)\nonumber\\
    &\quad\quad\quad + \sum_{j'> K^*}\sum_{i' > K^*}\frac{(\sqrt{\alpha_{s,i'}} + \sqrt{\beta})^2}{(S_{j'} - B_{i'})}\log(T).\label{eq:sw_ineq2} 
\end{align}

The first inequality ~\eqref{eq:sw_ineq1} upper bounds $i'(i,t)$, $j'(j,t)$, and $i(j',t)$ with the sum over the sets each of them can belong to. 

Under the event $\mathcal{E}^{(\beta)}=\cup_{t=1}^{T}\mathcal{E}^{(\beta)}_t$ and  $\min\{\alpha_{s,j}, \alpha_{b,i}\} \geq \beta$, the final inequality ~\eqref{eq:sw_ineq2} follows from Lemma~\ref{lemm:topKstar} and Lemma~\ref{lemm:belowKstar}. In particular,  $\sum_{t=1}^{T}\mathbbm{1}(i'\in \mathcal{P}_b(t), i\notin \mathcal{P}_b(t))$ denotes the number of times a true non-participating buyer $i'$ can precede a optimal participating buyer $i$, and Lemma~\ref{lemm:topKstar} bounds these terms. Similarly, the terms $\sum_{t=1}^{T}\mathbbm{1}(j'\in \mathcal{P}_s(t), j\notin \mathcal{P}_s(t))$ are bounded with the help of Lemma~\ref{lemm:topKstar}.  Finally, the terms $\mathbbm{1}(j'\in \mathcal{P}_s(t), i'\in \mathcal{P}_b(t))$ denote how many times a pair of true non-participating buyer and seller $i'$ and $j'$ can  co-participate. Following proof of Lemma~\ref{lemm:belowKstar}, we can bound this with $\frac{(\sqrt{\alpha_{s,i'}} + \sqrt{\beta})^2}{(S_{j'} - B_{i'})}\log(T)$. This finishes the proof of the lemma.
\end{proof}

Using Lemma~\ref{lemm:sw_regret}, the expected regret can be bounded for $\beta \geq 4$ as 
\begin{align*}
R_{SW}(T) &\leq \mathbb{E}[r_{SW}(T)| \mathcal{E}^{(\beta)}] +  b_{max}(1- \mathbb{P}[\mathcal{E}^{(\beta)}])\\
&\leq \mathbb{E}[r_{SW}(T)| \mathcal{E}^{(\beta)}] + b_{max}\sum_{t=1}^{T} MN/t^{\beta/2}\\
&\leq \mathbb{E}[r_{SW}(T)| \mathcal{E}^{(\beta)}] + MN b_{max} \pi^2/6.
\end{align*}

\subsection{Individual regret of buyers and sellers.}

Recall that $p^* = (S_{K^*}+B_{K^*})/2$ be the price under true bids for the average mechanism, and $p(t) = (\min_{i' \in \mathcal{P}_b(t)}b_{i'}(t) + \max_{j\in \mathcal{P}_s(t)} s_j(t))/2$ denotes the price in round $t$. Let $\chi_{b,i}(t)$, and $\chi_{s,j}(t)$ is the participation indicator for buyer $i$, and seller $j$ respectively.
The individual regret for any true non-participating buyer or seller can be computed easily. We will present it later.

The regret for any optimal participating buyer $i \in [K^*]$ can be decomposed in rounds where the buyer $i$ does not participate, and where the buyer $i$ participates.
\begin{align*}
r_{b,i}(T) &= \sum_{t: \chi_{b,i}(t) = 0} (B_i - (B_{K^*}+S_{K^*})/2) +   \sum_{t: \chi_{b,i}(t)=1} (p(t) - p^*).
\end{align*}
Similarly,  for any optimal participating seller $j \in [K^*]$ the regret is bounded as 
\begin{align*}
r_{s,j}(T) &= \sum_{t: \chi_{s,j}(t) = 0} ((B_{K^*}+S_{K^*})/2 - S_j) +   \sum_{t: \chi_{s,j}(t)=1} (p^*-p(t)).
\end{align*}

We first focus on the regret of the buyers which implies upper bounding $\sum_{t} (p(t) - p^*)$ in the next lemma.

\begin{lemma}\label{lemm:price_bound}
Under the event $\mathcal{E}^{(\beta)}=\cup_{t=1}^{T}\mathcal{E}^{(\beta)}_t$ and $\alpha_{\min} > \beta$, we have
for all $\tilde{i} \in [N]$ and $\tilde{j} \in [M]$
\begin{gather*}
    \sum_{t: \chi_{b,\tilde{i}}(t)=1} (p(t) - p^*) \leq  C_b\log(T) + (\sqrt{\alpha_{max}}  + \sqrt{\beta})\sqrt{n_{b,\tilde{i}}(T)\log(T)},\\
    \sum_{t: \chi_{s,\tilde{j}}(t)=1} (p^* - p(t)) \leq C_s \log(T) + (\sqrt{\alpha_{max}}  + \sqrt{\beta})\sqrt{n_{b,\tilde{j}}(T)\log(T)},
\end{gather*}
where 
\begin{align*}
    2 C_b &= \sum_{j < K^*}\frac{(\sqrt{\alpha_{max}}  + \sqrt{\beta})^2}{(S_{K^*}-S_j)} + \sum_{i\geq (K^*+1)} \frac{(\sqrt{\alpha_{max}}  + \sqrt{\beta})^2\sqrt{(M-K^*+1)}}{(B_{K^*} - B_i)}\\
    &+\sum_{j\geq (K^*+1)} \left(\frac{(N-K^*+1)(\sqrt{\alpha_{max}}  + \sqrt{\beta})^2}{(S_{j} - S_{K^*})}
    + \frac{\sqrt{(N-K^*+1)}(\sqrt{\alpha_{max}}  + \sqrt{\beta})^2}{(S_{j} - S_{K^*})}\right),\\
    2 C_s &= \sum_{i < K^*}\frac{(\sqrt{\alpha_{max}}  + \sqrt{\beta})^2}{(B_i - B_{K^*})} + \sum_{j\geq (K^*+1)} \frac{(\sqrt{\alpha_{max}}  + \sqrt{\beta})^2\sqrt{(N-K^*+1)}}{(S_j - S_{K^*})}\\
    &+\sum_{i\geq (K^*+1)} \left(\frac{(M-K^*+1)(\sqrt{\alpha_{max}}  + \sqrt{\beta})^2}{(B_{K^*} - B_{i})}
    + \frac{\sqrt{(M-K^*+1)}(\sqrt{\alpha_{max}}  + \sqrt{\beta})^2}{(B_{K^*} - B_{i})}\right).
\end{align*}
\end{lemma}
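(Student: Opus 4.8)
The plan is to establish the bound on $\sum_{t:\chi_{b,\tilde i}(t)=1}(p(t)-p^*)$; the seller-side bound follows by the symmetric argument (swapping the roles of buyers and sellers, $B\leftrightarrow S$ with order reversals). Throughout I work on the event $\mathcal{E}^{(\beta)}$ with $\alpha_{\min}>\beta$, so by Proposition~\ref{prop:atleastK_star} we have $K(t)\geq K^*$ for all $t$, and by Lemma~\ref{lemm:bounds} every buyer bids at or above its true valuation and every seller bids at or below. The starting point is the identity $2(p(t)-p^*)=\bigl(\min_{i\in\mathcal{P}_b(t)}b_i(t)-B_{K^*}\bigr)+\bigl(S_{K^*}-\max_{j\in\mathcal{P}_s(t)}s_j(t)\bigr)$, valid whenever $\tilde i$ participates (so $K(t)\geq 1$). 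I will bound the two bracketed processes separately, summed over the rounds where $\tilde i$ participates.

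For the first bracket, the key is to replace $\min_{i\in\mathcal{P}_b(t)}b_i(t)$ by $b_{K^*}(t)$. This is legitimate on any round where buyer $K^*$ participates, since then $\min_{i\in\mathcal{P}_b(t)}b_i(t)\le b_{K^*}(t)$. By Lemma~\ref{lemm:topKstar_v2}, buyer $K^*$ fails to participate at most $\sum_{i'\ge K^*+1}\frac{(\sqrt{\alpha_{b,i'}}+\sqrt\beta)^2}{(S_{K^*}-B_{i'})^2\wedge(B_{K^*}-B_{i'})^2}\log(T)$ times — more precisely, using the gap $(S_{K^*}-S_j)$ in the formula as written — contributing a $\log(T)$ term (each such round costs at most $b_{\max}$, or more carefully the bounded bid range); this accounts for the $\sum_{j<K^*}\frac{(\sqrt{\alpha_{\max}}+\sqrt\beta)^2}{(S_{K^*}-S_j)}$ piece and part of $C_b$. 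On the remaining rounds I use $b_{K^*}(t)-B_{K^*}\le(\sqrt{\alpha_{b,K^*}}+\sqrt\beta)\sqrt{\log(t)/n_{b,K^*}(t)}$ from Lemma~\ref{lemm:bounds}, and then $\sum_{t:\chi_{b,K^*}(t)=1}\sqrt{\log(t)/n_{b,K^*}(t)}\le 2\sqrt{n_{b,K^*}(T)\log(T)}\le 2\sqrt{T\log(T)}$ by the standard $\sum_{n=1}^{m}n^{-1/2}\le 2\sqrt m$ bound. Since $n_{b,\tilde i}(T)$ can be substituted for $n_{b,K^*}(T)$ up to the $\log(T)$-order correction from Lemma~\ref{lemm:topKstar_v2}, this yields the $(\sqrt{\alpha_{\max}}+\sqrt\beta)\sqrt{n_{b,\tilde i}(T)\log(T)}$ leading term.

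For the second bracket, $S_{K^*}-\max_{j\in\mathcal{P}_s(t)}s_j(t)$, I first discard rounds in which some seller $j\ge K^*+1$ participates: by Corollary~\ref{corr:belowKstar} (or Lemma~\ref{lemm:belowKstar}) this happens at most $\sum_{j\ge K^*+1}\frac{(N-K^*+1)(\sqrt{\alpha_{\max}}+\sqrt\beta)^2}{(S_j-S_{K^*})^2}\log(T)$ times, and on each such round the bracket is bounded by a constant (again the bounded range), giving the $\sum_{j\ge K^*+1}\frac{(N-K^*+1)(\sqrt{\alpha_{\max}}+\sqrt\beta)^2}{(S_j-S_{K^*})}\log(T)$ piece of $2C_b$. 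On the surviving rounds the maximizing seller is some $j<K^*$ or $j=K^*$; I bound $S_{K^*}-\max_{j\in\mathcal{P}_s(t)}s_j(t)\le S_{K^*}-s_{K^*}(t)$ on rounds where seller $K^*$ participates, which is $\le(\sqrt{\alpha_{s,K^*}}+\sqrt\beta)\sqrt{\log(t)/n_{s,K^*}(t)}$ and sums to $O(\sqrt{T\log(T)})$ — but this term must be charged to $n_{b,\tilde i}(T)$, so I instead absorb it into the leading term using $n_{b,\tilde i}(T)\le T$, or I account for the mismatch between seller $K^*$'s and buyer $\tilde i$'s participation counts via Lemma~\ref{lemm:topKstar_v2}, contributing the $\frac{\sqrt{(N-K^*+1)}(\sqrt{\alpha_{\max}}+\sqrt\beta)^2}{(S_j-S_{K^*})}$ corrections. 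Rounds where seller $K^*$ is absent are the rounds counted above, already handled. Collecting the $\log(T)$ contributions into $C_b$ as displayed finishes the proof. The main obstacle I anticipate is the bookkeeping of the second bracket: unlike the first, the maximizing seller's index varies with $t$, so one must carefully decompose by which seller attains the max, bound the number of rounds each sub-optimal seller can be the maximizer (using that seller $K^*$ participates on all but $O(\log T)$ rounds and separates from lower-indexed sellers after enough samples), and reconcile the per-round square-root sums against $n_{b,\tilde i}(T)$ rather than $n_{s,K^*}(T)$ — the $\sqrt{(N-K^*+1)}$ versus $(N-K^*+1)$ distinction in $C_b$ reflects exactly which counting argument (per-seller vs. pooled, à la Lemma~\ref{lemm:belowKstar}) is invoked at each stage.
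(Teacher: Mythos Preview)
Your overall skeleton---split $2(p(t)-p^*)$ into a buyer-min piece and a seller-max piece, isolate the rounds where the $K^*$-th agent participates, and charge the rest to the participation counts from Lemmas~\ref{lemm:topKstar_v2} and~\ref{lemm:belowKstar}/Corollary~\ref{corr:belowKstar}---is exactly the paper's. But the execution has two concrete errors.

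First, the identity is written with the wrong sign: since $p(t)=\tfrac12(\min_{i\in\mathcal P_b(t)}b_i(t)+\max_{j\in\mathcal P_s(t)}s_j(t))$ and $p^*=\tfrac12(B_{K^*}+S_{K^*})$, the seller bracket is $\max_{j\in\mathcal P_s(t)}s_j(t)-S_{K^*}$, not $S_{K^*}-\max s_j(t)$. With the correct sign, on rounds where only sellers $j\le K^*$ participate one has $\max s_j(t)\le S_{K^*}$ and the bracket is $\le 0$, so your ``bound $S_{K^*}-s_{K^*}(t)$ by the confidence width and sum to $\sqrt{T\log T}$'' step is bounding the wrong quantity. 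The paper instead upper-bounds $s_j(t)-S_{K^*}\le (S_j-S_{K^*})+(\sqrt{\alpha_{\max}}+\sqrt\beta)\sqrt{\log t/n_{s,j}(t)}$ and then decomposes by which $j$ attains the max of this upper bound (your final paragraph correctly anticipates this); this is what produces the $j<K^*$, $j=K^*$, and $j\ge K^*+1$ contributions separately.

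Second, the bookkeeping of which pieces of $C_b$ come from where is off. The term $\sum_{j<K^*}\frac{(\sqrt{\alpha_{\max}}+\sqrt\beta)^2}{S_{K^*}-S_j}$ does \emph{not} come from rounds where buyer $K^*$ is absent; it comes from the seller bracket, namely the rounds where some $j<K^*$ is the maximizing seller in the upper-bound process. The buyer-$K^*$-absent rounds instead contribute the $\sum_{i\ge K^*+1}\frac{(\sqrt{\alpha_{\max}}+\sqrt\beta)^2\sqrt{M-K^*+1}}{B_{K^*}-B_i}$ piece, obtained by bounding $\min_{i'\in\mathcal P_b(t)}b_{i'}(t)-B_{K^*}\le b_i(t)-B_i$ for some participating $i\ge K^*+1$, summing the confidence widths, and invoking Corollary~\ref{corr:belowKstar} on $n_{b,i}(T)$---not by charging a crude $b_{\max}$ per round. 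Finally, the passage from $n_{b,K^*}(T)$ to $n_{b,\tilde i}(T)$ in the leading term is not a substitution ``up to $\log T$ correction'': the paper uses that over rounds with $\chi_{b,K^*}(t)=\chi_{b,\tilde i}(t)=1$ the counter $n_{b,K^*}(t)$ takes distinct increasing values, so $\sum\sqrt{\log t/n_{b,K^*}(t)}$ over those rounds is at most $\sum_{n=1}^{n_{b,\tilde i}(T)}\sqrt{\log T/n}$.
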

We are now in a position to prove out main theorem.
\begin{theorem} \label{thm:upper_main}
The regret of the Average mechanism with buyers bidding UCB($\alpha$), and sellers bidding LCB($\alpha$) of their estimated valuation, for $\alpha_{\min} > \beta \geq 4$, we have the expected regret is bounded as: 
\begin{itemize}
    \item for a participating buyer $i \in [K^*]$ as 
    $R_{b,i}(T) \leq (\sqrt{\alpha_{max}}  + \sqrt{\beta})\sqrt{T\log(T)} + C_{b', i} \log(T)$,
    \item for a participating seller $j \in [K^*]$ as
    $R_{s,j}(T) \leq (\sqrt{\alpha_{max}}  + \sqrt{\beta})\sqrt{T\log(T)} + C_{s', j} \log(T)$,
    \item for a non-participating buyer $i \geq (K^*+1)$ as\\ 
    $R_{b,i}(T) \leq \frac{\sqrt{(M-K^*+1)} (\sqrt{\alpha_{max}}  + \sqrt{\beta})^2}{(B_{K^*} - B_i)} \log(T)$,
    \item for a non-participating seller $j \geq (K^*+1)$ as \\
    $R_{s,j}(T) \leq \frac{\sqrt{(N-K^*+1)} (\sqrt{\alpha_{max}}  + \sqrt{\beta})^2}{(S_j - S_{K^*})} \log(T)$.
\end{itemize}
Here $C_b$ and $C_s$ is as defined in Lemma~\ref{lemm:price_bound}, and 
\begin{gather*}
    C_{b', i} = \left((N-K^*) \frac{(\sqrt{\alpha_{max}}  + \sqrt{\beta})^2 }{(B_i - p^*)}  + C_b\right),\\
    C_{s', j} = \left((M-K^*) \frac{(\sqrt{\alpha_{max}}  + \sqrt{\beta})^2 }{(p^* - S_j)} + C_s\right).
\end{gather*}
\end{theorem}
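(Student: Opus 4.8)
I will run the whole argument on the good event $\mathcal{E}^{(\beta)}=\cap_{t=1}^{T}\mathcal{E}^{(\beta)}_{t}$ with $\beta=4$ and absorb its complement into the additive constants exactly as in the display following Lemma~\ref{lemm:sw_regret} (per-round loss at most $b_{\max}$, $\mathbb{P}[\neg\mathcal{E}^{(\beta)}_{t}]\le MNt^{-\beta/2}$, total cost $O(1)$). On $\mathcal{E}^{(\beta)}$ I may freely use $b_{i}(t)\ge B_{i}$, $s_{j}(t)\le S_{j}$ and the two-sided bid bounds of Lemma~\ref{lemm:bounds}, the domination $K(t)\ge K^{*}$ of Proposition~\ref{prop:atleastK_star}, and the participation/mismatch counts of Lemma~\ref{lemm:topKstar_v2}, Lemma~\ref{lemm:belowKstar} and Corollary~\ref{corr:belowKstar}. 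The non-participant bullets are one kind of argument and the participant bullets another; the seller statements are mirror images of the buyer ones (swap $b\leftrightarrow s$, $B\leftrightarrow S$, $M\leftrightarrow N$ and reverse inequalities), so I only spell out the buyer side.

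\textbf{Non-participating buyer $i\ge K^{*}+1$.} Here $R_{b,i}(T)=\mathbb{E}\big[\sum_{t:i\in\mathcal{P}_{b}(t)}(p(t)-B_{i})\big]$, and the one useful observation is that when buyer $i$ trades her bid beats the marginal participating seller, so $p(t)=\tfrac12(b_{i_{K(t)}}(t)+s_{j_{K(t)}}(t))\le b_{i_{K(t)}}(t)\le b_{i}(t)$. Hence the round-$t$ loss is at most $b_{i}(t)-B_{i}\le(\sqrt{\alpha_{b,i}}+\sqrt{\beta})\sqrt{\log t/n_{b,i}(t)}$ on $\mathcal{E}^{(\beta)}_{t}$ (a valid bound since $b_{i}(t)\ge B_{i}$); summing over the $n_{b,i}(T)$ trading rounds and using $\sum_{k\le n}k^{-1/2}\le 2\sqrt{n}$ gives $R_{b,i}(T)$ of order $(\sqrt{\alpha_{b,i}}+\sqrt{\beta})\sqrt{n_{b,i}(T)\log T}$; and feeding in $n_{b,i}(T)\le(M-K^{*}+1)(\sqrt{\alpha_{\max}}+\sqrt{\beta})^{2}(B_{K^{*}}-B_{i})^{-2}\log T$ from Corollary~\ref{corr:belowKstar} collapses this to the claimed $\sqrt{M-K^{*}+1}\,(\sqrt{\alpha_{\max}}+\sqrt{\beta})^{2}(B_{K^{*}}-B_{i})^{-1}\log T$ (plus the $O(1)$ bad-event term).

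\textbf{Participating buyer $i\in[K^{*}]$.} I split the regret as recorded just before the theorem, $r_{b,i}(T)=(T-n_{b,i}(T))(B_{i}-p^{*})+\sum_{t:i\in\mathcal{P}_{b}(t)}(p(t)-p^{*})$. For the idle term, Lemma~\ref{lemm:topKstar_v2} bounds $T-n_{b,i}(T)$ by $\sum_{i'>K^{*}}(\sqrt{\alpha_{b,i'}}+\sqrt{\beta})^{2}(B_{i}-B_{i'})^{-2}\log T$; multiplying by the per-round idle loss $B_{i}-p^{*}$ and crudely bounding the $N-K^{*}$ summands gives the $\log T$ contribution $(N-K^{*})(\sqrt{\alpha_{\max}}+\sqrt{\beta})^{2}(B_{i}-p^{*})^{-1}\log T$ — this is the routine bookkeeping part. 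For the price term I invoke Lemma~\ref{lemm:price_bound} with $\tilde{i}=i$, giving $\sum_{t:i\in\mathcal{P}_{b}(t)}(p(t)-p^{*})\le C_{b}\log T+(\sqrt{\alpha_{\max}}+\sqrt{\beta})\sqrt{n_{b,i}(T)\log T}$, and bound $n_{b,i}(T)\le T$ to get the leading $(\sqrt{\alpha_{\max}}+\sqrt{\beta})\sqrt{T\log T}$ term. Adding the two pieces (and the $O(1)$ bad-event cost) yields $R_{b,i}(T)\le(\sqrt{\alpha_{\max}}+\sqrt{\beta})\sqrt{T\log T}+C_{b',i}\log T$ with $C_{b',i}=(N-K^{*})(\sqrt{\alpha_{\max}}+\sqrt{\beta})^{2}(B_{i}-p^{*})^{-1}+C_{b}$.

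\textbf{Where the difficulty lies.} The non-participant bounds and the idle term of the participant bounds are pure mismatch-counting and cost only $\log T$; the genuinely hard ingredient is the price term for a participating agent, isolated above as Lemma~\ref{lemm:price_bound} and used here as a black box. It is exactly where the unavoidable $\sqrt{T}$ enters: even in a round where buyer $i$ and buyer $K^{*}$ both trade, buyer $i$ still loses $p(t)-p^{*}$, whose dominant part is the random fluctuation $b_{K^{*}}(t)-B_{K^{*}}$ of the marginal participating bid, and the cumulative ($\ell_{1}$) size of that fluctuation is $\Theta(\sqrt{T\log T})$ rather than $O(\log T)$. The residual difficulty inside Lemma~\ref{lemm:price_bound} — the reason $C_{b}$ is $O((M-K^{*})(N-K^{*})\Delta^{-1}\log T)$ rather than $O(\Delta^{-1}\log T)$ — is that $\max_{j\in\mathcal{P}_{s}(t)}s_{j}(t)$ must be controlled through the $O(\log T)$ rounds in which a non-optimal seller sneaks into the trade, each such round shifting the price. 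Given Lemma~\ref{lemm:price_bound}, proving the theorem is exactly the assembly above.
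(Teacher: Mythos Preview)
Your proposal is correct and follows essentially the same route as the paper: work on the good event $\mathcal{E}^{(\beta)}$, split the participating-agent regret into an idle term (bounded via Lemma~\ref{lemm:topKstar_v2}) and a price-deviation term (handed off to Lemma~\ref{lemm:price_bound} with $n_{b,\tilde i}(T)\le T$), bound the non-participant regret by $p(t)\le b_i(t)$ then $b_i(t)-B_i\le(\sqrt{\alpha_{b,i}}+\sqrt{\beta})\sqrt{\log t/n_{b,i}(t)}$ summed over matches and capped by Corollary~\ref{corr:belowKstar}, and absorb $\neg\mathcal{E}^{(\beta)}$ into an $O(MN b_{\max})$ constant. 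The only cosmetic differences are that the paper keeps $\beta$ generic (any $4\le\beta<\alpha_{\min}$) rather than fixing $\beta=4$, and that your intersection $\cap_t\mathcal{E}^{(\beta)}_t$ is in fact the intended good event (the paper's $\cup_t$ is a typo).
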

\begin{proof}[Proof of Regret Upper Bound]
We first bound the regret under the event $\mathcal{E}^{(\beta)}=\cup_{t=1}^{T}\mathcal{E}^{(\beta)}_t$ and $\alpha_{\min} > \beta$. Applying the bounds in Lemma~\ref{lemm:price_bound}, we bound the of a optimal participating buyer $i \in [K^*]$ as
\begin{align*}
r_{b,i}(T) &= \sum_{t: \chi_{b,i}(t) = 0} (B_i - (B_{K^*}+S_{K^*})/2) +   \sum_{t: \chi_{b,i}(t)=1} (p(t) - p^*)\\
&\leq  \sum_{i'\geq K^*+1} \frac{(\sqrt{\alpha_{max}}  + \sqrt{\beta})^2 (B_i - (B_{K^*}+S_{K^*})/2)}{(B_{i} - B_{i'})^2} \log(T)  + C_b\log(T) + (\sqrt{\alpha_{max}}  + \sqrt{\beta})\sqrt{T\log(T)}\\
&\leq  \left((N-K^*)\frac{(\sqrt{\alpha_{max}}  + \sqrt{\beta})^2 }{(B_i - p^*)}  + C_b\right)\log(T) + (\sqrt{\alpha_{max}}  + \sqrt{\beta})\sqrt{T\log(T)}
\end{align*}
Similarly,  for any optimal participating seller $j \in [K^*]$ the regret is bounded as 
\begin{align*}
r_{s,j}(T) &= \sum_{t: \chi_{s,j}(t) = 0} ((B_{K^*}+S_{K^*})/2 - S_j) +   \sum_{t: \chi_{s,j}(t)=1} (p^*-p(t))\\
&\leq  \sum_{j'\geq K^*+1} \frac{(\sqrt{\alpha_{max}}  + \sqrt{\beta})^2 ((B_{K^*}+S_{K^*})/2 - S_j)}{(S_{j'} - S_{j})^2} \log(T)  + C_s\log(T) + (\sqrt{\alpha_{max}}  + \sqrt{\beta})\sqrt{T\log(T)}\\
&\leq  \left((M-K^*) \frac{(\sqrt{\alpha_{max}}  + \sqrt{\beta})^2 }{(p^* - S_j)}  + C_s\right)\log(T) + (\sqrt{\alpha_{max}}  + \sqrt{\beta})\sqrt{T\log(T)}\\
\end{align*}

The regret for any true non-participating buyer $i \geq (K^*+1)$ is non negative only when the buyer $i$ participates. Under $\mathcal{E}^{(\beta)}$ and $\alpha_{\min} > \beta$ we have
\begin{align*}
r_{b,i}(T) &= \sum_{t: \chi_{b,i}(t) = 1} (p(t) - B_i)\\
&\leq \sum_{t: \chi_{b,i}(t) = 1} (b_i(t) - B_i) \\
&\leq \sum_{t: \chi_{b,i}(t) = 1} (\sqrt{\alpha_{max}}  + \sqrt{\beta}) \sqrt{\tfrac{log(t)}{n_{b,i}(t)}}\\
&\leq \sum_{n=1}^{n_{b,i}(T)} (\sqrt{\alpha_{max}}  + \sqrt{\beta}) \sqrt{\tfrac{log(T)}{n}}\\
&\leq (\sqrt{\alpha_{max}}  + \sqrt{\beta})\sqrt{n_{b,i}(T)\log(T)}
\end{align*}
Where the first inequality is due to the fact that if buyer $i$ participates in round $t$ then bid $b_i(t)\geq p(t)$.

Also, the regret for any true non-participating buyer $j \geq (K^*+1)$ is non negative only when the buyer $i$ participates. Under $\mathcal{E}^{(\beta)}$ and $\alpha_{\min} > \beta$ we have similarly
\begin{align*}
r_{s,j}(T) &= \sum_{t: \chi_{s,j}(t) = 1} (S_j - s_j(t)) \leq (\sqrt{\alpha_{max}}  + \sqrt{\beta})\sqrt{n_{s,j}(T)\log(T)}
\end{align*}

The terms $n_{b,i}(T)$ and $n_{s,j}(T)$ above can be bounded using Lemma~\ref{lemm:belowKstar} when $\mathcal{E}^{\beta}$ holds for $\alpha_{max}\geq \beta$

Therefore, the expected regret can be bounded for $\beta \geq 4$ as 
\begin{align*}
R_{b,i}(T) &\leq \mathbb{E}[r_{b,i}(T)| \mathcal{E}^{(\beta)}] +  b_{max}(1- \mathbb{P}[\mathcal{E}^{(\beta)}])\\
&\leq \mathbb{E}[r_{b,i}(T)| \mathcal{E}^{(\beta)}] + b_{max}\sum_{t=1}^{T} MN/t^{\beta/2}\\
&\leq \mathbb{E}[r_{b,i}(T)| \mathcal{E}^{(\beta)}] + MN b_{max} \pi^2/6.
\end{align*}
Similarly, for $\beta \geq 4$ we have 
\begin{align*}
R_{s,j}(T) &\leq \mathbb{E}[r_{s,j}(T)| \mathcal{E}^{(\beta)}] + MN s_{max} \pi^2/6.
\end{align*}
This concludes the proof.
\end{proof}

Let us recall the minimum gap is $\Delta = \min_{i, \in[N], j \in [M]}\{|p^* - S_j|, |B_i - p^*|\}$.
Then we can bound the constants associated with the logarithmic terms as 
\begin{corollary}
The constants in Theorem~\ref{thm:upper_main} is upper bounded as 
\begin{itemize}
    \item $C_{b'} \leq \tfrac{\small\left(N + (N - K^*+ 1)\sqrt{M- K^*+1} + \sqrt{N- K^*+1}(M - K^*+ 1) + (N - K^*+ 1)(M- K^*+1) \right)(\sqrt{\alpha_{max}}  + \sqrt{\beta})^2}{\Delta} $
    \item $C_{s'} \leq \tfrac{\small\left(M + (N - K^*+ 1)\sqrt{M- K^*+1} + \sqrt{N- K^*+1}(M - K^*+ 1) + (N - K^*+ 1)(M- K^*+1) \right)(\sqrt{\alpha_{max}}  + \sqrt{\beta})^2}{\Delta} $
\end{itemize}
\label{cor:c_b_prime}
\end{corollary}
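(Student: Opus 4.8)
The plan is to treat Corollary~\ref{cor:c_b_prime} as a purely bookkeeping consequence of Theorem~\ref{thm:upper_main} and Lemma~\ref{lemm:price_bound}: once the explicit expressions for $C_{b',i}$, $C_{s',j}$, $C_b$ and $C_s$ are substituted, all that is left is to replace every denominator by $\Delta$ and to count the number of summands in each sum. Throughout write $c:=(\sqrt{\alpha_{\max}}+\sqrt{\beta})^2$.

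First I would record the two elementary facts used throughout. Since $p^*=(B_{K^*}+S_{K^*})/2$ we have $S_{K^*}\le p^*\le B_{K^*}$, and the valuations are sorted as $B_1\ge\cdots\ge B_N$ and $S_1\le\cdots\le S_M$. Using these, every denominator that appears in $C_{b',i}=(N-K^*)\,c/(B_i-p^*)+C_b$ and in the definition of $C_b$ is a gap between valuations that is at least $\Delta$: for $B_i-p^*$ with $i\le K^*$ and for $p^*-S_j$ with $j\le K^*$ this is immediate from the definition of $\Delta$ (and $B_i\ge B_{K^*}\ge p^*$, $S_j\le S_{K^*}\le p^*$), while the same-side gaps such as $B_{K^*}-B_i=(B_{K^*}-p^*)+(p^*-B_i)\ge\Delta$ for $i\ge K^*+1$, and $S_j-S_{K^*}$, $S_{K^*}-S_j$, are handled the same way via $S_{K^*}\le p^*\le B_{K^*}$ and the sorted order. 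Replacing every denominator by $\Delta$ only enlarges $C_{b',i}$ and $C_b$.

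Next I would count summands: $\sum_{j<K^*}$ has $K^*-1$ terms, $\sum_{i\ge K^*+1}$ has $N-K^*$ terms, and $\sum_{j\ge K^*+1}$ has $M-K^*$ terms. Substituting these counts into the version of $2C_b$ from Lemma~\ref{lemm:price_bound} with every denominator replaced by $\Delta$ gives
\[
2C_b \;\le\; \Big[(K^*-1)+(N-K^*)\sqrt{M-K^*+1}+(M-K^*)(N-K^*+1)+(M-K^*)\sqrt{N-K^*+1}\Big]\frac{c}{\Delta},
\]
and then adding the leading term $(N-K^*)\,c/(B_i-p^*)\le(N-K^*)\,c/\Delta$ of $C_{b',i}$ and crudely bounding $(N-K^*)+\tfrac{1}{2}(K^*-1)\le N$, $\tfrac{1}{2}(M-K^*)\le M-K^*+1$ and $\tfrac{1}{2}(N-K^*)\le N-K^*+1$ reproduces exactly the stated bound on $C_{b'}$. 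The bound on $C_{s'}$ is the identical computation with the roles of buyers and sellers swapped (i.e.\ $B\leftrightarrow S$ and $N\leftrightarrow M$ in $C_{s',j}$ and $C_s$).

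The step I expect to need the most care is not analytic but combinatorial: correctly propagating the $\sqrt{M-K^*+1}$ and $\sqrt{N-K^*+1}$ factors that enter $C_b$, $C_s$ through Corollary~\ref{corr:belowKstar} (used inside Lemma~\ref{lemm:price_bound}), so that the final numerator has the advertised mixture of linear and square-root dependence on $N-K^*+1$ and $M-K^*+1$ rather than only the cruder product $(M-K^*+1)(N-K^*+1)$. Once those factors are tracked, the rest is routine collection of terms.
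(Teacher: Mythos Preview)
The paper states this corollary without proof, so your approach---replace each denominator by $\Delta$ and count summands---is the natural route and almost certainly what the authors had in mind, and your final term-counting is correct.

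There is, however, a genuine gap in your ``every denominator is at least $\Delta$'' step. Recall $\Delta=\min_{i,j}\{|B_i-p^*|,|S_j-p^*|\}$ measures distances to $p^*$, not pairwise gaps. The term $\sum_{j<K^*}c/(S_{K^*}-S_j)$ in $2C_b$ involves two sellers that are \emph{both} below $p^*$, so $S_{K^*}-S_j$ can be arbitrarily smaller than $\Delta$ (e.g.\ $S_1=0.01,\ S_{K^*}=0.02,\ p^*=0.5$ gives $\Delta\approx 0.48$ but $S_{K^*}-S_1=0.01$). The symmetric term $\sum_{i<K^*}c/(B_i-B_{K^*})$ in $2C_s$ has the identical issue. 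Your cross-$p^*$ decomposition $B_{K^*}-B_i=(B_{K^*}-p^*)+(p^*-B_i)$ for $i\ge K^*+1$ also only yields $\ge\Delta$ when $B_i\le p^*$; but a non-participant buyer can sit above $p^*$ (take $B_{K^*}=1,\ B_{K^*+1}=0.99,\ S_{K^*}=0,\ S_{K^*+1}=1.5$: then $p^*=0.5$, $\Delta=0.49$, yet $B_{K^*}-B_{K^*+1}=0.01$), and similarly for $S_j-S_{K^*}$ with $j\ge K^*+1$. So the phrase ``handled the same way via $S_{K^*}\le p^*\le B_{K^*}$ and the sorted order'' does not actually close these cases. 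This is likely a looseness already in the paper's statement (the corollary is a crude $O(\cdot)$ summary, and the authors may have had the minimum pairwise gap in mind rather than the distance-to-$p^*$ gap), but as written your argument does not establish the stated bound for those denominators.
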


\subsection{Proof of Lemma~\ref{lemm:price_bound}}
\begin{proof}
We first focus on the upper bound for some buyer $i\in [N]$.
\begin{align*}
&\sum_{t: \chi_{b,\tilde{i}}(t)=1} (p(t) - p^*)
= \tfrac{1}{2}\sum_{t:\chi_{b,\tilde{i}}(t)=1} \left(\min_{i' \in \mathcal{P}_b(t)}b_{i'}(t) - B_{K^*}\right)
+ \tfrac{1}{2} \sum_{t: \chi_{b,\tilde{i}}(t)=1}(\max_{j\in \mathcal{P}_s(t)} s_j(t) - S_{K^*})
\end{align*}

The rounds where buyer $K^*$ participates, we have $\min_{i \in \mathcal{P}_b(t)}b_{i}(t) \leq  b_{K^*}(t)$. Therefore, we can bound
\begin{align*}
&\sum_{t} \left(\min_{i' \in \mathcal{P}_b(t)}b_{i'}(t) - B_{K^*}\right) \\
&\leq \sum_{t: \chi_{b,K^*}(t)=0} \left(\min_{i' \in \mathcal{P}_b(t)}b_{i'}(t) - B_{K^*}\right) + \sum_{t: \chi_{b,K^*}(t)=1, \chi_{b,\tilde{i}}(t)=1} \left(b_{K^*}(t) - B_{K^*}\right)
\end{align*}

Under $\mathcal{E}^{(\beta)}=\cup_{t=1}^{T}\mathcal{E}^{(\beta)}_t$ and $\alpha_{\min} > \beta$, we know that 
$\chi_{b,K^*}(t)=0$ only if $\max_{i\geq (K^*+1)}\chi_{b,K^*}(t) = 1$. That is $K^*$ does not participate, only if at least one of the non-participant buyers participate. Hence we further have,
\begin{align*}
&\sum_{t: \chi_{b,K^*}(t)=0} \left(\min_{i' \in \mathcal{P}_b(t)}b_{i'}(t) - B_{K^*}\right) \\
&\leq \sum_{i\geq (K^*+1)}\sum_{t: \chi_{b,i}(t)=1, \chi_{b,K^*}(t)=0} \left(b_{i}(t) - B_{K^*}\right)\\
& \leq \sum_{i\geq (K^*+1)}\sum_{t: \chi_{b,i}(t)=1} \left(b_{i}(t) - B_{K^*}\right)\\
& \leq \sum_{i\geq (K^*+1)}\sum_{t: \chi_{b,i}(t)=1} \left(b_{i}(t) - B_{i}\right)
\end{align*}

Under the event $\mathcal{E}^{(\beta)}=\cup_{t=1}^{T}\mathcal{E}^{(\beta)}_t$ and $\alpha_{\min} > \beta$, we have
\begin{align*}
    &\sum_{t:\chi_{b,K^*}(t)=1, \chi_{b,\tilde{i}}(t)=1}(b_{K^*}(t) - B_{K^*}) \\
    &\leq \sum_{t:\chi_{b,K^*}(t)=1, \chi_{b,\tilde{i}}(t)=1}(\sqrt{\alpha_{b,K^*}} + \sqrt{\beta})\sqrt{\tfrac{\log(t)}{n_{b, K^*}(t)}} \\
    &\leq \sum_{n=1}^{n_{b,\tilde{i}}(T)} (\sqrt{\alpha_{b,K^*}} + \sqrt{\beta})\sqrt{\tfrac{\log(T)}{n}} \leq (\sqrt{\alpha_{b,K^*}} + \sqrt{\beta})\sqrt{n_{b,\tilde{i}}(T)\log(T)},
\end{align*}
Above, we use the logic that the summation is minimized when $n_{b, K^*}(t)$ increases in unison with $n_{b, i}(t)$, otherwise we will have larger denominator. 

From Corollary~\ref{corr:belowKstar} we know that under the event $\mathcal{E}^{(\beta)}$ and $\alpha_{\min} > \beta$, the maximum number of time a buyer $i\geq (K^*+1)$ can participate is 
$$
\tilde{Th}_i(T) =  (M-K^*+1)\frac{(\sqrt{\alpha_{\max}} + \sqrt{\beta})^2}{(B_{K^*} - B_i)^2}\log(T).$$
Under the event $\mathcal{E}^{(\beta)}=\cup_{t=1}^{T}\mathcal{E}^{(\beta)}_t$ and $\alpha_{\min} > \beta$, we have
\begin{align*}
    &\sum_{t: \chi_{b,i}(t)=1} \left(b_{i}(t) - B_{i}\right) \\
    &\leq \sum_{t:\chi_{b,i}(t)=1}(\sqrt{\alpha_{\max}} + \sqrt{\beta})\sqrt{\tfrac{\log(t)}{n_{b, i}(t)}}\\ &\leq \sum_{n=1}^{\tilde{Th}_{i}(T)} (\sqrt{\alpha_{\max}} + \sqrt{\beta})\sqrt{\tfrac{\log(T)}{n}}\\
    &\leq (\sqrt{\alpha_{\max}} + \sqrt{\beta})\sqrt{\tilde{Th}_{i}(T)\log(T)}\\
&\leq \frac{(\sqrt{\alpha_{\max}} + \sqrt{\beta})^2\sqrt{(M-K^*+1)}}{(B_{K^*} - B_i)}\log(T).
\end{align*}

Let  $j_{\max}(t) = \arg\max_{j\in [M]} \left((S_j - S_{K^*}) + (\sqrt{\alpha_{s,j}} + \sqrt{\beta})\sqrt{\tfrac{\log(t)}{n_{b, j}(t)}}\right).$ 
We have under the event $\mathcal{E}^{(\beta)}=\cup_{t=1}^{T}\mathcal{E}^{(\beta)}_t$ and $\alpha_{\min} > \beta$,
\begin{align*}
&\sum_{t} \left(\max_{j \in \mathcal{P}_s(t)}s_{j}(t) - S_{K^*}\right) \\
&\leq \sum_{t} \max_{j\in \mathcal{P}_s(t)} \left((S_j - S_{K^*}) + (\sqrt{\alpha_{max}} + \sqrt{\beta})\sqrt{\tfrac{\log(t)}{n_{b, j}(t)}}\right)\\
&\leq \sum_{j\in [M]} \sum_{t: \chi_{s,j}(t)=1, j_{\max}(t) = j}  \left((S_j - S_{K^*}) + (\sqrt{\alpha_{max}} + \sqrt{\beta})\sqrt{\tfrac{\log(t)}{n_{b, j}(t)}}\right)
\end{align*}

Further, under the event $\mathcal{E}^{(\beta)}$ and $\alpha_{\min} > \beta$, we know that  $K(t) \geq K^*$, which implies at least one seller $j \geq K^*$ is active. Hence, for any $j < K^*$, $j_{\max}(t) = j$ only if 
$$ n_{b, j}(t) \leq  \tilde{Th}_{j}(T) = \min_{j' \geq K^*} \frac{(\sqrt{\alpha_{max}} + \sqrt{\beta})^2}{(S_{j'}-S_j)^2} \log(T) =  \frac{(\sqrt{\alpha_{max}} + \sqrt{\beta})^2}{(S_{K^*}-S_j)^2} \log(T).$$ 
Also, the maximum number of times any seller $j \geq (K^*+1)$ participates under the event $\mathcal{E}^{(\beta)}$ and $\alpha_{\min} > \beta$ is  $Th_{j}(T) \geq \frac{(N-K^*+1)(\sqrt{\alpha_{max}} + \sqrt{\beta})^2}{(S_{j} - S_{K^*})^2} \log(T)$, according to \ref{corr:belowKstar}.

Therefore, we can proceed as  
\begin{align*}
    &\sum_{t:\chi_{b,\tilde{i}}(t)=1}(\max_{j\in \mathcal{P}_s(t)} s_j(t) - S_{K^*}) \\
    &\leq \sum_{j\in [M]} \sum_{t: \chi_{s,j}(t)\chi_{s,\tilde{i}}(t)=1, j_{\max}(t) = j}  \left((S_j - S_{K^*}) + (\sqrt{\alpha_{max}} + \sqrt{\beta})\sqrt{\tfrac{\log(t)}{n_{b, j}(t)}}\right)\\
    & \leq \sum_{j\geq (K^*+1)} \sum_{t: \chi_{s,j}(t)=1}  \left((S_j - S_{K^*}) + (\sqrt{\alpha_{max}} + \sqrt{\beta})\sqrt{\tfrac{\log(t)}{n_{b, j}(t)}}\right)\\
    &+ \sum_{j\leq K^*} \sum_{t: \chi_{s,j}(t)\chi_{b,\tilde{i}}(t)=1, j_{\max}(t) = j}  \left((S_j - S_{K^*}) + (\sqrt{\alpha_{max}}  + \sqrt{\beta})\sqrt{\tfrac{\log(t)}{n_{b, j}(t)}}\right)\\
    & \leq \sum_{j\geq (K^*+1)}  \left(Th_j(T)(S_j - S_{K^*}) + \sum_{n=1}^{Th_j(T)}(\sqrt{\alpha_{max}}  + \sqrt{\beta})\sqrt{\tfrac{\log(T)}{n}}\right)\\
    &+ \sum_{j < K^*} \sum_{n=1}^{\tilde{Th}_j(T)}  (\sqrt{\alpha_{max}}  + \sqrt{\beta})\sqrt{\tfrac{\log(T)}{n}}
    + \sum_{n=1}^{n_{b,\tilde{i}}(T)}(\sqrt{\alpha_{max}}  + \sqrt{\beta})\sqrt{\tfrac{\log(T)}{n}}\\
    &\leq \sum_{j\geq (K^*+1)}  \left(Th_j(T)(S_j - S_{K^*}) + (\sqrt{\alpha_{max}}  + \sqrt{\beta})\sqrt{Th_j(T)\log(T)}\right)\\
    &+ \sum_{j < K^*}(\sqrt{\alpha_{max}}  + \sqrt{\beta})\sqrt{\tilde{Th}_j(T) \log(T)}
    + (\sqrt{\alpha_{max}}  + \sqrt{\beta})\sqrt{n_{b,\tilde{i}}(T) \log(T)}\\
    & \leq \sum_{j\geq (K^*+1)} \left(\frac{(N-K^*+1)(\sqrt{\alpha_{max}}  + \sqrt{\beta})^2}{(S_{j} - S_{K^*})}
    + \frac{\sqrt{(N-K^*+1)}(\sqrt{\alpha_{max}}  + \sqrt{\beta})^2}{(S_{j} - S_{K^*})}\right)\log(T)\\
    &+ \sum_{j < K^*}\frac{(\sqrt{\alpha_{max}}  + \sqrt{\beta})^2}{(S_{K^*}-S_j)} \log(T)
    + (\sqrt{\alpha_{max}}  + \sqrt{\beta})\sqrt{n_{b,\tilde{i}}(T) \log(T)}
\end{align*}

Combining the above bounds, we get 
\begin{align*}
&2 \sum_{t:\chi_{b,\tilde{i}}(t) = 1} (p(t) - p^*) \\
&\leq \underbrace{2(\sqrt{\alpha_{max}}  + \sqrt{\beta})\sqrt{n_{b,\tilde{i}}(T)\log(T)}}_{\text{price setting buyer and seller}} 
+ \underbrace{\sum_{j < K^*}\frac{(\sqrt{\alpha_{max}}  + \sqrt{\beta})^2}{(S_{K^*}-S_j)} \log(T)}_{\text{non-price setting participant sellers}}\\
& + \underbrace{\sum_{i\geq (K^*+1)} \frac{(\sqrt{\alpha_{max}}  + \sqrt{\beta})^2\sqrt{(M-K^*+1)}}{(B_{K^*} - B_i)}\log(T)}_{\text{non-participating buyers}} \\
& + \underbrace{\sum_{j\geq (K^*+1)} \left(\frac{(N-K^*+1)(\sqrt{\alpha_{max}}  + \sqrt{\beta})^2}{(S_{j} - S_{K^*})}
    + \frac{\sqrt{(N-K^*+1)}(\sqrt{\alpha_{max}}  + \sqrt{\beta})^2}{(S_{j} - S_{K^*})}\right)\log(T)}_{\text{non-participating sellers}}.
\end{align*}

Reversing the role of buyer and seller in the above derivation, and leveraging the symmetry in the system, we can get 
\begin{align*}
&2 \sum_{t:\chi_{s,\tilde{j}}(t) = 1} (p^* - p(t)) \\
&\leq \underbrace{2(\sqrt{\alpha_{max}}  + \sqrt{\beta})\sqrt{n_{s,\tilde{j}}(T)\log(T)}}_{\text{price setting buyer and seller}} 
+ \underbrace{\sum_{i < K^*}\frac{(\sqrt{\alpha_{max}}  + \sqrt{\beta})^2}{(B_i - B_{K^*})} \log(T)}_{\text{non-price setting participant buyers}}\\
& + \underbrace{\sum_{j\geq (K^*+1)} \frac{(\sqrt{\alpha_{max}}  + \sqrt{\beta})^2\sqrt{(N-K^*+1)}}{(S_j - S_{K^*})}\log(T)}_{\text{non-participating sellers}} \\
& + \underbrace{\sum_{i\geq (K^*+1)} \left(\frac{(M-K^*+1)(\sqrt{\alpha_{max}}  + \sqrt{\beta})^2}{(B_{K^*} - B_{i})}
+ \frac{\sqrt{(M-K^*+1)}(\sqrt{\alpha_{max}}  + \sqrt{\beta})^2}{(B_{K^*} - B_{i})}\right)\log(T)}_{\text{non-participating buyers}}.
\end{align*}
\end{proof}

\section{Proofs of the Lower Bounds}
\label{sec:appendix_lower_bounds}

\subsection{Minimax lower bound on individual regret}

We show a minimax regret lower bound of $\Omega(\sqrt{T})$ in Lemma \ref{lem:lb_exact} by considering a simpler system that decouples learning and competition.  {\color{black} In this system, the seller is assumed to {\em(i)} know her exact valuation, and {\em(ii)} always ask her true valuation as the selling price, i.e., is truthful in her asking price in all the rounds. Furthermore, the pricing at every round is fixed to the average $p_t = \frac{B_t + S_t}{2}$ in the event that $B_t \geq S$. 
The utility of the buyer at time $t$ is defined as $U_t =  (p_t - B)\mathbf{1}(B_t \geq S).$ } 
\begin{lemma}
The utility maximizing action of an oracle buyer that knows $B$ and $S$ is to bid $B_t = S \mathbf{1}(B \geq S) + (S-\varepsilon)\mathbf{1}(B<S)$ for any $\varepsilon > 0$, at all times. In words, the oracle buyer either bids $S$ and pays the price $S$ or ``abstains" by bidding less than $S$. 
\label{lem:oracle_lb_optim}
\end{lemma}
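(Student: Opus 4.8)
The plan is a one-round optimization together with a decoupling argument across rounds. In the simplified system the seller always asks $S$ and the price is the fixed function $p_t = \tfrac{B_t+S}{2}$ whenever $B_t \ge S$, so the buyer's (expected) payoff in round $t$ is $g(B_t) := \bigl(B - \tfrac{B_t+S}{2}\bigr)\mathbf 1(B_t \ge S)$ — the gain from trade $B - p_t$ when a trade occurs and $0$ otherwise, the zero-mean observation noise $\nu_{b}(t)$ being irrelevant to an oracle that already knows $B$. Crucially, $g(B_t)$ depends only on the current bid, and the oracle has nothing to learn, so the total utility $\sum_t g(B_t)$ is maximized by playing, in every round, any single-round maximizer of $g$. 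It therefore suffices to maximize $g$ once.

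First I would treat $B \ge S$. For $B_t < S$ there is no trade and $g(B_t) = 0$. For $B_t \ge S$ a trade happens at price $\tfrac{B_t+S}{2} \ge S$, so $g(B_t) = B - \tfrac{B_t+S}{2}$, which is strictly decreasing in $B_t$; hence on $[S,\infty)$ it is uniquely maximized at $B_t = S$, with $g(S) = B - S \ge 0$. Since $B - S \ge 0$, the bid $B_t = S$ weakly dominates abstaining, and it is the unique maximizer whenever $B > S$. This yields the first branch of the formula, with optimal per-round value $B - S$.

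Next, the case $B < S$. Again $g(B_t) = 0$ for all $B_t < S$, while for $B_t \ge S$ we have $g(B_t) = B - \tfrac{B_t+S}{2} \le B - S < 0$, so every participating bid is strictly worse than abstaining. Thus any bid below $S$ is optimal with value $0$; in particular $B_t = S - \varepsilon$ is optimal for every $\varepsilon > 0$, which is the second branch. Combining the two cases gives exactly $B_t = S\,\mathbf 1(B \ge S) + (S-\varepsilon)\mathbf 1(B < S)$, and by the decoupling observation this bid is simultaneously optimal at all times $t$.

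There is no real obstacle here; the only points deserving a word of care are: (i) the bid $B_t = S$ does trigger a trade, since both the average mechanism's break-even condition and the simplified system's trade condition use a weak inequality $B_t \ge S$ — so the maximum of $g$ on $[S,\infty)$ is attained at $S$ rather than merely approached; and (ii) since the oracle knows $B$ and $S$ and the per-round payoffs do not depend on the history, no history-dependent or randomized strategy can improve on the best constant bid, so per-round optimality is global optimality. I would also flag, for use in the subsequent $\Omega(\sqrt{T})$ argument, that when $B > S$ the optimal bid is \emph{exactly} $S$ and strictly unique, which is precisely why a learner ignorant of $S$ cannot reproduce the oracle's behaviour.
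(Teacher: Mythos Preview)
Your proof is correct and follows essentially the same two-case analysis as the paper's own argument (split on $B \ge S$ versus $B < S$, then note that the participating utility $B - \tfrac{B_t+S}{2}$ is decreasing in $B_t$). One small slip in your closing remark: in this simplified system the learner is ignorant of her \emph{own} valuation $B$, not of $S$ (the seller is truthful), so the obstacle for the non-oracle buyer is deciding whether $B \ge S$, not locating $S$.
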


\begin{corollary}\label{cor:red_two_arm_bandit}
To minimize expected utility in the simple system, it suffices for the buyer each round to decide to either bid $S$ and participate in the market by paying price $S$, or abstain without participation and obtain no reward.
\end{corollary}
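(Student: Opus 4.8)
The plan is to derive Corollary~\ref{cor:red_two_arm_bandit} from Lemma~\ref{lem:oracle_lb_optim} by a per-round domination (interchange) argument that collapses the buyer's continuum of bids into two essential actions.

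First I would isolate two single-round facts. \emph{(Feedback.)} Because the seller always asks $S$, in a given round the buyer participates if and only if her bid $b$ satisfies $b \geq S$; when she participates she observes a $1$-sub-Gaussian sample of $B$ (equivalently her realized utility, from which $B+\nu_t$ is recoverable, since the price $\tfrac{b+S}{2}$ is computable from her own bid and the known $S$), and when she does not she receives only the deterministic ``did not participate'' signal. Hence her observation in a round depends on $b$ \emph{only through} $\mathbf{1}(b \geq S)$: all bids $b \geq S$ yield the identical observation, and all bids $b < S$ yield the identical, uninformative one. \emph{(Reward.)} If $b \geq S$ her realized utility is $B + \nu_t - \tfrac{b+S}{2}$, which is nonincreasing in $b$ and thus maximized at $b=S$ (price $S$); if $b < S$ her realized utility is $0$, regardless of $b$.

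Next I would run the standard interchange. Fix any history-dependent (possibly randomized) bidding policy $\pi$, and let $\pi'$ be obtained by replacing, in every round, a bid $b > S$ by the bid $S$ and a bid $b < S$ by a fixed abstaining bid (any value below $S$); couple the noise sequences $(\nu_t)_t$ across the two runs. By the feedback fact the two runs receive the same feedback stream round by round, so they make the same decisions under the coupling; by the reward fact the round rewards under $\pi'$ pointwise dominate those under $\pi$. Summing and taking expectations shows $\pi'$ is at least as good as $\pi$ and uses only the actions ``bid $S$ / participate at price $S$'' and ``abstain''. As $\pi$ was arbitrary, an optimal policy may be assumed to use only these two actions, which is the corollary; Lemma~\ref{lem:oracle_lb_optim} is the instance of this applied to an oracle, additionally pinning down that ``bid $S$'' is the better of the two actions exactly when $B \geq S$.

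Finally I would record the payoff of the reduction: with the action set collapsed, the simple system \emph{is} a two-armed bandit --- the arm ``participate'' returns a $1$-sub-Gaussian reward of mean $B-S$ and is observed only when pulled, and ``abstain'' returns a deterministic $0$ --- so the classical $\Omega(\sqrt{T})$ minimax bound (on instances with $|B-S| \asymp 1/\sqrt{T}$) transfers, which is how Lemma~\ref{lem:lb_exact} is then obtained. I expect the only delicate step to be the feedback fact: one must check that an aggressive participating bid reveals nothing beyond what bidding exactly $S$ reveals, so that the reduction lands on a plain two-armed bandit rather than on a richer partial-monitoring problem; this is exactly where truthfulness of the seller at the \emph{known} value $S$ is used.
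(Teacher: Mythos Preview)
Your proposal is correct and follows the same approach as the paper: the corollary is treated as an immediate consequence of Lemma~\ref{lem:oracle_lb_optim}, with the observation that the per-round reward is nonincreasing in the bid among participating bids and the feedback depends only on the participation indicator. In fact your interchange/coupling argument is more explicit than the paper's own treatment, which essentially states the corollary directly after Lemma~\ref{lem:oracle_lb_optim} and then, in the appendix, proceeds straight to the formal definition of a binary bidding policy and the verification that it constitutes a two-armed bandit policy; the feedback-invariance step you single out as ``the only delicate step'' is precisely what the paper leaves implicit.
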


\textbf{Reduction to a two armed bandit problem:}
Corollary \ref{cor:red_two_arm_bandit} gives that at each time, it suffices for the buyer that does not know her true valuation to either bid $S$ and participate at price $S$, or abstain from participating. In any round $t$ that the buyer participates, she obtains a mean reward of $B-S$, while she receives $0$ reward in rounds she abstains from participating. Thus, the actions of the buyer are equivalent to a two armed bandit, one with mean $B-S$ and the other is deterministic $0$ mean. The reduction is formalized in the following Corollary.

\begin{corollary}
Any bidding policy given in Definition \ref{defn:bidding_policy} describes a two-armed bandit policy (Definition \ref{defn:bandit_policy}) with arm-means $B-S$ and $0$.
\label{cor:bandit_red}
\end{corollary}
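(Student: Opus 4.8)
The plan is to unpack the two referenced definitions and exhibit an explicit, measurability-preserving identification between a bidding policy for the simple system and a policy for the two-armed bandit. First I would invoke Corollary~\ref{cor:red_two_arm_bandit} (itself built on Lemma~\ref{lem:oracle_lb_optim}): for the simple system it is without loss of optimality to restrict any bidding policy of Definition~\ref{defn:bidding_policy} so that, in each round, it either submits the bid $S$ (hence participates and pays the price $p_t = S$, which is common knowledge in this system) or submits a bid strictly below $S$ (hence abstains). Thus the effective action set collapses to two symbols, which I would put in bijection with the two arms of the bandit of Definition~\ref{defn:bandit_policy}.

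Second, I would verify that the feedback the buyer receives is exactly bandit feedback, i.e.\ that only the ``reward'' of the chosen action is revealed. When the buyer participates she observes her noisy valuation sample $Y_b(t) = B + \nu_b(t)$, and since the price $S$ is known she equivalently observes the realized utility $Y_b(t) - S$, a $1$-subgaussian random variable with mean $B - S$; by the i.i.d.\ assumption on $\nu_b(\cdot)$ these are i.i.d.\ across participation rounds. When the buyer abstains she gets the deterministic utility $0$ together with a constant ``did-not-participate'' signal that carries no information about $B$. This is precisely the observation model of a two-armed bandit with arm means $B - S$ and $0$, the second arm being deterministic.

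Third, I would close the argument with a coupling: place the bidding policy and the induced bandit policy on a common probability space in which the $n$-th participation consumes the $n$-th draw of $\nu_b$, and argue by induction on $t$ that the action and observation sequences coincide round by round — the base case is the (history-free) first action, and the inductive step uses that equal histories force equal next actions and hence equal next observations. Reading the bidding policy through the bijection then yields a valid two-armed bandit policy with the stated arm means, which combined later with the classical $\Omega(\sqrt{T})$ bandit lower bound gives Lemma~\ref{lem:lb_exact}.

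I expect the only delicate point — rather than a genuine obstacle — to be confirming that the non-participation signal is information-free, so that the double-auction feedback is truly bandit feedback (and not, say, richer semi-bandit feedback). This holds because in the simple system the seller is truthful and the price is common knowledge, so a round in which the buyer abstains reveals nothing about $B$; everything else is routine bookkeeping.
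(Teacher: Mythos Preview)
Your proposal is correct and follows essentially the same route as the paper: both arguments amount to checking that the binary decision sequence $(Z_t)$ is adapted to the filtration generated by past observations and that, conditional on the chosen action, the observed reward is independent of the past with the stated means (the paper states these two conditions tersely and declares them ``easy to observe,'' while you spell them out via the coupling and induction). One minor redundancy: your first step invoking Corollary~\ref{cor:red_two_arm_bandit} to collapse the action space to two symbols is unnecessary here, since Definition~\ref{defn:bidding_policy} already posits a binary $Z_t \in \{0,1\}$; the reduction to two actions is baked into the definition rather than something to be derived.
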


\begin{lemma}
For every bidding policy, there exists a system such that $\mathbb{E}[R_T] \geq \frac{1}{36}\sqrt{T}$.
\label{lem:lb_exact}
\end{lemma}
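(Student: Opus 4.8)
The plan is to combine the reduction to a two-armed bandit (Corollary~\ref{cor:bandit_red}) with the classical minimax argument for bandits based on a pair of statistically indistinguishable instances. By Corollary~\ref{cor:bandit_red} and Lemma~\ref{lem:oracle_lb_optim}, in the simple system it suffices for the buyer each round to choose between a \emph{participate} action --- bid $S$, trade at price $S$, and receive reward $B - S + \nu_t$ with $\nu_t$ a $1$-sub-Gaussian noise --- and an \emph{abstain} action yielding the deterministic reward $0$. We are free to instantiate the noise as $\nu_t \sim \mathcal{N}(0,1)$, which is $1$-sub-Gaussian; and since bidding $S$ always makes the trade indicator $\mathbf{1}(B_t \ge S)$ true, participation is available to the buyer irrespective of the sign of $B - S$.

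Fix the horizon $T$, fix some $S$ bounded away from $0$, and put $\Delta := c/\sqrt{T}$ for a small absolute constant $c$ (for $T$ so small that $\Delta \ge S$ the claimed bound is vacuous). Consider the two environments $E_{+}$ with $B = S + \Delta$ and $E_{-}$ with $B = S - \Delta$. In $E_{+}$ the unique optimal policy always participates (reward rate $\Delta$ against $0$), while in $E_{-}$ it always abstains. Let $n(T)$ denote the (random) number of rounds in which the buyer participates; the regret decomposes exactly as $\mathbb{E}_{E_{+}}[R_T] = \Delta\,\bigl(T - \mathbb{E}_{E_{+}}[n(T)]\bigr)$ and $\mathbb{E}_{E_{-}}[R_T] = \Delta\,\mathbb{E}_{E_{-}}[n(T)]$, so it remains to show these two quantities cannot both be small.

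Let $\mathbb{P}_{+}$ and $\mathbb{P}_{-}$ be the laws of the full interaction transcript under the two environments. They coincide except on the reward channel of the participate action, which is $\mathcal{N}(\Delta,1)$ under $\mathbb{P}_{+}$ and $\mathcal{N}(-\Delta,1)$ under $\mathbb{P}_{-}$, so the divergence-decomposition identity for bandits gives $\mathrm{KL}(\mathbb{P}_{+}\Vert\mathbb{P}_{-}) = \mathbb{E}_{E_{+}}[n(T)]\cdot \mathrm{KL}\bigl(\mathcal{N}(\Delta,1)\Vert\mathcal{N}(-\Delta,1)\bigr) = 2\Delta^{2}\,\mathbb{E}_{E_{+}}[n(T)] \le 2\Delta^{2} T$. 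Applying the Bretagnolle--Huber inequality to the event $A = \{n(T) \ge T/2\}$ yields $\mathbb{P}_{+}(A^{c}) + \mathbb{P}_{-}(A) \ge \tfrac12 e^{-2\Delta^{2} T}$, and since $R_T \ge \Delta T/2$ on $A^{c}$ under $E_{+}$ and on $A$ under $E_{-}$ while $R_T \ge 0$ always,
\[
\mathbb{E}_{E_{+}}[R_T] + \mathbb{E}_{E_{-}}[R_T] \;\ge\; \frac{\Delta T}{2}\Bigl(\mathbb{P}_{+}(A^{c}) + \mathbb{P}_{-}(A)\Bigr) \;\ge\; \frac{\Delta T}{4}\, e^{-2\Delta^{2} T} \;=\; \frac{c}{4}\, e^{-2c^{2}}\sqrt{T}.
\]
Taking $c = 1/2$ and passing from the sum to the maximum shows $\max\bigl(\mathbb{E}_{E_{+}}[R_T], \mathbb{E}_{E_{-}}[R_T]\bigr) \ge \tfrac{1}{16}e^{-1/2}\sqrt{T} \ge \tfrac{1}{36}\sqrt{T}$, so for every bidding policy at least one of $E_{+}, E_{-}$ witnesses the lemma.

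The only delicate point is the information-theoretic step: one must set up $\mathbb{P}_{+}, \mathbb{P}_{-}$ on the space of transcripts so that the divergence decomposition applies cleanly (the transcript records the buyer's own actions, hence the two measures must differ \emph{only} through the arm-reward kernels), and then tune $\Delta$ and $c$ so the constant lands at $1/36$ --- a Pinsker-based variant using the cruder bound $|\mathbb{E}_{E_+}[n(T)] - \mathbb{E}_{E_-}[n(T)]| \le T\,\mathrm{TV}(\mathbb{P}_+,\mathbb{P}_-)$ works equally well with a slightly different constant. The regret decomposition, the feasibility of participation, and the freedom to pick Gaussian noise within the $1$-sub-Gaussian class are all routine.
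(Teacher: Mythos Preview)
Your proof is correct and follows essentially the same route as the paper: reduce to a two-armed Gaussian bandit, pick two instances with means $\pm\Delta$ for the participate arm, apply the divergence decomposition together with Bretagnolle--Huber on the event $\{n(T)\ge T/2\}$, and tune the gap $\Delta\propto 1/\sqrt{T}$. Your choice $c=1/2$ (equivalently $\Delta=1/(2\sqrt{T})$) is in fact the optimizer of $\Delta T\,e^{-2\Delta^2 T}$ and yields the constant $1/(16\,e^{1/2})\ge 1/36$ cleanly, whereas the paper takes $\varepsilon=1/\sqrt{T}$; otherwise the arguments coincide.
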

The proof follows from \cite{lattimore2020bandit} and is reproduced in Appendix \ref{subsec:lb_proof_exact} for completeness. Further, we show in Appendix \ref{sec:multi_agent_lb_extension} the lower bound can be extended to a system of multiple buyers and sellers. Thus, our upper bound of $O(\sqrt{T})$ is order-wise optimal as Lemma \ref{lem:lb_exact} and Corollary \ref{cor:bandit_red} show that 
$O(\sqrt{T})$ regret bound is un-avoidable  even in the absence of competition.

\subsection{Lower bound on Social Welfare Regret}
\label{sec:sw_regret_lb}
The key observation is that social-welfare regret in Equation (\ref{eqn:sw_regret_defn}) is \emph{independent} of the pricing mechanism and only depends on the participating buyers $\mathcal{P}_b(t)$ and sellers $\mathcal{P}_s(t)$ at each time $t$. We will establish a lower bound on a centralized decision maker (DM), who at each time, observes all the rewards obtained by all agents thus far, and decides $\mathcal{P}_b(t)$ and $\mathcal{P}_s(t)$ for each $t$. In the Appendix in Section \ref{sec:reduction_to_semi_bandit}, we show that the actions of the DM can be coupled to that of a combinatorial semi-bandit model \cite{combes2015combinatorial}, where the base set of arms are the set of all buyers and sellers $\{B_1,\cdots,B_M\} \cup \{S_1,\cdots, S_N\}$ is the set of buyers and sellers, the valid subset of arms are those that have an equal number of buyers and sellers and the mean reward of any valid subset $\mathcal{A} \subseteq 2^{\mathcal{D}}$ is the difference between the sum of all valuations of buyers in $\mathcal{A}$ and of sellers in $\mathcal{A}$.

\begin{proposition}
The optimal action for the centralized DM is to pick $\mathcal{P}_{b}^{*} \cup \mathcal{P}_{s}^{*}$.
\label{prop:sw_lb_equivalence}
\end{proposition}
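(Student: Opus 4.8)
The plan is to unwind the semi-bandit coupling of Section~\ref{sec:reduction_to_semi_bandit} and then solve the resulting static optimization by a standard exchange argument. In that coupling, a valid action of the DM is a pair $(\mathcal{P}_b,\mathcal{P}_s)$ with $\mathcal{P}_b \subseteq [N]$, $\mathcal{P}_s \subseteq [M]$ and $|\mathcal{P}_b| = |\mathcal{P}_s| = K$ for some $0 \le K \le \min(M,N)$, and its expected reward is $\sum_{i \in \mathcal{P}_b} B_i - \sum_{j \in \mathcal{P}_s} S_j$. Comparing with the social-welfare definition~\eqref{eqn:sw_regret_defn} and using that $\sum_{j \in [M]} S_j$ is a fixed constant, maximizing expected social welfare over DM actions is equivalent to maximizing $g(\mathcal{P}_b,\mathcal{P}_s) := \sum_{i \in \mathcal{P}_b} B_i - \sum_{j \in \mathcal{P}_s} S_j$. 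So it suffices to show that $(\mathcal{P}_b^*,\mathcal{P}_s^*) = ([K^*],[K^*])$ maximizes $g$.

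\textbf{Optimizing the agents for a fixed cardinality.} First I would fix $K$ and optimize over size-$K$ sets. Since $g$ is separable, the best buyer set maximizes $\sum_{i \in \mathcal{P}_b} B_i$ and the best seller set minimizes $\sum_{j \in \mathcal{P}_s} S_j$. With the WLOG sorting $B_1 \ge \cdots \ge B_N$ and $S_1 \le \cdots \le S_M$, a one-swap exchange argument (if $\mathcal{P}_b \neq [K]$, pick $i \in \mathcal{P}_b$ with $i > K$ and $i' \le K$ with $i' \notin \mathcal{P}_b$, and replace $i$ by $i'$; this cannot decrease $\sum_{\mathcal{P}_b} B$ since $B_{i'} \ge B_i$; iterate, and symmetrically for the sellers) shows the cardinality-$K$ optimum is attained by $\mathcal{P}_b = [K]$, $\mathcal{P}_s = [K]$, with value $f(K) := \sum_{k=1}^{K}(B_k - S_k)$ and $f(0) = 0$.

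\textbf{Optimizing over the cardinality.} Next, $f(K) - f(K-1) = B_K - S_K$ is non-increasing in $K$ because $B_k$ is non-increasing and $S_k$ is non-decreasing, so $f$ is discretely concave. By definition $K^*$ is the largest index with $B_{K^*} \ge S_{K^*}$; under the gap condition $\Delta > 0$ (so $B_{K^*} > p^* > S_{K^*}$, and $B_{K^*+1} < p^* < S_{K^*+1}$ when $K^* < \min(M,N)$) these increments are strictly positive for $k \le K^*$ and strictly negative for $k > K^*$. Hence $f$ strictly increases on $\{0,\dots,K^*\}$ and strictly decreases on $\{K^*,\dots,\min(M,N)\}$, so its unique maximizer is $K = K^*$, giving the unique optimal DM action $(\mathcal{P}_b,\mathcal{P}_s) = ([K^*],[K^*]) = (\mathcal{P}_b^*,\mathcal{P}_s^*)$, which is the claim.

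\textbf{Main obstacle.} The computation itself is routine (rearrangement plus discrete concavity), so the only delicate points are (i) correctly reading off the DM's reward from the semi-bandit coupling, namely that it equals $g$ up to the additive constant $\sum_j S_j$, and (ii) the role of $\Delta > 0$, which is precisely what upgrades ``$([K^*],[K^*])$ is \emph{an} optimal action'' to ``it is \emph{the} optimal action''. Uniqueness is what later lets one invoke the instance-dependent lower bound of~\cite{combes2015combinatorial}; absent a strictly positive gap the proposition would hold only up to reward-equivalence of optimal subsets.
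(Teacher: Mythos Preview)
Your proof is correct and follows the same approach as the paper's (very terse) argument: both reduce to maximizing $\sum_{i\in\mathcal{P}_b}B_i-\sum_{j\in\mathcal{P}_s}S_j$ over equal-cardinality pairs and appeal to the definition of $K^*$, with your version making the two-stage optimization (exchange for fixed $K$, then discrete concavity of $f(K)$) and the role of $\Delta>0$ for uniqueness explicit. One small inaccuracy in the parenthetical: $B_{K^*+1}<p^*<S_{K^*+1}$ need not follow from $\Delta>0$ (e.g., $B_{K^*}=10$, $S_{K^*}=2$, $B_{K^*+1}=8$, $S_{K^*+1}=9$ gives $p^*=6<B_{K^*+1}$), but this is harmless since $B_k<S_k$ for $k>K^*$ is immediate from $K^*$ being the largest break-even index, so your concavity step goes through regardless.
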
\vspace{-2mm}
Proof is deferred to Section \ref{sec:reduction_to_semi_bandit}. Thus, the regret of the centralized decision maker is $    R_{\text{SB}} = T ( \sum_{i \in \mathcal{P}_b^{*}}B_i - \sum_{j \in \mathcal{P}_s^{*}} S_j ) - \sum_{t=1}^T (\sum_{i \in \mathcal{P}_b(t)} B_i - \sum_{j \in \mathcal{P}_s(t)}S_j)$. By by adding and subtracting $\sum_{j \in [M]} S_j$ to both sides, we get that $R_{\text{SB}}$ is identical to $R_{\text{SW}}$ given in Equation (\ref{eqn:sw_regret_defn}). Thus, a lower bound on $R_{SB}$ implies a lower bound on $R_{SW}$.

\begin{lemma}[Theorem $1$ \cite{combes2015combinatorial}]
Suppose the reward distributions for all agents (buyers and sellers) are unit variance gaussians. Then, the regret of any uniformly good policy\footnote{This is a standard technical condition defined in the Appendix in Section \ref{sec:additional_lb_details}} for the combinatorial semi-bandit suffers regret $\limsup_{T \to \infty} \frac{R_T}{\log(T)} = c((\mu_a)_{a\in \mathcal{D}})$, where the constant $c( (\mu_a)_{a\in \mathcal{D}}) > 0$ is strictly positive if the best and the next best subsets have different mean rewards.
\label{lem:sw_reg_lb}
\end{lemma}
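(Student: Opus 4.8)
The plan is to obtain this statement as a direct instantiation of Theorem~1 of \cite{combes2015combinatorial} on the combinatorial semi-bandit built by the reduction of Section~\ref{sec:reduction_to_semi_bandit}, so the ``proof'' is really a matter of checking that this instance meets their hypotheses. First I would recall the reduction precisely: the ground set of arms is $\mathcal{D}=\{B_1,\dots,B_M\}\cup\{S_1,\dots,S_N\}$; a feasible super-arm is any subset with equally many buyer-arms and seller-arms (a constraint that depends only on the combinatorial structure, as the semi-bandit framework requires); pulling a super-arm $\mathcal{A}$ returns semi-bandit feedback, namely an independent unit-variance Gaussian sample of the reward of each arm in $\mathcal{A}$, and has aggregate mean $\sum_{i\in\mathcal{A}}B_i-\sum_{j\in\mathcal{A}}S_j$. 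This is exactly the per-round observation available to the centralized DM, and by the additive shift by $\sum_{j\in[M]}S_j$ noted after Proposition~\ref{prop:sw_lb_equivalence} the semi-bandit regret $R_T$ equals $R_{SW}(T)$.

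Second, I would verify the two hypotheses needed for Theorem~1 of \cite{combes2015combinatorial}. Hypothesis one, unit-variance Gaussian per-arm rewards, is exactly what the lemma assumes. Hypothesis two, that the policy is \emph{uniformly good}, i.e.\ incurs sub-polynomial regret on every instance (the standard regularity condition recalled in Section~\ref{sec:additional_lb_details}), is the precise condition under which their change-of-measure argument yields a matching $\Omega(\log T)$ bound; it excludes degenerate policies hard-coded for a single instance. With both in hand, their theorem gives $\limsup_{T\to\infty} R_T/\log T = c((\mu_a)_{a\in\mathcal{D}})$, where $c$ is the value of the instance-specific lower-bound optimization program from their paper.

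Third, I would check the positivity clause $c>0$. Proposition~\ref{prop:sw_lb_equivalence} identifies $\mathcal{P}_b^*\cup\mathcal{P}_s^*$ as the optimal super-arm, and since the buyers are ordered by decreasing valuation and the sellers by increasing valuation, any other feasible super-arm differs from it by a chain of swaps, each of which either replaces a top-$K^*$ buyer by a strictly lower one, replaces a bottom-$K^*$ seller by a strictly higher one, or pairs a non-participating buyer with a non-participating seller; each such swap strictly decreases the aggregate mean (by at least a positive amount controlled by $\Delta$ when all valuations are distinct). Hence the best and next-best super-arms have distinct means, so $c>0$ and $R_{SW}(T)=\Omega(\log T)$ follows.

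I do not anticipate a genuine obstacle: the argument is a translation exercise. The one place that needs attention is making the correspondence between the DM's feedback and the semi-bandit feedback exact, so the cited theorem applies verbatim rather than approximately: in our model an agent in the selected set observes a single noisy draw of her own valuation, which is precisely one semi-bandit coordinate sample, and agents outside the set observe nothing, so the coupling is literal. Once this is spelled out and the uniform-goodness condition is stated as in Section~\ref{sec:additional_lb_details}, the lemma is immediate from \cite{combes2015combinatorial}.
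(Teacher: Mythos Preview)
Your proposal is correct and takes essentially the same approach as the paper: the lemma is a direct citation of Theorem~1 of \cite{combes2015combinatorial}, and the paper's ``proof'' simply notes that the Bernoulli version there carries over verbatim to unit-variance Gaussians via $kl(u,v)=\tfrac{1}{2}(u-v)^2$. Your additional verification that the reduction matches the semi-bandit feedback model and that the optimal super-arm is unique (hence the gap condition holds) is more than the paper does at this point---the paper defers the explicit positivity of $c$ to a separate computation (Lemma~\ref{lem:LB_kl_algebra}) rather than arguing it via swaps---but it is correct and harmless.
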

A restatement and proof is given in Appendix \ref{sec:reduction_to_semi_bandit}.
 Thus our upper bound of $O(\log(T))$ social welfare regret under the decentralized setting is order optimal since even a centralized system must incur $\Omega(\log(T))$ regret.


\subsection{Proof of Lemma \ref{lem:oracle_lb_optim}}
\label{appendix:proof_lb_optim}
\begin{proof}
Assume the oracle buyer knows $B$, $S$, the average price mechanism and the fact that the seller is not strategizing. 
\\

\textbf{Case I : $B < S$}. If the buyer bids $B_t \geq S$, then the buyer will be matched with price $p_t \geq S$ and will receive an utility $U_t :=  B - p_t < 0$. If on the other hand, the buyer puts any bid $B_t < S$, then the buyer is not matched and receives an utility of $0$. Thus, the optimal choice for the oracle buyer in this case is to place any bid $B_t < S$. Thus, bidding $B_t := S-\varepsilon$, for every $\varepsilon > 0$ is optimal.
\\

\textbf{Case II : $B \geq S$}. If the buyer bids $B_t \geq S$, then the buyer will be matched with price $p_t = \frac{B_t +S}{2}$ and will receive an utility $U_t :=  B - p_t = \frac{2B - B_t - S}{2}$. Observe that the utility $U_t$ is non-decreasing in the bid-price $B_t$. If $B_t < S$ however, no match occurs and the oracle buyer will receive $0$ utility. If on the other-hand $S \leq B_t \leq B$, then $B-B_t \geq 0$ and $B - S \geq 0$. Thus, the utility $U_t \geq 0$. This along with the non-increasing nature of $U_t$ gives that the optimal action is to play $B_t = S$.
\end{proof}

\subsection{Proof of Lemma \ref{lem:lb_exact}}
\label{subsec:lb_proof_exact}

\begin{proof}
This follows the same recipe of bandit lower bounds \cite{lattimore2020bandit}. Let the bidding policy be arbitrary as in the hypothesis of the lemma.
Fix a system and denote by $\varepsilon := S-B$. We will choose an appropriate value of $\varepsilon$ later in the proof. Denote by this system where $ S-B = \varepsilon$ as $\nu$. Denote by the system in which $B = S + \varepsilon$ as $\nu^{'}$. We denote by $R_T$ and $R_T^{'}$ to be the regret obtained by the bidding policy in system $\nu$ and $\nu^{'}$ respectively. The first observation we will make is that the divergence decomposition lemma gives 
\begin{align}
    \mathbb{E}_{\nu}\left[\sum_{t=1}^TZ_t\right] D(-\varepsilon, \varepsilon) \geq \log \left( \frac{1}{2(\mathbb{P}_{\nu}(A)+\mathbb{P}_{\nu^{'}}(A^{\complement}))} \right),
    \label{eqn:divergence_decomp}
\end{align}
for any measurable event $A$. The proof of this claim follows from the well known Bretagnolle–Huber inequality (Theorem $14.2$ \cite{lattimore2020bandit}) and the divergence decomposition lemma (Lemma $15.1$ \cite{lattimore2020bandit}) to compute the divergence between the system $\nu$ and $\nu^{'}$. The formula from Lemma $15.1$ of \cite{lattimore2020bandit} when applied to our system simplifies to the LHS of Equation (\ref{eqn:divergence_decomp}) by observing the fact that in both system $\nu$ and system $\nu^{'}$, not participating in the market gives a deterministic $0$ reward. Thus, the KL divergence between the reward distributions for not participating is $0$. 
\\

The rest of the proof is to verbatim follow the proof of Theorem $15.2$ of \cite{lattimore2020bandit} to re-arrange Equation (\ref{eqn:divergence_decomp}) to yield the desired result. We denote by the event $A := \left\{\sum_{t=1}^TZ_t \geq T/2 \right\}$. Thus, trivially, $R_T \geq \frac{\varepsilon T}{2}\mathbb{P}_{\nu}[A]$ and $R_T^{'} \geq \frac{\varepsilon T}{2}\mathbb{P}_{\nu}[A^{\complement}]$. Furthermore, $\mathbb{E}\left[ \sum_{t=1}^T Z_t\right] \leq T$. 
Now, using the fact that for unit variance gaussians, $D(-\varepsilon, \varepsilon) = 2\varepsilon^2$ and plugging these estimates in Equation (\ref{eqn:divergence_decomp}), we obtain that 
\begin{align*}
    R_T + R_T^{'} \geq \frac{\varepsilon T}{4} \exp \left( -2 \varepsilon^2 T \right).
\end{align*}
As $\varepsilon > 0$ was arbitrary, we can set it to be equal to $\frac{1}{\sqrt{T}}$, and use the well known fact that for any non-negative $a,b$, we have $a+b \leq 2\max(a,b)$, to get  
\begin{align*}
    \max(R_T, R_T^{'}) \geq \frac{1}{36}\sqrt{T}.
\end{align*}
\end{proof}

\subsection{Proof of Corollary \ref{cor:red_two_arm_bandit}}

\begin{definition}[Bidding Policy]
A sequence of binary random variables $(Z_t)_{t \geq 1}$, such that for all $t \geq 1$, $Z_t \in \{0,1\}$ denotes whether the buyer participates (by placing a bid of $S$) in the $t$th round or not such that $Z_t$ is measurable with respect to the sigma-algebra generated by decisions and rewards $Z_1Y_{b,1}, \cdots, Z_{t-1}Y_{b,t-1}$ observed till time $t-1$.
\label{defn:bidding_policy}
\end{definition}

\begin{definition}[Two-armed bandit policy:]
A sequence of $\{0,1\}$ valued random variables $(\widehat{Z}_t)_{t \geq 1}$ such that for all time $t$, $\widehat{Z}_t$ is measurable with respect to a sequence of $\mathbb{R}$ valued random variables $\mathcal{F}_{t-1} := \sigma(\widehat{X}_1,\cdots,\widehat{X}_{t-1})$ such that for all time $t$, conditioned on $\widehat{Z}_t$, {\em (i)} $\widehat{X}_t$ is independent of $\mathcal{F}_{t-1}$, and {\em (ii)} $\widehat{X}_t \sim \mathcal{P}_{\widehat{Z}_t}$, where $\mathcal{P}_i$, $i \in \{1,2\}$ are two fixed probability distributions on $\mathbb{R}$. 
\label{defn:bandit_policy}
\end{definition}

This definition formalizes the intuitive notion that a sequence of binary decisions made at each time, with the decision being measurable function of all past observations and the observations themselves being independent conditioned on the arm chosen. 

\label{subsec:cor_red_two_bandits_proof_appendix}
\begin{proof}
A sequence of binary valued random variables $\{Z_`,\cdots, Z_T\}$ and $\mathbb{R}$ valued observation random variables $\{Y_1,\cdots,Y_T\}$ describes a policy for a two armed stochastic bandits if and only if {\em (i)} $Z_t$ is measurable with respect to the sigma-algebra $\mathcal{F}_{t-1} := \sigma(Y_1, \cdots, Y_{t-1})$ generated by all observed rewards upto time $t-1$, and {\em (ii)} conditioned on the arm $Z_t$, the observed reward $Y_t$ is conditionally independent of $(Y_1,\cdots,Y_{t-1})$ and $(Z_1,\cdots, Z_{t-1})$ the actions and rewards obtained in the past. It is easy to observe both of these for the bidding system described before. 
\end{proof}

\subsection{Extension to the multi-agent setting}
\label{sec:multi_agent_lb_extension}

The simplified system in \ref{subsec:simple_system_lb} specified a single buyer and seller system that had no competition as the seller's behaviour was fixed. In this section, we show that a $\sqrt{T}$ minimax lower bound is inevitable even in an appropriately simplified multi-agent system that decouples learning from competition.

In this simplified setting, we assume {\em(i)} the selling price $p_t := p^{*}$ is set constant for every round $t$, and {\em (ii)} there is no shortage of goods, i.e., any buyer(seller) that wants to participate in a given round by paying(selling at) $p^{*}$, can buy(sell) so. Thus, under this setup, the entire market is a decoupled union of individual agents interacting against a fixed environment dictated by a price $p^{*}$. It can be observed, that for any buyer(seller) in this simplified setting, the lower bound from Lemma \ref{lem:lb_exact} applies verbatim following the same coupling arguments. 

\begin{corollary}
For every bidding policy for the general system, for any seller $j\in [M]$ there exists a system such that $\mathbb{E}[R_{s,j}(T)] \geq \frac{1}{36}\sqrt{T}$. Similarly, for any buyer $i\in [N]$ there exists a system such that $\mathbb{E}[R_{b,i}(T)] \geq \frac{1}{36}\sqrt{T}$. 
\end{corollary}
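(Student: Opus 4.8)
The plan is to show that in the decoupled simplified market each agent faces exactly the single‑agent two‑armed bandit already treated in Section~\ref{subsec:simple_system_lb}, apply Lemma~\ref{lem:lb_exact} to that agent, and then note that the simplified market is a relaxation of the general one, so the bound carries over.

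First I would fix the simplified setting and isolate one agent, say buyer $i$ (the seller $j$ case is identical after exchanging the roles of $B_i - p^{*}$ and $p^{*} - S_j$). Since the price is pinned at $p^{*}$, goods are unlimited, and the only feedback buyer $i$ ever gets is her own reward when she trades, her problem is: each round, either bid at least $p^{*}$ — trade at price $p^{*}$, collect a reward of mean $B_i - p^{*}$ together with a fresh noisy sample of $B_i$ — or bid below $p^{*}$ and receive a deterministic $0$ with no observation. This is precisely the ``simple system'' of Section~\ref{subsec:simple_system_lb}, with the even simpler rule $p_t \equiv p^{*}$ in place of $p_t = (B_t+S)/2$, so Lemma~\ref{lem:oracle_lb_optim} still gives that the oracle should participate iff $B_i > p^{*}$, Corollary~\ref{cor:red_two_arm_bandit} reduces the choice to participate‑or‑abstain, and Corollary~\ref{cor:bandit_red} turns any bidding policy into a two‑armed bandit policy with arm means $B_i - p^{*}$ and $0$. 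In this decoupled instance $R_{b,i}(T)$ is exactly the regret of that induced bandit policy.

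Next I would invoke Lemma~\ref{lem:lb_exact} verbatim: for the induced bandit policy there is a pair of instances with $|p^{*} - B_i| = 1/\sqrt{T}$ (one with $B_i > p^{*}$, one with $B_i < p^{*}$) the worse of which forces the bandit regret, hence $R_{b,i}(T)$, to be at least $\tfrac{1}{36}\sqrt{T}$. To keep this a legitimate double‑auction instance I would pad the market with extra buyers and sellers whose valuations lie a constant away from $p^{*}$, so that $K^{*}$, the optimal participating sets, and the equilibrium price $p^{*}$ are held fixed while only $B_i$'s position relative to $p^{*}$ is perturbed; the mirror‑image construction handles an arbitrary seller $j$. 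Finally, since the simplified market only adds information (the price $p^{*}$ is revealed) and removes competition and supply constraints, any bidding policy for the general system, restricted to this scenario, is still a bidding policy in the sense of Definition~\ref{defn:bidding_policy}; therefore no general‑system policy can beat what is achievable in this relaxation, and we get $\mathbb{E}[R_{s,j}(T)] \ge \tfrac{1}{36}\sqrt{T}$ and $\mathbb{E}[R_{b,i}(T)] \ge \tfrac{1}{36}\sqrt{T}$ on the stated worst‑case instances.

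The step I expect to be the real obstacle is the last one: making rigorous that the $\Omega(\sqrt{T})$ bound for the relaxed, decoupled model is inherited by the genuine decentralized double auction — i.e., that restricting a general‑system policy to the simplified scenario is legitimately an object to which Definition~\ref{defn:bidding_policy} and Corollary~\ref{cor:bandit_red} apply (the prices and others' participation signals the real mechanism feeds back are exactly the information suppressed here, so there is no filtration/measurability clash) and that the two regret notions agree on the padded instance. Everything downstream — the divergence‑decomposition and Bretagnolle–Huber argument inside Lemma~\ref{lem:lb_exact} — is then used as a black box.
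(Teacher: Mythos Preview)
Your approach is essentially the same as the paper's: the paper introduces the simplified multi-agent setting with fixed price $p_t := p^{*}$ and unlimited supply, observes that the market decouples into independent single-agent problems, and states that Lemma~\ref{lem:lb_exact} ``applies verbatim following the same coupling arguments.'' You spell out more of the mechanics (the padding construction, the filtration compatibility) than the paper does, but the skeleton is identical, and the step you flag as the real obstacle --- transferring the relaxed-system bound back to the genuine decentralized auction --- is left at the same informal level in the paper as in your sketch.
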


\subsection{Reduction of the centralized DM in Section \ref{sec:sw_regret_lb} to a combinatorial semi-bandit model}
\label{sec:reduction_to_semi_bandit}

A combinatorial semi-bandit \cite{cesa2012combinatorial} is the following variant of the standard $\mathcal{D} := \{1,\cdots, D\}$ armed bandit problem. At each time $t$, the decision maker chooses a subset $\mathcal{A}_t \subseteq \mathcal{D}$ from a fixed set of subsets $\boldsymbol{\mathcal{A}} \subseteq 2^{\mathcal{D}}$. At each time $t$, the received reward is $\sum_{a \in \mathcal{A}_t}X_t(a)$, where for every arm $a \in \mathcal{D}$, $(X_t(a))_{t \geq 1}$ is an i.i.d. sequence with mean $\mu_a$. Later in Lemma \ref{lem:sw_reg_lb}, we will assume that the reward distriutions are unit-variance gaussians. The arm-means $(\mu_a)_{a \in \mathcal{D}}$ are a-priori unknown to the decision maker. The goal of the decision maker is to minimize the cumulative regret defined as $R_{SB} := \max_{\mathcal{A} \in \boldsymbol{\mathcal{A}}}T \sum_{a \in \mathcal{A}} \mu_a - \sum_{t=1}^T \sum_{a \in \mathcal{A}_t}\mu_a$.

Consider the centralized DM who at each time decides  the participating buyers and sellers $\mathcal{P}_b(t)$ and $\mathcal{P}_s(t)$. The base set of `arms' $\mathcal{D} := \{B_1,\cdots,B_N\} \cup \{S_1,\cdots, S_M\}$ is the set of buyers and sellers. The mean reward $\mu_a$ for any buyer $a$ is their unknown valuation, while $\mu_a$ for any seller $a$ is the \emph{negative} of their true valuation. The set of allowed subsets $\boldsymbol{\mathcal{A}}$ are those subsets of $\mathcal{D}$ that contain the same number of buyers and sellers.

\subsubsection{Proof of Proposition \ref{prop:sw_lb_equivalence}}
\begin{proof}
The proof follows from two observations. {\em (i)} the mean reward for any buyer is positive and that for the seller is negative. {\em (ii)} the set of allowed subsets to play consists of an equal number of buyers and sellers. Thus, from definition of $\mathcal{P}_b^{*}$ and $\mathcal{P}_s^{*}$, every other subset satisfying condition $(ii)$ will not have mean reward strictly larger than that of $\mathcal{P}_{b}^{*} \cup \mathcal{P}_{s}^{*}$. 
\end{proof}

Thus, the regret of the centralized decision maker is $    R_{\text{SB}} = T ( \sum_{i \in \mathcal{P}_b^{*}}B_i - \sum_{j \in \mathcal{P}_s^{*}} S_j ) - \sum_{t=1}^T (\sum_{i \in \mathcal{P}_b(t)} B_i - \sum_{j \in \mathcal{P}_s(t)}S_j)$. By by adding and subtracting $\sum_{j \in [M]} S_j$ to both sides, we get that
\begin{align}
R_{\text{SB}}
    = T ( \sum_{i \in \mathcal{P}_b^{*}}B_i + \sum_{j \in [M]\setminus\mathcal{P}_s^{*}} S_j ) -  \sum_{t=1}^T (\sum_{i \in \mathcal{P}_b(t)} B_i + \sum_{j \in [M]\setminus\mathcal{P}_s(t)}S_j).
    \label{eqn:comb_bandit}
\end{align}
As Equation (\ref{eqn:comb_bandit}) is identical to the definition of the social regret in Equation (\ref{eqn:sw_regret_defn}), a lower bound on $R_{SB}$ implies a lower bound on the social-welfare regret $R_{SW}$.

\begin{lemma}[Theorem $1$ \cite{combes2015combinatorial}]
Suppose the reward distributions for all agents (buyers and sellers) are unit variance gaussians. Then, the regret of any uniformly good policy\footnote{This is a standard technical condition defined in the Appendix in Section \ref{sec:additional_lb_details}} for the combinatorial semi-bandit suffers regret $\limsup_{T \to \infty} \frac{R_T}{\log(T)} = c(\boldsymbol{\mathcal{A}}; (\mu_a)_{a\in \mathcal{D}})$, where the constant $c(\boldsymbol{\mathcal{A}}; (\mu_a)_{a\in \mathcal{D}}) > 0$ is strictly positive if the best and the next best subsets have different mean rewards.
\label{lem:sw_reg_lb}
\end{lemma}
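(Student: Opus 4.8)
The plan is to obtain Lemma~\ref{lem:sw_reg_lb} as a direct instantiation of the Graves--Lai asymptotic lower bound, specialized to combinatorial semi-bandits exactly as in \cite{combes2015combinatorial}. First I would recall the setup: a policy is \emph{uniformly good} if for every instance $(\mu_a)_{a\in\mathcal{D}}$ and every $\gamma>0$ one has $R_T=o(T^{\gamma})$. Writing $\mathcal{A}^{*}$ for the optimal valid subset, $N_{\mathcal{A}}(T)=\sum_{t=1}^{T}\mathbbm{1}(\mathcal{A}_t=\mathcal{A})$ and $\Delta_{\mathcal{A}}=\sum_{a\in\mathcal{A}^{*}}\mu_a-\sum_{a\in\mathcal{A}}\mu_a\ge 0$, the regret decomposes as $R_T=\sum_{\mathcal{A}}\mathbb{E}[N_{\mathcal{A}}(T)]\Delta_{\mathcal{A}}$. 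Under semi-bandit feedback, playing $\mathcal{A}$ reveals the individual reward of every $a\in\mathcal{A}$, and for unit-variance Gaussians the per-sample information is $\mathrm{KL}(\mu_a,\lambda_a)=\tfrac12(\mu_a-\lambda_a)^2$.

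Next I would run the change-of-measure argument. For any \emph{confusing} parameter $\lambda$ --- a valid unit-variance Gaussian instance under which $\mathcal{A}^{*}$ is strictly sub-optimal and which agrees with $\mu$ on every arm of $\mathcal{A}^{*}$ --- the Bretagnolle--Huber inequality together with the divergence decomposition lemma (used exactly as in the proof of Lemma~\ref{lem:lb_exact}) forces, for a uniformly good policy, $\sum_{\mathcal{A}}\mathbb{E}[N_{\mathcal{A}}(T)]\sum_{a\in\mathcal{A}}\mathrm{KL}(\mu_a,\lambda_a)\ge(1+o(1))\log T$. Setting $x_{\mathcal{A}}=\liminf_{T\to\infty}\mathbb{E}[N_{\mathcal{A}}(T)]/\log T$ gives $\liminf_{T}R_T/\log T\ge c$, where
\begin{align*}
c \;=\; \inf\Big\{\textstyle\sum_{\mathcal{A}\neq\mathcal{A}^{*}}x_{\mathcal{A}}\,\Delta_{\mathcal{A}} \;:\; x\ge 0,\ \forall\,\lambda\text{ confusing},\ \textstyle\sum_{\mathcal{A}}x_{\mathcal{A}}\!\!\sum_{a\in\mathcal{A}}\mathrm{KL}(\mu_a,\lambda_a)\ge 1\Big\}.
\end{align*}
The matching upper bound (the ``$=$'' in the statement) is the achievability half, proved in \cite{combes2015combinatorial} via an explicit KL-UCB/OSSB-type algorithm; I would simply cite it, since only the lower-bound direction is needed here.

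It then remains to show $c>0$ whenever the optimal and second-best valid subsets have distinct means, i.e. $\Delta_{\min}:=\min_{\mathcal{A}:\Delta_{\mathcal{A}}>0}\Delta_{\mathcal{A}}>0$. I would argue in two steps. (i) A confusing $\lambda$ exists: fix any valid $\mathcal{A}\neq\mathcal{A}^{*}$; since $|\mathcal{A}|=|\mathcal{A}^{*}|$ there is an arm $a_0\in\mathcal{A}\setminus\mathcal{A}^{*}$, and perturbing only $\mu_{a_0}$ (upward if $a_0$ is a buyer, toward and past $0$ if $a_0$ is a seller) by a large enough amount makes $\mathcal{A}$ strictly beat $\mathcal{A}^{*}$ while touching no arm of $\mathcal{A}^{*}$; this is admissible once the instance class is taken to be all unit-variance Gaussian mean vectors on $\mathcal{D}$. (ii) For such $\lambda$ the quantities $\mathrm{KL}(\mu_a,\lambda_a)$ are finite, say at most $K_{\max}<\infty$, so the constraint forces $\sum_{\mathcal{A}\neq\mathcal{A}^{*}}x_{\mathcal{A}}\ge 1/(|\mathcal{D}|\,K_{\max})$; since every sub-optimal $\mathcal{A}$ carries $\Delta_{\mathcal{A}}\ge\Delta_{\min}$, the objective is at least $\Delta_{\min}/(|\mathcal{D}|\,K_{\max})>0$, hence $c>0$.

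The real work is not the lower-bound derivation --- which is a routine specialization of \cite{combes2015combinatorial} --- but verifying the regularity hypotheses of that theorem for our reduced instance (identifiability of the arm means from semi-bandit observations, local compactness of the confusing set, lower semicontinuity of $c(\cdot)$) and ensuring the admissible parameter class is chosen so that the confusing instances of step (i) are legitimate. Once that class is all unit-variance Gaussians on $\mathcal{D}$, the sign constraints distinguishing buyers from sellers play no role in the argument, and the claim follows.
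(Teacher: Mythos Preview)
Your proposal is correct and takes essentially the same approach as the paper: both defer to Theorem~1 of \cite{combes2015combinatorial} and observe that the Bernoulli proof carries over verbatim once one substitutes the Gaussian KL $kl(u,v)=\tfrac12(u-v)^2$. You supply more detail than the paper does---sketching the Graves--Lai change-of-measure argument and giving an abstract $c>0$ argument---whereas the paper simply cites \cite{combes2015combinatorial} for the lower bound and later (Lemma~\ref{lem:LB_kl_algebra}) derives an explicit positive lower bound on $c(\boldsymbol{\mathcal{A}};\theta)$ tailored to the double-auction instance.
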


 Thus our upper bound of $O(\log(T))$ social welfare regret under the decentralized setting is order optimal since even a centralized system must incur $\Omega(\log(T))$ regret.

\subsection{Additional Details on Combinatorial Semi-Bandits}
\label{sec:additional_lb_details}

For completeness, we collect all relevant definitions and statements from \cite{combes2015combinatorial} that are used in the lower bound proof in Section \ref{sec:sw_regret_lb}. Recall the setup -- at each time $t$, the decision maker can choose a subset of arms from a given set $\boldsymbol{\mathcal{A}} \subseteq 2^{\mathcal{D}}$ of subsets. In this setting, the mean reward obtained by any arm is a unit variance gaussian random variable. Let $\Theta$ denote the set of possible parameters for the means which in our example is the set of $M+N$ means, i.e., $\Theta := \mathbb{R}^{M+N}$, and $\theta \in \Theta$ denote the mean reward for the elements in $\mathcal{D}$. For any subset $\mathcal{A} \in \boldsymbol{\mathcal{A}}$, let $\mu_{\mathcal{A}}(\theta) := \sum_{a \in \mathcal{A}} \theta_a$. We use $\mu_{\mathcal{A}}$ instead of  $\mu_{\mathcal{A}}(\theta)$ when the underlying instance $\theta$ is clear. Let the subset with highest mean be $\mathcal{A}^{*}(\theta) := \arg\max_{\mathcal{A} \in \boldsymbol{\mathcal{A}}} \sum_{a \in \mathcal{A}}\theta_a$, and $\mu^*(\theta)$ be the highest reward, i.e., $\mu^{*}(\theta) = \max_{\mathcal{A} \in \boldsymbol{\mathcal{A}}}\mu_{\mathcal{A}}(\theta) := \mu_{\mathcal{A}^{*}(\theta)}(\theta)$.  An assumption we make throughout in Section \ref{sec:sw_regret_lb} is that $\mathcal{A}^{*}$ is unique and the gap is strictly positive, i.e.

\begin{align}
    \Delta(\mathcal{A}; \theta) := \min_{\mathcal{A} \in \boldsymbol{\mathcal{A}} \setminus \{ \mathcal{A}^{*} \}} (\mu^*(\theta) - \mu_{\mathcal{A}}(\theta)) > 0.
    \label{eqn:assumption_lb}
\end{align}

A policy $\pi$ for the decision maker is a measurable function that at each time $t$, maps all the observed rewards of the arms pulled to a subset $\mathcal{A}^\pi_{t}$ to play. Let $R^{\pi}(T)$ be the $T$ round regret of the policy $\pi$ for an instance $\theta\in \Theta$, which is defined as $R^{\pi}(T; \theta) = \sum_{t=1}^{T}(\mu_{\mathcal{A}^*}(\theta) - \mu_{\mathcal{A}^\pi_{t}}(\theta))$.

\begin{definition}[Uniformly Good]
A policy $\pi$ for the DM is uniformly good, if for all problem settings $\theta \in \Theta$ satisfying assumption in Equation (\ref{eqn:assumption_lb}), $R^{\pi}(T; \theta) = o(T^{\alpha})$, for every $\alpha \in (0,1)$.  
\end{definition}

\begin{theorem}[Theorem $1$ in \cite{combes2015combinatorial}]
\label{thm:combLB}
For any instance $\theta \in \Theta$ satisfying the assumption in Equation (\ref{eqn:assumption_lb}) and any uniformly good policy $\pi$, $\limsup_{T \to \infty} \frac{R^{\pi}(T;\theta)}{\log(T)} \geq c(\boldsymbol{\mathcal{A}}; \theta)$, where $c(\boldsymbol{\mathcal{A}}; \theta)$ is given by the solution of the following optimization problem
\begin{align*}
   c(\boldsymbol{\mathcal{A}}; \theta) &:= \inf_{x \in \mathbb{R}_{+}^{|\boldsymbol{\mathcal{A}}|}} \sum_{\mathcal{A} \in \boldsymbol{\mathcal{A}}}x_{\mathcal{A}}(\mu^{*}(\theta) - \mu_{\mathcal{A}}(\theta)) \text{ such that } \\ & \sum_{a \in \mathcal{D}} \left( \sum_{\mathcal{A} \in \boldsymbol{\mathcal{A}} \text{ s.t. } a \in \mathcal{A} }x_{\mathcal{A}} \right) kl( \theta_a, \lambda_a ) \geq 1, \text{   } \forall \lambda \in B(\theta),
\end{align*}
where $B(\theta) := \{ \lambda \in \Theta, \text{ s.t. } \theta_i = \lambda_i~\forall i \in \mathcal{A}^{*}(\theta), \mu^{*}(\lambda) > \mu^{*}(\theta) \}$, and for any $u,v \in \mathbb{R}$, $kl(u,v) = \frac{1}{2}( u-v)^2$.
\label{thm:lb_comb}
\end{theorem}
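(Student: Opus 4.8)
The statement is Theorem~$1$ of \cite{combes2015combinatorial}, so one option is simply to invoke that reference; absent that, the plan is to run the classical change-of-measure (Graves--Lai) argument specialized to semi-bandit feedback. Fix the instance $\theta$ and a uniformly good policy $\pi$, write $\mathcal{A}^{*} := \mathcal{A}^{*}(\theta)$, let $M_{\mathcal{A}}(T)$ be the number of rounds in $[T]$ in which $\mathcal{A}$ is played, and $N_{a}(T) := \sum_{\mathcal{A} \ni a} M_{\mathcal{A}}(T)$ the number of rounds in which base arm $a$ is observed, so that $R^{\pi}(T;\theta) = \sum_{\mathcal{A} \in \boldsymbol{\mathcal{A}}} (\mu^{*}(\theta) - \mu_{\mathcal{A}}(\theta))\,\mathbb{E}_{\theta}[M_{\mathcal{A}}(T)]$. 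First I would fix any $\lambda \in B(\theta)$ that additionally satisfies the uniqueness assumption in Equation~(\ref{eqn:assumption_lb})---such $\lambda$ are dense in $B(\theta)$, since perturbing coordinates outside $\mathcal{A}^{*}$ breaks ties among subsets---so that $\pi$ is uniformly good at $\lambda$ as well; since $\mu_{\mathcal{A}^{*}}(\lambda) = \mu_{\mathcal{A}^{*}}(\theta) = \mu^{*}(\theta) < \mu^{*}(\lambda)$, the set $\mathcal{A}^{*}$ is strictly suboptimal under $\lambda$. Because playing $\mathcal{A}$ reveals an independent unit-variance Gaussian sample of each coordinate of $\mathcal{A}$, the divergence-decomposition lemma (Lemma~$15.1$ of \cite{lattimore2020bandit}) applies and gives, for the laws $\mathbb{P}_{\theta}^{(T)}$, $\mathbb{P}_{\lambda}^{(T)}$ of the first $T$ observations,
\[ \mathrm{KL}\bigl(\mathbb{P}_{\theta}^{(T)} \,\big\|\, \mathbb{P}_{\lambda}^{(T)}\bigr) = \sum_{a \in \mathcal{D}} \mathbb{E}_{\theta}[N_{a}(T)]\, kl(\theta_{a}, \lambda_{a}), \qquad kl(u,v) = \tfrac12(u-v)^{2}; \]
then I would apply the Bretagnolle--Huber inequality (Theorem~$14.2$ of \cite{lattimore2020bandit}) to the event $\mathcal{E}_{T} := \{M_{\mathcal{A}^{*}}(T) \geq T/2\}$ to obtain $\sum_{a} \mathbb{E}_{\theta}[N_{a}(T)]\,kl(\theta_{a},\lambda_{a}) \geq kl\bigl(\mathbb{P}_{\theta}(\mathcal{E}_{T}), \mathbb{P}_{\lambda}(\mathcal{E}_{T})\bigr)$.

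Next I would use uniform goodness to pin down the two probabilities. Under $\theta$, uniqueness of $\mathcal{A}^{*}$ gives $\mathbb{E}_{\theta}[T - M_{\mathcal{A}^{*}}(T)] \leq R^{\pi}(T;\theta)/\Delta(\mathcal{A};\theta) = o(T^{\alpha})$ for every $\alpha \in (0,1)$, so $\mathbb{P}_{\theta}(\mathcal{E}_{T}) \to 1$; under $\lambda$, suboptimality of $\mathcal{A}^{*}$ gives $\mathbb{E}_{\lambda}[M_{\mathcal{A}^{*}}(T)] \leq R^{\pi}(T;\lambda)/(\mu^{*}(\lambda) - \mu_{\mathcal{A}^{*}}(\lambda)) = o(T^{\alpha})$, hence $\mathbb{P}_{\lambda}(\mathcal{E}_{T}) = o(T^{\alpha-1})$ by Markov. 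Feeding this into $kl(p,q) \geq p\log(1/q) - \log 2$ and letting $\alpha \downarrow 0$ would yield
\[ \liminf_{T \to \infty} \frac{1}{\log T}\sum_{a \in \mathcal{D}} \mathbb{E}_{\theta}[N_{a}(T)]\,kl(\theta_{a}, \lambda_{a}) \;\geq\; 1 \qquad \text{for every such } \lambda. \]

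Finally I would convert these constraints into the LP bound by contradiction: if $\limsup_{T} R^{\pi}(T;\theta)/\log T = L < c(\boldsymbol{\mathcal{A}};\theta)$, take a subsequence $(T_{k})$ realizing $L$; for each suboptimal $\mathcal{A}$ the ratio $x_{\mathcal{A}}(T_{k}) := \mathbb{E}_{\theta}[M_{\mathcal{A}}(T_{k})]/\log T_{k}$ is bounded by $R^{\pi}(T_{k};\theta)/((\mu^{*}-\mu_{\mathcal{A}})\log T_{k})$, so along a further subsequence these finitely many numbers converge to some $x^{*}_{\mathcal{A}} \geq 0$; set $x^{*}_{\mathcal{A}^{*}} := 0$. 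Since $N_{a} = \sum_{\mathcal{A} \ni a} M_{\mathcal{A}}$ and $kl(\theta_{a},\lambda_{a}) = 0$ for every $a \in \mathcal{A}^{*}$---so the possibly-$\Theta(T)$ term $M_{\mathcal{A}^{*}}$ is multiplied by zero in each constraint and hits a zero gap in the objective---the last display passes to the subsequential limit, and by continuity of $\lambda \mapsto \sum_{a}(\sum_{\mathcal{A} \ni a} x^{*}_{\mathcal{A}})\,kl(\theta_{a},\lambda_{a})$ (a finite sum with fixed coefficients) this extends to all $\lambda \in B(\theta)$, so $x^{*} = (x^{*}_{\mathcal{A}})_{\mathcal{A}}$ is feasible for the LP. But then its objective value is $\sum_{\mathcal{A}} x^{*}_{\mathcal{A}}(\mu^{*} - \mu_{\mathcal{A}}) = \lim_{k} R^{\pi}(T_{k};\theta)/\log T_{k} = L < c(\boldsymbol{\mathcal{A}};\theta)$, contradicting minimality of $c(\boldsymbol{\mathcal{A}};\theta)$, and the theorem follows.

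The hard part will be the change-of-measure step together with the bookkeeping around $\mathcal{A}^{*}$: one must verify that semi-bandit feedback genuinely produces the clean additive divergence decomposition (using independence of the $X_{t}(a)$ across arms and observation of every played coordinate), that the confusing class $B(\theta)$ is precisely the right object (leaving $\mathcal{A}^{*}$ untouched while making it suboptimal is exactly what forces $\mathbb{P}_{\lambda}(\mathcal{E}_{T}) \to 0$), and that uniform goodness may be invoked at $\lambda$ only after a density-plus-continuity argument, since $B(\theta)$ by itself does not enforce a unique best subset. One must also carefully isolate the fact that $M_{\mathcal{A}^{*}}(T)$, though possibly $\Theta(T)$, is irrelevant to both the LP objective and its constraints; past that, the remaining estimates are routine.
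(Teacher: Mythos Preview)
Your proposal is correct, and in fact does strictly more than the paper: the paper does not give any proof of this theorem, it simply states that ``the proof of this theorem is given as the proof of Theorem~$1$ in \cite{combes2015combinatorial}'' and remarks that the argument there, written for Bernoulli arms, carries over verbatim to unit-variance Gaussian arms because the relevant KL divergence becomes $kl(u,v) = \tfrac12(u-v)^2$. Your very first sentence already names this option.

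Your full sketch via the Graves--Lai change-of-measure argument (divergence decomposition for semi-bandit feedback, Bretagnolle--Huber on the event $\{M_{\mathcal{A}^*}(T)\geq T/2\}$, uniform goodness at both $\theta$ and $\lambda$, and the LP extraction by a subsequence/compactness step) is the standard route and is essentially what \cite{combes2015combinatorial} does; the technical caveats you flag---density of $\lambda$'s satisfying the uniqueness assumption inside $B(\theta)$, and the fact that $M_{\mathcal{A}^*}(T)$ contributes zero to both the objective and every constraint---are the right ones to track. So relative to the paper you have supplied an actual argument where the paper only supplies a citation.
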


The proof of this theorem is given as the proof of theorem $1$ in \cite{combes2015combinatorial}. Although the proof in \cite{combes2015combinatorial} is given in the case for Bernoulli distributed arms, the proof follows verbatim for gaussian distributed arms by using the fact that the KL divergence between two unit variance gaussians with means $u, v \in \mathbb{R}$ is $kl(u,v) := \frac{1}{2}(u-v)^2$.

{\color{black} 
We give a simple lower bound to $c(\boldsymbol{\mathcal{A}}; \theta)$ in the following lemma.

\begin{lemma}
\begin{align*}
c(\mathbb{\mathcal{A}}; \theta) 
\geq \sum_{i= K^*+1}^{N} \frac{2}{\min(B_{K^*}, S_{K^*+1}) - B_i} + \sum_{j= K^*+1}^{M} \frac{2}{S_j - \max(S_{K^*}, B_{K^*+1})}
\end{align*}
\label{lem:LB_kl_algebra}
\end{lemma}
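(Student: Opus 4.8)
The plan is to use the LP characterization of $c(\boldsymbol{\mathcal{A}};\theta)$ supplied by Theorem~\ref{thm:combLB} and lower bound that LP value by combining (a) a collection of constraints that every feasible $x$ must satisfy with (b) a lower bound on the objective $\sum_{\mathcal{A}}x_{\mathcal{A}}(\mu^{*}-\mu_{\mathcal{A}})$. Write $n_a(x):=\sum_{\mathcal{A}\in\boldsymbol{\mathcal{A}}:a\in\mathcal{A}}x_{\mathcal{A}}$ for the (scaled) number of times arm $a$ is selected. Any $\lambda\in B(\theta)$ that differs from $\theta$ in only one coordinate $a$ makes the constraint collapse to $n_a(x)\,kl(\theta_a,\lambda_a)\ge 1$, so the strategy is: for each suboptimal arm find the cheapest single-coordinate perturbation that still lies in $B(\theta)$, read off a lower bound on $n_a(x)$, and then show the objective must pay at least $n_a(x)\cdot(\text{gap}_a)$ per arm.

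Next I would compute these perturbations; this is where the $\min$/$\max$ expressions come from. Since the reward of a valid subset is $\sum_{\text{buyers}}B-\sum_{\text{sellers}}S$, the optimal subset $\mathcal{A}^{*}=\{B_1,\dots,B_{K^{*}}\}\cup\{S_1,\dots,S_{K^{*}}\}$ is obtained by taking the highest buyers and lowest sellers until the marginal buyer valuation drops below the marginal seller valuation (which happens exactly at $K^{*}$). For a suboptimal buyer $i>K^{*}$, raising $B_i$ alone changes the optimal subset precisely once $B_i$ exceeds $\min(B_{K^{*}},S_{K^{*}+1})$: if $B_{K^{*}}<S_{K^{*}+1}$ the new optimum is $(\mathcal{A}^{*}\setminus\{B_{K^{*}}\})\cup\{B_i\}$, and otherwise it is $\mathcal{A}^{*}\cup\{B_i,S_{K^{*}+1}\}$; in both cases $\mu^{*}$ strictly increases, so $\lambda^{(B_i)}$ with $\lambda^{(B_i)}_{B_i}\downarrow\min(B_{K^{*}},S_{K^{*}+1})$ lies in $B(\theta)$ and yields $n_{B_i}(x)\ge 2/(\min(B_{K^{*}},S_{K^{*}+1})-B_i)^{2}$. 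Symmetrically, lowering a suboptimal seller $S_j$ past $\max(S_{K^{*}},B_{K^{*}+1})$ gives $n_{S_j}(x)\ge 2/(S_j-\max(S_{K^{*}},B_{K^{*}+1}))^{2}$. Both require checking the new optimal subset explicitly, which is routine but needs the orderings of $B_{K^{*}},S_{K^{*}},B_{K^{*}+1},S_{K^{*}+1}$ around $p^{*}$.

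The final step is to lower bound the objective by charging the regret $\mu^{*}-\mu_{\mathcal{A}}$ of each played subset to the suboptimal arms it contains. Writing $\mu^{*}-\mu_{\mathcal{A}}=\sum_{a}(\mathbf{1}[a\in\mathcal{A}^{*}]-\mathbf{1}[a\in\mathcal{A}])\mu_a$ and using the equal-cardinality constraint, one wants $\mu^{*}-\mu_{\mathcal{A}}\ge\sum_{i>K^{*}:B_i\in\mathcal{A}}(\min(B_{K^{*}},S_{K^{*}+1})-B_i)+\sum_{j>K^{*}:S_j\in\mathcal{A}}(S_j-\max(S_{K^{*}},B_{K^{*}+1}))$; summing against $x_{\mathcal{A}}$ then gives $\sum_{\mathcal{A}}x_{\mathcal{A}}(\mu^{*}-\mu_{\mathcal{A}})\ge\sum_{i>K^{*}}n_{B_i}(x)\,g_{B_i}+\sum_{j>K^{*}}n_{S_j}(x)\,g_{S_j}$ with $g_a$ the corresponding gap, and combining with the pull bounds via $n\cdot g\ge 2/g$ when $n\ge 2/g^{2}$ produces exactly the claimed inequality.

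The hard part is the charging step in the last paragraph. When a subset $\mathcal{A}$ simultaneously contains a suboptimal buyer $B_i$ and a suboptimal seller $S_j$ (so $\mathcal{A}$ has more than $K^{*}$ buyers and sellers), its regret is $\mu^{*}-\mu_{\mathcal{A}}= (S_j-B_i)+(\text{nonnegative terms})$, which couples the two sides and can be strictly smaller than the naive sum of the two per-arm gaps, so one cannot charge the two arms independently. I would handle this by either (i) enriching the dual certificate with \emph{joint} perturbations that raise $B_i$ and lower $S_j$ toward a common value $m$ so that $\mathcal{A}^{*}\cup\{B_i,S_j\}$ becomes optimal — this produces the extra constraint $\tfrac{n_{B_i}(x)\,n_{S_j}(x)}{n_{B_i}(x)+n_{S_j}(x)}(S_j-B_i)^{2}\ge 2$ and one recovers the bound through a transportation-type argument matching suboptimal buyers to suboptimal sellers — or (ii) a careful pairing of the positive and negative contributions in $\mu^{*}-\mu_{\mathcal{A}}$, matching each "surplus" suboptimal buyer of $\mathcal{A}$ with a distinct "surplus" suboptimal seller and bounding each matched pair separately. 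I expect essentially all of the technical effort to go into this coupling argument and into the verification that the perturbed instances genuinely lie in $B(\theta)$.
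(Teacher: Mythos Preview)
Your first two steps---restricting $B(\theta)$ to single-coordinate perturbations that push one suboptimal buyer just above $\min(B_{K^*},S_{K^*+1})$ or one suboptimal seller just below $\max(S_{K^*},B_{K^*+1})$, and reading off the per-arm bounds $n_a(x)\ge 2/g_a^{2}$---match the paper exactly. Where you diverge is in converting these pull bounds into a lower bound on the objective. You attempt the dual route, via a per-subset charging inequality $\mu^{*}-\mu_{\mathcal A}\ge\sum_{a\in\mathcal A\setminus\mathcal A^{*}}g_a$, and rightly flag that it can fail when $\mathcal A$ contains both a suboptimal buyer and a suboptimal seller; your proposed fixes (joint perturbations, pairing arguments) are attempts to repair this.

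The paper sidesteps the charging step entirely by staying on the primal side. Having restricted to the finitely many single-coordinate constraints (its sets $B_1(\theta;\varepsilon)$ and $B_2(\theta;\varepsilon)$), it writes down for each suboptimal arm $a$ the explicit minimal-gap feasible subset $\mathcal A^{(a)}$ containing $a$---precisely the two-case formulas you already derived---notes that its gap equals $g_a$, observes that distinct suboptimal buyers never co-occur in any $\mathcal A_b^{(i')}$ and likewise for sellers, and then asserts that assigning $x_{\mathcal A^{(a)}}=2/g_a^{2}$ and zero elsewhere is the minimizer of the restricted LP, giving the value $\sum_a 2/g_a$ directly. No per-subset charging inequality is ever needed. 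That said, your coupling worry does not vanish under this approach either: the paper's disjointness observation only separates buyers from buyers and sellers from sellers, and in the cases $B_{K^*}>S_{K^*+1}$ or $B_{K^*+1}>S_{K^*}$ the sets $\mathcal A_b^{(i')}$ and $\mathcal A_s^{(j')}$ themselves contain a suboptimal arm from the opposite side (namely $S_{K^*+1}$ or $B_{K^*+1}$), so a subset with smaller gap can cover two constraints at once and the claim that the exhibited assignment is \emph{the} minimizer deserves exactly the scrutiny you are giving your charging step.
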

\begin{proof}

For ease of exposition, we will use the notation that for any buyer $i \in \{1,\cdots,N\}$, $\theta_{b,i}$ to be the true valuation of buyer $i$ under model $\theta$. Similarly, for any seller $j \in \{1,\cdots,M\}$, we denote by $\theta_{s,j}$ to be the true valuation of seller $j$ under model $\theta$. Observe that under the mapping to the combinatorial bandits, we denote by all agents' indices by the set of cardinality $M+N$, denoted by $\mathcal{D}$. We denote by the first $N$ indices of $\mathcal{D}$ to represent the buyers and the last $M$ indices of $\mathcal{D}$ to denote the sellers.

 For every $\varepsilon > 0$, let 
\begin{align*}
       B_0(\theta) &= \{ \lambda \text{ s.t. }, \forall i, j\leq  K^* \lambda_{b,i} = \theta_{b,i}, \lambda_{s,j} = \theta_{s,j}\},\\
       B_1(\theta; \varepsilon) &= \{ \lambda \in B_0(\theta); \exists \text{ exactly one } i' \in \{K^*+1, \dots, N\},  \lambda_{b,i'} = \min(\theta_{b,K^*},\theta_{s,K^*+1})  + \varepsilon \},\\
   B_2(\theta; \varepsilon) &= \{ \lambda \in B_0(\theta); \exists \text{ exactly one } j' \in \{K^*+1, \dots, M\},  \lambda_{s,j'} = \max(\theta_{s,K^*}, \theta_{b,K^*+1}) - \varepsilon \}.
\end{align*}
Observe by definition that for all $\varepsilon > 0$, $B_1(\theta, \varepsilon) \cap B_2(\theta, \varepsilon) = \emptyset$ and $B_1(\theta, \varepsilon) \cup B_2(\theta, \varepsilon) \subset B(\theta)$. Define by $c_{\varepsilon}(\boldsymbol{\mathcal{A}}; \theta)$ to be the solution to the following optimization problem 
\begin{align}
   c_{\varepsilon}(\boldsymbol{\mathcal{A}}; \theta) &:= \inf_{x \in \mathbb{R}_{+}^{|\boldsymbol{\mathcal{A}}|}} \sum_{\mathcal{A} \in \boldsymbol{\mathcal{A}}}x_{\mathcal{A}}(\mu^{*}(\theta) - \mu_{\mathcal{A}}(\theta)) \text{ such that } \label{eqn:op_var_lb} \\ & \sum_{a \in \mathcal{D}} \left( \sum_{\mathcal{A} \in \boldsymbol{\mathcal{A}} \text{ s.t. } a \in \mathcal{A} }x_{\mathcal{A}} \right) kl( \theta_a, \lambda_a ) \geq 1, \text{   } \forall \lambda \in \mathcal{B}_1(\theta, \varepsilon) \cup B_2(\theta, \varepsilon) \nonumber.
\end{align}
The optimization problem for $c_{\varepsilon}(\boldsymbol{\mathcal{A}}; \theta)$ differs from that for $c(\boldsymbol{\mathcal{A}}; \theta)$ in Theorem \ref{thm:lb_comb} in the constraints. Since $B_1(\theta,\varepsilon) \cup B_2(\theta, \varepsilon) \subset B(\theta)$, we have that $c(\boldsymbol{\mathcal{A}}; \theta) \geq c_{\varepsilon}(\boldsymbol{\mathcal{A}}; \theta)$. As this in-equality holds for all $\varepsilon > 0$, we have that 
\begin{align*}
    c(\boldsymbol{\mathcal{A}}; \theta) \geq \sup_{\varepsilon > 0} c_{\varepsilon}(\boldsymbol{\mathcal{A}}; \theta).
\end{align*}
In the rest of this proof, we will develop a lower bound for $\sup_{\varepsilon > 0}c_{\varepsilon}(\boldsymbol{\mathcal{A}}; \theta)$. 

For notational simplicity, we denote by $y_{a} := \sum_{\mathcal{A} \in \boldsymbol{\mathcal{A}} : a \in \mathcal{A}}x_{\mathcal{A}}$. The constraint in the optimization problem in Equation (\ref{eqn:op_var_lb}) yields that, for all buyers $i' \in \{K^*+1, \cdots, N\}$ and all sellers $j' \in \{K^*+1,\cdots, M\}$, we need to have 
\begin{align*}
    &y_{b,i'} \geq \frac{1}{kl(\theta_{b,i'},\min(\theta_{b,K^*}, \theta_{s,K^*+1})+\varepsilon)},
    \text{ and }
    y_{s,j'} \geq \frac{1}{kl(\theta_{s,j'}, \max(\theta_{b,K^*+1}, \theta_{s,K^*}) - \varepsilon)}.
\end{align*}

Additionally, monotonicity yields that for any buyer $i' \in \{K^*+1, \cdots, N\}$,
\begin{align*}
 \mathcal{A}_{b}^{(i')} := {\arg\min}_{\mathcal{A} \in \boldsymbol{\mathcal{A}}: B_{i'}\in \mathcal{A}}(\mu^*(\theta) - \mu_{\mathcal{A}}(\theta)) =   \begin{cases}
  \{B_1, \dots, B_{(K^*-1)}, B_{i'}\} \cup \{S_1, \dots, S_{K^*}\}, &B_{K^*} < S_{K^*+1},\\
  \{B_1, \dots, B_{K^*}, B_{i'}\} \cup \{S_1, \dots, S_{K^*}, S_{K^*+1}\}, &B_{K^*} > S_{K^*+1}.
 \end{cases}
\end{align*}

Similarly, for any seller $j' \in \{K^*+1, \cdots, N\}$, 
\begin{align*}
 \mathcal{A}_{s}^{(j')} := {\arg\min}_{\mathcal{A} \in \boldsymbol{\mathcal{A}}: S_{j'}\in \mathcal{A}}(\mu^*(\theta) - \mu_{\mathcal{A}}(\theta)) =   \begin{cases}
  \{B_1, \dots, B_{K^*}\} \cup \{S_1, \dots, S_{(K^*-1)}, S_{j'}\}, &B_{K^*+1} < S_{K^*},\\
  \{B_1, \dots, B_{K^*}, B_{K^*+1}\} \cup \{S_1, \dots, S_{K^*}, S_{j'}\}, &B_{K^*+1} > S_{K^*}.
 \end{cases}
\end{align*}

Note that for all $i', j', i'', j''\geq (K^*+1)$, $i'\neq i''$  and $j'\neq j''$, $B_{i''} \notin \mathcal{A}_{b}^{(i')}$, and   $S_{j''} \notin \mathcal{A}_{s}^{(j')}$.
Therefore, the minimizer of Equation (\ref{eqn:op_var_lb}) is given by assigning 
\begin{align*}
   &\forall~ i' \in \{K^*+1, \cdots, N\},~ x_{\mathcal{A}^{(i')}_b} := \frac{1}{kl(\theta_{b,i'},\min(\theta_{b,K^*}, \theta_{s,K^*+1})+\varepsilon)},\\
   &\forall~j' \in \{K^*+1, \dots, M\},~ x_{\mathcal{A}^{(j')}_s} := \frac{1}{kl(\theta_{s,j'}, \max(\theta_{b, K^*+1}, \theta_{s,K^*}) - \varepsilon)},\\
   &\text{ and for all other sets } \mathcal{A}, x_{\mathcal{A}} = 0. 
\end{align*} 
 
This ensures 
for all buyers $i' \in \{K^*+1, \cdots, N\}$ and all sellers $j' \in \{K^*+1,\cdots, M\}$, 
\begin{align*}
    &y_{b,i'} = \frac{1}{kl(\theta_{b,i'},\min(\theta_{b,K^*}, \theta_{s,K^*+1})+\varepsilon)},
    \text{ and }
    y_{s,j'} = \frac{1}{kl(\theta_{s,j'}, \max(\theta_{b,K^*+1}, \theta_{s,K^*}) - \varepsilon)}.
\end{align*}

Thus, we get 
\begin{align*}
     c_{\varepsilon}(\boldsymbol{\mathcal{A}}; \theta) &= \sum_{i' = K^*+1}^{N} \frac{\mu^*(\theta) - \mu_{\mathcal{A}^{(i')}_b}(\theta)}{kl(\theta_{b,i'},\min(\theta_{b,K^*}, \theta_{s,K^*+1})+\varepsilon)} 
     + \sum_{j' = K^*+1}^M \frac{\mu^*(\theta) - \mu_{\mathcal{A}^{(j')}_s}(\theta)}{kl(\theta_{s,j'}, \max(\theta_{b,K^*+1}, \theta_{s,K^*}) - \varepsilon)},\\
     &= \sum_{i' = K^*+1}^{N} \frac{\min(\theta_{b, K^*}, \theta_{s, K^*+1})+\varepsilon - \theta_{b,i'}}{kl(\theta_{b,i'},\min(\theta_{b,K^*}, \theta_{s,K^*+1})+\varepsilon)} 
     + \sum_{j' = K^*+1}^M \frac{\theta_{s,j'} - \max(\theta_{s,K^*}, \theta_{b,K^*+1})+\varepsilon}{kl(\theta_{s,j'}, \max(\theta_{b,K^*+1}, \theta_{s,K^*}) - \varepsilon)}
\end{align*}

Optimizing over $\varepsilon >0$ and using the fact that $kl(u,v) := \frac{1}{2}(u-v)^2$ yields the result. 
\end{proof} 

{\color{black}
\subsection{Lower Bound for Non-participating agents}
\label{sec:lb_sketch_non_participating}

The connection to combinatorial bandits also gives an evidence towards a $\Omega(\log(T))$ lower-bound for regret for any non-participating agent. 
The crux of the lower bound comes from Theorem $1$ in \citep{graves1997asymptotically} which we state here using notations of our setting. For every subset $\mathcal{A} \in \boldsymbol{\mathcal{A}}$, time-horizon $T$ and policy $\pi$, denote by $N_{\mathcal{A}}^{\pi}(T) \leq T$ to be the number of times, subset $\mathcal{A}$ is played, i.e., matched by the centralized DM. Theorem $1$ from \citep{graves1997asymptotically} restated is the following.

\begin{lemma}
    For any uniformly good policy $\pi$ and any $\mathcal{A} \in \boldsymbol{\mathcal{A}}$ such that $\mathcal{A} \in \mathcal{A}^*$ is not the optimal subset,
    \begin{align}
    \liminf_{T \to \infty}\frac{\mathbb{E}[N_{\mathcal{A}}^{\pi}(T)]}{\log(T)} \geq \sup_{\lambda \in \mathcal{B}(\theta)}\frac{1}{\sum_{a \in \mathcal{A}}kl(\theta_{a},\lambda_a)},
    \label{eqn:lb_numn_times_subopt}
\end{align}
where $\mathcal{B}(\theta)$ is defined in Theorem \ref{thm:combLB}.
\label{lem:lb_num_times}
\end{lemma}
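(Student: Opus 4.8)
The plan is to obtain this as an instance of the general asymptotic lower bound of \cite{graves1997asymptotically}, once the combinatorial semi-bandit here is checked to meet its hypotheses — identifiability of the optimal subset (which holds by the standing assumption \eqref{eqn:assumption_lb} that $\mathcal{A}^{*}$ is unique with a positive gap) and uniform goodness of $\pi$ — and I would also sketch the underlying change-of-measure argument so that the appearance of $\mathcal{B}(\theta)$ and of the arm-wise KL terms is transparent. First I would fix the policy $\pi$ and an alternative instance $\lambda \in \mathcal{B}(\theta)$, and write down the log-likelihood ratio $L_T$ of the $T$-round observation sequence under $\theta$ versus under $\lambda$. Because the rewards are independent unit-variance Gaussians and a play of a subset $\mathcal{A}' \in \boldsymbol{\mathcal{A}}$ reveals $X_t(a)$ for every $a \in \mathcal{A}'$, $L_T$ decomposes over arms and Wald's identity yields $\mathbb{E}_{\theta}[L_T] = \sum_{a \in \mathcal{D}} \mathbb{E}_{\theta}[N_a(T)]\, kl(\theta_a, \lambda_a)$, where $N_a(T)$ is the number of rounds in which arm $a$ lies in the chosen subset and $kl(u,v) = \tfrac{1}{2}(u - v)^2$. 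Since $\lambda$ agrees with $\theta$ on every arm of $\mathcal{A}^{*}(\theta)$, only the arms outside $\mathcal{A}^{*}(\theta)$ contribute to $\mathbb{E}_{\theta}[L_T]$.

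Next I would run the usual transportation step: apply the divergence-decomposition lemma together with the Bretagnolle--Huber inequality (exactly as used in the proof of Lemma~\ref{lem:lb_exact}) to an event that tells $\theta$ and $\lambda$ apart. Uniform goodness forces the policy to play subsets that are suboptimal under $\theta$ only $o(T^{\alpha})$ times in expectation for every $\alpha \in (0,1)$, and symmetrically, under $\lambda$ the subset $\mathcal{A}^{*}(\theta)$ is suboptimal — because $\mu^{*}(\lambda) > \mu^{*}(\theta) = \mu_{\mathcal{A}^{*}(\theta)}(\lambda)$ — and hence also played $o(T^{\alpha})$ times under $\lambda$. Combining this with the likelihood identity of the previous step gives, for every $\lambda \in \mathcal{B}(\theta)$, the asymptotic information constraint $\liminf_{T \to \infty} \tfrac{1}{\log T}\sum_{a \notin \mathcal{A}^{*}(\theta)} \mathbb{E}_{\theta}[N_a(T)]\, kl(\theta_a, \lambda_a) \geq 1$.

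Finally I would localize this arm-wise constraint to the subset $\mathcal{A}$: restrict attention to those $\lambda \in \mathcal{B}(\theta)$ that perturb $\theta$ only on the arms of $\mathcal{A}$ lying outside $\mathcal{A}^{*}(\theta)$, and use the combinatorial structure of $\boldsymbol{\mathcal{A}}$ — the same feature exploited in Lemma~\ref{lem:LB_kl_algebra}, namely that these perturbed arms occur in $\mathcal{A}$ but not in the other competitive subsets — to conclude that $\sum_{a} \mathbb{E}_{\theta}[N_a(T)]\, kl(\theta_a, \lambda_a)$ equals $\mathbb{E}_{\theta}[N^{\pi}_{\mathcal{A}}(T)] \sum_{a \in \mathcal{A}} kl(\theta_a, \lambda_a)$. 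Dividing through and taking the supremum over admissible $\lambda \in \mathcal{B}(\theta)$ then yields the claimed bound. I expect the main obstacle to be precisely this last reduction: the change-of-measure argument natively controls only the information-weighted sum of pull counts over all arms outside the optimal subset, so to extract a bound on the pull count of the single subset $\mathcal{A}$ one must choose $\lambda$ whose support of perturbation is confined to arms that, among the subsets a sensible policy would play, appear only in $\mathcal{A}$, and one must rule out that those arms get informed ``for free'' through plays of other subsets — this is exactly where the problem-specific structure of $\boldsymbol{\mathcal{A}}$ does the work.
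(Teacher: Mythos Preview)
The paper does not actually prove this lemma: it is presented as a direct restatement of Theorem~1 of \cite{graves1997asymptotically}, with no argument supplied beyond the citation. So there is no ``paper's proof'' to compare against beyond the invocation of that reference.

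Your sketch of the change-of-measure argument is standard and correct up through the information inequality
\[
\liminf_{T\to\infty}\frac{1}{\log T}\sum_{a\notin \mathcal{A}^*(\theta)} \mathbb{E}_\theta[N_a(T)]\,kl(\theta_a,\lambda_a)\;\ge\;1,\qquad \lambda\in\mathcal{B}(\theta).
\]
The genuine gap is exactly the step you flag as ``the main obstacle.'' Your proposed fix --- choose $\lambda$ perturbing only arms of $\mathcal{A}\setminus\mathcal{A}^*$ and then appeal to the structure of $\boldsymbol{\mathcal{A}}$ so that those arms are sampled only when $\mathcal{A}$ itself is played --- does not go through here. In the combinatorial semi-bandit of this paper, every non-optimal buyer $B_{i'}$ and seller $S_{j'}$ belongs to many admissible subsets, so plays of any other subset containing $a$ also contribute to $N_a(T)$. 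Consequently you cannot collapse $\sum_a \mathbb{E}_\theta[N_a(T)]\,kl(\theta_a,\lambda_a)$ to $\mathbb{E}_\theta[N^\pi_{\mathcal{A}}(T)]\sum_{a\in\mathcal{A}}kl(\theta_a,\lambda_a)$; the former can be made $\Omega(\log T)$ purely through plays of subsets other than $\mathcal{A}$.

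In fact this is not a technicality that a cleverer choice of $\lambda$ avoids: as a general statement about combinatorial semi-bandits, a per-subset lower bound of the stated form is false whenever arms are shared across subsets (a uniformly good policy can learn an arm entirely through other subsets and never play $\mathcal{A}$ at all). What the Graves--Lai/Combes et al.\ machinery delivers is the system of constraints on the vector $(\mathbb{E}[N_{\mathcal{A}'}(T)])_{\mathcal{A}'}$ that underlies Theorem~\ref{thm:combLB}, not an unconditional lower bound on each coordinate. The paper uses this lemma only as ``evidence towards'' a $\Omega(\log T)$ bound for non-participants, and the precise statement should be read with that caveat; your attempted direct proof cannot be completed as written.
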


 Following identical calculations as in Proof of Lemma \ref{lem:LB_kl_algebra}, consider the set $\widehat{\mathcal{A}}^{(b;j)} := \{B_1, \cdots, B_{K^*}, B_{j} \} \cup \{S_1, \cdots, S_{K^*}, S_{K^*+1} \}$ for some $j \geq K^*+1$, i.e., the set where one non-participating buyer $j$ and a non-participating seller $K^*+1$ is played. For this set $\widehat{\mathcal{A}}^{(b,j)}$,  $\sup_{\lambda \in \mathcal{B}(\theta)} \frac{1}{\sum_{a \in \widehat{\mathcal{A}}}kl(\theta_a, \lambda_a)}$ can be upper bounded by a finite quantity similar to the proof of Lemma \ref{lem:LB_kl_algebra}.


Now, similar to the other lower bound proofs in the paper,we make the assumption that the transaction price is constant $p^*$ in all rounds and is independent of the bids submitted by the agents. Thus, each round a sub-optimal subset $\mathcal{A}$ is played, the agents in the set $\mathcal{A}$ incur a constant $O(1)$ regret as the transaction price is fixed and unchanging.

}

}

\section{Deviations and Incentives of Agents}\label{appendix:incentives} We now discuss how agents may deviate under average mechanism under their own incentive. We limit ourselves to deviations from \emph{myopic oracle} agents, each of whom optimizes her single round reward, and possess an oracle knowledge of her own valuation. Our incentives are shaped by symmetric equilibrium in double auction~\cite{wilson1985incentive}, where all non-strategic buyers employ same strategy, and the same hold for non-strategic sellers. In particular, all the non-strategic agents use confidence-based bidding. The strategic deviant agents do not deviate from a strategy if incremental deviations in bids do not improve their own reward. 

Under the above setup, for average mechanism only the price-setting agents (i.e. the $K^*$-th buyer and $K^*$-th seller) have incentive to deviate from their true valuation to increase their single-round reward (c.f. Section 5 in \cite{wilson1985incentive}). For other agents, deviations from reporting their true value does not improve their instantaneous reward. Thus average mechanism is \emph{truthful for non-price-setting agents}. 

For the price-setting agents, the incentives, and impact on regret are as follows.

    1.  \emph{Only $K^*$-th buyer deviates:} The $K^*$-th buyer has an incentive to set her bid \emph{close}\footnote{In this context, \emph{close} means $\epsilon$ larger bids for buyers, and $\epsilon$ smaller bids for sellers for $\epsilon \gtrapprox 0$.} to $\max(S_{K^*}, B_{K^*+1})$. The long term average price is now set close to $(S_{K^*} + \max(S_{K^*}, B_{K^*+1}))/2$. When compared to Average mechanism outcomes this leads to $(B_{K^*} - \max(S_{K^*}, B_{K^*+1}))/2$ average surplus in each round for participating buyers, and the same average deficit in each round for participating sellers.
 
    2.  \emph{Only $K^*$-th seller deviates:} The $K^*$-th seller has an incentive to set her bid close to $\min(B_{K^*}, S_{K^*+1})$.  With a  long term average price of $(B_{K^*} + \min(B_{K^*}, S_{K^*+1}))/2$, each seller has a per round  $(S_{K^*} - \min(B_{K^*}, S_{K^*+1}))/2$ surplus, and each buyer has the same average deficit in each round.  

    3.  \emph{Both $K^*$-th seller and buyer deviate:} The $K^*$-th seller has an incentive to set her bid close to $\min( (B_{K^*}+S_{K^*})/2, S_{K^*+1})$. Whereas, for the $K^*$-th buyer the bid is close to $\max((B_{K^*}+S_{K^*})/2, B_{K^*+1})$. We can derive the long term average price, and surplus and deficit for the agents similarly. We leave out the exact expressions due to space limitations.

We acknowledge that the study of incentive compatibility under the notions of equilibrium in sequential games with incomplete information --  where an agent can strategize thinking about her long term consequences (c.f. ~\cite{ponssard1973zero,kohlberg1974repeated}) in presence of learning is out of scope for this paper. This is similar to other contemporary works on learning in repeated games~\cite{liu2020competing,sankararaman2021dominate,liu2021bandit,basu2021beyond}.
\section{Synthetic Experiments} \label{appendix:experiment}
In this section, we present some additional synthetic experiments to study the impact of different parameters on the performance of our algorithm. Our methodology is same as mentioned in Section~\ref{sec:expt}. Recall, that the negative regret for participant buyers/sellers is expected, as the regret increases as $(p(t) - p^*)$ for sellers, and $(p^* - p(t))$ for buyers.

\subsection{Example Systems:}
In Figure~\ref{fig:Fig885app}, we have a $8\times 8$ system with $5$ matches.  The non-participant regret, for both buyers and sellers, converges and assumes the $log(T)$. The participant regret, for both buyers and sellers, has more noise and the envelope grows as $O(\sqrt{T})$. Note that the  regret that comes from price difference has opposite sign for buyers and sellers in each sample path. Hence, if regret plot of buyers is increasing with $T$ then it will decrease for sellers, and vice versa. We defer the simulation studies of other systems to appendix.We see this is Figure~\ref{fig:Fig151510} where we have a $15\times 15$ system with $10$ matches simulated for $T= 100k$. The rest of the behavior in Figure~\ref{fig:Fig151510}  is similar to  Figure~\ref{fig:Fig885app}. 

\begin{figure}
     \centering
    \begin{subfigure}[t]{0.49\linewidth}
        \raisebox{-\height}{\includegraphics[width=\textwidth]{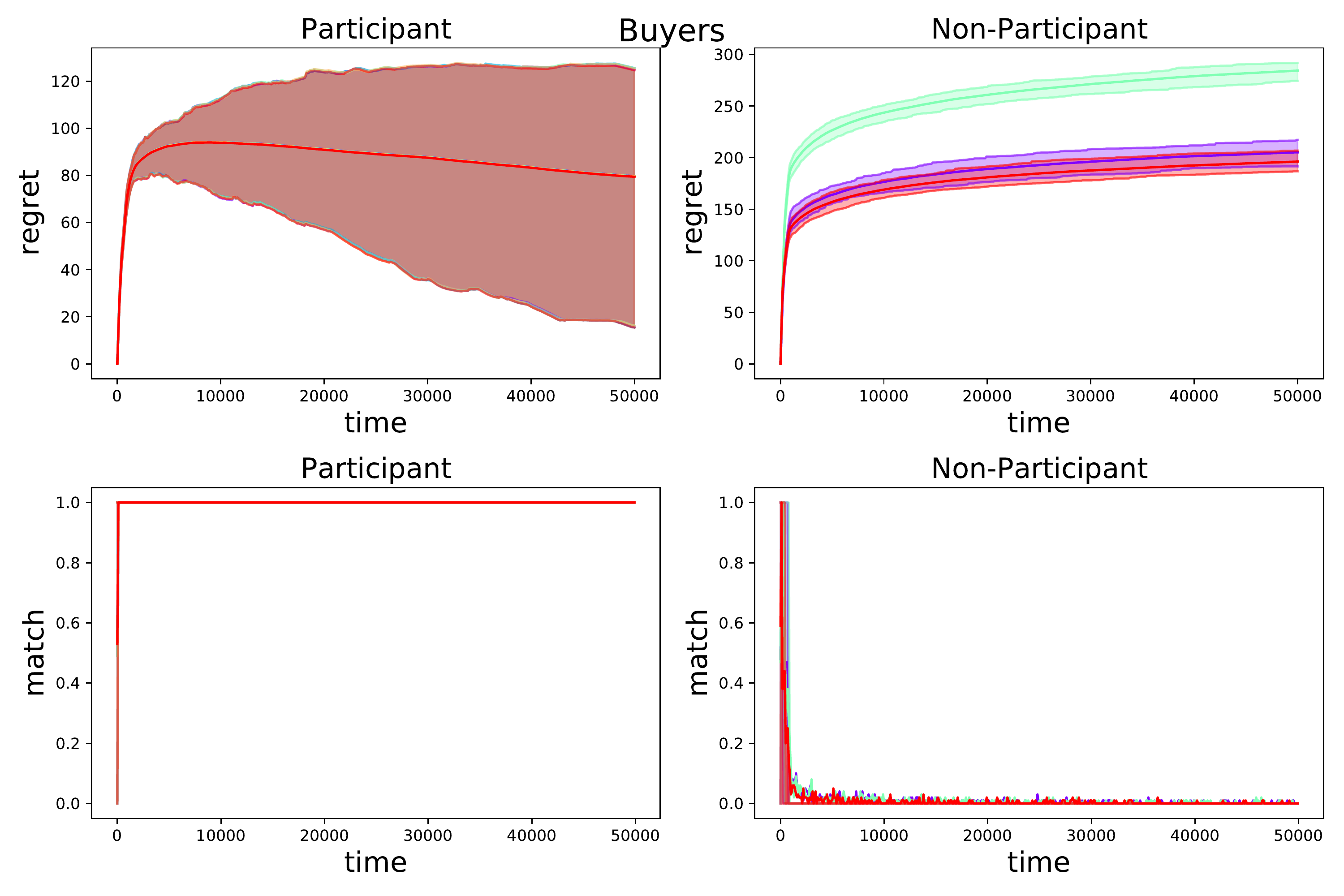}}
        \caption{Regret and Matching of Buyers}
    \end{subfigure}
    \hfill
    \begin{subfigure}[t]{0.49\linewidth}
        \raisebox{-\height}{\includegraphics[width=\textwidth]{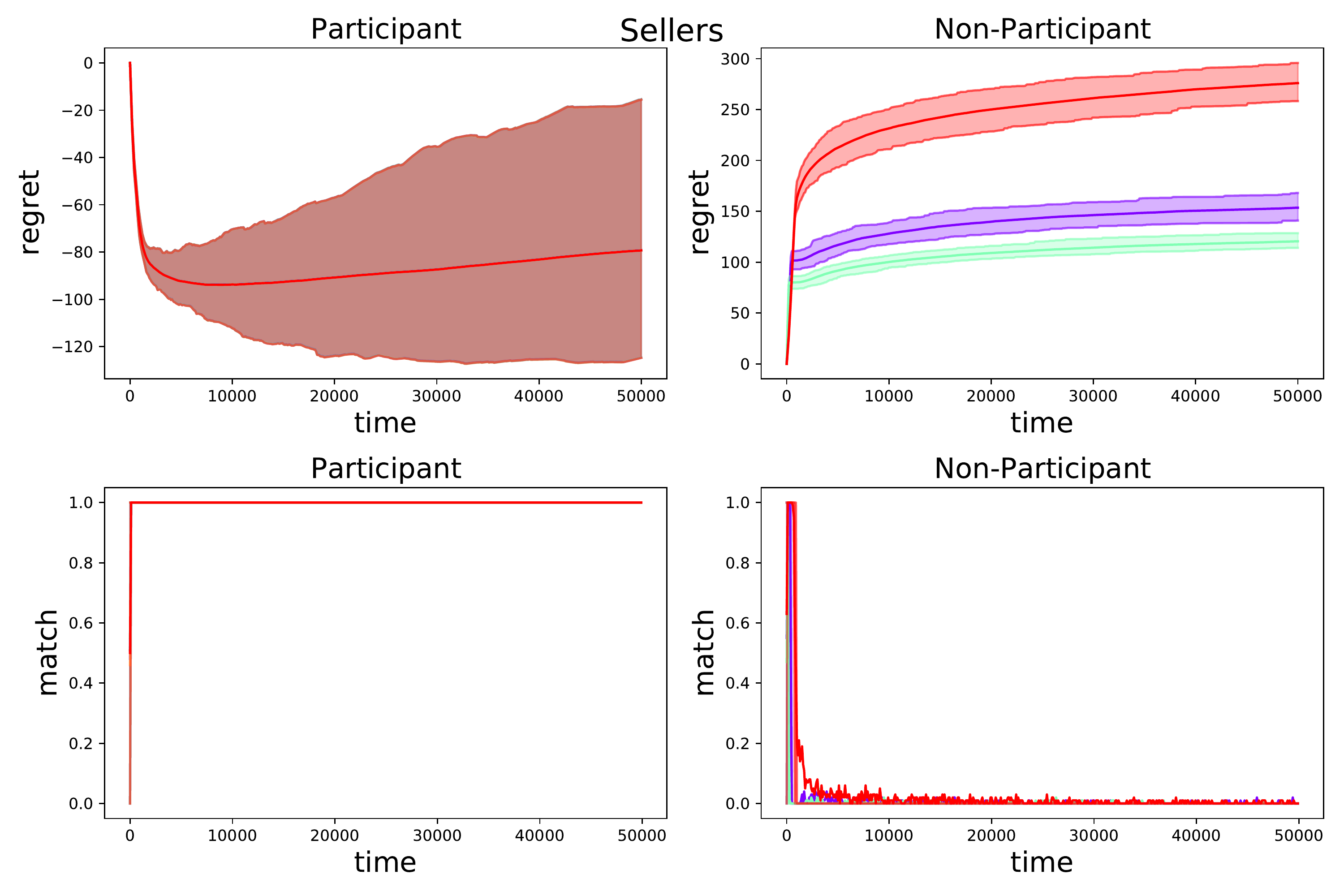}}
        \caption{Regret and Matching of Sellers}
    \end{subfigure}
    \begin{subfigure}[t]{0.34\linewidth}
        \raisebox{-\height}{\includegraphics[width=\textwidth]{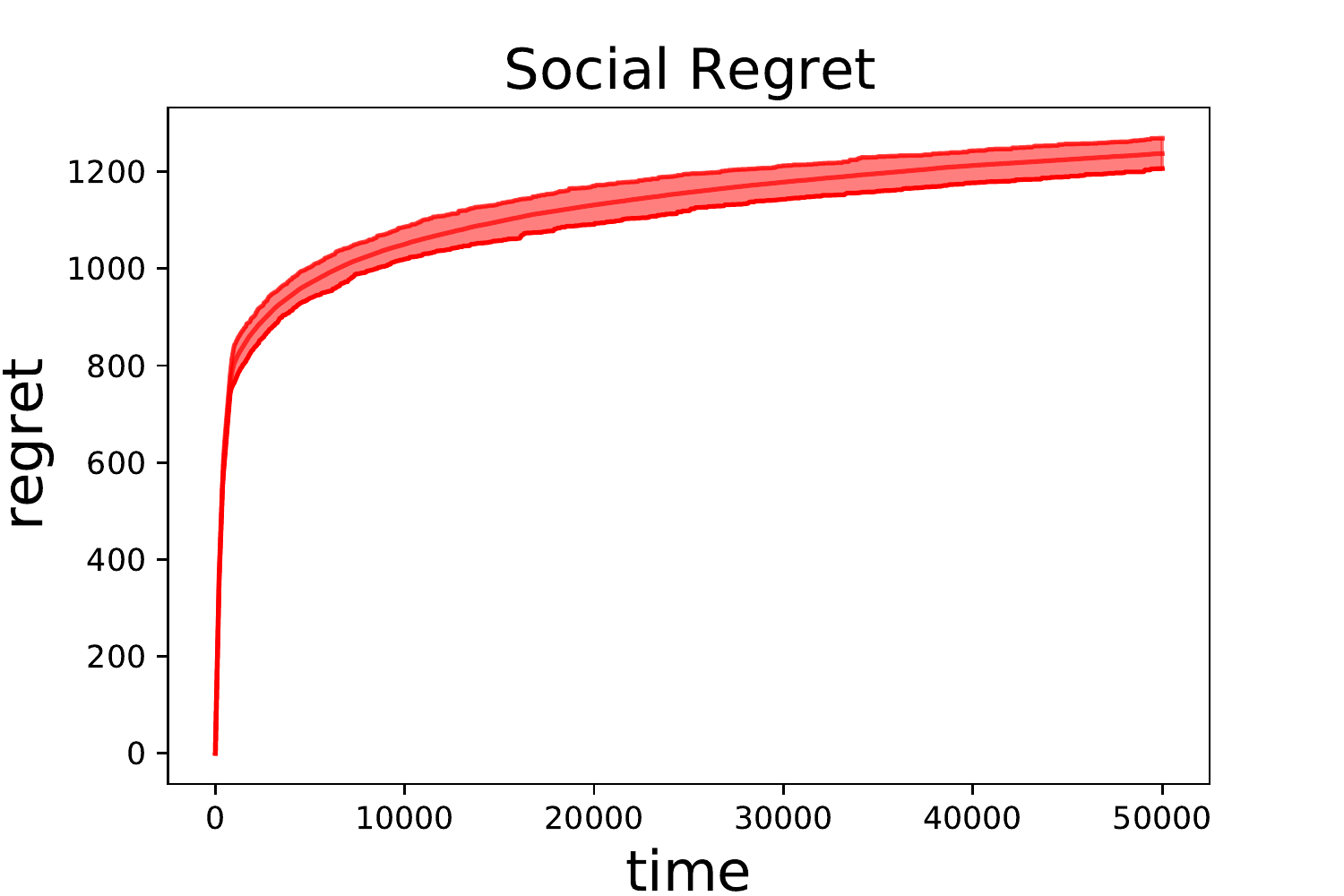}}
    \caption{Social Regret} 
    \end{subfigure}
    \hfill
    \begin{subfigure}[t]{0.64\linewidth}
        \raisebox{-\height}{\includegraphics[width=\textwidth]{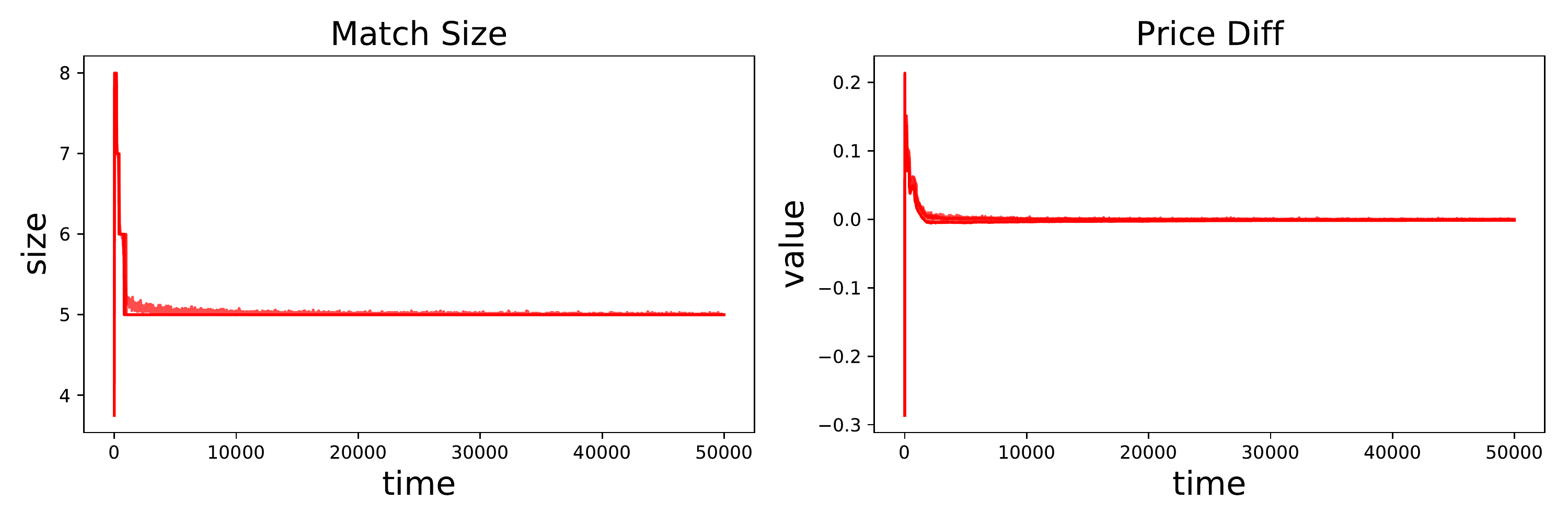}}
    \caption{Convergence of $K(t)$, $(p(t)- p^*)$} 
    \end{subfigure}
    \caption{Double Auction $N=8$, $M=8$, $K^*=5$, $\Delta = 0.2$, $\alpha_1=4$, and $\alpha_2=8$}
    \label{fig:Fig885app}
\end{figure}

\begin{figure}
     \centering
    \begin{subfigure}[t]{0.49\linewidth}
        \raisebox{-\height}{\includegraphics[width=\textwidth]{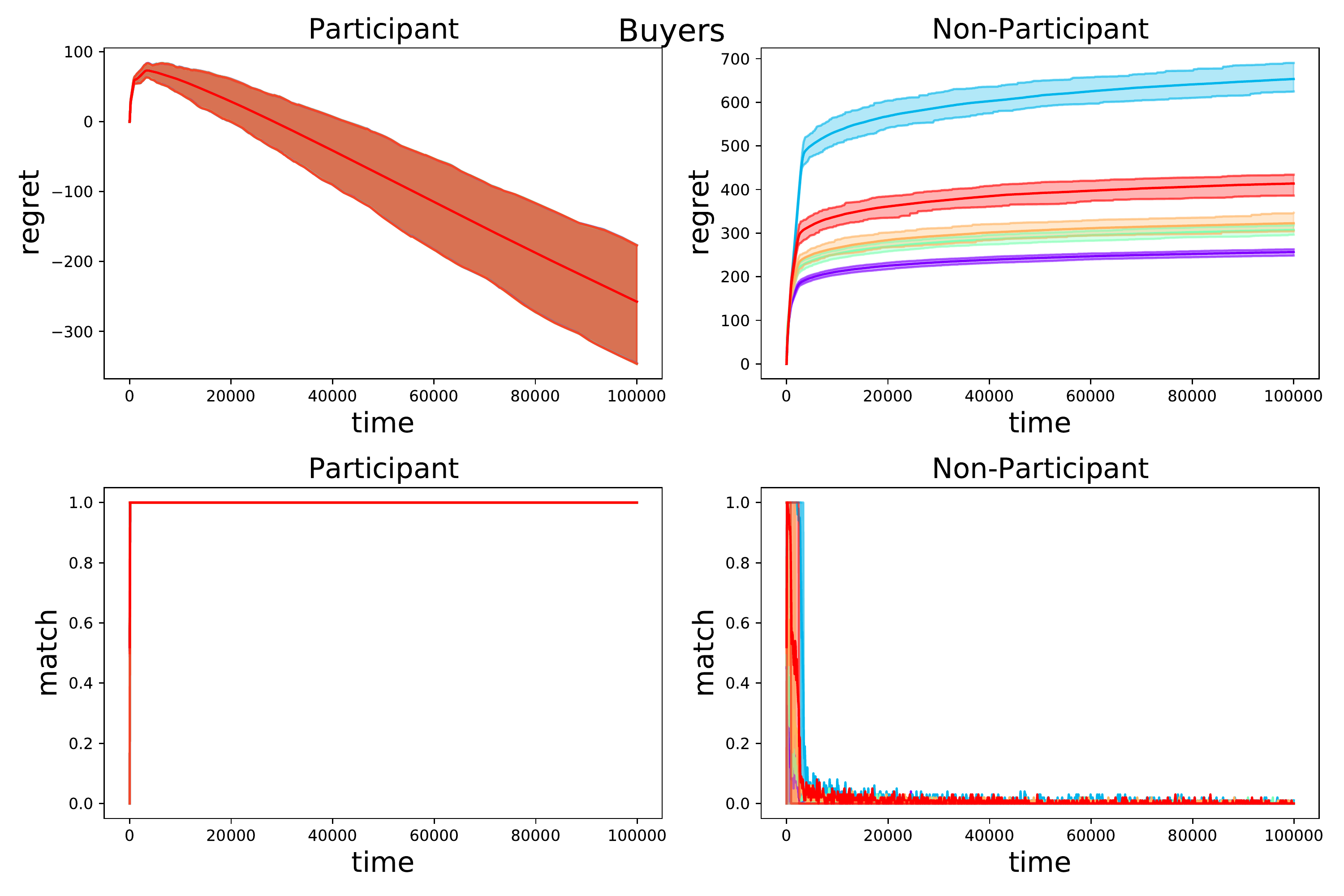}}
        \caption{Regret and Matching of Buyers}
    \end{subfigure}
    \hfill
    \begin{subfigure}[t]{0.49\linewidth}
        \raisebox{-\height}{\includegraphics[width=\textwidth]{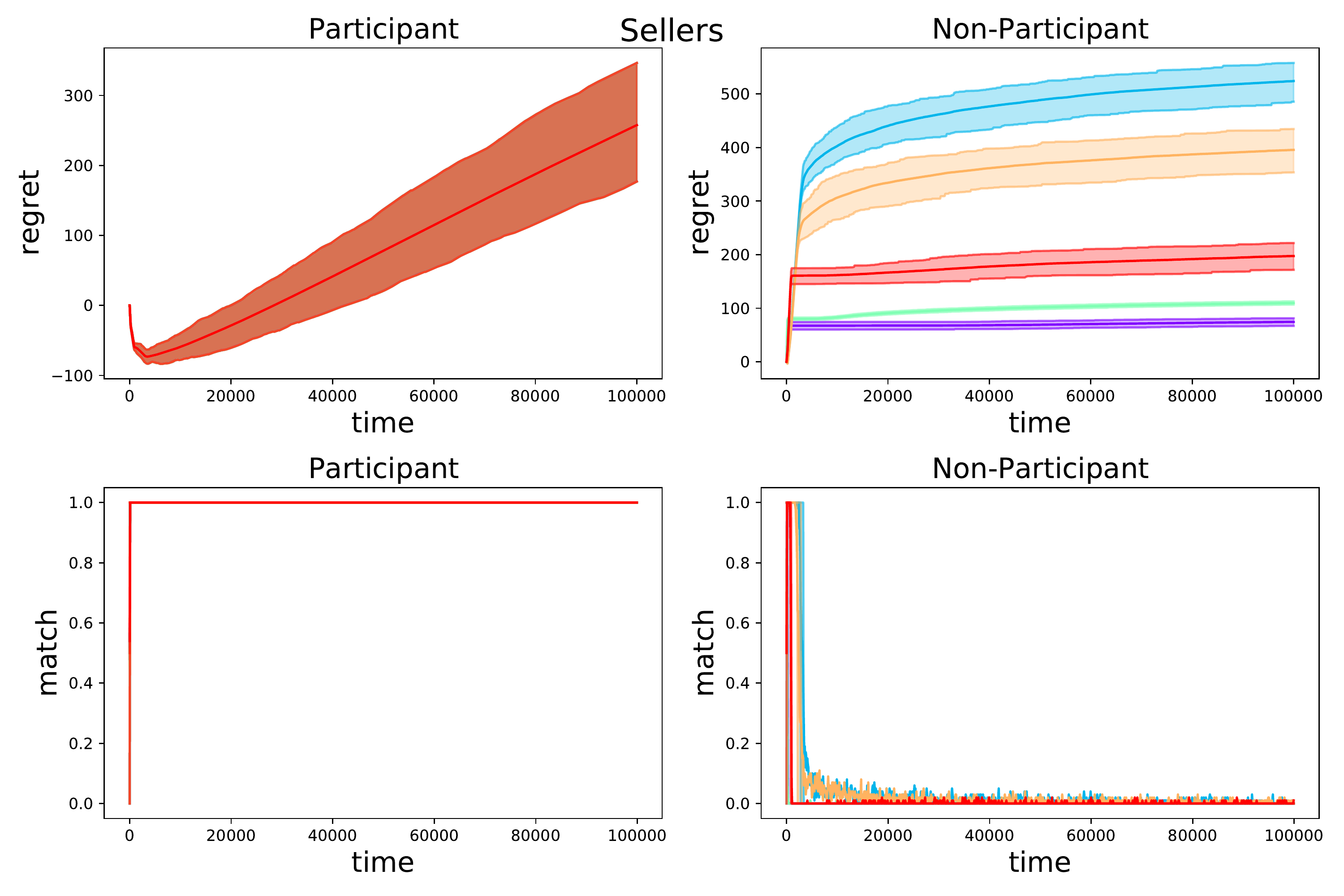}}
        \caption{Regret and Matching of Sellers}
    \end{subfigure}
    \begin{subfigure}[t]{0.34\linewidth}
        \raisebox{-\height}{\includegraphics[width=\textwidth]{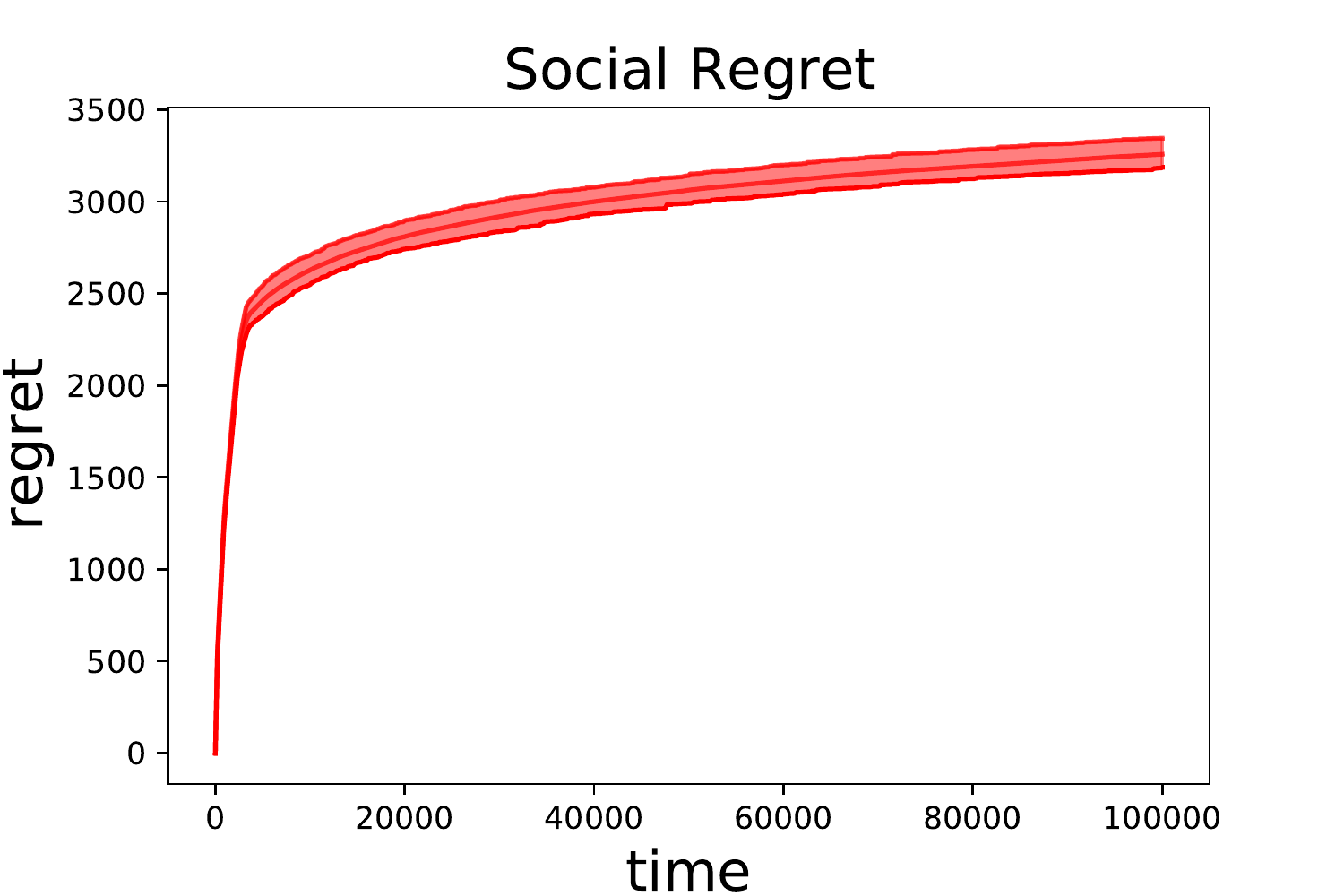}}
    \caption{Social Regret} 
    \end{subfigure}
    \hfill
    \begin{subfigure}[t]{0.64\linewidth}
        \raisebox{-\height}{\includegraphics[width=\textwidth]{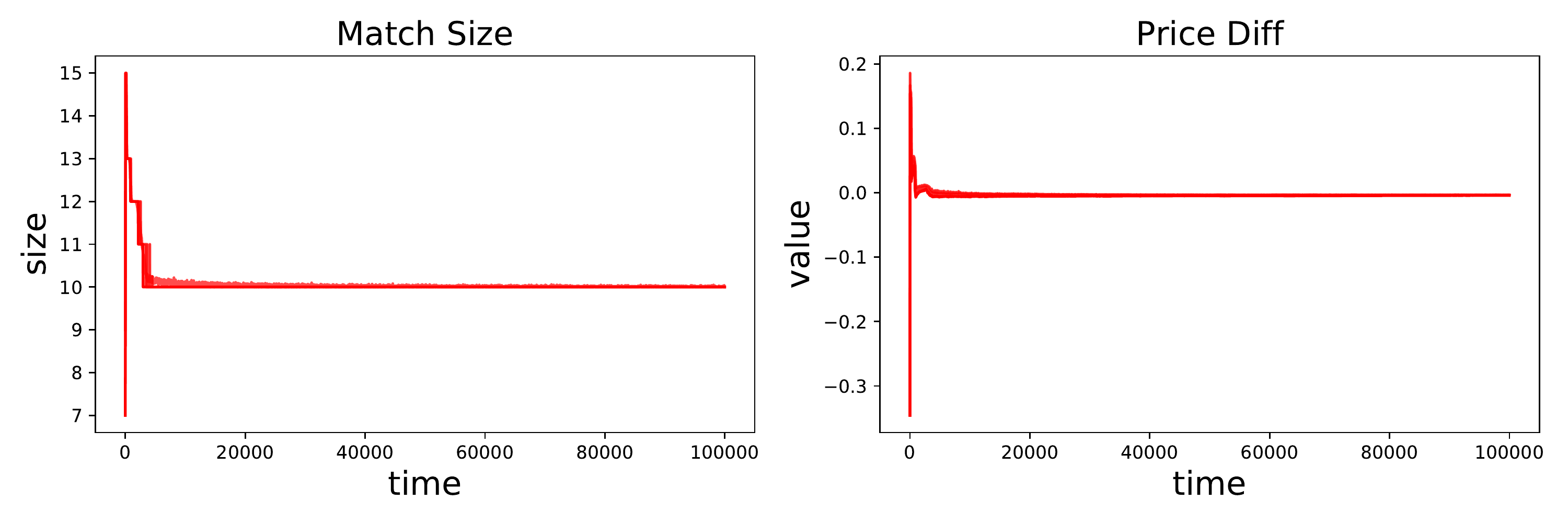}}
    \caption{Convergence of $K(t)$ and $(p(t)- p^*)$} 
    \end{subfigure}
    \caption{Double Auction $N=15$, $M=15$, $K^*=10$, $\Delta = 0.4$, $\alpha_1=4$, and $\alpha_2=8$}
    \label{fig:Fig151510}
\end{figure}

\subsection{Impact of the gap $\Delta$}
We first study the impact of changing the gap $\Delta$ on the performance, for a system of size $8x8$ and $K^* = 5$. In Figure~\ref{fig:Fig885_gap}, we study three gaps, $\Delta \in \{0.1, 0.15, 0.2\}$. Here, we observe that the convergence in the number of times an agent participates  is delayed as we decrease the gap. As a result, there is an increase in the regret of non-participants, and social regret which is dominated by $\Delta$, whereas the participant regret is not directly impacted by $\Delta$ (after the initial stage) as it is dominated by the $O(\sqrt{T\log(T)})$ term.  

\begin{figure}
     \centering
    \begin{subfigure}[t]{0.37\textwidth}
        \raisebox{-\height}{\includegraphics[width=\textwidth]{fig/1004_regret_buyer.pdf}}
        \caption{\tiny Regret and Matching of Buyers, $\Delta = 0.2$}
    \end{subfigure}
    \hfill
    \begin{subfigure}[t]{0.37\textwidth}
        \raisebox{-\height}{\includegraphics[width=\textwidth]{fig/1004_regret_seller.pdf}}
        \caption{\tiny  Regret and Matching of Sellers, $\Delta = 0.2$}
    \end{subfigure}
    \hfill
    \begin{subfigure}[t]{0.2\textwidth}
        \raisebox{-\height}{\includegraphics[width=\textwidth]{fig/1004_sw_regret.pdf}}
        \caption{\tiny  Social Regret, $\Delta = 0.2$}
    \end{subfigure}
    \begin{subfigure}[t]{0.37\textwidth}
        \raisebox{-\height}{\includegraphics[width=\textwidth]{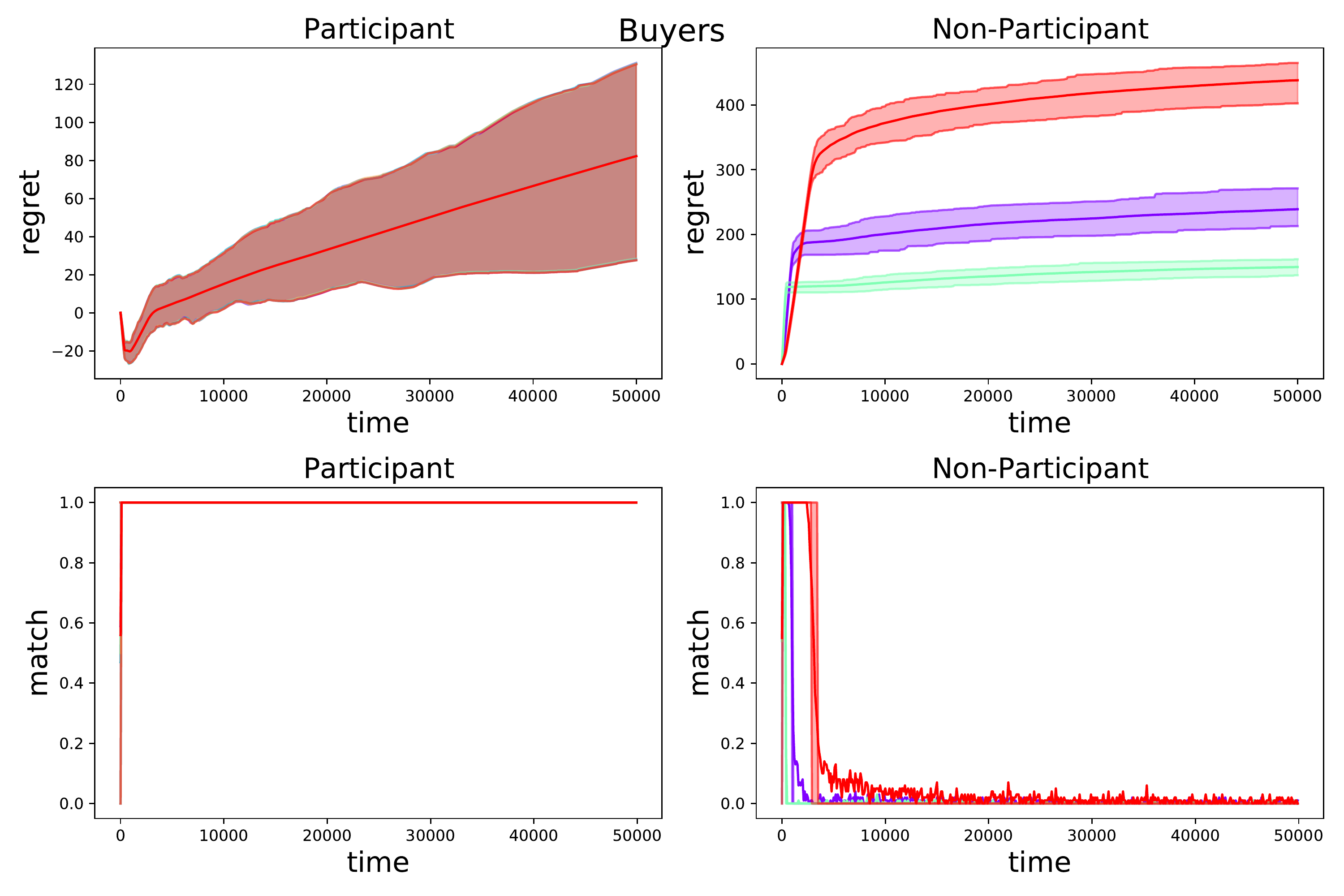}}
        \caption{\tiny Regret and Matching of Buyers, $\Delta = 0.15$}
    \end{subfigure}
    \hfill
    \begin{subfigure}[t]{0.37\textwidth}
        \raisebox{-\height}{\includegraphics[width=\textwidth]{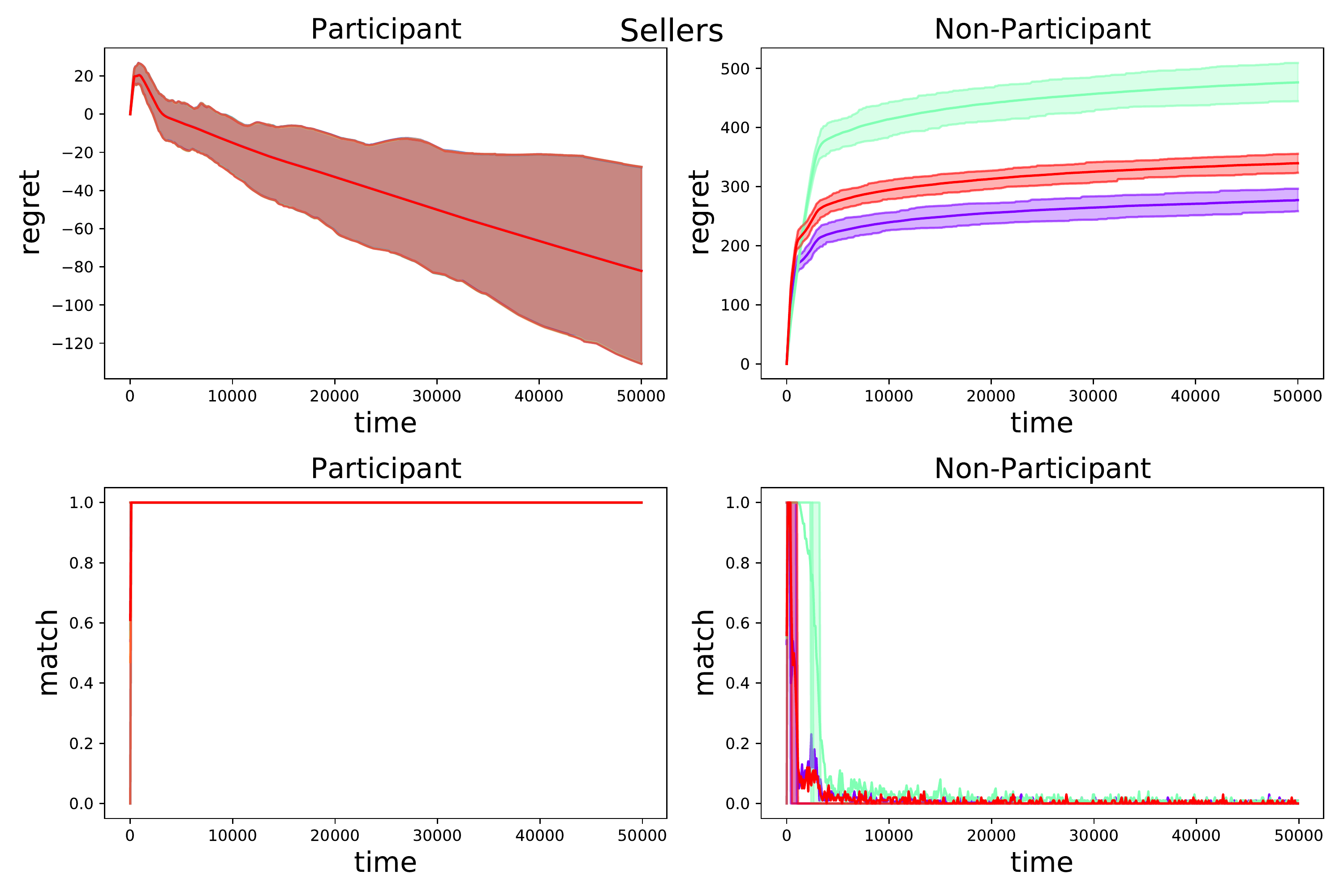}}
        \caption{\tiny Regret and Matching of Sellers, $\Delta = 0.15$}
    \end{subfigure}
    \hfill
    \begin{subfigure}[t]{0.2\textwidth}
        \raisebox{-\height}{\includegraphics[width=\textwidth]{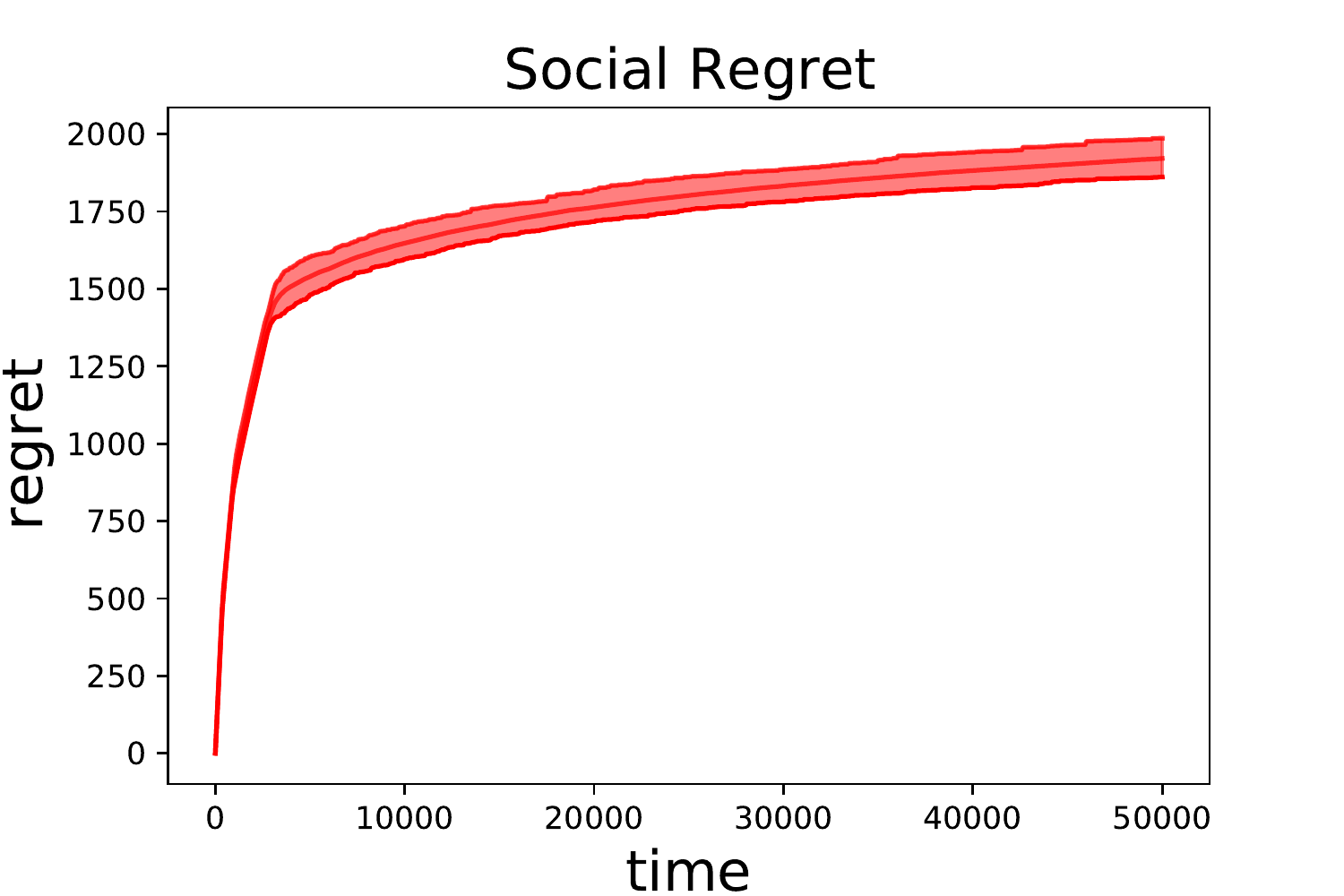}}
        \caption{\tiny  Social Regret, $\Delta = 0.15$}
    \end{subfigure}
    \begin{subfigure}[t]{0.37\textwidth}
        \raisebox{-\height}{\includegraphics[width=\textwidth]{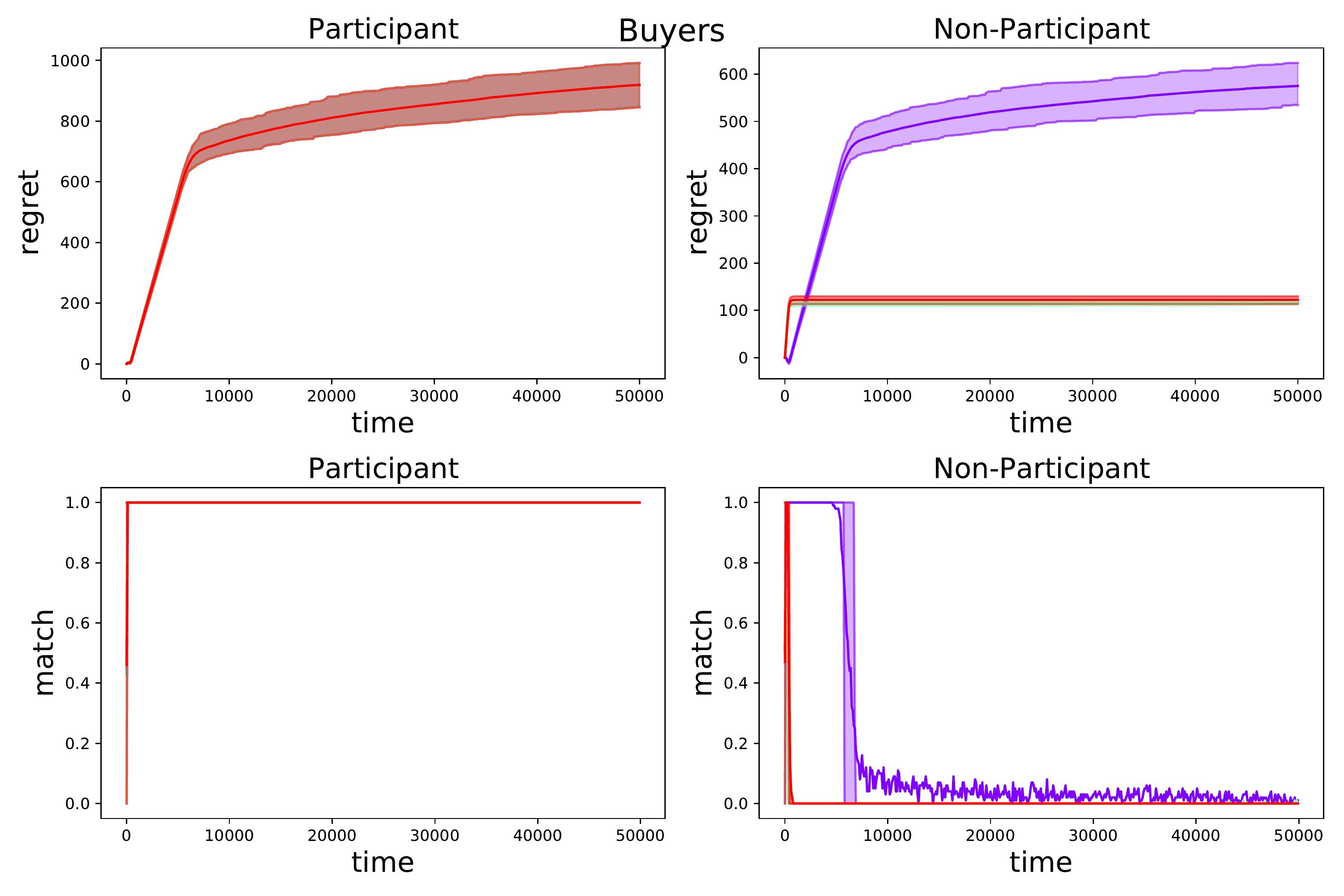}}
        \caption{\tiny Regret and Matching of Buyers, $\Delta = 0.1$}
    \end{subfigure}
    \hfill
    \begin{subfigure}[t]{0.37\textwidth}
        \raisebox{-\height}{\includegraphics[width=\textwidth]{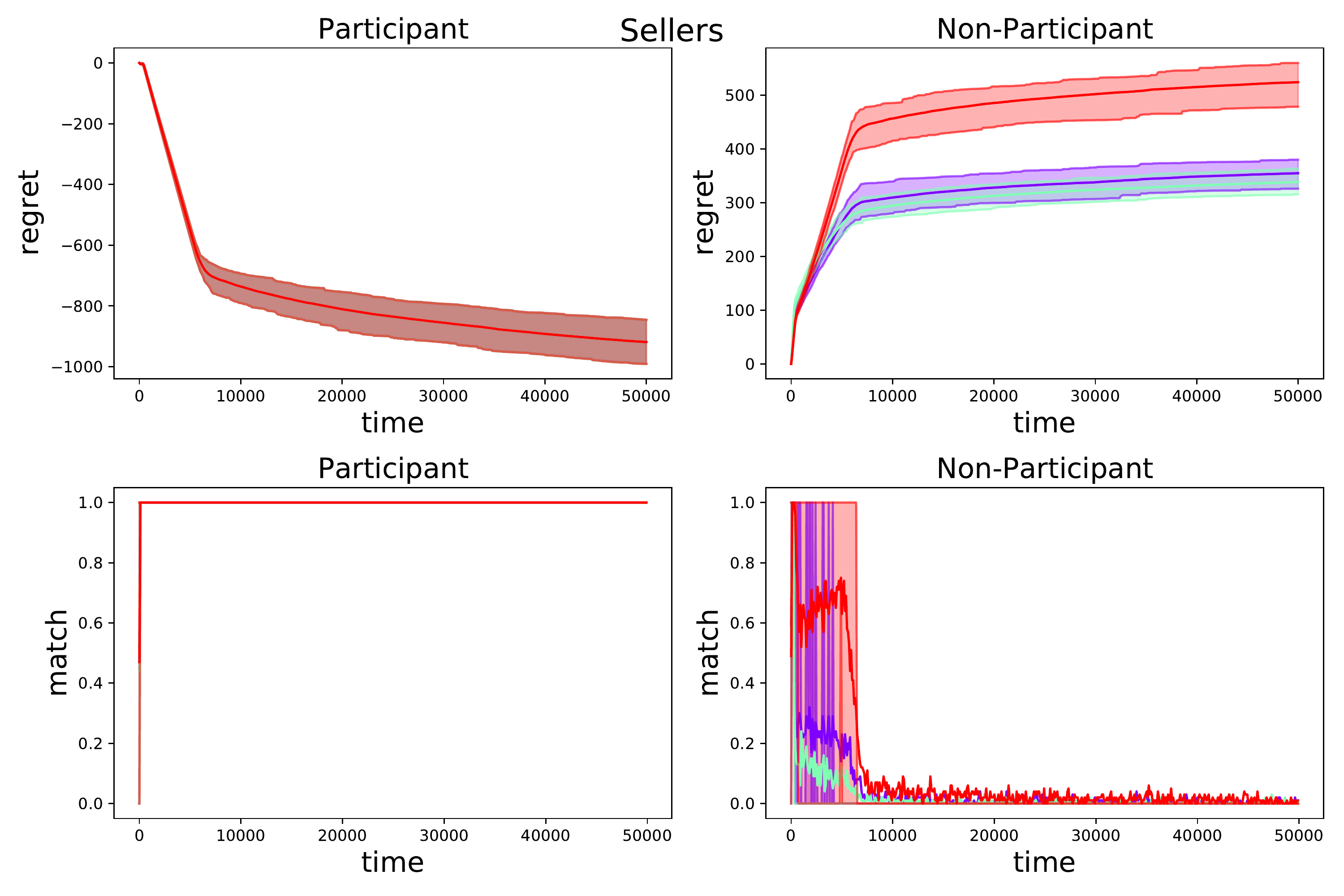}}
        \caption{\tiny Regret and Matching of Sellers, $\Delta = 0.1$}
    \end{subfigure}
    \hfill
    \begin{subfigure}[t]{0.2\textwidth}
        \raisebox{-\height}{\includegraphics[width=\textwidth]{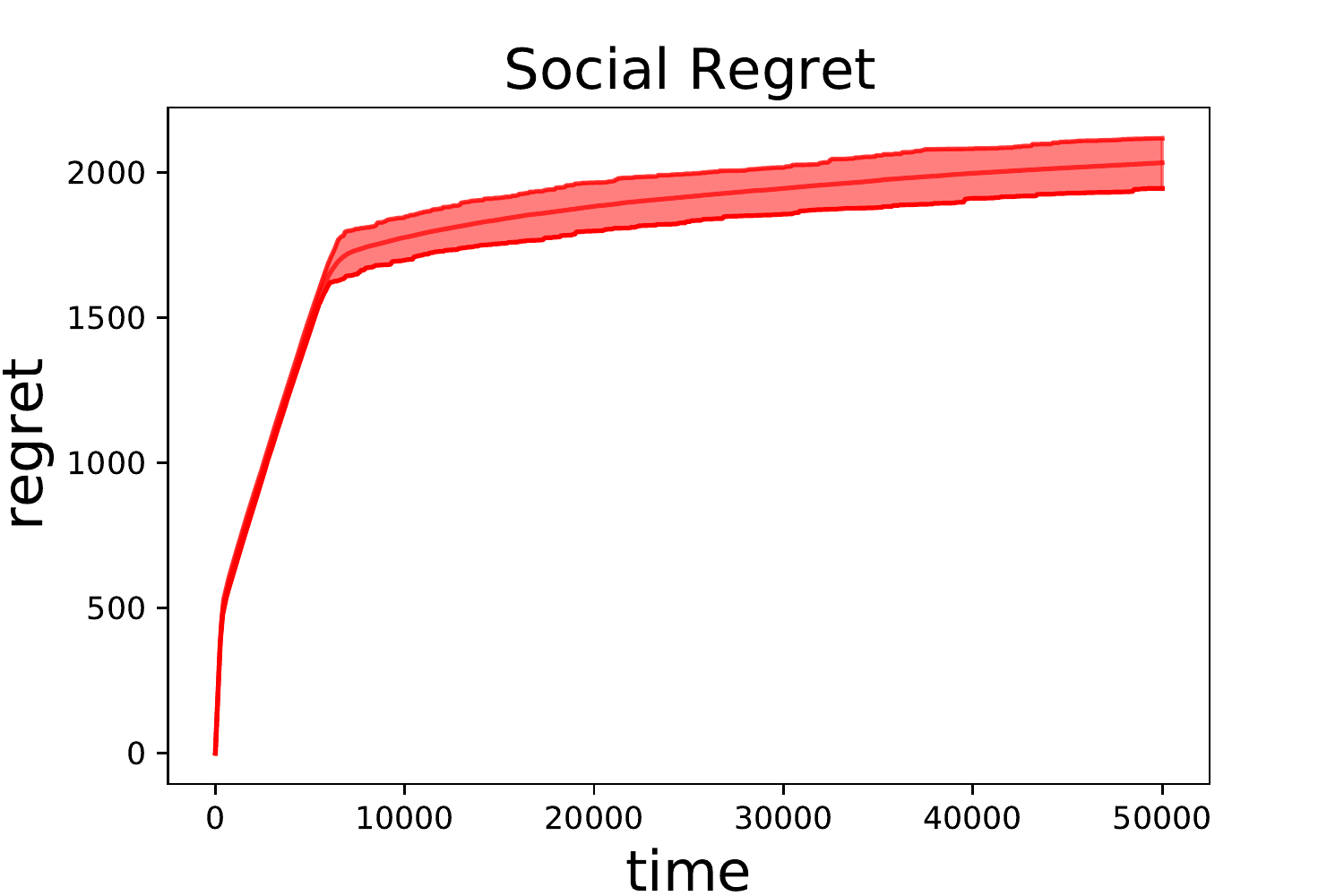}}
        \caption{\tiny  Social Regret, $\Delta = 0.1$}
    \end{subfigure}
    \caption{Double Auction with varying $\Delta$ ($N=M=8$, $K^*=5$,  $\alpha_1=4$, and $\alpha_2=8$). }
    \label{fig:Fig885_gap}
\end{figure}

\subsection{Impact of the sizes $M$, $N$, and $K^*$}
We now study the impact of the size of the system, and number of true participants.  First, with $M = N$, we vary $M$ while keeping the   $(M-K^*)$ fixed. We observe in Figure~\ref{fig:Fig885_M} that the regret of the agents do not vary a lot, which is as expected from the theory. Next, we keep $M=N=8$ fixed, while varying the participant size $K^*$. We see as $(M-K^*)$ increases, the regret increases in Figure~\ref{fig:Fig885_K} as suggested by the theory.  

\begin{figure}
     \centering
    \begin{subfigure}[t]{0.37\textwidth}
        \raisebox{-\height}{\includegraphics[width=\textwidth]{fig/1004_regret_buyer.pdf}}
        \caption{\tiny Regret and Matching of Buyers, $N\mathtt{=}M\mathtt{=}8$, $K^*\mathtt{=}5$}
    \end{subfigure}
    \hfill
    \begin{subfigure}[t]{0.37\textwidth}
        \raisebox{-\height}{\includegraphics[width=\textwidth]{fig/1004_regret_seller.pdf}}
        \caption{\tiny Regret and Matching of Sellers, $N\mathtt{=}M\mathtt{=}8$, $K^*\mathtt{=}5$}
    \end{subfigure}
    \hfill
    \begin{subfigure}[t]{0.2\textwidth}
        \raisebox{-\height}{\includegraphics[width=\textwidth]{fig/1004_sw_regret.pdf}}
        \caption{\tiny Social Regret,\\ $N=M=8$, $K^*=5$}
    \end{subfigure}
    \begin{subfigure}[t]{0.37\textwidth}
        \raisebox{-\height}{\includegraphics[width=\textwidth]{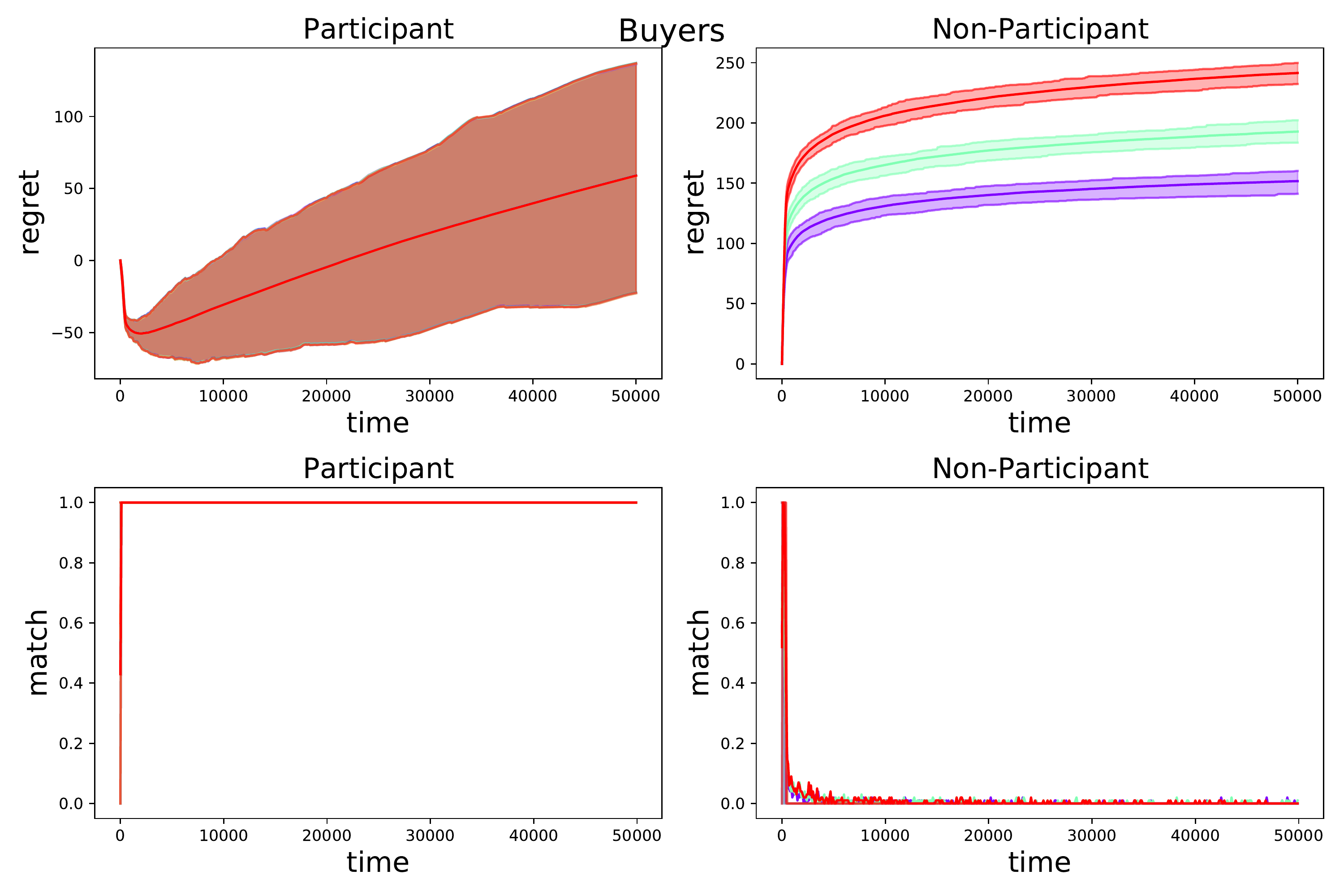}}
        \caption{\tiny Regret and Matching of Buyers, $N\mathtt{=}M\mathtt{=}10$, $K^*\mathtt{=}7$}
    \end{subfigure}
    \hfill
    \begin{subfigure}[t]{0.37\textwidth}
        \raisebox{-\height}{\includegraphics[width=\textwidth]{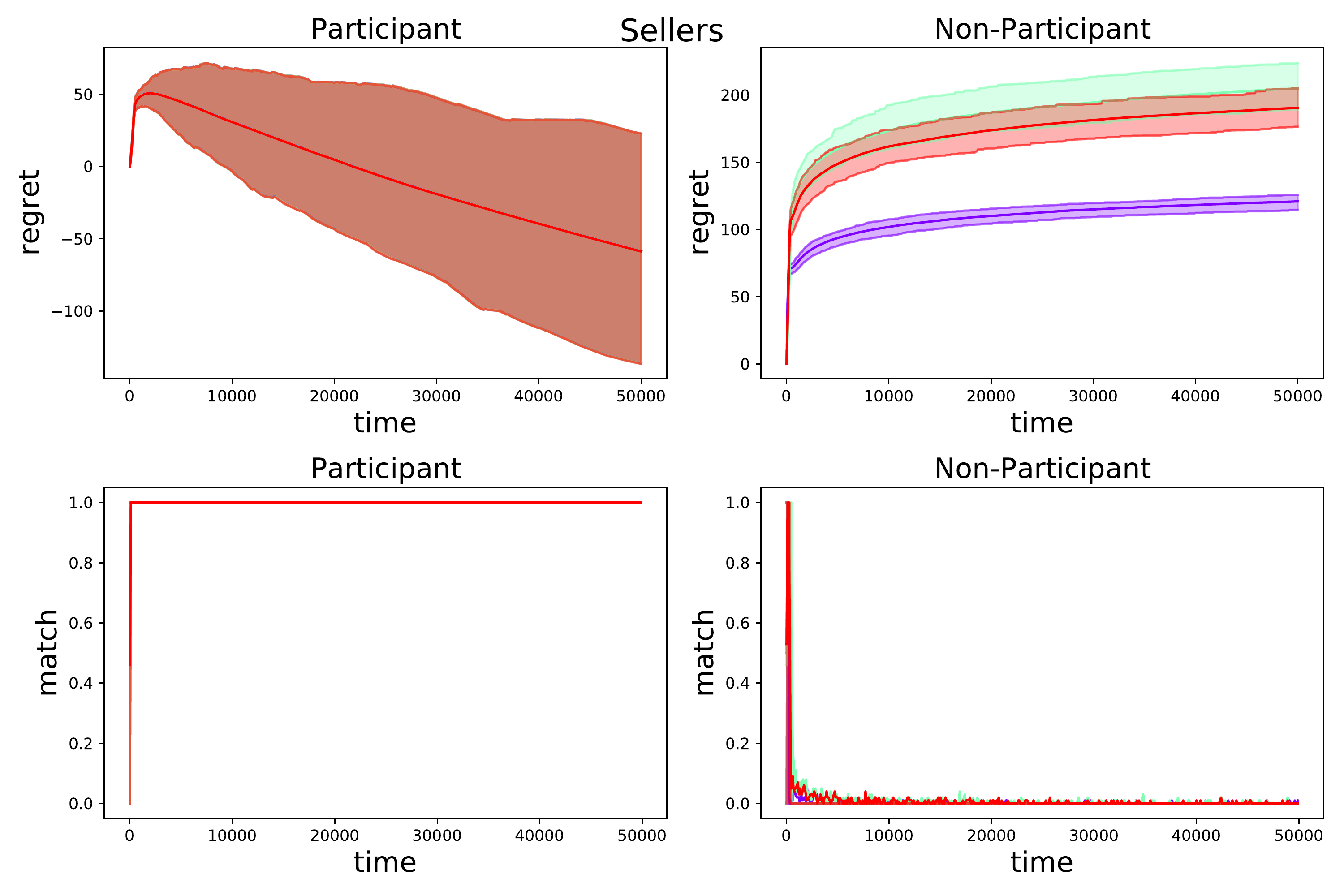}}
        \caption{\tiny Regret and Matching of Sellers, $N\mathtt{=}M\mathtt{=}10$, $K^*\mathtt{=}7$}
    \end{subfigure}
    \hfill
    \begin{subfigure}[t]{0.2\textwidth}
        \raisebox{-\height}{\includegraphics[width=\textwidth]{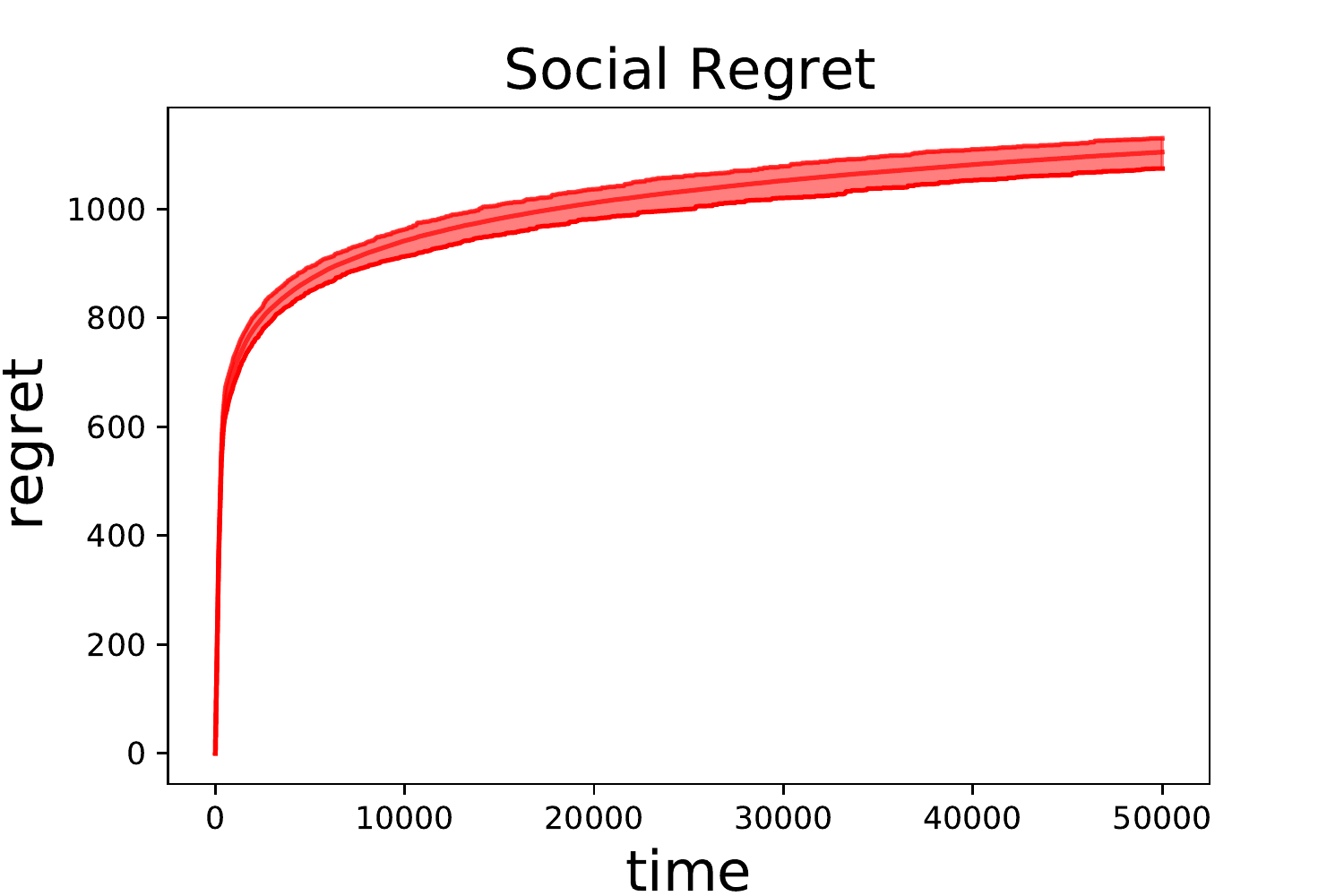}}
        \caption{\tiny Social Regret,\\ $N=M=10$, $K^*=7$}
    \end{subfigure}
    \begin{subfigure}[t]{0.37\textwidth}
        \raisebox{-\height}{\includegraphics[width=\textwidth]{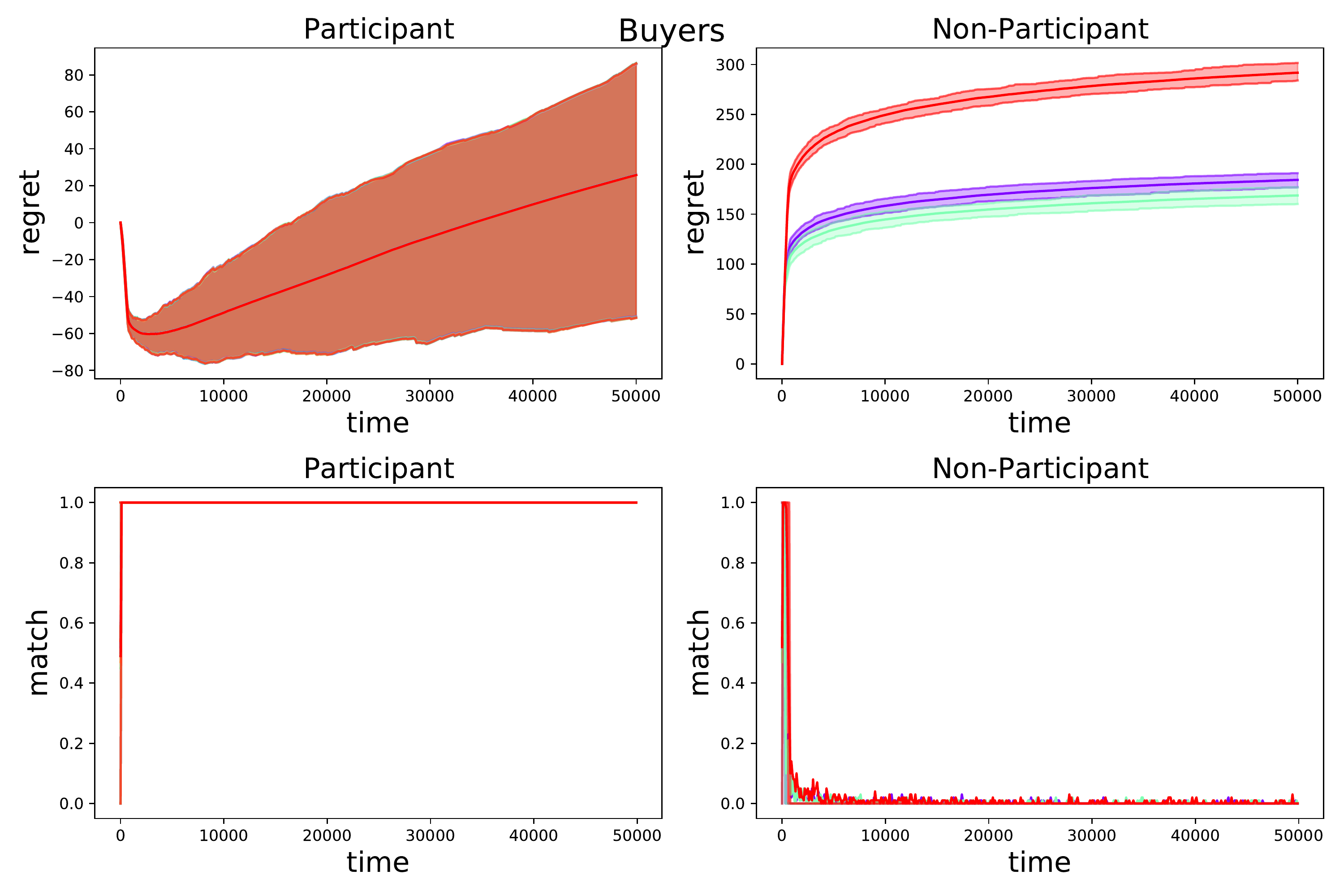}}
        \caption{\tiny Regret and Matching of Buyers, $N\mathtt{=}M\mathtt{=}12$, $K^*\mathtt{=}9$}
    \end{subfigure}
    \hfill
    \begin{subfigure}[t]{0.37\textwidth}
        \raisebox{-\height}{\includegraphics[width=\textwidth]{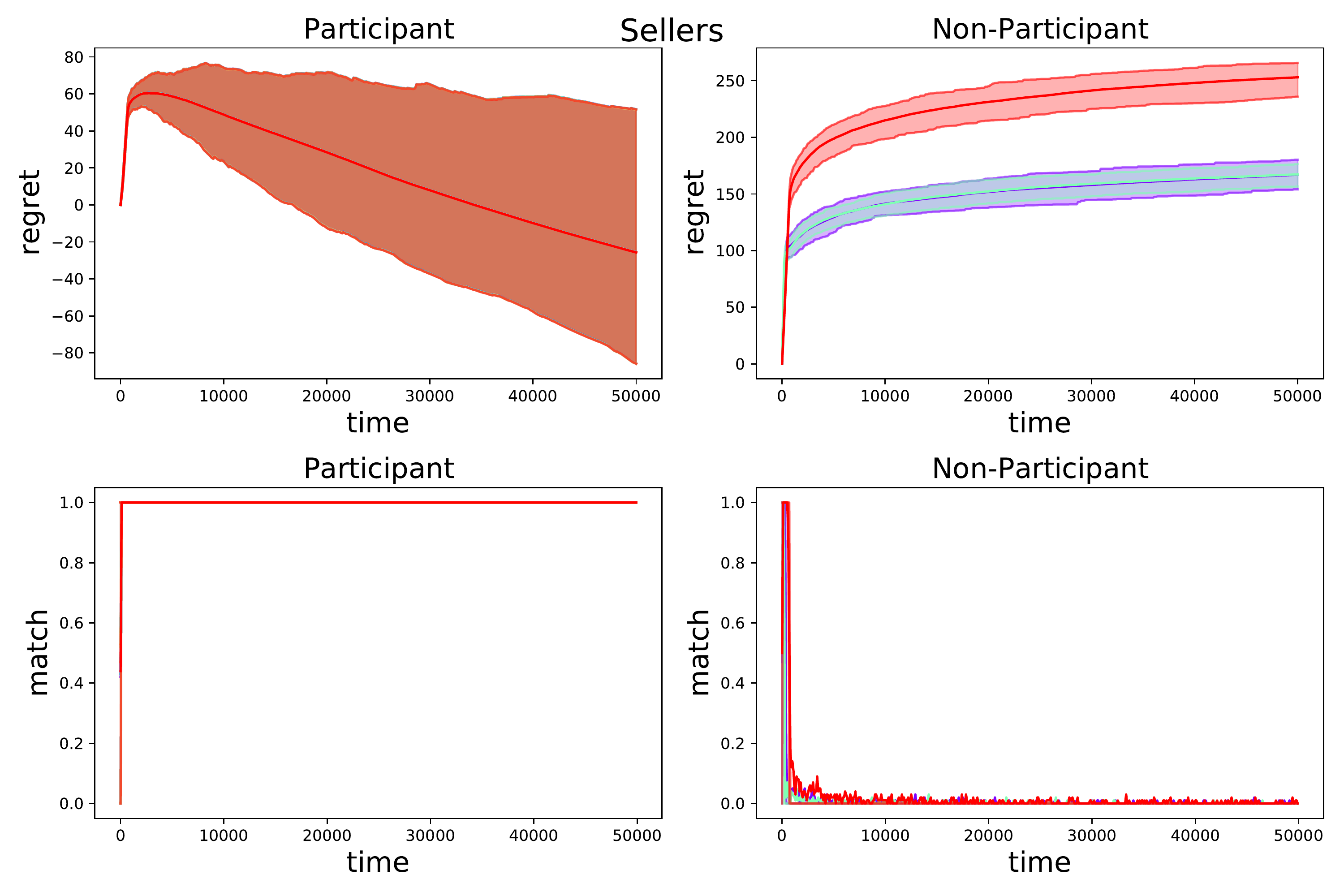}}
        \caption{\tiny Regret and Matching of Sellers, $N\mathtt{=}M\mathtt{=}12$, $K^*\mathtt{=}9$}
    \end{subfigure}
    \hfill
    \begin{subfigure}[t]{0.2\textwidth}
        \raisebox{-\height}{\includegraphics[width=\textwidth]{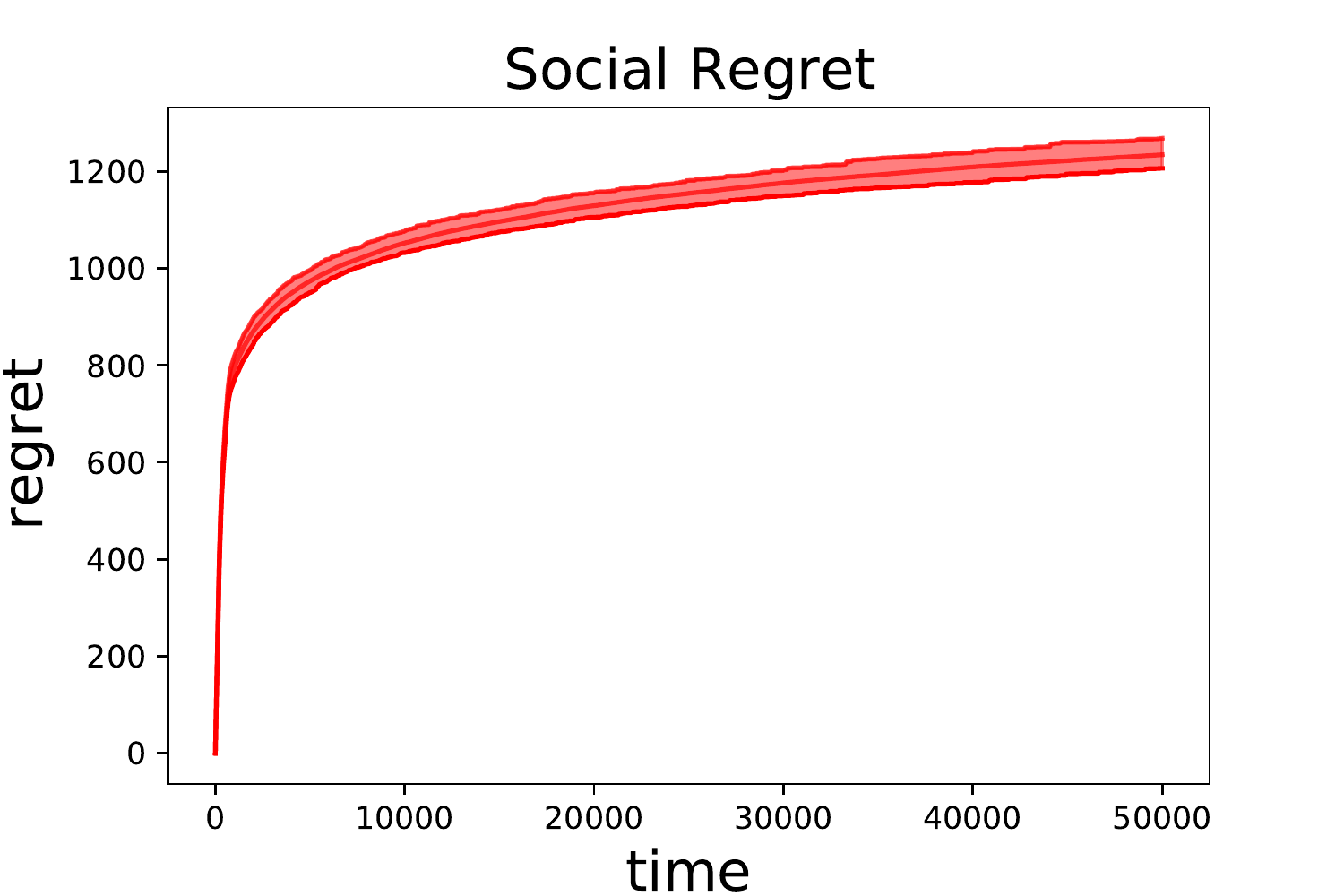}}
        \caption{\tiny Social Regret,\\ $N=M=12$, $K^*=9$}
    \end{subfigure}
    \caption{Double Auction with varying $N=M$, and fixed $(M-K^*)=3$ ($\Delta = 0.2$,  $\alpha_1=4$, $\alpha_2=8$)}
    \label{fig:Fig885_M}
\end{figure}

\begin{figure}
     \centering
    \begin{subfigure}[t]{0.37\textwidth}
        \raisebox{-\height}{\includegraphics[width=\textwidth]{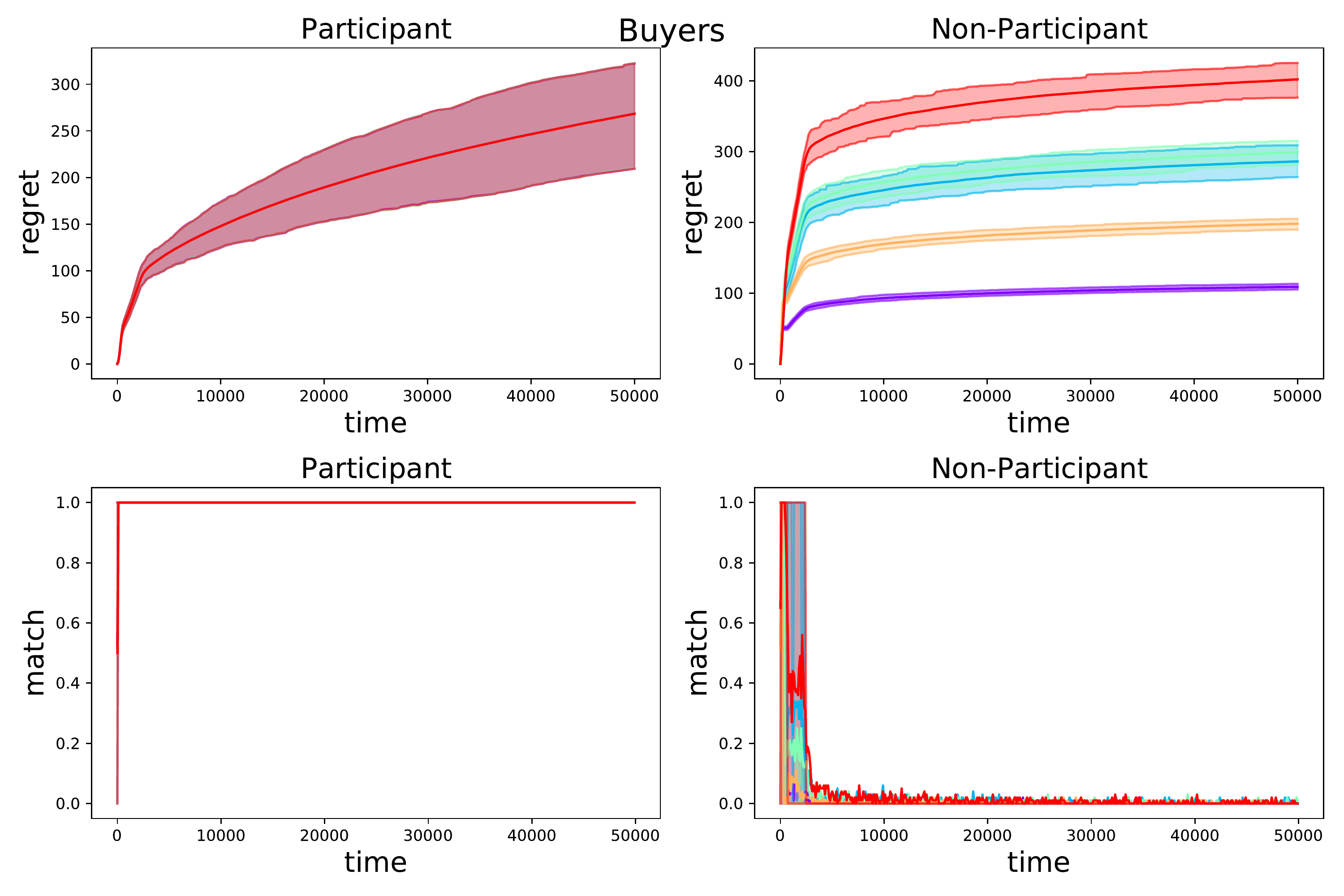}}
        \caption{\tiny Regret and Matching of Buyers, $K^*=3$}
    \end{subfigure}
    \hfill
    \begin{subfigure}[t]{0.37\textwidth}
        \raisebox{-\height}{\includegraphics[width=\textwidth]{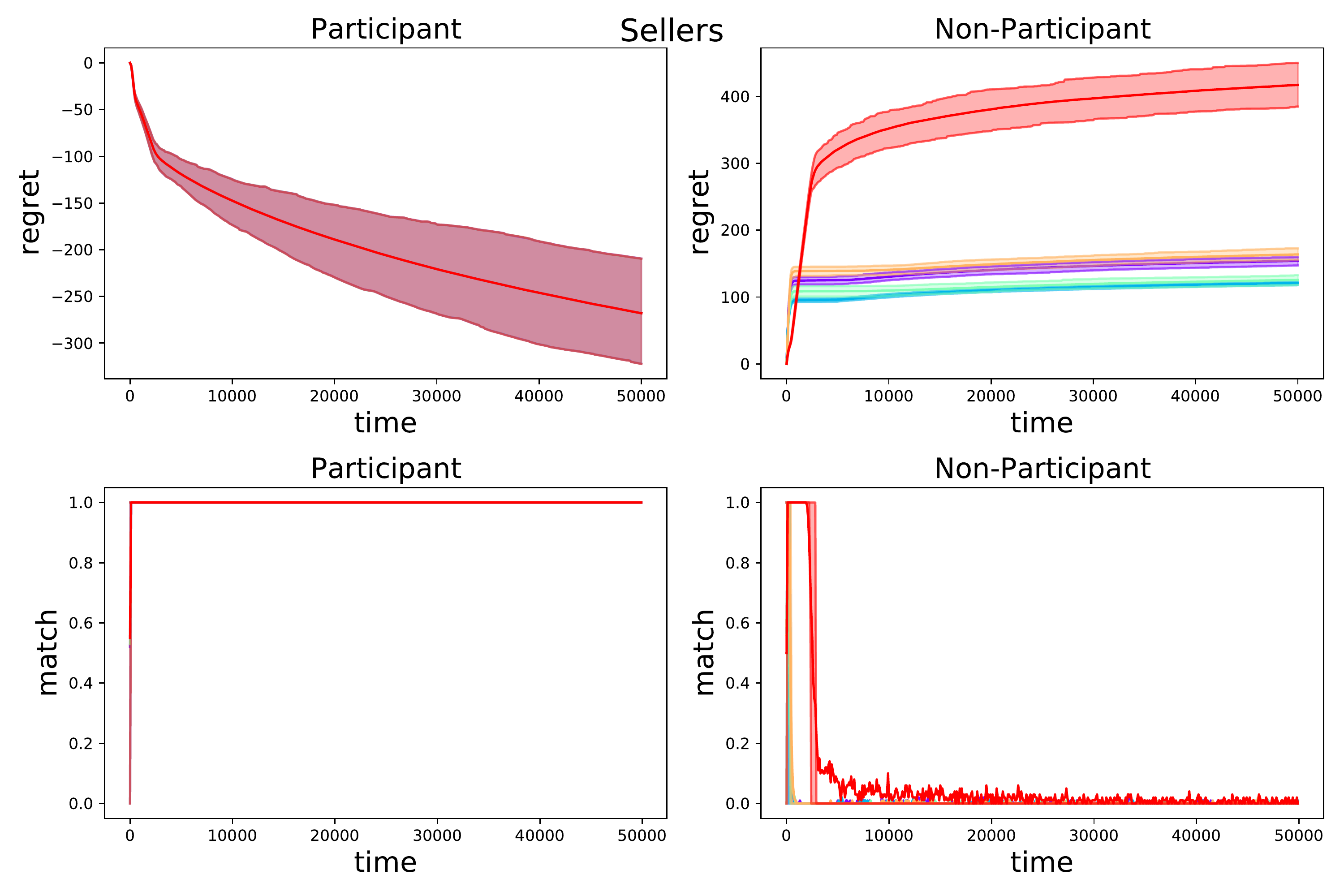}}
        \caption{\tiny Regret and Matching of Sellers,$K^*=3$}
    \end{subfigure}
    \hfill
    \begin{subfigure}[t]{0.2\textwidth}
        \raisebox{-\height}{\includegraphics[width=\textwidth]{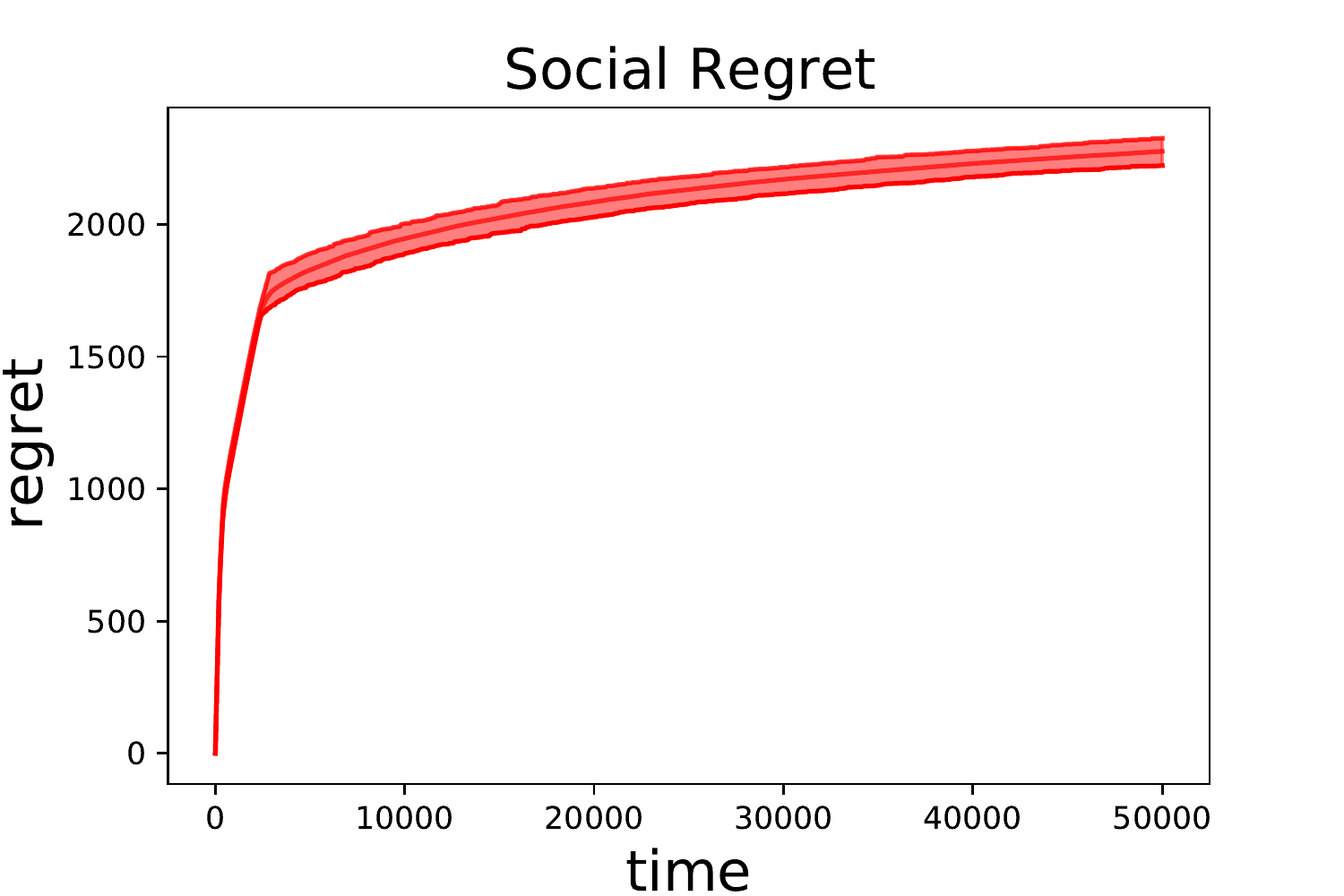}}
        \caption{\tiny Social Regret, $K^*=3$}
    \end{subfigure}
    \begin{subfigure}[t]{0.37\textwidth}
        \raisebox{-\height}{\includegraphics[width=\textwidth]{fig/1004_regret_buyer.pdf}}
        \caption{\tiny Regret and Matching of Buyers, $K^*=5$}
    \end{subfigure}
    \hfill
    \begin{subfigure}[t]{0.37\textwidth}
        \raisebox{-\height}{\includegraphics[width=\textwidth]{fig/1004_regret_seller.pdf}}
        \caption{\tiny Regret and Matching of Sellers,  $K^*=5$}
    \end{subfigure}
    \hfill
    \begin{subfigure}[t]{0.2\textwidth}
        \raisebox{-\height}{\includegraphics[width=\textwidth]{fig/1004_sw_regret.pdf}}
        \caption{\tiny Social Regret, $K^*=5$}
    \end{subfigure}
    \begin{subfigure}[t]{0.37\textwidth}
        \raisebox{-\height}{\includegraphics[width=\textwidth]{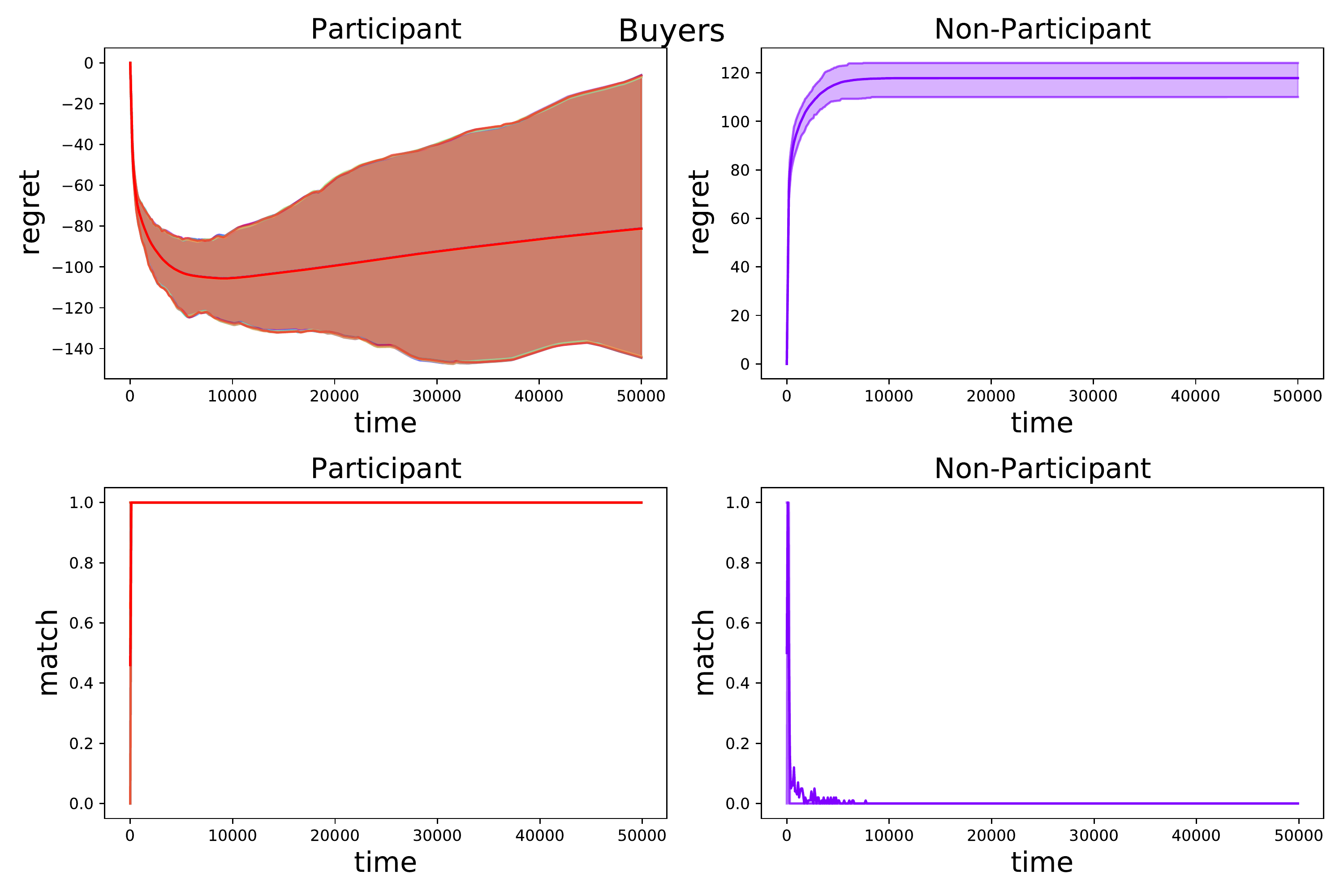}}
        \caption{\tiny Regret and Matching of Buyers, $K^*=7$}
    \end{subfigure}
    \hfill
    \begin{subfigure}[t]{0.37\textwidth}
        \raisebox{-\height}{\includegraphics[width=\textwidth]{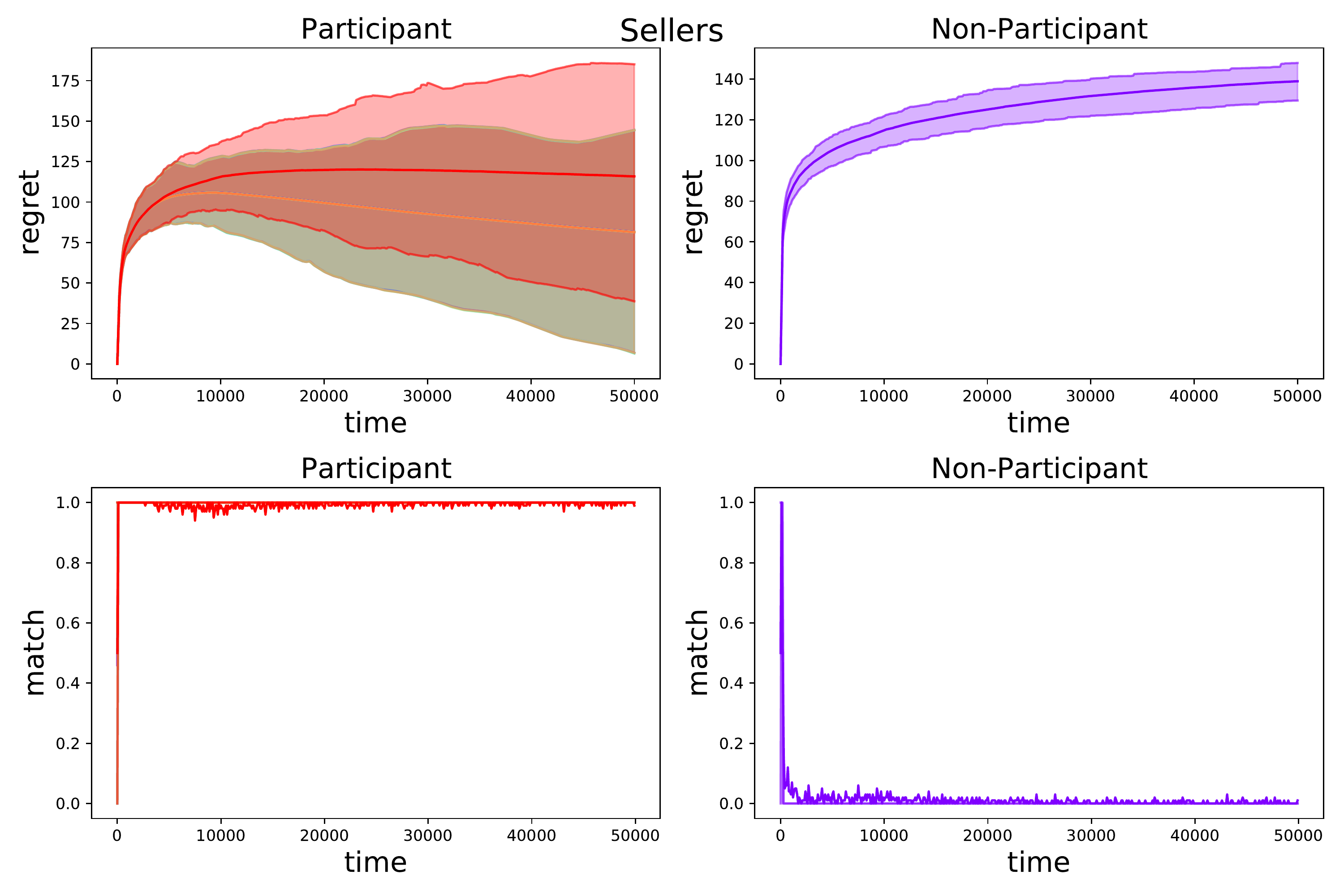}}
        \caption{\tiny Regret and Matching of Sellers, $K^*=7$}
    \end{subfigure}
    \hfill
    \begin{subfigure}[t]{0.2\textwidth}
        \raisebox{-\height}{\includegraphics[width=\textwidth]{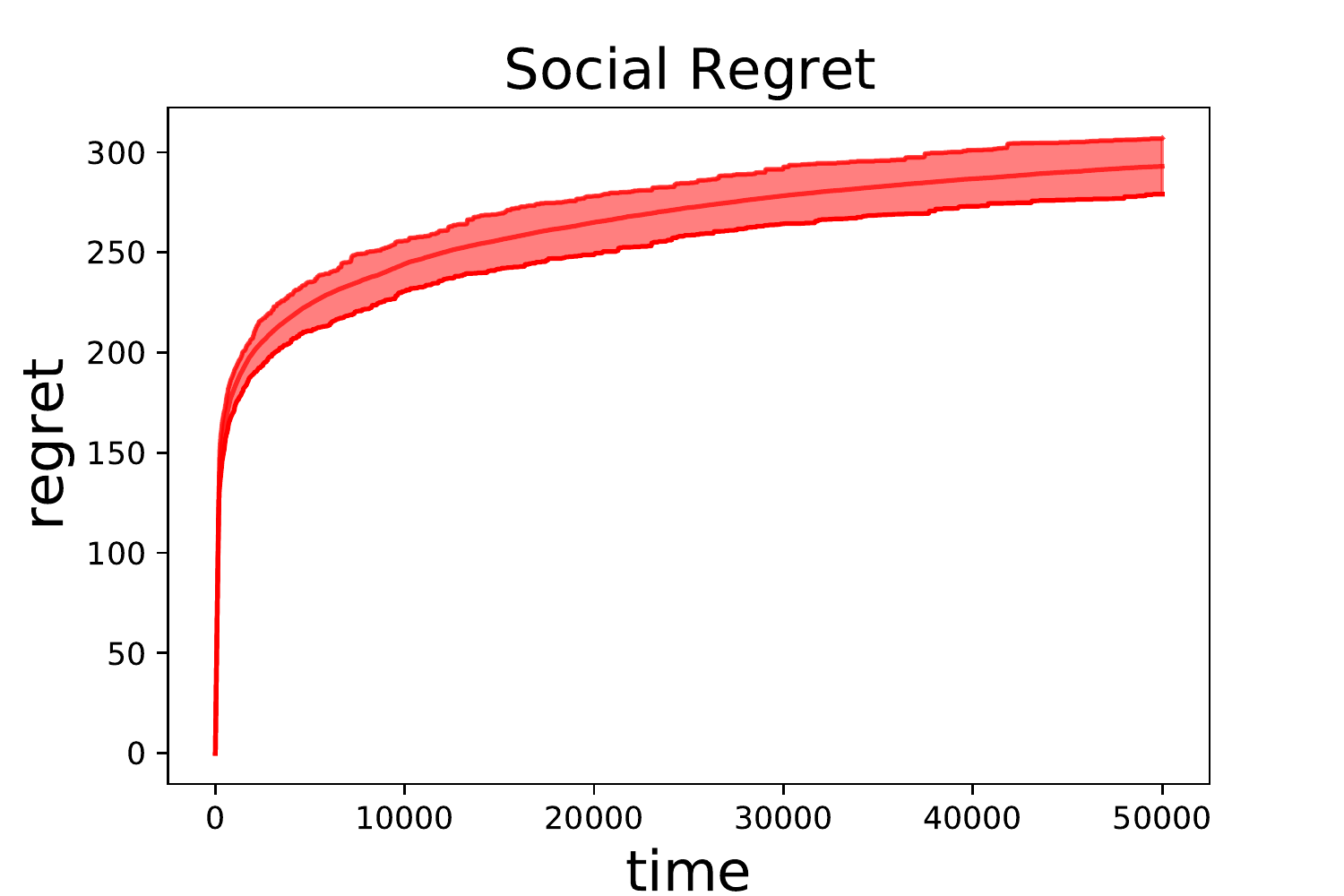}}
        \caption{\tiny Social Regret, $K^*=7$}
    \end{subfigure}
    \caption{Double Auction with varying $K^*$ ($N=M=8$, $\Delta = 0.2$,  $\alpha_1=4$, $\alpha_2=8$)}
    \label{fig:Fig885_K}
\end{figure}

\subsection{Impact of size difference in $M$ and $N$.}
In this part, we assess the size difference between number of sellers $M$, and number of buyers $N$. As the buyer and seller are symmetric, we keep $N=8$, and $K^*=5$ fixed. We next vary $M \in \{5, 8, 15\}$ to study the effect. In Figure~\ref{fig:Fig885_MN}, we observe the regret of non-participant buyers and sellers increases with increase in $M$. This is mainly because, the  presence of more non-participant sellers is allowing the non-participant buyers to match with these non-participant sellers more increasing the regret. Similar observation holds for the social regret. The participant regret which is mostly dominated by price estimation error is mostly unaffected by this.

\begin{figure}
     \centering
    \begin{subfigure}[t]{0.37\textwidth}
        \raisebox{-\height}{\includegraphics[width=\textwidth]{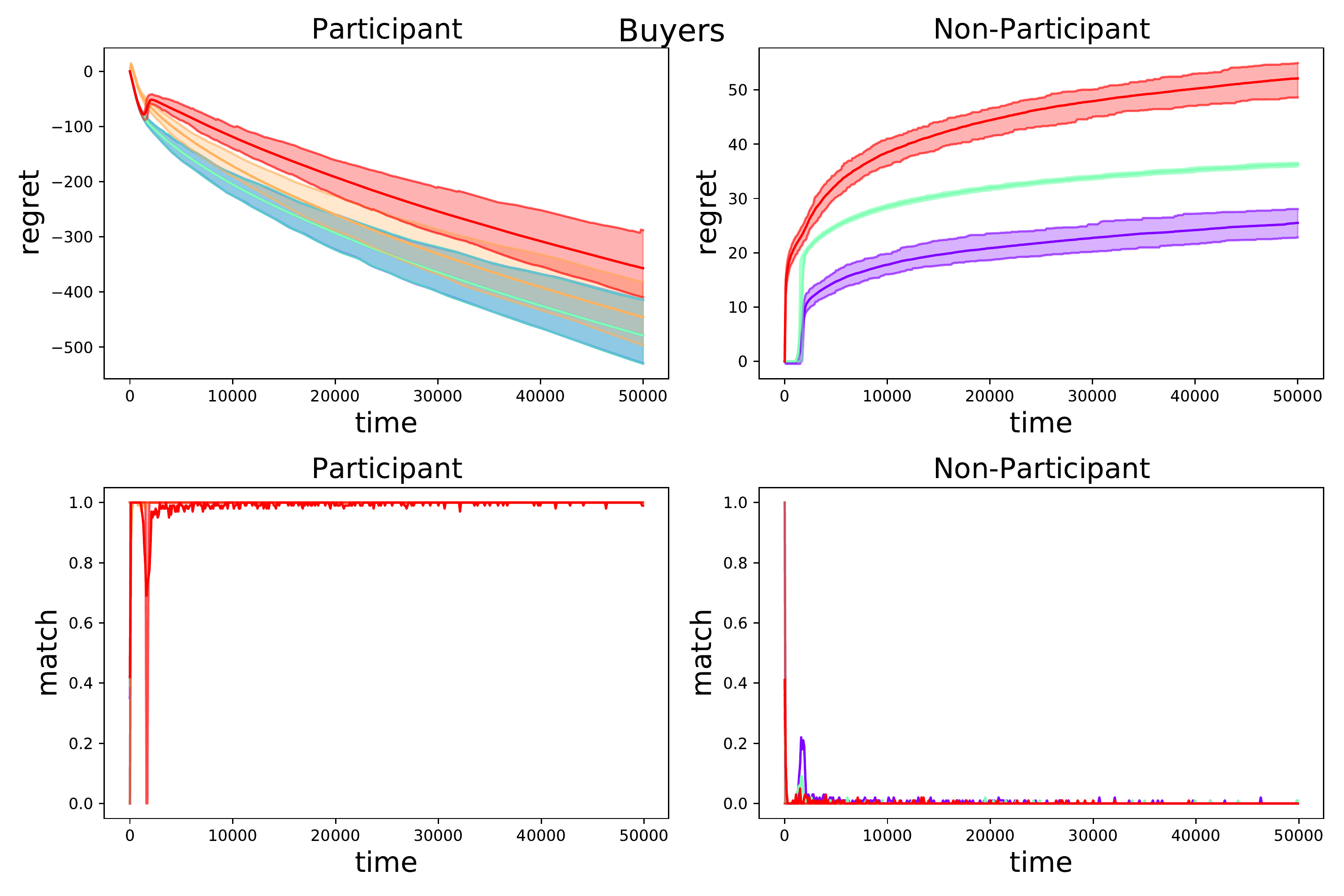}}
        \caption{\tiny Regret and Matching of Buyers, $M=5$}
    \end{subfigure}
    \hfill
    \begin{subfigure}[t]{0.37\textwidth}
        \raisebox{-\height}{\includegraphics[width=\textwidth]{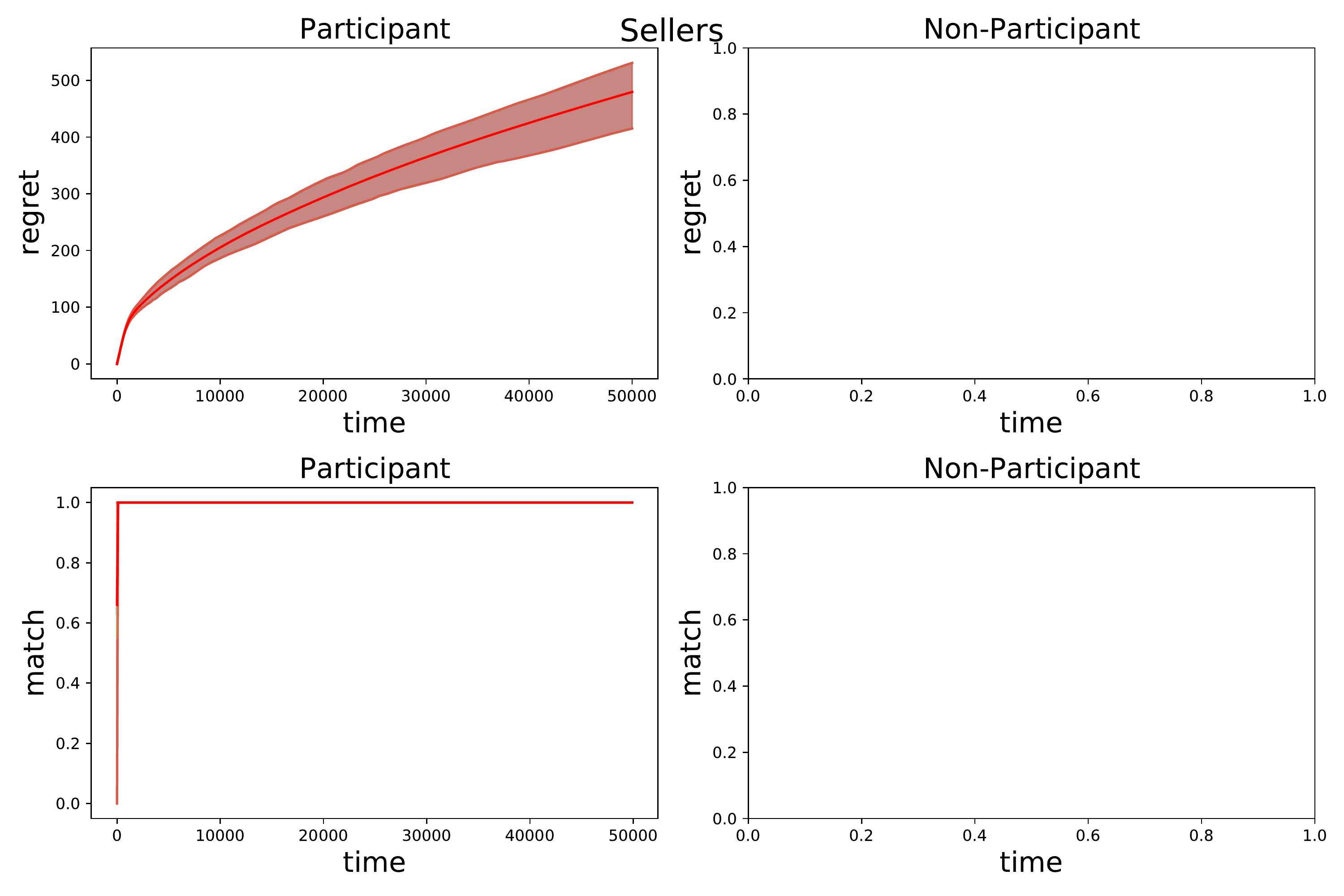}}
        \caption{\tiny Regret and Matching of Sellers,$M=5$}
    \end{subfigure}
    \hfill
    \begin{subfigure}[t]{0.2\textwidth}
        \raisebox{-\height}{\includegraphics[width=\textwidth]{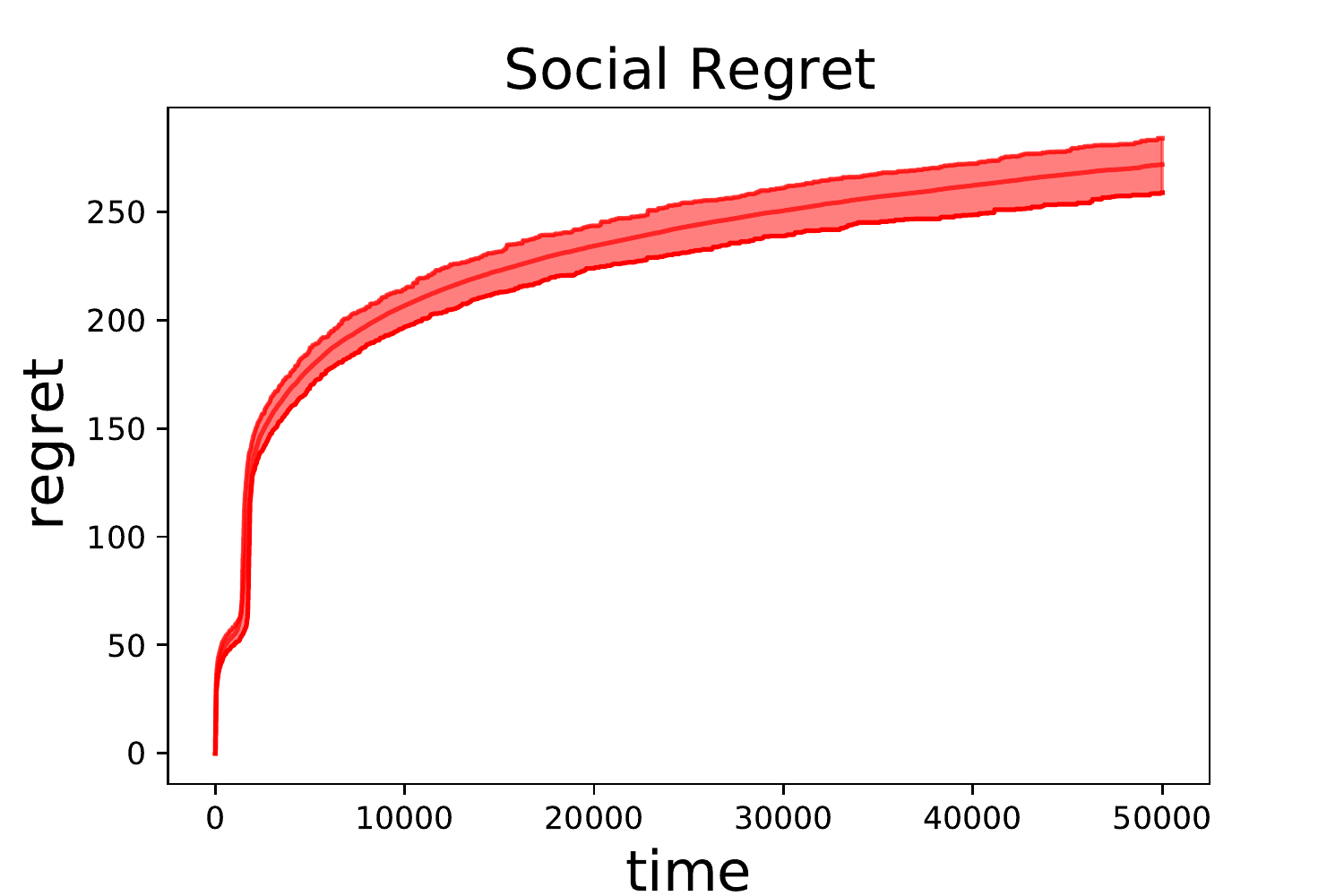}}
        \caption{\tiny Social Regret, $M=5$}
    \end{subfigure}
    \begin{subfigure}[t]{0.37\textwidth}
        \raisebox{-\height}{\includegraphics[width=\textwidth]{fig/1004_regret_buyer.pdf}}
        \caption{\tiny Regret and Matching of Buyers, $M=8$}
    \end{subfigure}
    \hfill
    \begin{subfigure}[t]{0.37\textwidth}
        \raisebox{-\height}{\includegraphics[width=\textwidth]{fig/1004_regret_seller.pdf}}
        \caption{\tiny Regret and Matching of Sellers,  $M=8$}
    \end{subfigure}
    \hfill
    \begin{subfigure}[t]{0.2\textwidth}
        \raisebox{-\height}{\includegraphics[width=\textwidth]{fig/1004_sw_regret.pdf}}
        \caption{\tiny Social Regret, $M=8$}
    \end{subfigure}
    \begin{subfigure}[t]{0.37\textwidth}
        \raisebox{-\height}{\includegraphics[width=\textwidth]{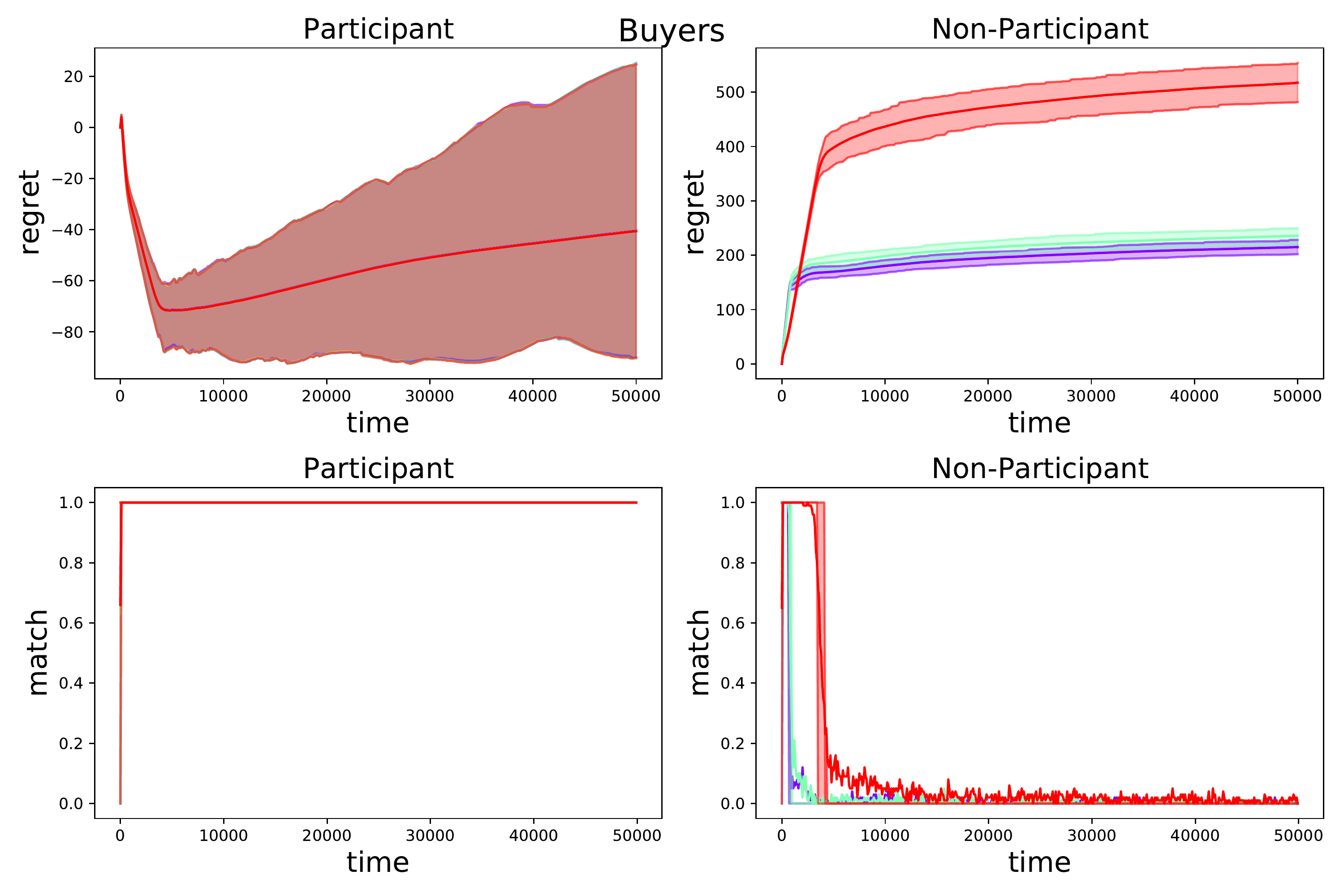}}
        \caption{\tiny Regret and Matching of Buyers, $M=15$}
    \end{subfigure}
    \hfill
    \begin{subfigure}[t]{0.37\textwidth}
        \raisebox{-\height}{\includegraphics[width=\textwidth]{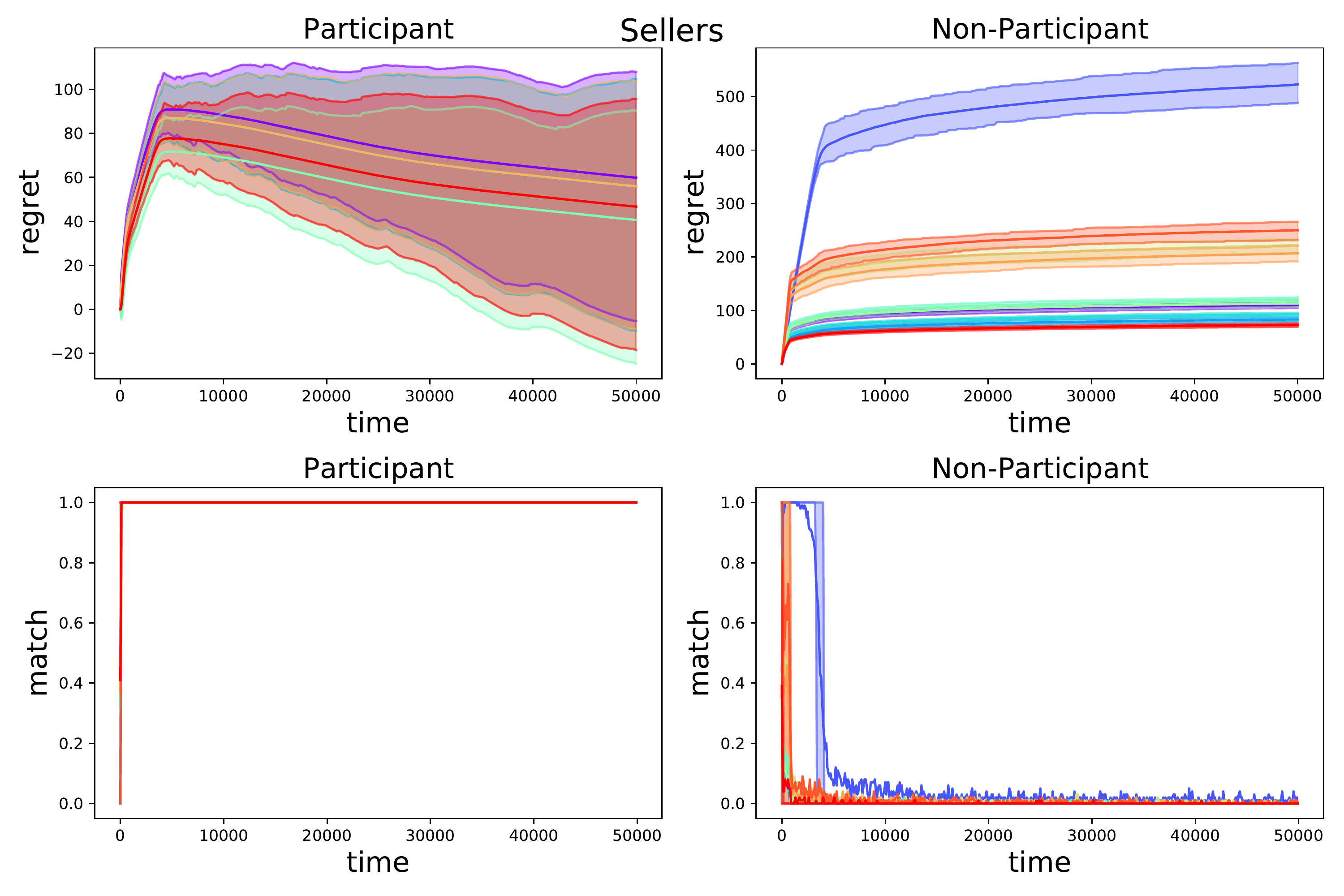}}
        \caption{\tiny Regret and Matching of Sellers, $M=15$}
    \end{subfigure}
    \hfill
    \begin{subfigure}[t]{0.2\textwidth}
        \raisebox{-\height}{\includegraphics[width=\textwidth]{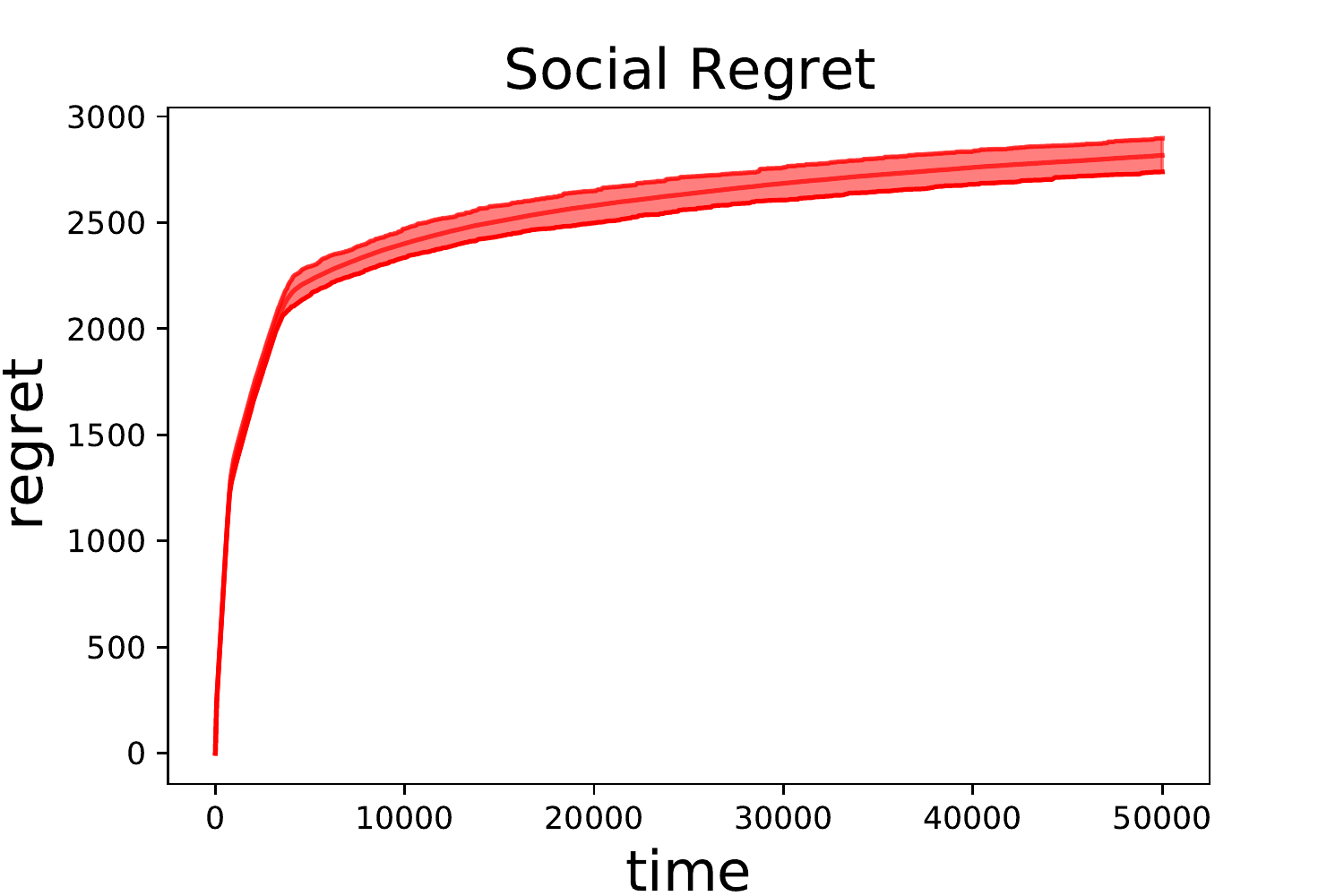}}
        \caption{\tiny Social Regret, $M=15$}
    \end{subfigure}
    \caption{Double Auction with varying $M$ ($N=8$, $K^*=5$, $\Delta = 0.2$,  $\alpha_1=4$, $\alpha_2=8$)}
    \label{fig:Fig885_MN}
\end{figure}

\end{document}